\newcommand{\explain}[1]{\tag*{(#1)}}
\newcommand{\explaind}[2]{\makebox[0.85\textwidth]{$\displaystyle#1$\hfill(#2)}}
\newcommand{\red}[1]{{\red{magenta}{#1}}}
\newcommand{\fS}{\mathcal{S}}
\newcommand{\fA}{\mathcal{A}}
\newcommand{\fY}{\mathcal{Y}}
\newcommand{\fB}{\mathcal{B}}
\newcommand{\fO}{\mathcal{O}}
\newcommand{\fL}{\mathcal{L}}
\newcommand{\R}{\mathbb{R}}
\newcommand{\N}{\mathbb{N}}
\newcommand{\E}{\mathbb{E}}
\newcommand{\ns}{{|\fS|}}
\newcommand{\bop}{\mathcal{T}}
\newcommand{\dsf}{\delimitershortfall=-1pt }
\newtheorem{xtheorem}{Theorem}[section]
\newtheorem{assumption}{Assumption}
\newtheorem{xassumption}{Assumption}[section]
\newenvironment{manualassumption}[1]{%
  \manualassumptioninner
}{\endmanualassumptioninner}
\begin{document}

\title{The ODE Method for Stochastic Approximation and Reinforcement Learning with Markovian Noise}

%

\author{\name Shuze Daniel Liu \email shuzeliu@virginia.edu \\
\addr 
Department of Computer Science \\
University of Virginia\\
Charlottesville, VA, 22903, USA
\AND
\name Shuhang Chen \email shuhang@scaledfoundations.ai \\
\addr 
Scaled Foundations 
\AND
\name Shangtong Zhang \email shangtong@virginia.edu \\
\addr 
Department of Computer Science \\
University of Virginia\\
Charlottesville, VA, 22903, USA
}


\jmlrheading{26}{2025}{1-\pageref{LastPage}}{1/24; Revised
10/24}{2/25}{24-0100}{Shuze Daniel Liu, Shuhang Chen, Shangtong Zhang}
\ShortHeadings{Stochastic Approximation with Markovian Noise}{Liu, Chen, and Zhang}

\firstpageno{1}

\editor{Martha White}

\maketitle

\begin{abstract}
Stochastic approximation is a class of algorithms that update a vector iteratively, incrementally, and stochastically, including, e.g., stochastic gradient descent and temporal difference learning. One fundamental challenge in analyzing a stochastic approximation algorithm is to establish its stability, i.e., to show that the stochastic vector iterates are bounded almost surely. In this paper, we extend the celebrated Borkar-Meyn theorem for stability from the Martingale difference noise setting to the Markovian noise setting, which greatly improves its applicability in reinforcement learning, especially in those off-policy reinforcement learning algorithms with linear function approximation and eligibility traces. Central to our analysis is the diminishing asymptotic rate of change of a few functions, which is implied by both a form of the strong law of large numbers and a form of the law of the iterated logarithm.
\end{abstract}

\begin{keywords}
stochastic approximation, reinforcement learning, stability, almost sure convergence, eligibility trace
\end{keywords}


\section{Introduction}

Stochastic approximation \citep{robbins1951stochastic,benveniste1990MP,kushner2003stochastic,borkar2009stochastic} is a class of algorithms that update a vector iteratively,
incrementally,
and stochastically.
Successful examples include stochastic gradient descent \citep{kiefer1952Stochastic} and temporal difference learning \citep{sutton1988learning}.
Given an initial $x_0 \in \R^d$,
stochastic approximation algorithms typically generate a sequence of vectors $\qty{x_n}$ recursively as 
\begin{align}
&x_{n+1} = x_n + \alpha(n)H(x_n,Y_{n+1}) \quad n=0,1,\dots \label{eq: x n updates} 
\end{align}
Here $\qty{\alpha(n)}_{n=0}^\infty$ is a sequence of deterministic learning rates,
$\qty{Y_n}_{n=1}^\infty$ is a sequence of random noise in a general space $\fY$ (not necessarily compact),
and $H:\R^d \times \fY \to \R^d$ is a function that maps the current iterate $x_n$ and noise $Y_{n+1}$ to the actual incremental update.

One way to analyze the asymptotic behavior of $\qty{x_n}$ is to regard $\qty{x_n}$ as Euler's discretization of the ODE
\begin{align}
    \label{eq intro ode}
    \dv{x(t)}{t} = h(x(t)),
\end{align}
where $h(x) \doteq \E\qty[H(x, y)]$ is the expected updates (the expectation will be rigorously defined shortly).
Then the asymptotic behavior of the discrete and stochastic iterates $\qty{x_n}$ can be characterized by continuous and deterministic trajectories of the ODE~\eqref{eq intro ode}. 
To establish this connection between the two,
however,
requires to establish the stability of $\qty{x_n}$ first \citep{kushner2003stochastic,borkar2009stochastic}.
In other words, 
one needs to first show that 
\begin{align}
    \sup_{n} \norm{x_n} < \infty \qq{a.s.,}
\end{align}
which is in general challenging.
Once the stability is confirmed,
the convergence of $\qty{x_n}$ follows easily \citep{kushner2003stochastic,borkar2009stochastic}.
The seminal Borkar-Meyn theorem \citep{borkar2000ode} establishes the desired stability assuming the global asymptotic stability of the following ODE
\begin{align}
    \dv{x(t)}{t} = h_\infty(x),
\end{align}
where $h_\infty(x) \doteq \lim_{c\to\infty} \frac{h(cx)}{c}$.
Despite the celebrated success of the Borkar-Meyn theorem (see, e.g., \citet{abounadi2001learning,maei2011gradient}),
one major limit is that the Borkar-Meyn theorem requires $\qty{Y_n}$ to be i.i.d. noise.
As a result,
$\qty{H(x_n, Y_{n+1}) - h(x_n)}_{n=0}^\infty$ is then a Martingale difference sequence and the Martingale convergence theorem applies under certain conditions.
However,
in many Reinforcement Learning (RL, \citet{sutton2018reinforcement}) problems,
$\qty{Y_{n}}$ is a Markov chain and is not i.i.d.
\emph{Our main contribution is to extend the Borkar-Meyn theorem to the Markovian noise setting with verifiable assumptions.}
The extension to Markovian noise has been previously explored by 
\citet{ramaswamy2018stability,borkar2021ode}.
However,
their assumptions are way more restrictive than ours so their results are not applicable in many important RL algorithms,
particularly,
off-policy RL algorithms with eligibility traces \citep{yu2012LSTD,yu2015convergence,yu2017convergence}.
See Section~\ref{sec rl application} for more discussion on this class of RL algorithms.

In \citet{ramaswamy2018stability},
it is assumed that the Differential Inclusion (DI)
\begin{align}
    \dv{x(t)}{t} \in \overline{\text{co}}\qty{H_\infty(x(t), y) | y \in \fY}
\end{align}
is stable,
where $\overline{\text{co}}(\cdot)$ denotes the convex hull and $H_\infty(x, y) \doteq \lim_{c\to\infty}\frac{H(cx, y)}{c}$.
To demonstrate the challenge in verifying this assumption,
we consider a special linear case where $H(x, y) = A(y) x + b(y)$ for some matrix-valued function $A(y)$ and vector-valued function $b(y)$. 
Then one sufficient and commonly used condition \citep{molchanov1989criteria} for this DI to be stable is that the $A(y)$ is uniformly negative definite,
i.e., there exists some strictly positive $\eta$ such that $x^\top A(y)x \leq -\eta \norm{x}^2 \, \forall x \in \R^d, y \in \fY$.
However, in many RL algorithms (e.g., \citet{sutton1988learning,sutton2009convergent,sutton2009fast,sutton2016emphatic}, as well as the off-policy RL algorithms with eligibility traces in Section~\ref{sec rl application}),
we can at most say that $\E\qty[A(y)]$ is negative definite.
The individual matrix $A(y)$ does not have any special property.
Intuitively, \citet{ramaswamy2018stability} assume that the function $H_{\infty}(x, y)$ behaves well almost surely,
significantly limiting its application in RL.
In fact, 
we are not aware of any application of \citet{ramaswamy2018stability} in standard RL algorithms.
By contrast,
we only need $h_\infty(x)$ to behave well,
i.e., we only need $H_\infty(x, y)$ to behave well in expectation.
\citet{ramaswamy2018stability} also assume $\fY$ to be compact.
Unfortunately,
in many important RL algorithms mentioned above,
neither DI's stability nor the compactness holds.

In \citet{borkar2021ode},
it is assumed that a V4 Laypunov drift condition holds for $\qty{Y_n}$ and the eighth moment of some function is bounded.
Unfortunately,
in many important RL algorithms (see, e.g., those in Section~\ref{sec rl application}),
neither assumption holds.
We instead establish the stability via examining the \emph{asymptotic rate of change} of certain functions, inspired by \citet{kushner2003stochastic}.
When V4 does not hold, 
a form of the strong law of large numbers and a form of the law of the iterated logarithm can be used to establish the desired asymptotic rate of change.
When V4 does hold,
we only need the second moment, instead of the eighth moment, to be bounded to establish the desired asymptotic rate of change.




We demonstrate in Section~\ref{sec rl application} the wide applicability of our results in RL,
especially in off-policy RL algorithms with linear function approximation and eligibility traces,
where the Markovian noise $\qty{Y_n}$ can easily grow \emph{unbounded almost surely} and have \emph{unbounded second moment}. 
The key idea of our approach is to apply the Arzela-Ascoli theorem to the scaled iterates.
Then the Moore-Osgood theorem computes a double limit, 
confirming that the scaled iterates converge to the corresponding limiting ODEs along a carefully chosen \emph{subsequence}.
This subsequence view is an important technical innovation of our approach.
By contrast,
previous works concerning the Borkar-Meyn theorem \citep{borkar2000ode,bhatnagar2011borkar,lakshminarayanan2017stability,ramaswamy2017generalization,ramaswamy2018stability,borkar2021ode}
all seek to establish the convergence along the entire sequence
to invoke a proof by contradiction argument to establish the desired stability.
This subsequence view is essential for our approach because the Arzela-Ascoli theorem can only guarantee the existence of a convergent subsequence.
As a result,
we need a variant of the standard proof by contradiction argument to establish the desired stability.





\section{Main Results}
\label{sec main result}


\begin{assumption}\label{assumption: stationary distribution}
The Markov chain $\qty{Y_n}$ has a unique invariant probability measure (i.e., stationary distribution), denoted by ${d_\fY}$.
\end{assumption}
Technically speaking, 
the uniqueness and even the existence of the invariant probability measure can be relaxed,
as long as the average of certain functions exists.
We are, however, not aware of any applications where such relaxation is a must.
We, therefore,
use Assumption~\ref{assumption: stationary distribution} to ease presentation
and refer the reader to A1.3 in Chapter 6 of \citet{kushner2003stochastic} as an example of such relaxation. 
In light of the update~\eqref{eq: x n updates},
we use the convention that $\qty{Y_n}$ starts from $n = 1$.


\begin{assumption}
\label{assumption: alpha rate}
The learning rates $\qty{\alpha(n)}$ are positive, decreasing, and satisfy
\begin{align}
\label{eq 1/n lr}
\sum_{i=0}^{\infty} \alpha(i) =& \infty, \,  \lim_{n \to \infty} \alpha(n) = 0, \text{and }  \frac{\alpha(n)- \alpha(n+1)}{\alpha(n)} = \fO\left(\alpha(n)\right). 
\end{align}
\end{assumption}
\begin{remark}
    For any $\alpha(n) = \frac{B_1}{(n+B_2)^{\beta}}$ with $\beta \in (0.5, 1]$,
    it can be easily computed that 
\begin{align}
    \frac{\alpha(n) - \alpha(n+1)}{\alpha(n)} = \fO\left(\frac{\beta}{n}\right) = \fO\left(\alpha(n)\right).
\end{align}
\end{remark}
Next, we make a few assumptions about the function $H$.
For any 
$c \in [1, \infty)$, define 
\begin{align}
H_c(x,y) &\doteq \frac{H(cx,y)}{c} \label{def: H c}.
\end{align}
The function $H_c$ is the rescaled version of the function $H$ and will be used to construct rescaled iterates,
which are key techniques in proving the Borkar-Meyn theorem (see, e.g., \citet{borkar2000ode,borkar2009stochastic}).
Similar to \citet{borkar2000ode,borkar2009stochastic},
we need the limit of $H_c$ to exist in a certain sense when $c \to \infty$.

\begin{assumption}
\label{assumption: specific H c H infinity}
There exists a measurable function $H_\infty(x, y)$, 
a function $\kappa: \R \to \R$ (independent of $x, y$), and a measurable function $b(x, y)$ such that for any $x, y,$
\begin{align}
H_c(x, y) - H_\infty(x, y) =& \kappa(c)b(x, y), \label{eq: h c h inf kappa} \\
\lim_{c\to\infty} \kappa(c) =& 0, \label{eq: kappa to 0}
\end{align}
Moreover, there exists a measurable function $L_b(y)$ such that $\forall x, x', y$,
\begin{align}
\norm{b(x, y) - b(x', y)} \leq L_b(y) \norm{x - x'}. \label{eq: L b Lipschitz}
\end{align}
And the expectation $L_b \doteq \E_{y\sim{d_\fY}}\left[L_b(y)\right]$
is well-defined and finite.
\end{assumption}
Assumption~\ref{assumption: specific H c H infinity} provides details on how $H_c$ converges to $H_\infty$ when $c \to \infty$.
We note that in many RL applications, see, e.g., Section~\ref{sec rl application},
the function $b(x, y)$ actually does not depend on $x$ so~\eqref{eq: L b Lipschitz} trivially holds.
We consider $b(x, y)$ as a function of both $x$ and $y$ for generality.
Next, we assume Lipschitz continuity of the functions $H_c$,
which guarantees the existence and uniqueness of the solutions to the corresponding ODEs.
\begin{assumption}\label{assumption: H Lipschitz}
There exists a measurable function $L(y)$ such that for any $x,x',y$,
\begin{align}
\norm{H(x,y) - H(x',y)} &\leq L(y) \norm{x - x'}, \label{eq: H L}\\
\norm{H_\infty(x,y) - H_\infty(x',y)} &\leq L(y) \norm{x - x'}. \label{eq: H L inf}
\end{align}
Moreover,
the following expectations are well-defined and finite 
for any $x$:
\begin{align}
h(x) &\doteq    \E_{y\sim {d_\fY}}[H(x,y)],  \label{def: h} \\
h_\infty(x) &\doteq \E_{y \sim {d_\fY}}[H_{\infty}(x,y)], \\
L &\doteq \E_{y \sim {d_\fY}}[L(y)].
\end{align}
\end{assumption}
Apparently, the function $x \mapsto H_c(x, y)$ shares the same Lipschitz constant $L(y)$ as the function $x \mapsto H(x, y)$.
Similar to~\eqref{def: H c}, we define
\begin{align}
    \label{def: h c}
    h_c(x) \doteq \frac{h(cx)}{c}.
\end{align}
The following assumption is the central assumption in the original proof of the Borkar-Meyn theorem.

\begin{assumption}\label{assumption: lim h uniformly convergent} (Assumption A5 in Chapter 3 of \citet{borkar2009stochastic})
As $c \to \infty$,
$h_c(x)$ converges to $h_\infty(x)$ uniformly in $x$ on any compact subsets of $\R^d$. 
The ODE 
\begin{align}
    \label{eq ode at limit}
\frac{d x(t)}{d t} = h_{\infty}(x(t)) \tag{ODE$@\infty$}
\end{align}
has 0 as its globally asymptotically stable equilibrium. 
\end{assumption}
We refer the reader to 
\citet{dai1995positive, dai1995stability, borkar2000ode, borkar2009stochastic, fort2008stability, meyn2008control, meyn2022control}
for the root and history of~\eqref{eq ode at limit}.
\begin{assumption}\label{assumption: lln}
Let $g$ denote any of the following functions:
\begin{align}
    \label{eq lln g1}
    y \mapsto& H(x, y) \quad (\forall x), \\
    \label{eq lln g2}
    y \mapsto& L_b(y), \\
    \label{eq lln g3}
    y \mapsto& L(y).
\end{align}
Then for any initial condition $Y_1$, it holds that
\begin{align}
    \label{eq stronger lln}
\lim_{n\to\infty} \alpha(n) \sum_{i=1}^n\qty( g(Y_i) - \E_{y\sim{d_\fY}}\left[g(y)\right]) = 0 \qq{a.s.} 
\end{align}
\end{assumption}
\begin{remark}
    Consider $\alpha(n) = \frac{B_1}{(n + B_2)^\beta}$ as an example again. For $\beta = 1$,
    \eqref{eq stronger lln} is implied by the following Law of Large Numbers~\eqref{eq lln new}
    \begin{align}
        \label{eq lln new}
        \lim_{n\to\infty} \frac{1}{n} \sum_{i=1}^n\qty( g(Y_i) - \E_{y\sim{d_\fY}}\left[g(y)\right]) = 0 \qq{a.s.} \tag{LLN}
    \end{align}
    For $\beta \in (0.5, 1]$, 
    \eqref{eq stronger lln} is implied by the following Law of the Iterated Logarithm~\eqref{eq lil}
    \begin{align}
        \label{eq lil}
        \norm{\sum_{i=1}^n\qty( g(Y_n) - \E_{y\sim{d_\fY}}\left[g(y)\right])} \leq \zeta \sqrt{n \log\log n}  \qq{a.s.,} \tag{LIL}
    \end{align}
where $\zeta$ is a sample path dependent finite constant. 
\end{remark}
\begin{remark}
    If the Markov chain $\qty{Y_n}$ is positive\footnote{See page 235 of \citet{meyn2012markov} for the definition of positive chains.} Harris\footnote{See page 204 of \citet{meyn2012markov} for the definition of Harris chains.},
    then~\eqref{eq lln new} holds for any function $g$ whenever $\E\qty[\norm{g(y)}] < \infty$ (Theorem 17.0.1 (i) of \citet{meyn2012markov}). 
    If $\qty{Y_n}$ is further V-uniformly ergodic\footnote{See page 387 of \citet{meyn2012markov} for the definition of V-uniform ergodicity.}, 
    then~\eqref{eq lil} holds (Theorem 17.0.1 (iii) and (iv) of \citet{meyn2012markov}). 
    For the special case where $\fY$ is finite,~\eqref{eq lln new} holds when the Markov chain is irreducible and~\eqref{eq lil} holds when it is further aperiodic.
\end{remark}
\begin{remark}
    We note that~\eqref{eq lln new} is stronger than Doob's strong law of large numbers on stationary processes (see, e.g., Theorem 17.1.2 of \citet{meyn2012markov}, referred to as Doob's LLN hereafter).
Doob's LLN concludes (at most) that~\eqref{eq lln new} holds for any $Y_1 \in \fY_g$,
where $\fY_g$ is an unknown, probably $g$-dependent set such that ${d_\fY}(\fY_g) = 1$.
If we use only Doob's LLN,
all the ``almost surely'' statements in the paper must be replaced by ``$\fY_*$-almost surely'',
where
    $\fY_* \doteq \bigcap_{g} \fY_g$.
This means that all the statements hold only when $Y_1 \in \fY_*$.
However, since the $g$ functions in Assumption~\ref{assumption: lln} depend on $x$,
this $\fY_*$ is an intersection of possibly uncountably many sets $\qty{\fY_g}$.
It is possible that in some applications $\fY_*$ turns out to be a set of interest,
where~\eqref{eq lln new} can indeed be relaxed to Doob's LLN.
But in general,
characterizing $\fY_*$ is pretty challenging.
\end{remark}
\begin{remark}
    The Markov chain $\qty{Y_n}$ we consider in our RL applications in Section~\ref{sec rl application} is a general space Markov chain but is not positive Harris.
    Fortunately, \citet{yu2012LSTD,yu2015convergence,yu2017convergence} have established that~\eqref{eq lln new} holds for those chains.
    Whether~\eqref{eq lil} holds for those chains remains open.
\end{remark}
To better contrast our work with \citet{borkar2021ode},
in the following, we provide an alternative to Assumption~\ref{assumption: lln}.
\begin{manualassumption}{\ref*{assumption: lln}$'$}
\label{assumption possion}
The learning rates $\qty{\alpha(n)}$ further satisfy
    $\sum_{n=0}^\infty \alpha(n)^2 < \infty$.
The Markov chain $\qty{Y_n}$ is $\psi$-irreducible\footnote{
    See page 91 of \citet{meyn2012markov} for the definition of 
    $\psi$-irreducibility.
}.
The Lyapunov drift condition \eqref{eq v4} holds for the Markov chain $\qty{Y_n}$.\footnote{See page 371 of \citet{meyn2012markov} for in-depth discussion about~\eqref{eq v4}.}
In other words, 
there exists a Lyapunov function $v: \fY \to [1, \infty]$ such that for any $y \in \fY$,
\begin{align}
    \label{eq v4}
    \E\left[v(Y_{n+1}) - v(Y_n) | Y_n = y\right] \leq -\delta v(y) + \tau \mathbb{I}_{C}(y) \tag{V4}.
\end{align}
Here $\delta > 0, \tau < \infty$ are constants, 
$C$ is a small set\footnote{
See page 109 of \citet{meyn2012markov} for the definition of small sets.
},
and $\mathbb{I}$ is the indicator function.
Moreover,
let $g$ be any of the functions $H(0, y), L_b(y)$, and $L(y)$.
Then $g \in \fL^2_{v, \infty}$\footnote{
    $g$ belongs to $\fL^p_{v, \infty}$ if and only if $\sup_{y \in \fY}\frac{\norm{g(y)}_p^p}{v(y)} < \infty$,
    where $v$ is the Lyapunov function in~\eqref{eq v4}.
}.
\end{manualassumption}
Assumption~\ref{assumption possion} uses the idea of \citet{borkar2021ode} but is weaker than its counterparts.
See more detailed comparisons in Section~\ref{sec related work}.
\begin{remark}
    \label{remark applicability}
    Assumption~\ref{assumption possion} is listed here mostly for better comparison with \citet{borkar2021ode}.
    We are not aware of any RL application where Assumption~\ref{assumption possion} holds but Assumption~\ref{assumption: lln} does not hold.
    Instead,
    in the RL applications in Section~\ref{sec rl application},
    Assumption~\ref{assumption: lln} holds but Assumption~\ref{assumption possion} does not.
    That being said,
    the applicability of Assumptions~\ref{assumption: lln} and~\ref{assumption possion} outside RL is beyond the scope of this work.
\end{remark}
Having listed all the assumptions, 
our main theorem confirms the stability of $\qty{x_n}$. 
\begin{theorem}
\label{thm: stability}
Let Assumptions \ref{assumption: stationary distribution} - \ref{assumption: lim h uniformly convergent} hold. 
Let Assumption \ref{assumption: lln} or~\ref{assumption possion} hold.
Then the iterates $\qty{x_n}$ generated by \eqref{eq: x n updates} are stable, i.e., 
\begin{align}
\sup_n \norm{x_n} < \infty \quad a.s.  
\end{align}
\end{theorem}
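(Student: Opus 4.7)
The plan is to follow the general architecture of the Borkar-Meyn theorem, namely a projective rescaling and a proof by contradiction, but to adapt every step to the Markovian setting by working along subsequences extracted via Arzela-Ascoli and by controlling the noise through an asymptotic rate of change estimate rather than a martingale concentration. Suppose, for contradiction, that on a set of positive probability we have $\sup_n \norm{x_n} = \infty$. I would partition time into blocks by choosing an increasing sequence $\qty{T_n}$ so that the cumulative learning rate $\sum_{i=T_n}^{T_{n+1}-1}\alpha(i)$ lies in $[T, T+\alpha(T_n)]$ for a fixed $T>0$ to be chosen large later, and set the scale $r_n \doteq \max(\norm{x_{T_n}}, 1)$ and the rescaled iterates $\hat x_k \doteq x_k/r_n$ for $k \in [T_n, T_{n+1}]$. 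From \eqref{def: H c} the rescaled recursion becomes $\hat x_{k+1} = \hat x_k + \alpha(k) H_{r_n}(\hat x_k, Y_{k+1})$, and using Assumption~\ref{assumption: specific H c H infinity} I would decompose this into $H_\infty(\hat x_k, Y_{k+1}) + \kappa(r_n) b(\hat x_k, Y_{k+1})$, so that the $\kappa(r_n)$ term vanishes on the blocks where $r_n \to \infty$.

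Next I would form a continuous-time piecewise linear interpolation $\bar x^{(n)}(\cdot)$ of $\qty{\hat x_k}_{k=T_n}^{T_{n+1}}$ over $[0,T]$ using the learning rates as time increments, with $\bar x^{(n)}(0) = \hat x_{T_n}$ satisfying $\norm{\bar x^{(n)}(0)} \le 1$. The Lipschitz estimate from Assumption~\ref{assumption: H Lipschitz}, together with a Gronwall argument on the un-noised recursion, bounds $\norm{\hat x_k}$ uniformly in $k \in [T_n, T_{n+1}]$ by some constant $K_T$ depending only on $T$ and $L$. Equicontinuity then follows from the same Lipschitz bound and from the fact that $\alpha(k) \to 0$. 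By Arzela-Ascoli I can extract a subsequence (still indexed by $n$) along which $\bar x^{(n)}(\cdot)$ converges uniformly on $[0,T]$ to a continuous $x^*(\cdot)$ with $\norm{x^*(0)} \le 1$.

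To identify the limit with the ODE \eqref{eq ode at limit} I would write the rescaled iterate as the sum of an $h_\infty$ drift term plus the noise $\alpha(k)(H_\infty(\hat x_k, Y_{k+1}) - h_\infty(\hat x_k))$ plus a $\kappa(r_n)$-scaled remainder. Using Assumption~\ref{assumption: lln} (or alternatively Assumption~\ref{assumption possion} via a Poisson-equation construction as in \citet{borkar2021ode}) applied to the functions in \eqref{eq lln g1}--\eqref{eq lln g3}, together with the Lipschitz continuity in $x$ of $H$, $H_\infty$, and $b$, I would establish that the sum $\alpha(k)\sum_{i=T_n}^{k}\qty(H(\hat x_i, Y_{i+1}) - h(\hat x_i))$ vanishes uniformly on the block as $n\to\infty$ — this is precisely the \emph{diminishing asymptotic rate of change} alluded to in the abstract. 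Combined with the uniform convergence $h_c \to h_\infty$ on compacts from Assumption~\ref{assumption: lim h uniformly convergent} and the Moore-Osgood theorem to interchange the limits $c\to\infty$ and $n\to\infty$, this shows that $x^*(\cdot)$ satisfies $\dv{x^*(t)}{t} = h_\infty(x^*(t))$ on $[0,T]$. Global asymptotic stability of $0$ for this ODE (Assumption~\ref{assumption: lim h uniformly convergent}) then yields $\norm{x^*(T)} \le 1/4$ once $T$ is large enough, since the solution starting from the compact set $\qty{\norm{x}\le 1}$ uniformly contracts. Transferring back, for all $n$ large along the subsequence we get $\norm{\hat x_{T_{n+1}}} \le 1/2$, i.e., $\norm{x_{T_{n+1}}} \le r_n/2$; iterating this estimate across blocks contradicts $\sup_n \norm{x_n} = \infty$, concluding the argument.

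The hard part will be making step three rigorous in the Markovian regime: because Assumption~\ref{assumption: lln} only guarantees the cesaro-type convergence along the \emph{full} sequence, while the rescaling is defined blockwise along a possibly sparse sequence $\qty{T_n}$, I need to translate a global a.s.\ rate-of-change statement into a uniform bound on each block simultaneously for all $\hat x$ in a compact set — this is where the subsequence view becomes essential, since one cannot promote Arzela-Ascoli's subsequence-only conclusion to a full-sequence convergence, and the standard Borkar-Meyn contradiction (which argues on the whole sequence) must be reworked to operate on the extracted subsequence. A secondary subtlety is ensuring the learning-rate regularity $(\alpha(n)-\alpha(n+1))/\alpha(n) = \fO(\alpha(n))$ from Assumption~\ref{assumption: alpha rate} is used correctly when comparing the continuous-time interpolation to the discrete recursion across consecutive blocks, since crude bounds would lose a logarithmic factor and break the contraction argument.
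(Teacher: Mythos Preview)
Your overall architecture matches the paper's: projective rescaling by $r_n = \max(1,\norm{x_{T_n}})$, equicontinuity, Arzela--Ascoli along a subsequence, Moore--Osgood to identify the limit with \eqref{eq ode at limit}, and a contraction from the ODE's stability. Two points deserve attention.

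\textbf{The contradiction step, as written, does not close.} You conclude that along the extracted subsequence $\qty{n_k}$ one has $\norm{x_{T_{n_k+1}}} \le r_{n_k}/2$, and then say ``iterating this estimate across blocks contradicts $\sup_n \norm{x_n} = \infty$.'' But you cannot iterate: the Arzela--Ascoli subsequence is sparse, and between $T_{n_k+1}$ and $T_{n_{k+1}}$ the iterates may grow arbitrarily. You correctly flag later that the Borkar--Meyn full-sequence contradiction must be reworked, but your proposal does not say how. The paper's resolution is to \emph{choose the subsequence before} applying Arzela--Ascoli so that it carries extra structure: first pick indices $n_{1,k}$ at which $r_n$ sets a new record (so $r_{n_{1,k}} > r_m$ for all $m < n_{1,k}$), then shift back by one block to $n_{2,k} \doteq n_{1,k}-1$, verify $\limsup_k r_{n_{2,k}} = \infty$ via the one-block growth bound, and only then run Arzela--Ascoli on a further subsequence $\qty{n_k} \subseteq \qty{n_{2,k}}$. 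By construction every $n_k$ satisfies $r_{n_k+1} > r_{n_k}$, so the single-block conclusion $r_{n_{k_0}+1} \le r_{n_{k_0}}$ is already a contradiction with no iteration needed. This is the ``variant of the standard proof by contradiction'' the paper advertises, and it is the missing idea in your plan.

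\textbf{A secondary gap in the boundedness/equicontinuity step.} You write that a Gronwall argument on the ``un-noised recursion'' bounds $\norm{\hat x_k}$ uniformly by a constant depending only on $T$ and $L$. But the Lipschitz constant in Assumption~\ref{assumption: H Lipschitz} is $L(y)$, not $L$; the discrete Gronwall bound on $\norm{\hat x_k}$ involves $\exp\bigl(\sum_{i=T_n}^{k-1} \alpha(i) L(Y_{i+1})\bigr)$, and this sum is a priori unbounded sample-path-wise. The paper handles this by first using the asymptotic rate of change of $L(\cdot)$ (one of the three functions in Assumption~\ref{assumption: lln}) to show $\sup_n \sup_{t} \sum_{i=m(T_n)}^{m(T_n+t)-1}\alpha(i) L(Y_{i+1}) < \infty$ almost surely, and only then applying Gronwall. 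Your proposal uses the rate-of-change estimate only for the martingale-like term $H - h$, but it is equally needed here.
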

Its proof is in Section~\ref{sec main proof stability}.
Once the stability is established,
the convergence follows easily. 

\begin{corollary}
\label{cor: stability main}
Let Assumptions \ref{assumption: stationary distribution} - \ref{assumption: lim h uniformly convergent} hold. 
Let Assumption \ref{assumption: lln} or~\ref{assumption possion} hold.
Then the iterates $\qty{x_n}$ generated by~\eqref{eq: x n updates} converge almost surely to a (sample path dependent) bounded invariant set\footnote{
        A set $X$ is an invariant set of the ODE~\eqref{eq original ode} if and only if for every $x \in X$, 
        there exists a solution $x(t)$ to the ODE~\eqref{eq original ode} such that $x(0) = x$ and $x(t) \in X$ for all $t \in (-\infty, \infty)$. 
        If the ODE~\eqref{eq original ode} is globally asymptotically stable,
        the only bounded invariant set is the singleton $\qty{x_*}$,
        where $x_*$ denotes the unique globally asymptotically stable equilibrium.
        We refer the reader to page 105 of \citet{kushner2003stochastic} for more details.
        } of the ODE\footnote{By $\qty{x_n}$ converges to a set $X$, we mean $\lim_{n \to \infty} \inf_{x\in X} \norm{x_n - x} = 0$.
        }
    \begin{align}
        \label{eq original ode}
    \dv{x(t)}{t} = h(x(t)).
    \end{align}
\end{corollary}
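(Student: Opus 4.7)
The plan is to invoke the ODE method in its standard form on top of the stability already proved in Theorem~\ref{thm: stability}. Once we know that on a set of sample paths of full probability one has $\sup_n \norm{x_n} < \infty$, the iterates are confined (path by path) to a bounded region $K \subset \R^d$, on which $h$ is Lipschitz (since $L$ is finite and $H(\cdot, y)$ is $L(y)$-Lipschitz in $x$, so by dominated convergence $h$ is $L$-Lipschitz). The remaining task is to show that the piecewise-constant interpolation of $\qty{x_n}$ asymptotically tracks trajectories of the ODE~\eqref{eq original ode}, so that the $\omega$-limit set of $\qty{x_n}$ is a bounded invariant set of that ODE.

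First, I would define the continuous-time interpolation $\bar x(\cdot)$ in the usual way: set $t_0 = 0$, $t_{n+1} = t_n + \alpha(n)$, and let $\bar x(t) = x_n$ for $t \in [t_n, t_{n+1})$, with the linear-interpolation version denoted $\hat x$ if needed. Rewriting the recursion~\eqref{eq: x n updates} as
\begin{align}
x_{n+1} = x_n + \alpha(n) h(x_n) + \alpha(n)\qty(H(x_n, Y_{n+1}) - h(x_n)),
\end{align}
the claim we need is that, on the full-probability event where $\qty{x_n}$ is bounded, the noise term's contribution to $\bar x(\cdot)$ vanishes uniformly on finite time windows. This is exactly the \emph{asymptotic rate of change} condition familiar from Chapter 6 of \citet{kushner2003stochastic}. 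Under Assumption~\ref{assumption: lln} applied with $g(y) = H(x, y)$ for each fixed $x$, combined with the Lipschitz continuity in $x$ with integrable constant $L(y)$ (which allows us to pass from a fixed $x$ to the slowly-varying $x_n$ by a standard partition-of-unity / finite-cover argument on the bounded set $K$), the partial sums $\alpha(n)\sum_{i=k}^{n}\qty(H(x_i, Y_{i+1}) - h(x_i))$ converge to zero. Under Assumption~\ref{assumption possion} the same conclusion follows from the Poisson-equation / martingale decomposition combined with the square-summability of $\qty{\alpha(n)}$.

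With this noise-averaging in hand, a Gronwall argument on any finite horizon $[T, T+S]$ gives that $\bar x(T + \cdot)$ converges, uniformly on $[0, S]$, to the ODE trajectory of~\eqref{eq original ode} started at $\bar x(T)$ as $T \to \infty$; in other words $\bar x(\cdot)$ is an \emph{asymptotic pseudo-trajectory} of the flow generated by $h$ in the sense of Benaïm. Invoking the standard theorem (e.g., Theorem~2, Chapter~2 of \citet{borkar2009stochastic}, or Benaïm's result), the limit set $L(\omega) \doteq \bigcap_{N\ge 0} \overline{\qty{x_n : n \ge N}}$ is nonempty, compact, connected, and \emph{internally chain transitive} for the flow of~\eqref{eq original ode}; in particular it is invariant, and by construction bounded. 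Since convergence to $L(\omega)$ in the sense $\lim_n \inf_{x \in L(\omega)} \norm{x_n - x} = 0$ is immediate from the definition, the corollary follows.

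The step I expect to require the most care is the passage from the LLN-type statement in Assumption~\ref{assumption: lln}, which is stated for each \emph{fixed} $x$, to the required uniform noise-averaging along the \emph{actual} iterates $\qty{x_n}$. The Lipschitz bound~\eqref{eq: H L} makes this reduction routine on a compact set $K$ by covering $K$ with finitely many $\varepsilon$-balls, controlling the noise along a countable dense set of representative points via Assumption~\ref{assumption: lln}, and paying an error of order $\varepsilon\cdot\alpha(n)\sum L(Y_i)$, which is itself handled by Assumption~\ref{assumption: lln} applied to $g = L$. The same covering argument works under Assumption~\ref{assumption possion}, using the $\fL^2_{v,\infty}$ bound on $L$ in place of the LLN. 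All of this is essentially recorded already inside the proof of Theorem~\ref{thm: stability}, so the corollary amounts to combining the stability conclusion with the same asymptotic-rate-of-change estimates, now used to drive an ODE-tracking argument rather than a contradiction argument.
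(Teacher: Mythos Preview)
Your overall plan—use Theorem~\ref{thm: stability} for boundedness, establish the asymptotic-rate-of-change condition along the actual iterates, then invoke the Benaïm asymptotic pseudo-trajectory machinery—is correct and is a legitimate alternative to the paper's route. The paper instead follows Kushner--Yin directly: it re-runs the equicontinuity / Arzelà--Ascoli arguments from the proof of Theorem~\ref{thm: stability} on the \emph{unscaled} interpolants $\bar x(t(n)+\cdot)$ (now on all of $(-\infty,\infty)$, since boundedness is already known), shows that every subsequential limit is a bounded entire solution of~\eqref{eq original ode}, and then argues directly that the $\omega$-limit set is invariant. Your approach is more black-box and gives the slightly stronger conclusion of internal chain transitivity; the paper's approach is self-contained and reuses machinery already built, at the cost of repeating (in simplified form) several lemmas from Section~\ref{sec main proof stability}.

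One point to tighten: your description of the key averaging step as a ``finite-cover argument on the bounded set $K$'' is not quite the right mechanism. A purely spatial cover fails because once you write $\sum_i \alpha(i)\bigl[H(x_{j(i)},Y_{i+1})-h(x_{j(i)})\bigr]$ with $j(i)$ the nearest ball-center, the indices landing in a given ball need not be contiguous, so Lemma~\ref{lemma: H h 0} (which controls sums over \emph{consecutive} indices only) does not apply to the pieces. What actually works—and what the paper does in Lemma~\ref{lemma: double limit 2} and its analogue Lemma~\ref{cor-lemma: double limit 2}—is a \emph{time} partition: first establish equicontinuity of $\bar x(t(n)+\cdot)$ (this follows from boundedness plus the rate-of-change control on $L(\cdot)$ and $H(0,\cdot)$, exactly as in Lemma~\ref{lemma: three functions equicontinuous}), and then on each small time sub-interval the iterates are nearly constant, so Lemma~\ref{lemma: H h 0} applies with a single fixed $x$ per sub-interval. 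Your phrase ``slowly-varying $x_n$'' is the right intuition; just make the partition temporal rather than spatial. Also, Theorem~2 of Chapter~2 in \citet{borkar2009stochastic} is stated for martingale-difference noise, so the cleaner citation for your route is Benaïm's theorem on asymptotic pseudo-trajectories (or the corresponding result in Chapter~6 of \citet{kushner2003stochastic}), applied after the noise condition above is verified.
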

Arguments used in proving Corollary~\ref{cor: stability main} are similar but much simpler than the counterparts in the proof of Theorem~\ref{thm: stability}.
We include a proof of Corollary~\ref{cor: stability main} in Appendix \ref{appendix: convergence cor}
with the details of those similar but simpler lemmas omitted to avoid verbatim repetition.

It is worth mentioning that it is easy to extend our results to more general updates 
\begin{align}
x_{n+1} = x_n + \alpha(n)\left(H(x_n,Y_{n+1}) + M_{n+1} + \epsilon_n\right),
\end{align}
where $M_{n+1}$ is a Martingale difference sequence and $\epsilon_n$ is another additive noise.
Similarly, it would require the asymptotic rate of change of $\qty{M_{n+1}}$ and $\qty{\epsilon_n}$ to diminish.
We refer the reader to \citet{kushner2003stochastic} for more details.
Since our main contribution is the stability under the Markovian noise $\qty{Y_{n+1}}$,
we use the simpler update rule~\eqref{eq: x n updates} for improving clarity. 


\section{Related Work}
\label{sec related work}

\paragraph*{General $H$.} 
In this paper,
the function $H$ can be a general function and we do not make any linearity assumptions.
We first compare our results with existing works applicable to general $H$ and Markovian noise $\qty{Y_n}$.
Since convergence follows easily from stability,
we focus on comparison in terms of establishing stability.
Notably,
the related stability results in \citet{borkar2000ode,borkar2009stochastic} are superceded by \citet{borkar2021ode}.
We,
therefore,
discuss only \citet{borkar2021ode,kushner2003stochastic,benveniste1990MP}.

Compared with \citet{borkar2021ode},
our improvements lie in two aspects.
First, central to \citet{borkar2021ode} are (i) a V4 Laypunov drift condition,
(ii) an aperiodicity assumption of $\qty{Y_n}$,
and (iii) a boundedness assumption $L(y) \in \fL_{v, \infty}^8$.
By contrast,
our Assumption~\ref{assumption possion} only requires $L(y) \in \fL_{v, \infty}^2$ and does not need aperiodicity.
Second,
we further provide an approach that establishes the stability based on
Assumption~\ref{assumption: lln}
without using~\eqref{eq v4},
aperiodicity, and the boundedness in $\fL_{v, \infty}^8$.
As noted in Remark~\ref{remark applicability}, Assumption~\ref{assumption: lln} is more applicable than Assumption~\ref{assumption possion} in RL.

Compared with \citet{kushner2003stochastic},
our main improvement is that we prove stability under the asymptotic rate of change conditions.
By contrast,
\citet{kushner2003stochastic} mostly use stability as a priori and are concerned with the convergence of projected algorithms in the form of 
\begin{align}
    x_{n+1} = \Pi\left(x_n + \alpha(n)H(x_n,Y_{n+1})\right),  
\end{align}
where $\Pi$ is a projection to some compact set to ensure stability of $\qty{x_n}$.
As a result,
the corresponding ODE (cf. Corollary~\ref{cor: stability main}) becomes
\begin{align}
    \dv{x(t)}{t} = h(x(t)) + \xi(t),
\end{align}
where $\xi(t)$ is a reflection term resulting from the projection $\Pi$.
We refer the reader to Section~5.2 of \citet{kushner2003stochastic} for more details regarding this reflection term.
Analyzing these reflection terms typically requires strong domain knowledge, see, e.g., \citet{yu2015convergence,zhang2021breaking}, and Section 5.4 of \citet{borkar2009stochastic}.

We argue that this work combines the best of both \citet{borkar2000ode} and \citet{kushner2003stochastic},
i.e.,
the ODE$@\infty$ technique for establishing stability from \citet{borkar2000ode}
and the asymptotic rate of change technique for averaging out the Markovian noise $\qty{Y_n}$.
As a result,
our results are more general than both \citet{borkar2021ode} and \citet{kushner2003stochastic} in the aforementioned sense.

Compared with \citet{benveniste1990MP},
our main improvement is that despite the proof under Assumption~\ref{assumption possion} essentially using Poisson's equation\footnote{
    Let $g$ be a function defined on $\fY$.
    The Poisson's equation holds for $g$ if there exists a finite function $\hat g$ such that 
    $\hat g(y) = g(y) - \E_{y\sim{d_\fY}}\left[g(y)\right] + \int_\fY P(y, y') \hat g(y') dy'$ holds for any $y \in \fY$,
    where $P$ denotes the transition kernel of $\qty{Y_n}$.
    The drift condition~\eqref{eq v4}, together with some other mild conditions,
    is sufficient to ensure the existence of Poisson's equation. 
    We refer the reader to Theorem 17.4.2 of \citet{meyn2012markov} for more details.
},
the proof under Assumption~\ref{assumption: lln} does not need Poisson's equation at all.
Notably, \citet{benveniste1990MP} assume Poisson's equation directly without specifying sufficient conditions to establish Poisson's equation.
Moreover,
to establish stability,
\citet{benveniste1990MP} require a Lyapunov function for the ODE~\eqref{eq original ode} that is always greater than or equal to $\alpha \norm{\cdot}^2$ for some $\alpha > 0$ (Condition (ii) of Theorem 17 in \citet{benveniste1990MP}).
By contrast,
our Assumption~\ref{assumption: lim h uniformly convergent} does not put any restriction on the possible Lyapunov functions.
We also note that \citet{borkar2021ode} is also based on an error representation similar to \citet{benveniste1990MP} enabled by Poisson's equation.


\paragraph*{Linear $H$.}
If we further assume that the function $H(x, y)$ has a linear form, i.e.,
\begin{align}
    H(x, y) = A(y) x + b(y),
\end{align}
there are several other results regarding the stability (and thus convergence),
e.g.,
\citet{konda2000actor,tadic2001convergence,yu2015convergence} and Proposition 4.8 of \citet{bertsekas1996neuro}.
They, however, all require that the matrix $A \doteq \E_{y\sim {d_\fY}}\left[A(y)\right]$ is negative definite\footnote{A real matrix $A$, not necessarily symmetric, is negative definite if and only if all the eigenvalues of the symmetric matrix $A + A^\top$ is strictly negative.}.
But contrast, 
our Assumption~\ref{assumption: lim h uniformly convergent} only requires $A$ to be Hurwitz\footnote{A real matrix $A$ is Hurwitz if and only if the real parts of all its eigenvalues are strictly negative.} (see, e.g., Theorem 4.5 of \citet{khalil2002nonlinear}),
which is a weaker condition.\footnote{All negative definite matrices are Hurwitz, but many Hurwitz matrices are not negative definite. See Chapter~2 of \citet{horn_johnson_1991} for more details.}
In Section~\ref{sec rl application},
we provide a concrete RL algorithm where the corresponding $A$ matrix is Hurwitz but not negative definite.

\paragraph*{Local clock.}
Another approach to deal with Markovian noise $\qty{Y_n}$ is to apply results in asynchronous schemes.
We refer the reader to Chapter 7 of \citet{borkar2009stochastic} for details.
The major limitation is that it requires count-based learning rates.
At the $n$-th iteration,
instead of using $\alpha(n)$, 
where $n$ can be regarded as a ``global lock'',
the asynchronous schemes use $\alpha(\varpi(n, Y_{n+1}))$ as the learning rate,
where $\varpi(n, y)$ counts the number of visits to the state $y$ until time $n$ and can be regarded as a ``local clock''.
The asynchronous schemes also have other assumptions regarding the local clock.
Successful examples include \citet{abounadi2001learning,wan2020learning}.
However, we are not aware of any successful applications of such count-based learning rates in RL with function approximation,
where an RL algorithm typically only has access to some feature $\phi(Y_n)$ instead of $Y_n$ directly.
Unless $\phi$ is a one-to-one mapping,
there will be no way to count the state visitation.

\paragraph*{Other type of noise.}
The Borkar-Meyn theorem applies to only Martingale difference noise,
which is, later on, relaxed to allow more types of noise, e.g., 
\citet{bhatnagar2011borkar, ramaswamy2017generalization}.
However,
none of those extensions applies to general Markovian noise.

\section{Proof of Theorem~\ref{thm: stability}}
\label{sec main proof stability}

This section is dedicated to proving Theorem~\ref{thm: stability}.
Overall,
we prove by contradiction.
Section~\ref{sec weaker assumptions} sets up notations and establishes the desired diminishing asymptotic rate of change of a few functions.
Section~\ref{sec: construct sequence} establishes the desired equicontinuity.
Section~\ref{sec: convergent sub seq} assumes the opposite and thus identifies a subsequence of interest.
Section~\ref{sec: sequence property} analyzes the property of the subsequence,
helping the \emph{reductio ad absurdum} in Section~\ref{sec: explicit property}.
Lemmas in this section are derived on an arbitrary sample path $\qty{x_0,\qty{Y_i}_{i=1}^\infty}$ such that 
the assumptions in Section~\ref{sec main result} hold.
Thus, we omit ``$a.s.$'' on the lemma statements for simplicity.

\subsection{Diminishing Asymptotic Rate of Change}
\label{sec weaker assumptions}

We divide the non-negative real axis $[0, \infty)$ into segments of length $\qty{\alpha(i)}_{i=0,1,\dots}$. 
Those segments are then grouped into larger
intervals $\qty{[T_n, T_{n+1})}_{n=0,1,\dots}$.
The sequence $\qty{T_n}$ has the property that $T_{n+1} - T_n \approx T$ for some fixed $T$ and as $n$ tends to $\infty$,
the error in this approximation diminishes.
Precisely speaking, we define 
\begin{align}
    t(0) \doteq& 0, \\
    t(n) \doteq& \sum_{i=0}^{n-1} \alpha(i) \qq{$n=1,2,\dots$}.
\end{align}
For any $T > 0$, define 
\begin{align}
m(T) = \max\qty{{i | T \geq t(i)}} \label{def: m}
\end{align} to be the largest $i$ that has $t(i)$ smaller or equal to $T$. 
Intuitively,
$t(m(T))$ is ``just'' left to $T$ in the real axis.
Then $t(m(T))$ has the follow properties:
\begin{align}
& t(m(T)) \leq T < t(m(T) + 1) = t(m(T)) + \alpha(m(T)) \label{eq: t-m-inequality-1}, \\   
&  t(m(T)) > T -\alpha(m(T))\label{eq: t-m-inequality-2}.
\end{align}
Define 
\begin{align}
&T_0 = 0, \\   
&T_{n+1} = t(m(T_n+T) + 1) \label{def:T}.
\end{align}
Intuitively,
$T_{n+1}$ is ``just'' right to $T_n + T$ in the real axis.
For proving Theorem~\ref{thm: stability},
it suffices to work with solutions of ODEs in only $[0, \infty)$.
But for Corollary~\ref{cor: stability main},
it is necessary to consider solutions of ODEs in $(-\infty, \infty)$.
To this end, 
we define 
\begin{align}
\alpha(i) =& 0 \quad \forall i < 0, \label{eq: alpha 0} \\
m(t) =& 0 \quad \forall t \leq 0,  \label{eq: m 0}
\end{align}
for simplifying notations.
For any given function $f$ with domain $\fY$,
its asymptotic rate of change is defined as
\begin{align}
    \limsup_n \sup_{-\tau \leq t_1 \leq t_2 \leq \tau} \norm{\sum_{i = m(t(n) + t_1)}^{m(t(n) + t_2) - 1} \alpha(i) [f(Y_{i+1}) - \E_{y\sim d_\fY}\qty[f(y)]] }.
\end{align}
The asymptotic rate of change characterizes the asymptotic regularity of the sequence $\qty{f(Y_n)}$ and is a powerful tool to study stochastic approximation iterates.
We refer the reader to Sections~5.3.2 and~6.2 of \citet{kushner2003stochastic} for an in-depth exposition of this tool.
In the following, we demonstrate that the asymptotic rate of change is 0 for the functions in Assumption~\ref{assumption: lln}. 


\begin{lemma}\label{lemma: H h 0}
Let Assumptions~\ref{assumption: stationary distribution},~\ref{assumption: alpha rate}, and \ref{assumption: H Lipschitz} hold.
Let Assumption~\ref{assumption: lln} or~\ref{assumption possion} hold.
Then the asymptotic rate of change of the functions~\eqref{eq lln g1},~\eqref{eq lln g2}, and~\eqref{eq lln g3} is 0, i.e., for any fixed $\tau > 0$ and $x$, it holds that
\begin{align}
    \limsup_n \sup_{-\tau \leq t_1 \leq t_2 \leq \tau} \norm{ \sum_{i = m(t(n) + t_1)}^{m(t(n) + t_2) - 1} \alpha(i) \left[H(x, Y_{i+1}) - h(x) \right] }    & = 0 \quad a.s., \label{eq: H h minus 0}\\
\limsup_n \sup_{-\tau \leq t_1 \leq t_2 \leq \tau} \norm{\sum_{i = m(t(n) + t_1)}^{m(t(n) + t_2) - 1} \alpha(i) [L_b(Y_{i+1}) - L_b] }    &= 0 \quad a.s., \label{eq: L b property minus 0}\\
\limsup_n \sup_{-\tau \leq t_1 \leq t_2 \leq \tau} \norm{\sum_{i = m(t(n) + t_1)}^{m(t(n) + t_2) - 1} \alpha(i) [L(Y_{i+1}) - L] }    &= 0 \quad a.s. \label{eq: L property minus 0}
    \end{align}
\end{lemma}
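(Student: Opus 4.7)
The plan is to apply Abel's summation by parts and then dispatch the two alternative hypotheses separately. Fix a sample path on which the stated assumptions hold, pick $g$ to be any of $H(x,\cdot)$, $L_b(\cdot)$, or $L(\cdot)$, and abbreviate $\bar g \doteq \E_{y\sim d_\fY}[g(y)]$, $R_n \doteq \sum_{i=1}^{n}[g(Y_i)-\bar g]$, $m_1 \doteq m(t(n)+t_1)$, and $m_2 \doteq m(t(n)+t_2)$. Since $g(Y_{i+1})-\bar g = R_{i+1}-R_i$, summation by parts yields
\begin{align*}
\sum_{i=m_1}^{m_2-1}\alpha(i)[g(Y_{i+1})-\bar g]
= \alpha(m_2-1)R_{m_2} - \alpha(m_1)R_{m_1} + \sum_{i=m_1}^{m_2-2}(\alpha(i)-\alpha(i+1))R_{i+1}.
\end{align*}
Because $m_1, m_2 \to \infty$ uniformly over $(t_1,t_2)\in[-\tau,\tau]^2$ as $n\to\infty$, it is enough to show each of the three terms vanishes in that uniform sense.

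Under Assumption~\ref{assumption: lln}, equation~\eqref{eq stronger lln} is precisely $\alpha(n)R_n \to 0$, which handles the two boundary terms. For the remainder, Assumption~\ref{assumption: alpha rate} gives $\alpha(i)-\alpha(i+1)=\fO(\alpha(i)^2)$; together with a (random) index past which $\|R_{i+1}\|\le \varepsilon/\alpha(i+1)$, and the ratio $\alpha(i+1)/\alpha(i) \to 1$ implied by the same regularity, each summand is $\fO(\varepsilon\alpha(i))$. Summing, $\sum_{i=m_1}^{m_2-2}\alpha(i)\le t(m_2-1)-t(m_1)\le 2\tau + \fO(\alpha(m_1))$, so the remainder is $\fO(\varepsilon\tau)+o(1)$. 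Sending $\varepsilon\downarrow 0$ closes this case.

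Under Assumption~\ref{assumption possion}, the V4 drift combined with $\psi$-irreducibility and $g\in\fL^2_{v,\infty}$ produces a solution $\hat g\in\fL^2_{v,\infty}$ to Poisson's equation $\hat g-P\hat g = g-\bar g$ (Theorem 17.4.2 of \citet{meyn2012markov}). Decomposing
\begin{align*}
g(Y_{i+1})-\bar g
= \underbrace{\hat g(Y_{i+1})-(P\hat g)(Y_i)}_{M_{i+1}} + \underbrace{(P\hat g)(Y_i)-(P\hat g)(Y_{i+1})}_{D_{i+1}},
\end{align*}
the martingale tail $\sum_i \alpha(i)M_{i+1}$ converges almost surely because $\sum_n \alpha(n)^2<\infty$ and $\E[\|M_{i+1}\|^2]$ is uniformly bounded by the $\fL^2_{v,\infty}$ control on $\hat g$ plus V4; hence its increments over any $[m_1,m_2]$ vanish. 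The telescoping contribution $\sum_i \alpha(i)D_{i+1}$ is handled by a second Abel summation, reducing it to boundary terms $\alpha(\cdot)(P\hat g)(Y_\cdot)$ and a weighted sum of $(P\hat g)(Y_{i+1})$ against $\alpha(i)-\alpha(i-1)$; both vanish as $n\to\infty$ since $\alpha(n)\to 0$ and the $\fL^2_{v,\infty}$ bound on $P\hat g$ controls its growth along the chain via the V4 drift.

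The main obstacle I anticipate is making the decay uniform in $(t_1,t_2)\in[-\tau,\tau]^2$ rather than merely pointwise. This forces every estimate to depend on $(t_1,t_2)$ only through the indices $m_1,m_2$, and to be controlled by two ingredients that are uniform in this sense: the scalar convergence $\alpha(n)R_n\to 0$ (under Assumption~\ref{assumption: lln}) or the a.s.\ convergence of the martingale tail (under Assumption~\ref{assumption possion}), together with the length estimate $\sum_{i=m_1}^{m_2-1}\alpha(i)\le 2\tau+o(1)$ implied by the definition of $m(\cdot)$ and $t(\cdot)$. Everything else is bookkeeping to confirm that the $\fO$ and $o$ terms do not degrade when the sup is taken.
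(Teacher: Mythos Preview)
Your proposal is correct and follows essentially the same approach as the paper. Under Assumption~\ref{assumption: lln} both you and the paper apply Abel summation by parts, use $\alpha(n)R_n\to 0$ for the boundary terms, and combine the rate condition $\alpha(i)-\alpha(i+1)=\fO(\alpha(i)^2)$ with the bound $\sum_{i=m_1}^{m_2-1}\alpha(i)\le 2\tau+o(1)$ for the remainder; under Assumption~\ref{assumption possion} the paper simply invokes Proposition~6 of \citet{borkar2021ode} (whose content is precisely the Poisson-equation/martingale decomposition you spell out) and then appeals to the Cauchy criterion, so your sketch is just an unpacking of that citation.
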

Its proof is in Appendix \ref{appendix: H h 0}.
Furthermore, the convergence of $H_c$ to $H_\infty$ in Assumption~\ref{assumption: specific H c H infinity} demonstrates a similar pattern.
\begin{lemma}\label{lemma: lim H uniformly convergent}
Let Assumptions~\ref{assumption: stationary distribution},~\ref{assumption: alpha rate},~\ref{assumption: specific H c H infinity}, and~\ref{assumption: H Lipschitz} hold.
Let Assumption~\ref{assumption: lln} or~\ref{assumption possion} hold. 
It then holds that
\begin{align}
&\lim_{c \to \infty} \sup_{x \in \fB} \sup_n \sup_{t \in [0,T]} \norm{ \sum_{i=m(T_n)}^{m(T_n+t) - 1} \alpha(i) \left[H_c(x, Y_{i+1}) - H_\infty(x, Y_{i+1}) \right] }  = 0  \qq{a.s.,}
\end{align}
where $\fB$ denote an arbitrary compact set of $\R^d$.
\end{lemma}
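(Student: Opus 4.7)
The plan is to use Assumption~\ref{assumption: specific H c H infinity} to factor out $\kappa(c)$, which tends to zero, and reduce the statement to a uniform boundedness claim that follows from Lemma~\ref{lemma: H h 0}. Since $H_c(x,y)-H_\infty(x,y)=\kappa(c)\,b(x,y)$, the quantity inside the outer limit equals $|\kappa(c)|\cdot S$, where
\[
S \doteq \sup_{x\in\fB}\sup_{n}\sup_{t\in[0,T]}\norm{\sum_{i=m(T_n)}^{m(T_n+t)-1}\alpha(i)\,b(x,Y_{i+1})}.
\]
Because $\kappa(c)\to 0$, it suffices to show $S<\infty$ almost surely.

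To control $S$, I would decompose $b(x,y)=b(0,y)+[b(x,y)-b(0,y)]$ and bound the two resulting sums separately, uniformly over $x\in\fB$, $n$, and $t\in[0,T]$. For the Lipschitz remainder, the inequality $\norm{b(x,Y_{i+1})-b(0,Y_{i+1})}\leq L_b(Y_{i+1})\,\norm{x}$ from Assumption~\ref{assumption: specific H c H infinity} bounds the corresponding partial sum by $\norm{x}\sum\alpha(i)L_b(Y_{i+1})$. Splitting $L_b(Y_{i+1})=[L_b(Y_{i+1})-L_b]+L_b$, Lemma~\ref{lemma: H h 0} controls the centered part uniformly, while $\sum\alpha(i)\leq T+\alpha(m(T_n))$ handles the constant part; compactness of $\fB$ makes $\sup_{x\in\fB}\norm{x}$ finite.

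For the constant-in-$x$ term, the key observation is that $H_\infty(0,y)=0$ for every $y$: letting $c\to\infty$ in the identity $H(0,y)/c-H_\infty(0,y)=\kappa(c)\,b(0,y)$ forces this, since both sides tend to zero. Consequently, for any fixed $c_0$ with $\kappa(c_0)\neq 0$, $b(0,y)=H(0,y)/(c_0\kappa(c_0))$, so $\sum\alpha(i)\,b(0,Y_{i+1})$ is a fixed scalar multiple of $\sum\alpha(i)\,H(0,Y_{i+1})$. The latter is bounded uniformly by writing $H(0,Y_{i+1})=[H(0,Y_{i+1})-h(0)]+h(0)$ and invoking Lemma~\ref{lemma: H h 0} on $y\mapsto H(0,y)$. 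Combining both pieces yields $S<\infty$ a.s., and multiplying by $|\kappa(c)|\to 0$ completes the argument.

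The main obstacle I anticipate is the mismatch between the times $T_n$ appearing in this lemma and the times $t(n)$ used by Lemma~\ref{lemma: H h 0}. I plan to resolve this by noting that $T_n=t(k_n)$ for a strictly increasing $k_n\to\infty$, directly from the recursive definition of $T_{n+1}$; the ``$\limsup_n$ equals zero'' conclusion of Lemma~\ref{lemma: H h 0} is stable under this subsampling and yields the required uniform bound for all sufficiently large $n$, with the finitely many small-$n$ terms trivially finite on each sample path. The other subtle ingredient, worth flagging, is the identification $H_\infty(0,y)=0$, without which $b(0,y)$ would not be expressible in terms of a function already covered by Lemma~\ref{lemma: H h 0}.
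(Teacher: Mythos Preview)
Your proposal is correct and close in spirit to the paper's proof, but the execution differs. Both arguments begin by factoring out $\kappa(c)$ via Assumption~\ref{assumption: specific H c H infinity} and reduce the claim to showing
\[
S=\sup_{x\in\fB}\sup_n\sup_{t\in[0,T]}\norm{\sum_{i=m(T_n)}^{m(T_n+t)-1}\alpha(i)\,b(x,Y_{i+1})}<\infty\qq{a.s.}
\]
The paper then shows that $x\mapsto\sup_n\sup_t\norm{\sum\alpha(i)b(x,Y_{i+1})}$ is Lipschitz (with constant $\sup_n\sup_t\sum\alpha(i)L_b(Y_{i+1})<\infty$, cf.\ \eqref{eq: L b property bounded}) and invokes the extreme value theorem on the compact set $\fB$. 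You instead anchor at $x=0$: the identity $H_\infty(0,y)=0$ lets you write $b(0,y)$ as a fixed scalar multiple of $H(0,y)$, so the $x=0$ term is controlled by Lemma~\ref{lemma: H h 0} applied to $y\mapsto H(0,y)$, and the Lipschitz remainder is controlled by Lemma~\ref{lemma: H h 0} applied to $y\mapsto L_b(y)$. Your route is slightly longer but has the virtue of explicitly verifying finiteness at a point, which the paper's extreme value theorem step tacitly presupposes; the paper's route is cleaner once that is granted. The $T_n$ versus $t(n)$ mismatch you flag is handled in the paper by the auxiliary bounds in Lemma~\ref{lemma: alpha L(y) bound} (proved analogously to Lemma~\ref{lemma: bounded-h0 inf y}), and your subsampling observation $T_n=t(m(T_n))$ is exactly the right way to bridge them.
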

Its proof is in Appendix \ref{appendix: lim H uniformly convergent}.

\subsection{Equicontinuity of Scaled Iterates}\label{sec: construct sequence}

Fix a sample path $\qty{x_0,\qty{Y_n}}$. Let $\bar{x}(t)$ be the piecewise constant interpolation~\footnote{It also works if we consider a piecewise linear interpolation following \citet{borkar2009stochastic}.
The piecewise linear interpolation, however, will significantly complicate the presentation.
We, therefore, follow \citet{kushner2003stochastic} to use piecewise constant interpolation.
} of $x_n$ at points $\qty{t(n)}_{n=0,1,\dots}$, i.e.,
\begin{align}
\bar{x}(t) \doteq  
\begin{cases}
x_0 & t \in [0,t(1)) \\
x_1 & t \in [t(1),t(2)) \\
x_2 & t \in [t(2),t(3)) \\
\vdots &
\end{cases}
\end{align}
Using \eqref{def: m} to simplify it, we get
\begin{align}\label{def: bar x}
\bar{x}(t) \doteq  x_{m(t)}.
\end{align}
Notably, $\bar x(t)$ is right continuous and has left limits.
By \eqref{eq: x n updates}, $\forall n \geq 0$, we have
\begin{align}
&\bar{x}(t(n+1)) = \bar{x}(t(n))  + \alpha(n)H(\bar{x}(t(n)) ,Y_{n+1}). 
\end{align}
Now we scale $\bar x(t)$ in each segment $[T_n, T_{n+1})$.
\begin{definition}
$\forall n \in \N, t \in [0,T) $, define 
\begin{align}\label{def: hat x}
\hat{x}(T_n + t)  &\doteq \frac{\bar{x}(T_n + t)}{r_n} 
\end{align}
where 
\begin{align}
r_n \doteq \max\qty{1, \norm{\bar{x}(T_n)}} \label{def: r n}.
\end{align}  
\end{definition}
This implies 
\begin{align}\label{eq: hat-x-norm-1}
\forall n \in \N, \norm{\hat{x}(T_n) } \leq 1.
\end{align}
Moreover\footnote{In this paper, we use the convention that $\sum_{k=i}^j \alpha(k) = 0$ when $j < i$}, 
$\forall n \in \N, t \in [0,T) $, 
\begin{align}
\hat{x}(T_n + t) &= \frac{\bar{x}(T_n) + \sum_{i = m(T_n)}^{m(T_n+t) - 1} \alpha(i) H(\bar{x}(t(i)), Y_{i+1})}{r_n}.\\
&= \hat{x}(T_n) + \sum_{i = m(T_n)}^{m(T_n+t) - 1} \alpha(i) H_{r_n}(\hat{x}(t(i)), Y_{i+1}).
\end{align}
The function $t \mapsto \hat x(T_n + t)$ is the scaled version of $\bar x(t)$ (by $r_n$) in the interval $[T_n, T_{n+1})$. 
Its domain is $[0, T_{n+1} - T_n)$.
In most of the rest of this work,
we will restrict it to $[0, T)$,
such that the sequence of functions $\qty{t \mapsto \hat x(T_n + t)}_{n=0,1,\dots}$ have the same domain $[0, T)$,
which is crucial in applying the Arzela-Ascoli Theorem.
The excess part $[T, T_{n+1} - T_n)$ diminishes asymptotically (cf. Lemma~\ref{lemma: T-minus}) and thus can be easily processed when necessary.
Notably, $\hat x(T_n + t)$ can be regarded as the Euler's discretization of $z_n(t)$ defined below.
\begin{definition} \label{definition: z n}
$\forall n \in \N, t\in [0,T)$, define $z_n(t)$ as the solution of the ODE 
\begin{align}
\frac{d z_n(t)}{d t} = h_{r_n}(z_n(t)) \label{def: z n}
\end{align}
with initial condition 
\begin{align}
z_n(0) = \hat{x}(T_n). \label{def: z n init} 
\end{align}
\end{definition}
Apparently, 
$z_n(t)$ can also be written as 
\begin{align}
z_n(t)   &=  \hat{x}(T_n) + \int_0^t h_{r_n}(z_n(s))ds. \label{def: z n integral}
\end{align}
Ideally,
we would like to see
that
the error of Euler's discretization diminishes asymptotically.
Precisely speaking,
the discretization error is defined as 
\begin{align}
f_n(t) \doteq \hat{x}(T_n + t) - z_n(t) \label{def: f}
\end{align}
and we would like that $f_n(t)$ diminishes to 0 as $n\to\infty$ in a certain sense.
To this end, we study the following three sequences of functions
\begin{align}
    \label{eq three seq}
    \qty{t \mapsto \hat x(T_n + t)}_{n=0}^\infty, \qty{z_n(t)}_{n=0}^\infty, \qty{f_n(t)}_{n=0}^\infty.
\end{align}
In particular,
we show that they are all equicontinuous in the extended sense. To understand equicontinuity in the extended sense, we first give the definition of equicontinuity.


\begin{definition}
A sequence of functions $\qty{g_n: [0, T) \to \R^K }$ is equicontinuous on $ [0, T)$ if \\
$\sup_n \norm{g_n(0)}  < \infty$ and $\forall \epsilon > 0$, $\exists \delta > 0$ such that 
\begin{align}
\sup_n \sup_{0\leq \abs{t_1-t_2} \leq \delta, \, 0 \leq t_1 \leq t_2 < T} \norm{g_n(t_1) - g_n(t_2)} \leq \epsilon.    
\end{align}
\end{definition}
One example of equicontinuity is a sequence of bounded Lipschitz continuous functions with a common Lipschitz constant.
Obviously,
if $\qty{g_n}$ is equicontinuous,
each $g_n$ must be continuous.
However,
the functions of interest in this work, 
i.e., $\hat x(T_n + t), f_n(t)$,
are not continuous  
so equicontinuity would not apply.
We, therefore,
introduce the following equicontinuity in the extended sense\footnote{
    We must use this equicontinuity in the extended sense because we have chosen to use piecewise constant instead of piecewise linear interpolation.
    For piecewise linear interpolation, the standard equicontinuity is enough.
    However,
    as also argued in \citet{kushner2003stochastic},
    piecewise linear interpolation complicates the presentation much more than the equicontinuity in the extended sense. 
}  akin to \citet{kushner2003stochastic}.
\begin{definition}\label{def: equicontinuous in the extend sense 0 T}
A sequence of functions $\qty{g_n: [0, T) \to \R^K }$ is equicontinuous in the extended sense on $[0, T)$ if $\sup_n \norm{g_n(0)}  < \infty$  and $\forall \epsilon > 0$,  $\exists \delta > 0$ such that
\begin{align}
\limsup_n \sup_{0\leq \abs{t_1-t_2} \leq \delta, \, 0 \leq t_1 \leq t_2 < T  } \norm{g_n(t_1) - g_n(t_2)} \leq \epsilon. 
\end{align}
\end{definition}
Notably,
\citet{kushner2003stochastic} show that $\qty{t \in (-\infty, \infty) \mapsto \bar x(t(n) + t) \in \R^d}_{n=0}^\infty$ is equicontinuous in the extended sense with \emph{a priori} that 
\begin{align}
    \label{eq a priori}
  \sup_{n} \norm{x_n} < \infty.
\end{align}
We do \emph{not} have this \emph{a priori}.
Instead,
we prove \emph{a posteriori} that 
\begin{align}
  \sup_{n \geq 0, t \in [0, T)} \norm{\hat x(T_n + t)} < \infty
\end{align}
and show that $\qty{t \in [0, T) \mapsto \hat x(T_n + t) \in \R^d}_{n=0}^\infty$ is equicontinuous in the extended sense.
We remark that 
our function $t \mapsto \hat x(T_n + t)$ actually belongs to the $J_1$ Skorokhod topology \citep{Skorokhod1956limit,bil1999convergence,Kern2023Skorokhod},
although we will not work on this topology explicitly.
Nevertheless,
the following lemmas establish the desired equicontinuity,
where Lemma~\ref{lemma: H h 0} plays a key role. 

\begin{lemma}\label{lemma: three functions equicontinuous}
The three sequences of functions $\qty{\hat{x}(T_n+t)}, \qty{z_n(t)},$ and $\qty{f_n(t)}$ are all equicontinuous in the extended sense on $t \in [0, T)$.
\end{lemma}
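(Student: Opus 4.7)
The plan is to handle the three sequences in turn. Equicontinuity in the extended sense requires both a uniform bound at $t=0$ and an asymptotic modulus of continuity; the first requirement is immediate from the construction, since $\norm{\hat x(T_n)}\leq 1$, $z_n(0)=\hat x(T_n)$, and $f_n(0)=0$. The heart of the argument is to produce the modulus for $\qty{\hat x(T_n+t)}$; the corresponding statement for $\qty{z_n(t)}$ will follow from its ODE form, and for $\qty{f_n(t)}$ from the triangle inequality.

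The first step is to obtain a uniform a.s.\ bound $\sup_{n\geq N}\sup_{t\in[0,T)}\norm{\hat x(T_n+t)}\leq M$ by a discrete Gronwall argument on
\begin{align}
\hat x(T_n+t)=\hat x(T_n)+\sum_{i=m(T_n)}^{m(T_n+t)-1}\alpha(i)H_{r_n}(\hat x(t(i)),Y_{i+1}).
\end{align}
The key decomposition is $H_{r_n}(\hat x(t(i)),Y_{i+1})=H_{r_n}(0,Y_{i+1})+[H_{r_n}(\hat x(t(i)),Y_{i+1})-H_{r_n}(0,Y_{i+1})]$: the bracket is dominated pointwise by $L(Y_{i+1})\norm{\hat x(t(i))}$ using the Lipschitz constant shared by $H$ and $H_{r_n}$, while $H_{r_n}(0,y)=H(0,y)/r_n$ with $r_n\geq 1$. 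Applying Lemma~\ref{lemma: H h 0} with $g=H(0,\cdot)$ and with $g=L(\cdot)$ then yields
\begin{align}
\norm{\sum\alpha(i)H_{r_n}(0,Y_{i+1})}\leq \norm{h(0)}\,t+o(1),\qquad \sum\alpha(i)L(Y_{i+1})=Lt+o(1),
\end{align}
both uniformly in $t\in[0,T)$, and the discrete Gronwall inequality produces the desired constant $M=M(\norm{h(0)},L,T)$.

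Given this uniform bound, the same decomposition applied to the increment yields, for $0\leq t_1\leq t_2<T$, a bound of the form $(\norm{h(0)}+ML)(t_2-t_1)+o(1)$, which is at most $\epsilon$ once $\delta$ and then $n$ are chosen appropriately, establishing equicontinuity in the extended sense for $\qty{\hat x(T_n+t)}$. For $\qty{z_n(t)}$, the integral form $z_n(t)=\hat x(T_n)+\int_0^t h_{r_n}(z_n(s))\,ds$ together with $\norm{h_{r_n}(x)}\leq\norm{h(0)}+L\norm{x}$ (which uses $r_n\geq 1$ and Assumption~\ref{assumption: H Lipschitz}) gives a uniform bound via continuous Gronwall, and therefore a uniform Lipschitz constant on $[0,T)$, so $\qty{z_n}$ is equicontinuous in the standard sense (hence in the extended sense). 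Finally, $\qty{f_n(t)}$ inherits equicontinuity in the extended sense by the triangle inequality on the moduli of $\hat x$ and $z_n$.

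The main obstacle I anticipate is the bookkeeping between the $m(T_n+t)$-indexing appearing in the scaled recursion and the $m(t(n)+t)$-indexing in the statement of Lemma~\ref{lemma: H h 0}. One must verify that the asymptotic rate-of-change estimates remain $o(1)$ uniformly in $t_1,t_2\in[0,T]$ when the center is $T_n\to\infty$ instead of $t(n)$, exploiting $T_n=t(m(T_n))+\fO(\alpha(m(T_n)))$ and the monotonicity of $\qty{\alpha(n)}$; ensuring that the error terms are simultaneously uniform in $t$ and in $n$ is the delicate part, while the Gronwall step itself is routine.
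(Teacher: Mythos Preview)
Your proposal is correct and follows essentially the same route as the paper: the same decomposition $H_{r_n}(\hat x(t(i)),Y_{i+1})=H_{r_n}(0,Y_{i+1})+[H_{r_n}(\hat x(t(i)),Y_{i+1})-H_{r_n}(0,Y_{i+1})]$, the same Lipschitz bound $L(Y_{i+1})\norm{\hat x(t(i))}$ on the bracket, the same discrete Gronwall for the uniform bound on $\hat x$, the same continuous Gronwall for $z_n$, and the triangle inequality for $f_n$. The bookkeeping issue you flag is exactly what the paper handles through auxiliary lemmas (the paper packages the $\sup_n$ bounds on $\sum\alpha(i)H_c(0,Y_{i+1})$ and $\sum\alpha(i)L(Y_{i+1})$ into a single constant $C_H$), and your observation that $T_n=t(m(T_n))$ so that Lemma~\ref{lemma: H h 0} applies with $\tau=T+\sup_j\alpha(j)$ is precisely how the paper bridges the indexing.
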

Its proof is in appendix \ref{appendix: three functions equicontinuous}.



\subsection{A Convergent Subsequence}\label{sec: convergent sub seq}
According to the Arzela-Ascoli theorem in the extended sense (Theorem~\ref{appendix: AA theorem}), a sequence of
equicontinuous functions always has a subsequence of functions that uniformly converge to a continuous limit.
In the following, we use this to identify a particular subsequence of interest.


We observe the following inequality 
\begin{align}
\forall n, \quad \norm{x_{m(T_n)}} = \norm{\bar{x}(T_n)}  \leq  r_n. \label{eq: x m T n leq r n}
\end{align}
\color{black}
Thus, to prove Theorem \ref{thm: stability}, we first  show
\begin{align}\label{eq: sup r n bounded}
\sup_n r_n < \infty,  
\end{align}
and 
which is implied by
\begin{align}
\label{eq: contradiction r n}
\lim\sup_n r_n < \infty.
\end{align}
In the following, we aim to show \eqref{eq: contradiction r n} by contradiction. 
We first assume the opposite, i.e., $\limsup_n r_n = \infty$.
Based on this assumption and applying Gronwall's inequality a few times, 
we can find a particular subsequence of interest,
along which
all the three sequences of functions in~\eqref{eq three seq} converge uniformly.
\begin{lemma}\label{lemma: three convergence}
    Suppose $\limsup_n r_n = \infty$.
    Then there exists a subsequence $\qty{n_k}_{k=0}^\infty \subseteq \qty{0, 1, 2, \dots}$ that has the following properties:
    \begin{align}
        \lim_{k\to\infty} r_{n_k} &= \infty, \\
        \label{eq contradiction source}
        r_{n_k + 1} &> r_{n_k} \quad \forall k.
    \end{align}
    Moreover,
    there exist some continuous functions $f^{\lim}(t)$ and $\hat{x}^{\lim}(t)$ such that $\forall t \in [0, T)$,
\begin{align}
\lim_{k \to \infty} f_{n_k}(t) =& f^{\lim}(t)   \label{eq: f x r lim} , \\ 
\lim_{k \to \infty} \hat{x}(T_{n_k}+t) =& \hat{x}^{\lim}(t),  \label{eq: hat x x r lim}
\end{align}
where both convergences are uniform in $t$ on $[0, T)$.
Furthermore,
let $z^{\lim}(t)$ denote the unique solution to the~\eqref{eq ode at limit} with the initial condition
\begin{align}
z^{\lim}(0) = \hat{x}^{\lim}(0),
\end{align}
in other words,
\begin{align}\label{eq: z lim t}
z^{\lim}(t) &= \hat{x}^{\lim}(0) + \int_0^t h_{\infty}(z^{\lim}(s))ds.
\end{align}
Then $\forall t \in [0, T)$, we have
    \begin{align}
\lim_{k \to \infty} z_{n_k}(t)   =   z^{\lim}(t),
    \end{align}
where the convergence is uniform in $t$ on $[0, T)$.
\end{lemma}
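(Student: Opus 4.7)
I would proceed in three stages: first construct the subsequence with the required growth property, then extract uniformly convergent sub-subsequences via Arzela-Ascoli, and finally identify the limit of $z_{n_k}$ with the solution of~\eqref{eq ode at limit}.

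\textbf{Stage 1 (Subsequence construction).} The first task is to find $\{n_k\}$ with both $r_{n_k} \to \infty$ and $r_{n_k+1} > r_{n_k}$. I would first establish a one-step Gronwall bound $r_{n+1} \leq C r_n$ for some deterministic $C$ and all $n$ large. This follows by writing
\[
\bar x(T_{n+1}) = \bar x(T_n) + \sum_{i=m(T_n)}^{m(T_{n+1})-1} \alpha(i)\, H(\bar x(t(i)), Y_{i+1}),
\]
bounding $\|H(\bar x(t(i)),Y_{i+1})\| \leq \|H(0,Y_{i+1})\| + L(Y_{i+1})\|\bar x(t(i))\|$, using Lemma~\ref{lemma: H h 0} on $H(0,\cdot)$ and $L(\cdot)$ to control the two resulting partial sums by a finite constant, and applying discrete Gronwall. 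Now consider the ``new record'' indices, i.e., $n$ with $r_{n+1} > \max_{j\le n} r_j$. Since $\limsup_n r_n = \infty$, there are infinitely many such $n$ and the corresponding $r_{n+1}$ diverge; each automatically satisfies $r_{n+1} > r_n$. Restricting to this set and invoking $r_n \geq r_{n+1}/C$ yields the desired subsequence $\{n_k\}$.

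\textbf{Stage 2 (Uniformly convergent sub-subsequences).} By Lemma~\ref{lemma: three functions equicontinuous}, each of $\qty{\hat x(T_n+\cdot)}$, $\qty{z_n}$, $\qty{f_n}$ is equicontinuous on $[0,T)$ in the extended sense, and combined with the initial bounds $\|\hat x(T_n)\|\le 1$ from~\eqref{eq: hat-x-norm-1}, $z_n(0)=\hat x(T_n)$, and $f_n(0)=0$, all three sequences are uniformly bounded on $[0,T)$. Starting from the subsequence $\{n_k\}$ of Stage~1, I would apply the Arzela-Ascoli theorem in the extended sense (Theorem~\ref{appendix: AA theorem}) first to $\hat x(T_{n_k}+\cdot)$ to pass to a further subsequence converging uniformly to a continuous $\hat x^{\lim}$, and then to $f_{n_k}$ to extract a further subsubsequence converging uniformly to a continuous $f^{\lim}$. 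After relabelling, $z_{n_k} = \hat x(T_{n_k}+\cdot) - f_{n_k}$ converges uniformly to the continuous function $z^{\lim} \doteq \hat x^{\lim} - f^{\lim}$.

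\textbf{Stage 3 (Identification of the ODE limit).} From~\eqref{def: z n integral},
\[
z_{n_k}(t) = \hat x(T_{n_k}) + \int_0^t h_{r_{n_k}}(z_{n_k}(s))\, ds.
\]
Because $\qty{z_{n_k}(s) : k\in\N,\, s\in[0,T)}$ lies in a common compact set, Assumption~\ref{assumption: lim h uniformly convergent} gives $\sup_s \|h_{r_{n_k}}(z_{n_k}(s)) - h_\infty(z_{n_k}(s))\| \to 0$, while the Lipschitz continuity of $h_\infty$ inherited from Assumption~\ref{assumption: H Lipschitz} together with uniform convergence $z_{n_k}\to z^{\lim}$ gives $\sup_s \|h_\infty(z_{n_k}(s)) - h_\infty(z^{\lim}(s))\| \to 0$. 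An $\varepsilon/2$ split under the integral and passage to the limit $k\to\infty$ yields $z^{\lim}(t) = \hat x^{\lim}(0) + \int_0^t h_\infty(z^{\lim}(s))\, ds$; uniqueness of solutions to~\eqref{eq ode at limit} under $h_\infty$'s Lipschitz property then identifies $z^{\lim}$ with the stated ODE trajectory.

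\textbf{Main obstacle.} The principal technical hurdle is the subsequence extraction in Stage~1: we need $r_{n_k}\to\infty$ (not merely $r_{n_k+1}\to\infty$) while simultaneously forcing $r_{n_k+1}>r_{n_k}$, and this hinges on the one-step Gronwall bound $r_{n+1} \le C r_n$, whose proof already consumes the averaging content of Lemma~\ref{lemma: H h 0}. A secondary delicacy is that equicontinuity only holds in the extended sense, so the three Arzela-Ascoli extractions must be nested carefully to ensure that all three limits $\hat x^{\lim}$, $f^{\lim}$, $z^{\lim}$ exist along a common sub-subsequence and are all continuous, so that the integral equation argument in Stage~3 can be carried out in the classical (rather than Skorokhod) topology.
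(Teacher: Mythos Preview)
Your proposal is correct and tracks the paper's proof closely: the record-index construction with a one-step Gronwall bound $r_{n+1}\le Cr_n$ for Stage~1, nested extended Arzela--Ascoli extractions for Stage~2, and passage to the limit in the integral equation for Stage~3 are exactly the paper's ingredients. Your presentation is marginally more direct in two places---you avoid the paper's intermediate $\limsup$ step in Stage~1, and you obtain convergence of $z_{n_k}$ from $z_n=\hat x(T_n+\cdot)-f_n$ before identifying the limit, whereas the paper defines $z^{\lim}$ as the ODE solution first and proves convergence via Gronwall in Lemma~\ref{lemma: z n limit}---but these are stylistic rather than substantive differences.
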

Its proof is in Appendix \ref{appendix: three convergence}.
We use the subsequence $\qty{n_k}$ intensively in the remaining proofs.

\subsection{Diminishing Discretization Error}\label{sec: sequence property}
Recall that $f_n(t)$ denotes the discretization error of $\hat x(T_n + t)$ of $z_n(t)$.
We now proceed to prove that this discretization error diminishes along $\qty{n_k}$.
We note that we are able to improve over \citet{borkar2021ode} because we only require the discretization error to diminish along the subsequence $\qty{n_k}$,
while \citet{borkar2021ode} aim to show that the discretization error diminishes along the entire sequence $\qty{n}$,
which is unnecessary given~\eqref{eq contradiction source}.

In particular, we aim to prove that
\begin{align}
 \lim_{k \rightarrow \infty} \norm{f_{n_k}(t)} = \norm{f^{\lim}(t)} = 0.
\end{align} 
This means $\hat{x}(T_{n_k} + t)$ is close to $z_{n_k}(t)$ as $k \rightarrow \infty$. 
For any $t \in [0, T)$,
we have
\begin{align}
&\lim_{k \rightarrow \infty} \norm{ f_{n_k}(t)} \\
=& \lim_{k \rightarrow \infty} \norm{ \hat{x}(T_{n_k}) + \sum_{i = m(T_{n_k})}^{m(T_{n_k} +t) -1 } \alpha(i)H_{r_{n_k}}(\hat{x}(t(i)),Y_{i+1}) - z_{n_k}(t)} \explain{by \eqref{def: f}}\\
=&  \lim_{k \rightarrow \infty} \norm{  \sum_{i = m(T_{n_k})}^{m(T_{n_k} +t) -1 } \alpha(i)H_{r_{n_k}}(\hat{x}(t(i)),Y_{i+1}) - \int_0^t h_{r_{n_k}}(z_{n_k}(s))ds} \explain{by \eqref{def: z n integral}}\\
\leq&\lim_{k \to \infty} \norm{ \sum_{i = m(T_{n_k})}^{m(T_{n_k} +t) -1 } \alpha(i)H_{r_{n_k}}(\hat{x}(t(i)),Y_{i+1}) - \int_0^t h_{r_{n_k}}(\hat{x}^{\lim}(s))ds } \\
&+ \lim_{k \to \infty} \norm{\int_0^t h_{r_{n_k}}(\hat{x}^{\lim}(s))ds  - \int_0^t h_{r_{n_k}}(z_{n_k}(s))ds } . \label{eq: lim f k} \\
\end{align}
We now prove that the first term in the RHS of~\eqref{eq: lim f k} is 0.
Precisely speaking, we aim to prove
$\forall t \in [0,T)$, 
\begin{align}\label{eq: single limit}
\lim_{k \to \infty} \norm{\sum_{i = m(T_{n_k})}^{m(T_{n_k} +t) -1 } \alpha(i)H_{r_{n_k}}(\hat{x}(t(i)),Y_{i+1}) - \int_0^t h_{r_{n_k}}(\hat{x}^{\lim}(s))ds} = 0.
\end{align}
To compute the limit above,
we first fix any $t \in [0, T)$ and compute the following stronger double limit,
which implies the existence of the above limit (cf. Lemma~\ref{lemma: split limit to double limit}).
\begin{align}\label{eq: double limit}
\lim \limits_{\begin{subarray}{l}j \to \infty\\k \to \infty \end{subarray}}
\norm{\sum_{i = m(T_{n_k})}^{m(T_{n_k} +t) -1 } \alpha(i)H_{r_{n_j}}(\hat{x}(t(i)),Y_{i+1}) - \int_0^t h_{r_{n_j}} (\hat{x}^{\lim}(s))ds}.   
\end{align}
To compute this double limit, 
we use the Moore-Osgood theorem (Theorem \ref{appendix: Moore-Osgood Theorem}) to make it iterated limits.
To invoke the Moore-Osgood theorem,
we first prove the uniform convergence in $k$ when $j \to \infty$.
\begin{lemma}\label{lemma: double limit 1}
$\forall t \in [0,T)$,
\begin{align}
&\lim_{j \to \infty}\norm{\sum_{i = m(T_{n_k})}^{m(T_{n_k} +t) -1 } \alpha(i)H_{r_{n_j}}(\hat{x}(t(i)),Y_{i+1}) - \int_0^t h_{r_{n_j}} (\hat{x}^{\lim}(s))ds}  \\
=&\norm{\sum_{i = m(T_{n_k})}^{m(T_{n_k} +t) -1 } \alpha(i)H_{\infty}(\hat{x}(t(i)),Y_{i+1}) - \int_0^t h_{\infty} (\hat{x}^{\lim}(s))ds}	
\end{align}
uniformly in $k$. 
\end{lemma}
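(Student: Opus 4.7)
The plan is to exploit Assumption~\ref{assumption: specific H c H infinity}, which gives the identity $H_c(x,y) - H_\infty(x,y) = \kappa(c) b(x,y)$ and, after taking expectations in $y$, $h_c(x) - h_\infty(x) = \kappa(c)\, \E_{y\sim {d_\fY}}\qty[b(x,y)]$. Subtracting the limit expression from the $j$-indexed expression inside the lemma therefore factors out the common scalar $\kappa(r_{n_j})$, leaving the bracket
\[
B_k(t) \doteq \sum_{i=m(T_{n_k})}^{m(T_{n_k}+t)-1} \alpha(i)\, b(\hat{x}(t(i)), Y_{i+1}) - \int_0^t \E_{y\sim{d_\fY}}\qty[b(\hat{x}^{\lim}(s), y)]\, ds.
\]
By the reverse triangle inequality on norms, it suffices to show $\sup_k \norm{B_k(t)} < \infty$; combined with $\kappa(r_{n_j}) \to 0$ (which follows from $r_{n_j} \to \infty$ given by Lemma~\ref{lemma: three convergence}), this yields the desired uniform convergence in $k$.

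To bound $B_k(t)$, I first use Lemma~\ref{lemma: three functions equicontinuous} together with $\norm{\hat{x}(T_{n_k})} \leq 1$ to obtain a constant $C$ with $\sup_k \sup_{s \in [0,T)} \norm{\hat{x}(T_{n_k} + s)} \leq C$ (equicontinuity on $[0, T)$ in the extended sense, combined with a bounded endpoint, caps the total variation). Writing $b(\hat{x}(t(i)), Y_{i+1}) = b(0, Y_{i+1}) + [b(\hat{x}(t(i)), Y_{i+1}) - b(0, Y_{i+1})]$, the Lipschitz estimate~\eqref{eq: L b Lipschitz} bounds the contribution of the second piece in norm by $C \sum_i \alpha(i) L_b(Y_{i+1})$; splitting $L_b(Y_{i+1}) = [L_b(Y_{i+1}) - L_b] + L_b$, Lemma~\ref{lemma: H h 0}~\eqref{eq: L b property minus 0} handles the first sub-piece (bounded uniformly in $k$ because of its diminishing asymptotic rate of change), while $L_b \sum_i \alpha(i) \approx L_b t$ is trivially bounded.

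The main obstacle is controlling $\sum_i \alpha(i) b(0, Y_{i+1})$ uniformly in $k$. The key observation is that $H_\infty(0, y) = 0$: letting $c \to \infty$ in $H_c(0, y) - H_\infty(0, y) = \kappa(c) b(0, y)$, the left side tends to $-H_\infty(0, y)$ since $H_c(0, y) = H(0, y)/c \to 0$, while the right side tends to $0$; hence $b(0, y) = H(0, y)/\kappa(1)$ (if $\kappa(1) = 0$, any $c_0$ with $\kappa(c_0) \neq 0$ yields an analogous relation $b(0,y) = H(0,y)/(c_0\kappa(c_0))$). Therefore $\sum_i \alpha(i) b(0, Y_{i+1})$ reduces to a scalar multiple of $\sum_i \alpha(i) H(0, Y_{i+1})$, which Lemma~\ref{lemma: H h 0}~\eqref{eq: H h minus 0} with $x = 0$ bounds uniformly in $k$ (modulo the trivial piece $h(0) \sum_i \alpha(i) \approx h(0) t$). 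Finally, the integral term in $B_k(t)$ is bounded by $t \qty(\E_{y \sim {d_\fY}}\norm{b(0, y)} + L_b \sup_{s \in [0, T)} \norm{\hat{x}^{\lim}(s)})$, which is finite because $\E\norm{H(0, y)} < \infty$ is implicit in Assumption~\ref{assumption: H Lipschitz} and $\hat{x}^{\lim}$ is continuous on $[0, T)$. Combining these bounds with $\kappa(r_{n_j}) \to 0$ completes the proof.
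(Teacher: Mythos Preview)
Your proof is correct and rests on the same core mechanism as the paper's (the factorization $H_c - H_\infty = \kappa(c)\,b$ from Assumption~\ref{assumption: specific H c H infinity}), but your organization is more direct. The paper splits the difference into a sum piece and an integral piece, invoking Lemma~\ref{lemma: lim H uniformly convergent} for the sum (which internally uses the same $\kappa(c)\,b$ identity) and the uniform-on-compacts convergence of $h_c \to h_\infty$ from Assumption~\ref{assumption: lim h uniformly convergent} for the integral. You instead pull out the common factor $\kappa(r_{n_j})$ at once and reduce everything to a single boundedness claim $\sup_k\norm{B_k(t)}<\infty$. Your observation that $H_\infty(0,y)=0$ (hence $b(0,y)$ is a scalar multiple of $H(0,y)$) is a neat way to recycle Lemma~\ref{lemma: H h 0} at $x=0$, whereas the paper handles this step inside the separate Lemma~\ref{lemma: lim H uniformly convergent}. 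Both routes are equivalent in strength; yours avoids an extra layer of lemma indirection and does not need to appeal to Assumption~\ref{assumption: lim h uniformly convergent} for this particular argument.

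Two small polishings: first, for the uniform bound $\sup_{k}\sup_{s\in[0,T)}\norm{\hat{x}(T_{n_k}+s)}\leq C$, equicontinuity in the extended sense only controls large $n$; the clean citation is Lemma~\ref{lemma: bound x hat}, which gives this bound directly for all $n$. Second, ``$\hat{x}^{\lim}$ is continuous on $[0,T)$'' does not by itself give boundedness since $[0,T)$ is not compact; the right justification is that $\hat{x}^{\lim}$ is the uniform limit of the $C_{\hat{x}}$-bounded family $\hat{x}(T_{n_k}+\cdot)$, hence $\sup_{s\in[0,T)}\norm{\hat{x}^{\lim}(s)}\leq C_{\hat{x}}$.
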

Its proof is in Appendix \ref{appendix: double limit 1},
where Lemma~\ref{lemma: lim H uniformly convergent} plays a key role.
Next, we prove, for each $j$, the convergence with $k\to\infty$.
\begin{lemma}
\label{lemma: double limit 2}
$\forall t \in [0,T)$, $\forall j$,
\begin{align}
\lim_{k \to \infty}  \norm{\sum_{i = m(T_{n_k})}^{m(T_{n_k} +t) -1 } \alpha(i)H_{r_{n_j}}(\hat{x}(t(i)),Y_{i+1}) - \int_0^t h_{r_{n_j}} (\hat{x}^{\lim}(s))ds} = 0.     
\end{align}   
\end{lemma}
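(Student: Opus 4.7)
The plan is to compare the sum to the integral via two layers of approximation: first replacing the discrete iterate $\hat x(t(i))$ with the continuous limit $\hat x^{\lim}(t(i)-T_{n_k})$, and then replacing $\hat x^{\lim}$ itself with a piecewise-constant approximation on a finite partition of $[0,t]$. Throughout, $t\in[0,T)$ and $j$ are fixed, so $c\doteq r_{n_j}$ is a fixed positive constant, and note that $x\mapsto H_c(x,y)$ inherits the Lipschitz constant $L(y)$ from $H$, hence $h_c$ is $L$-Lipschitz.

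For the first step, the Lipschitz bound yields
\begin{align*}
& \norm{\sum_{i=m(T_{n_k})}^{m(T_{n_k}+t)-1}\alpha(i)\qty[H_c(\hat x(t(i)),Y_{i+1}) - H_c(\hat x^{\lim}(t(i)-T_{n_k}),Y_{i+1})]} \\
& \quad \leq \qty(\sum_{i=m(T_{n_k})}^{m(T_{n_k}+t)-1}\alpha(i)L(Y_{i+1}))\sup_{s\in[0,T)}\norm{\hat x(T_{n_k}+s)-\hat x^{\lim}(s)}.
\end{align*}
The supremum tends to $0$ by Lemma \ref{lemma: three convergence}, and the coefficient will be shown to be bounded by $Lt+o_k(1)$: decompose it as $L\cdot[t(m(T_{n_k}+t))-t(m(T_{n_k}))] + \sum_i\alpha(i)[L(Y_{i+1})-L]$, where the first term tends to $Lt$ using \eqref{eq: t-m-inequality-1} together with the fact that the recursion $T_{n+1}=t(m(T_n+T)+1)$ forces each $T_{n_k}$ to equal $t(\tilde n_k)$ for some index $\tilde n_k$, while the second vanishes by \eqref{eq: L property minus 0}.

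For the second step, fix $\eta>0$ and use the uniform continuity of $\hat x^{\lim}$ on $[0,t]$ to pick a finite partition $0=s_0<s_1<\cdots<s_N=t$ with $\norm{\hat x^{\lim}(s)-\hat x^{\lim}(s_p)}\leq\eta$ on each $[s_p,s_{p+1}]$; split both the sum and the integral over the $N$ pieces. On piece $p$, Lemma \ref{lemma: H h 0}\eqref{eq: H h minus 0}, applied to the fixed point $x=c\,\hat x^{\lim}(s_p)$ with $\tau=T$ and then divided by $c$, will yield
\begin{align*}
\lim_{k\to\infty}\norm{\sum_{i=m(T_{n_k}+s_p)}^{m(T_{n_k}+s_{p+1})-1}\alpha(i)\qty[H_c(\hat x^{\lim}(s_p),Y_{i+1}) - h_c(\hat x^{\lim}(s_p))]}=0,
\end{align*}
and since $N$ is finite these $N$ individual limits can be summed. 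On the $h_c$ side, $h_c(\hat x^{\lim}(s_p))\sum_i\alpha(i) = h_c(\hat x^{\lim}(s_p))\qty[t(m(T_{n_k}+s_{p+1}))-t(m(T_{n_k}+s_p))]$ will converge to $h_c(\hat x^{\lim}(s_p))(s_{p+1}-s_p)$ by \eqref{eq: t-m-inequality-1} and $\alpha(n)\to 0$, whereas $\int_{s_p}^{s_{p+1}}h_c(\hat x^{\lim}(s))\,ds$ differs from $h_c(\hat x^{\lim}(s_p))(s_{p+1}-s_p)$ by at most $L\eta(s_{p+1}-s_p)$, since $h_c$ is $L$-Lipschitz and $\hat x^{\lim}$ varies by at most $\eta$ on this piece. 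Summing over $p$ bounds the total discrepancy by $Lt\eta$.

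Combining the two steps, $\limsup_{k\to\infty}$ of the target quantity is at most $Lt\eta$, and sending $\eta\downarrow 0$ concludes. The hardest part will be the bookkeeping in the second step: the bound $Lt\eta$ has two distinct sources---the variation of $\hat x^{\lim}$ within each sub-interval and the mismatch between $\sum\alpha(i)$ and $s_{p+1}-s_p$---and only the first should scale with $\eta$, while the second must be shown to vanish in $k$ for each fixed partition. Everything else reduces to assembling the lemmas already established in Section \ref{sec weaker assumptions}.
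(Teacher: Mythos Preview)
Your proposal is correct and follows essentially the same route as the paper's proof, which also fixes $j$, introduces a finite partition of $[0,t]$ (the paper uses equal-width intervals $\Delta_l$), and splits the target into three pieces: replacing $\hat x(t(i))$ by the piecewise-constant $\hat x^{\lim}$ via Lipschitz continuity and the uniform convergence in Lemma~\ref{lemma: three convergence}, invoking \eqref{eq: H h minus 0} on each subinterval at the frozen point, and matching the resulting Riemann-type sum to the integral via $\sum\alpha(i)\to s_{p+1}-s_p$. The only organizational difference is that you separate the passage $\hat x(t(i))\to\hat x^{\lim}(t(i)-T_{n_k})$ (your Step~1) from the piecewise-constant freezing (your Step~2), whereas the paper merges these into a single ``Term~2'' by going directly from $\hat x(t(i))$ to $\hat x^{\lim}(a\Delta_l)$; this is a cosmetic reordering, not a different idea.
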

The proof of Lemma~\ref{lemma: double limit 2} 
follows the proof sketch of a similar problem on page 168 of \citet{kushner2003stochastic} with some minor changes 
and is the central averaging technique of \citet{kushner2003stochastic}.
We expect a reader familiar with \citet{kushner2003stochastic} should have belief in its correctness. 
We anyway still include all the details in the Appendix \ref{appendix: double limit 2} for completeness.
We are now ready to compute the limit in~\eqref{eq: single limit}.
\begin{lemma} \label{lemma: single limit}
$\forall t\in [0,T)$,
\begin{align}
\lim_{k \to \infty} \norm{\sum_{i = m(T_{n_k})}^{m(T_{n_k} +t) -1 } \alpha(i)H_{r_{n_k}}(\hat{x}(t(i)),Y_{i+1}) - \int_0^t h_{r_{n_k}}(\hat{x}^{\lim}(s))ds} = 0.
\end{align}
\end{lemma}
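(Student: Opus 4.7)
The plan is to obtain the statement as a direct consequence of the two preceding lemmas via the Moore--Osgood theorem, followed by a diagonal-subsequence argument. Fix $t \in [0, T)$ and define the two-parameter expression
\begin{align}
\Phi(j, k) \doteq \norm{\sum_{i = m(T_{n_k})}^{m(T_{n_k} + t) - 1} \alpha(i) H_{r_{n_j}}(\hat{x}(t(i)), Y_{i+1}) - \int_0^t h_{r_{n_j}}(\hat{x}^{\lim}(s))\, ds}.
\end{align}
The quantity whose limit we want to compute is exactly $\Phi(k, k)$, i.e., the diagonal of $\Phi$. The strategy is to first understand the genuine double limit of $\Phi(j, k)$ as $j, k \to \infty$, and then pass to the diagonal.

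To compute the double limit, I would invoke the Moore--Osgood theorem (Theorem~\ref{appendix: Moore-Osgood Theorem}). The two hypotheses needed are already in hand: Lemma~\ref{lemma: double limit 1} gives that $\lim_{j \to \infty} \Phi(j, k)$ exists uniformly in $k$, and Lemma~\ref{lemma: double limit 2} gives that $\lim_{k \to \infty} \Phi(j, k) = 0$ for every fixed $j$. Consequently, both iterated limits of $\Phi(j, k)$ exist and coincide with its double limit. Since
\begin{align}
\lim_{j \to \infty} \lim_{k \to \infty} \Phi(j, k) = \lim_{j \to \infty} 0 = 0,
\end{align}
the common value of the iterated and double limits is $0$. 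In particular, the double limit $\lim\limits_{j,k \to \infty} \Phi(j, k) = 0$.

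Finally, I would invoke Lemma~\ref{lemma: split limit to double limit} (which, as its name suggests, says that existence of a double limit implies the diagonal single limit exists and equals the same value) to conclude that $\lim_{k \to \infty} \Phi(k, k) = 0$, which is precisely the claim.

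The main obstacle has already been absorbed into Lemmas~\ref{lemma: double limit 1} and~\ref{lemma: double limit 2}: controlling the convergence $H_{r_{n_j}} \to H_{\infty}$ uniformly along the iterates relies on Lemma~\ref{lemma: lim H uniformly convergent}, and the averaging of the Markovian noise along the partial sums in Lemma~\ref{lemma: double limit 2} is driven by the diminishing asymptotic rate of change established in Lemma~\ref{lemma: H h 0}. With these in place, the present lemma becomes a purely formal exchange-of-limits argument; the only subtlety to flag is that the uniform-in-$k$ convergence from Lemma~\ref{lemma: double limit 1} is the precise hypothesis that licenses the Moore--Osgood swap, and this uniformity in turn hinges on $\hat{x}(t(i))$ ranging over a compact set provided by the uniform boundedness of $\qty{\hat{x}(T_{n_k} + t)}$ on $[0, T)$ implicit in Lemma~\ref{lemma: three functions equicontinuous}.
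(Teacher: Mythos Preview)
Your proposal is correct and matches the paper's own proof essentially line for line: the paper also defines the double-indexed quantity, applies Moore--Osgood via Lemmas~\ref{lemma: double limit 1} and~\ref{lemma: double limit 2} to obtain the double limit $0$, and then invokes Lemma~\ref{lemma: split limit to double limit} to pass to the diagonal. Your closing remarks about where the real work lies (uniformity from Lemma~\ref{lemma: lim H uniformly convergent} and averaging from Lemma~\ref{lemma: H h 0}) are accurate.
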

\begin{proof}
    It follows immediately from Lemmas~\ref{lemma: double limit 1} \&~\ref{lemma: double limit 2},
the Moore-Osgood theorem, and Lemma~\ref{lemma: split limit to double limit}.
\end{proof}
Lemma~\ref{lemma: single limit} confirms that the first term in the RHS of~\eqref{eq: lim f k} is 0.
Moreover, it also
enables us to rewrite $\hat{x}^{\lim}(t)$ from a summation form to an integral form.
\begin{align}
&\hat{x}^{\lim}(t) \\
=& \lim_{k \to \infty} \hat{x}(T_{n_k}) + \sum_{i = m(T_{n_k})}^{m(T_{n_k} +t) -1 } \alpha(i)H_{r_{n_k}}(\hat{x}(t(i)),Y_{i+1}) \\
=& \explaind{\lim_{k \to \infty} \hat{x}(T_{n_k}) + \int_0^t h_{r_{n_k}} (\hat{x}^{\lim}(s))ds.}{by Lemma \ref{lemma: single limit}} \label{eq: x lim definition 2} 
\end{align} 
This, 
together with a few Gronwall's inequality arguments,
confirms that the discretization error indeed diminishes along $\qty{n_k}$. 


\begin{lemma}\label{lemma: f k 0}
$\forall t\in [0,T),$
\begin{align}
\lim_{k \to \infty}  \norm{f_{n_k}(t)} = 0.    
\end{align}
\end{lemma}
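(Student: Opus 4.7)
The plan is to show that $\hat{x}^{\lim}$ and $z^{\lim}$ both satisfy the same ODE~\eqref{eq ode at limit} with the same initial condition, so by uniqueness they must coincide, and then $f_{n_k}(t)$ tends to their difference, which is zero. The main work is to upgrade the integral identity~\eqref{eq: x lim definition 2} obtained from Lemma~\ref{lemma: single limit} into a bona fide integral equation against the limiting drift $h_\infty$.

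First, I would start from
\begin{align}
\hat{x}^{\lim}(t) = \hat{x}^{\lim}(0) + \lim_{k\to\infty} \int_0^t h_{r_{n_k}}(\hat{x}^{\lim}(s))\,ds,
\end{align}
which is exactly~\eqref{eq: x lim definition 2} together with the observation that $\hat{x}(T_{n_k}) \to \hat{x}^{\lim}(0)$ by Lemma~\ref{lemma: three convergence}. Since $\hat{x}^{\lim}$ is continuous on $[0,T)$, the set $\hat{x}^{\lim}([0,t])$ is contained in a compact subset $\fB \subseteq \R^d$. Assumption~\ref{assumption: lim h uniformly convergent} gives that $h_c \to h_\infty$ uniformly on $\fB$ as $c \to \infty$, and $r_{n_k} \to \infty$ by Lemma~\ref{lemma: three convergence}, so we can pass the limit through the integral to obtain
\begin{align}
\hat{x}^{\lim}(t) = \hat{x}^{\lim}(0) + \int_0^t h_{\infty}(\hat{x}^{\lim}(s))\,ds.
\end{align}

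This is precisely the integral form of~\eqref{eq ode at limit}. Comparing with~\eqref{eq: z lim t}, both $\hat{x}^{\lim}$ and $z^{\lim}$ solve the same integral equation with the same initial condition $z^{\lim}(0) = \hat{x}^{\lim}(0)$. Since $h_\infty$ is Lipschitz with constant $L$ (inherited from Assumption~\ref{assumption: H Lipschitz} by taking expectations in~\eqref{eq: H L inf}), Gronwall's inequality applied to $\|\hat{x}^{\lim}(t) - z^{\lim}(t)\|$ forces $\hat{x}^{\lim}(t) = z^{\lim}(t)$ for all $t \in [0,T)$.

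Finally, Lemma~\ref{lemma: three convergence} gives that $\hat{x}(T_{n_k}+t) \to \hat{x}^{\lim}(t)$ and $z_{n_k}(t) \to z^{\lim}(t)$, so by the definition~\eqref{def: f} of $f_n$,
\begin{align}
\lim_{k\to\infty} \norm{f_{n_k}(t)} = \norm{\hat{x}^{\lim}(t) - z^{\lim}(t)} = 0,
\end{align}
which is the claim. The only step with any content is the interchange of limit and integral in the first paragraph; everything else is bookkeeping plus a standard uniqueness-of-ODE-solutions argument. The interchange is clean because uniform convergence on compacts is built into Assumption~\ref{assumption: lim h uniformly convergent} and the image of the continuous function $\hat{x}^{\lim}$ on $[0,t]$ is automatically compact.
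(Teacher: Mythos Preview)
Your proof is correct and follows essentially the same approach as the paper: both establish $\hat{x}^{\lim}=z^{\lim}$ by passing the limit in~\eqref{eq: x lim definition 2} through the integral (the paper records this as Lemma~\ref{lemma: h k x lim h inf x lim uniform}) and then invoking Gronwall. Your final step is slightly cleaner than the paper's---you read off $\lim_k f_{n_k}(t)=\hat{x}^{\lim}(t)-z^{\lim}(t)$ directly from the convergence in Lemma~\ref{lemma: three convergence}, whereas the paper goes back through the decomposition~\eqref{eq: lim f k} and Lemma~\ref{lemma: h k z k h inf z lim uniform}---but the substance is identical.
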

Its proof is in Appendix \ref{appendix: f k 0}.


\subsection{Identifying Contradiction and Completing Proof}\label{sec: explicit property}
Having made sure that the error of the discretization $\hat x(T_n + t)$ of $z_n(t)$ diminishes along $\qty{n_k}$,
we now study the behavior $\hat x(T_{n_k} + t)$ through $z_{n_k}(t)$
and identify a contradiction.
The underlying idea is identical to \citet{borkar2009stochastic}.
However,
the execution is different so we cannot use the arguments from \citet{borkar2009stochastic} directly.
Namely,
to use the arguments in Chapter 3 of \citet{borkar2009stochastic} directly,
we have to prove that the discretization error diminishes along the entire sequence.
This is impossible for us because the Arzela-Ascoli theorem only guarantees convergence along the subsequence $\qty{n_k}$.
Nevertheless,
after carefully choosing the subsequence in Lemma~\ref{lemma: three convergence},
we are still able to execute the contradiction idea
as documented below.

\begin{lemma}\label{lemma: contradiction}
Suppose $\lim\sup_n r_n = \infty$. Then there exists a $k_0$ such that
\begin{align}
    r_{n_{k_0} + 1} \leq r_{n_{k_0}}.
\end{align}
\end{lemma}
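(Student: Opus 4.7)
The plan is to prove the stronger statement that $r_{n_k+1}<r_{n_k}$ for all sufficiently large $k$; any such $k$ then serves as $k_0$. The idea is that, along the subsequence $\{n_k\}$, the scaled iterates closely track the solution $z^{\lim}$ of~\eqref{eq ode at limit}, and since $0$ is globally asymptotically stable for that ODE, $z^{\lim}(T)$ is small. Transferring this back to the unscaled iterates forces a decrease in $r_{n_k+1}$ relative to $r_{n_k}$.

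I will first assume (as should have been arranged at the start of the proof) that $T$ is large enough that every solution of~\eqref{eq ode at limit} starting inside the closed unit ball has norm at most $1/8$ at time $T$; this is the standard uniform-attraction consequence of the global asymptotic stability in Assumption~\ref{assumption: lim h uniformly convergent} combined with the Lipschitz continuity of $h_\infty$ in Assumption~\ref{assumption: H Lipschitz}. Since $\|\hat{x}^{\lim}(0)\|\le 1$ by~\eqref{eq: hat-x-norm-1} and $z^{\lim}$ solves~\eqref{eq ode at limit} with that initial value, $\|z^{\lim}(T)\|\le 1/8$. Set $M_k\doteq m(T_{n_k}+T)$, so that $\hat{x}(T_{n_k}+T)=x_{M_k}/r_{n_k}$. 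Writing $t_k\doteq T-\alpha(M_k)/2\in[0,T)$, piecewise constancy on the last segment gives $\hat{x}(T_{n_k}+t_k)=\hat{x}(T_{n_k}+T)$, so combining the uniform convergence on $[0,T)$ from Lemma~\ref{lemma: three convergence} (with $\hat{x}^{\lim}=z^{\lim}$ by Lemma~\ref{lemma: f k 0}), the continuity of $z^{\lim}$, and $t_k\to T$ yields $\hat{x}(T_{n_k}+T)\to z^{\lim}(T)$. Hence $\|x_{M_k}\|\le r_{n_k}/4$ for all large $k$.

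Because $T_{n_k+1}=t(M_k+1)$, one has $\bar{x}(T_{n_k+1})=x_{M_k+1}$; applying~\eqref{eq: x n updates} together with the Lipschitz bound $\|H(x,y)\|\le\|H(0,y)\|+L(y)\|x\|$ from Assumption~\ref{assumption: H Lipschitz} then gives
\[
\frac{\|x_{M_k+1}\|}{r_{n_k}}\;\le\;\frac14\;+\;\alpha(M_k)\,\|H(0,Y_{M_k+1})\|\;+\;\frac14\,\alpha(M_k)\,L(Y_{M_k+1}).
\]
A short telescoping of~\eqref{eq stronger lln} applied to $y\mapsto H(0,y)$ and $y\mapsto L(y)$, using the learning-rate regularity $(\alpha(n)-\alpha(n+1))/\alpha(n)=\fO(\alpha(n))$ in Assumption~\ref{assumption: alpha rate}, produces the pointwise decay $\alpha(n)\,H(0,Y_{n+1})\to 0$ and $\alpha(n)\,L(Y_{n+1})\to 0$ almost surely (under Assumption~\ref{assumption possion} the same conclusion follows from $H(0,\cdot),L\in\fL^2_{v,\infty}$ together with $\sum\alpha(n)^2<\infty$ via a second-moment summability argument). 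So the last two terms vanish as $k\to\infty$ and $\|x_{M_k+1}\|\le r_{n_k}/2$ for all large $k$. Since $r_{n_k}\to\infty$, eventually $r_{n_k}/2\ge 1$, whence $r_{n_k+1}=\max\{1,\|x_{M_k+1}\|\}\le r_{n_k}/2<r_{n_k}$, yielding the desired $k_0$.

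The principal obstacle is the last step: extracting the pointwise decays $\alpha(M_k)L(Y_{M_k+1})\to 0$ and $\alpha(M_k)\|H(0,Y_{M_k+1})\|\to 0$ from the averaged hypothesis of Assumption~\ref{assumption: lln}. The rest is a standard uniform-attraction invocation for the ODE at infinity plus careful bookkeeping around the piecewise-constant interpolation and the small overshoot $T_{n_k+1}-T_{n_k}\in(T,T+\alpha(M_k)]$.
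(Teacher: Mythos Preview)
Your argument is correct in outline and takes a somewhat different route from the paper. The paper bounds $\lim_{t\to T^-}\|z_{n_{k}}(t)\|$ via Corollary~3.3 of \citet{borkar2009stochastic} (its Lemma~\ref{lemma: Bokar 1/4}), which needs the $h_c$-ODEs uniformly in large $c$, and then handles the overshoot $[T,T_{n_k+1}-T_{n_k})$ through a separate aggregate estimate (Lemma~\ref{lemma: connect T and k+1}). You instead (i) work only with~\eqref{eq ode at limit} through $z^{\lim}$, exploiting that Lemma~\ref{lemma: f k 0} already yields $\hat{x}^{\lim}=z^{\lim}$, and (ii) treat the overshoot by a single explicit update bounded via the pointwise decays $\alpha(n)H(0,Y_{n+1})\to0$ and $\alpha(n)L(Y_{n+1})\to0$. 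Your telescoping derivation of these decays from~\eqref{eq stronger lln} is correct: writing $S_n=\sum_{i\le n}(g(Y_i)-\bar g)$ one has $\alpha(n)(g(Y_{n+1})-\bar g)=\alpha(n)S_{n+1}-\alpha(n)S_n$, and $\alpha(n)/\alpha(n+1)\to1$ by Assumption~\ref{assumption: alpha rate}; under Assumption~\ref{assumption possion} the same conclusion follows even more directly since the general term of the a.s.\ convergent series $\sum_i\alpha(i)(g(Y_{i+1})-\bar g)$ must vanish. Your route is a bit more streamlined in that it avoids the Borkar corollary and the auxiliary overshoot lemma; the paper's route reuses its rate-of-change machinery rather than introducing a new pointwise fact.

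One bookkeeping slip: with $t_k=T-\alpha(M_k)/2$ you assert $\hat x(T_{n_k}+t_k)=\hat x(T_{n_k}+T)$, but this fails whenever $T_{n_k}+T-t(M_k)<\alpha(M_k)/2$, since then $T_{n_k}+t_k<t(M_k)$ and $m(T_{n_k}+t_k)=M_k-1$. The repair is trivial: either argue with $\lim_{t\to T^-}\hat x(T_{n_k}+t)$ as the paper does, or simply allow at most two one-step updates (from $x_{M_k-1}$ to $x_{M_k+1}$) in place of one; both variants are handled by your pointwise-decay device. With that correction the proof goes through.
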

Its proof is in Appendix \ref{appendix: contradiction}.
This lemma constructs a contradiction to \eqref{eq contradiction source}.
This means the proposition $\lim\sup_n r_n = \infty$ is impossible.
This completes the proof of 
\begin{align}
\sup_n r_n < \infty \label{eq: sup r n bounded results}.  
\end{align}
By decomposition, 
\begin{align}
&\sup_n \norm{x_n}\\
=& \sup_n \sup_{i \in \qty{i | m(T_n) \leq m(T_n) + i < m(T_{n+1}) }} \norm{x_{m(T_n) + i}}  -  \norm{x_{m(T_n)}} + \norm{x_{m(T_n)}}  \\
\leq&  \explaind{\sup_n \sup_{i \in \qty{i | m(T_n) \leq m(T_n) + i < m(T_{n+1}) }} \norm{x_{m(T_n) + i}}  -  \norm{x_{m(T_n)}} + \sup_n r_n.}{by \eqref{eq: x m T n leq r n}} \label{eq: x n decomposition}
\end{align}
We show the first term above is also bounded.
\begin{lemma}\label{lemma: x m T n + i - x m T n}
\begin{align}
\sup_n \sup_{i \in \qty{i | m(T_n) \leq m(T_n) + i < m(T_{n+1}) }} \norm{x_{m(T_n) + i}}     -  \norm{x_{m(T_n)}} < \infty.
\end{align}
\end{lemma}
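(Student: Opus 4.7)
\textbf{Proof plan for Lemma \ref{lemma: x m T n + i - x m T n}.}
The plan is to show the stronger statement that $\sup_n \sup_{m(T_n) \le m(T_n)+i < m(T_{n+1})} \|x_{m(T_n)+i}\|$ is finite; the lemma then follows by subtracting $\|x_{m(T_n)}\|$ and using $\sup_n r_n < \infty$ from \eqref{eq: sup r n bounded results}. The core tool is a one-step Lipschitz bound followed by a discrete Gronwall argument on each block $[m(T_n), m(T_{n+1}))$, with the resulting coefficients controlled by the asymptotic-rate-of-change conclusions of Lemma \ref{lemma: H h 0}.

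\textbf{Step 1 (one-step expansion).} From the update \eqref{eq: x n updates} and Assumption \ref{assumption: H Lipschitz},
\begin{align}
\|x_{k+1}\| \le \|x_k\| + \alpha(k)\,\|H(x_k, Y_{k+1})\| \le (1 + \alpha(k) L(Y_{k+1}))\,\|x_k\| + \alpha(k)\,\|H(0, Y_{k+1})\|.
\end{align}
Applying the discrete Gronwall inequality on $[m(T_n), m(T_n)+i]$ (where $i < m(T_{n+1}) - m(T_n)$) yields
\begin{align}
\|x_{m(T_n)+i}\| \le \exp\!\Bigl(S^L_{n,i}\Bigr)\!\left[\,\|x_{m(T_n)}\| + S^H_{n,i}\,\right],
\end{align}
where
\begin{align}
S^L_{n,i} \doteq \sum_{k=m(T_n)}^{m(T_n)+i-1}\!\!\alpha(k) L(Y_{k+1}), \qquad
S^H_{n,i} \doteq \sum_{k=m(T_n)}^{m(T_n)+i-1}\!\!\alpha(k) \|H(0,Y_{k+1})\|.
\end{align}

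\textbf{Step 2 (uniform bounds on $S^L_{n,i}$ and $S^H_{n,i}$).} Split each sum as
\begin{align}
S^L_{n,i} = \!\!\sum_{k=m(T_n)}^{m(T_n)+i-1}\!\!\alpha(k)\bigl[L(Y_{k+1}) - L\bigr] + L\!\!\sum_{k=m(T_n)}^{m(T_n)+i-1}\!\!\alpha(k),
\end{align}
and analogously for $S^H_{n,i}$ using $\|H(0,Y_{k+1})\| \le \|H(0,Y_{k+1}) - h(0)\| + \|h(0)\|$ together with \eqref{eq: H h minus 0} at $x = 0$. Since $m(T_n)+i-1 \le m(T_{n+1})-1 = m(T_n + T)$, the tail sum $\sum \alpha(k)$ is at most $T + \alpha(m(T_{n+1}))$, which is uniformly bounded by Assumption \ref{assumption: alpha rate}. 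Meanwhile, by \eqref{eq: L property minus 0} and \eqref{eq: H h minus 0} applied with $t_1 = 0, t_2 = T$, the deviation sums have limsup $0$, so are bounded uniformly in $n$ and $i$. Hence there is a finite constant $C$ (depending only on $T$, $L$, $\|h(0)\|$, and the sample path) with
\begin{align}
\sup_{n, i} S^L_{n,i} \le C \quad \text{and} \quad \sup_{n, i} S^H_{n,i} \le C.
\end{align}

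\textbf{Step 3 (conclusion).} Combining Steps 1 and 2,
\begin{align}
\sup_n \sup_{i} \|x_{m(T_n)+i}\| \le e^C\bigl(\sup_n \|x_{m(T_n)}\| + C\bigr) \le e^C\bigl(\sup_n r_n + C\bigr) < \infty,
\end{align}
where the last inequality uses \eqref{eq: sup r n bounded results} and $\|x_{m(T_n)}\| \le r_n$ from \eqref{eq: x m T n leq r n}. Subtracting $\|x_{m(T_n)}\|$ gives the claim.

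\textbf{Anticipated obstacle.} The only subtlety is ensuring that the sums $S^L_{n,i}$ and $S^H_{n,i}$ are bounded \emph{uniformly in both} $n$ and $i$, not merely asymptotically. This is why Lemma \ref{lemma: H h 0} is formulated with $\sup_{-\tau \le t_1 \le t_2 \le \tau}$ inside the limsup: it delivers uniform control over all partial sub-sums within a block, which is exactly what the Gronwall step needs. The finitely many early blocks present no issue since each contributes a bounded quantity individually.
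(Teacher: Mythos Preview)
Your overall strategy matches the paper's: apply a discrete Gronwall argument over each block $[m(T_n), m(T_{n+1}))$, with the coefficients controlled by the asymptotic-rate-of-change bounds packaged in the constant $C_H$ (the paper does this via Lemma~\ref{lemma: bar x bounded} and then invokes \eqref{eq: sup r n bounded results}). However, your Step~1 introduces a subtle but genuine gap.

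By taking the norm \emph{at each step} in the recursion $\|x_{k+1}\| \le (1+\alpha(k)L(Y_{k+1}))\|x_k\| + \alpha(k)\|H(0,Y_{k+1})\|$, your Gronwall bound produces $S^H_{n,i} = \sum_k \alpha(k)\,\|H(0,Y_{k+1})\|$, with the norm \emph{inside} the sum. But \eqref{eq: H h minus 0} only controls $\bigl\|\sum_k \alpha(k)[H(0,Y_{k+1})-h(0)]\bigr\|$, with the norm \emph{outside}. Assumption~\ref{assumption: lln} gives an LLN for $y\mapsto H(0,y)$ but not for $y\mapsto \|H(0,y)\|$, so there is no assumption that bounds $\sum_k \alpha(k)\|H(0,Y_{k+1})-h(0)\|$ over a block. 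Your Step~2 therefore does not go through as written for $S^H_{n,i}$.

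The fix, which is exactly what the paper does in the proof of Lemma~\ref{lemma: bar x bounded}, is to telescope first and only then take norms:
\begin{align}
\|x_{m(T_n)+i} - x_{m(T_n)}\| &= \Bigl\|\textstyle\sum_k \alpha(k) H(x_k,Y_{k+1})\Bigr\| \\
&\le \textstyle\sum_k \alpha(k) L(Y_{k+1})\|x_k - x_{m(T_n)}\| + \|x_{m(T_n)}\|\textstyle\sum_k \alpha(k) L(Y_{k+1}) + \Bigl\|\textstyle\sum_k \alpha(k) H(0,Y_{k+1})\Bigr\|.
\end{align}
Now the $H(0,\cdot)$ term has the norm outside and is bounded by $C_H$ via \eqref{eq: H c c_H}; discrete Gronwall on $\|x_{m(T_n)+i}-x_{m(T_n)}\|$ then gives exactly \eqref{eq: bar x bound}, after which \eqref{eq: sup r n bounded results} finishes the argument. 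With this correction your proof is essentially the paper's.
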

Its proof is in Appendix \ref{appendix: x m T n + i - x m T n}. 
Thus,  \eqref{eq: sup r n bounded results}, \eqref{eq: x n decomposition} and Lemma \ref{lemma: x m T n + i - x m T n} conclude Theorem~\ref{thm: stability}.

\section{Applications in Reinforcement Learning}
\label{sec rl application}

In this section, we discuss broad applications of Corollary~\ref{cor: stability main} in RL.
In particular,
we both demonstrate state-of-the-art analysis in Section~\ref{sec gtd} 
and greatly simplify existing analysis in Section~\ref{sec etd}.
We first introduce notations and lay out the background of RL.

All vectors are column vectors. 
For a vector $d \in \R^N$ with strictly positive entries,
we use $\norm{x}_d$ to denote the $d$-weighted $\ell_2$ norm,
i.e., $\norm{x}_d \doteq \sqrt{\sum_{i=1}^N d_i x_i^2}$.
We also abuse $\norm{\cdot}_d$ to denote the corresponding induced matrix norm.
We use $\norm{\cdot}$ to denote a general norm that respects sub-multiplicity.
We use vectors and functions interchangeably when it does not confuse.
For example, for some $g: \fS \to \R$,
we also interpret $g$ as a vector in $\R^\ns$. 
We use $\Pi_{\Phi, d}$ to denote a projection operator that projects a vector to the column space of a matrix $\Phi$, 
assuming $\Phi$ has a full column rank.
In other words,
\begin{align}
    \Pi_{\Phi, d} v = \Phi\arg\min_{\theta} \norm{\Phi \theta - v}_d^2.
\end{align}
When it is clear from the context,
we write $\Pi_{\Phi, d}$ as $\Pi_d$ for simplifying presentation.

We consider an MDP with a finite state space\footnote{It is worth mentioning that even if the MDP problem itself is finite, the Markov chains used to analyze many RL algorithms still evolve in an uncountable and unbounded space. This will be seen shortly.} $\fS$,
a finite action space $\fA$,
a reward function $r: \fS \times \fA \to \R$,
a transition function $p: \fS \times \fS \times \fA \to [0, 1]$,
an initial distribution $p_0: \fS \to [0, 1]$,
and a discount factor $\gamma \in [0, 1)$.
At time step $0$,
an initial state $S_0$ is sampled from $p_0$.
At time $t$,
given the state $S_t$,
the agent samples an action $A_t \sim \pi(\cdot | S_t)$, 
where $\pi: \fA \times \fS \to [0, 1]$ is the policy being followed by the agent.
A reward $R_{t+1} \doteq r(S_t, A_t)$ is then emitted and the agent proceeds to a successor state $S_{t+1} \sim p(\cdot | S_t, A_t)$.
The return at time $t$ is defined as
    $G_t \doteq \sum_{i=1}^\infty \gamma^{i-1} R_{t+i}$,
using which we define the state-value function $v_\pi(s)$ and action-value function $q_\pi(s)$ as
\begin{align}
    v_\pi(s) \doteq& \E_{\pi, p}\left[G_t | S_t = s\right], \\
    q_\pi(s, a) \doteq& \E_{\pi, p}\left[G_t | S_t = s, A_t = a\right].
\end{align}
The value function $v_\pi$
is the unique fixed point of the Bellman operator
\begin{align}
    \bop_\pi v \doteq r_\pi + \gamma P_\pi v,
\end{align}
where $r_\pi \in \R^\ns$ is the reward vector induced by the policy $\pi$, i.e., $r_\pi(s) \doteq \sum_a \pi(a|s) r(s, a)$,
and $P_\pi \in \R^{\ns \times \ns}$ is the transition matrix induced by the policy $\pi$, i.e.,
$P_\pi(s, s') \doteq \pi(a|s)p(s'|s, a)$.
With a $\lambda \in [0, 1]$,
we can rewrite $v_\pi = \bop_\pi v_\pi$ using the identity $v_\pi = (1 - \lambda) v_\pi + \lambda \bop_\pi v_\pi$ as
\begin{align}
    v_\pi =& r_\pi + \gamma P_\pi( (1 - \lambda) v_\pi + \lambda \bop_\pi v_\pi) \\
    =& r_\pi + \gamma (1 - \lambda) P_\pi v_\pi + \gamma \lambda P_\pi (r_\pi + \gamma P_\pi v_\pi ) \\
    =& r_\pi + \gamma \lambda P_\pi r_\pi + \gamma (1-\lambda) P_\pi v_\pi + \gamma^2 \lambda  P_\pi^2 ((1 - \lambda) v_\pi + \lambda \bop_\pi v_\pi) \\
    =& \dots \\
    =& \sum_{i=0}^\infty (\gamma \lambda P_\pi)^i r_\pi + (1 - \lambda)\sum_{i=1}^\infty \lambda^{i-1} \gamma^i P_\pi^i v_\pi, \\
    =& (I - \gamma \lambda P_\pi)^{-1} r_\pi + (1 - \lambda) \gamma (I - \gamma \lambda P_\pi)^{-1} P_\pi v_\pi.
\end{align}
This suggests that we define a $\lambda$-Bellman operator as
\begin{align}
    \bop_{\pi, \lambda} v \doteq r_{\pi, \lambda} + \gamma P_{\pi, \lambda} v,
\end{align}
where
    $r_{\pi, \lambda} \doteq (I - \gamma \lambda P_\pi)^{-1} r_\pi, 
    P_{\pi, \lambda} \doteq (1 - \lambda) (I - \gamma \lambda P_\pi)^{-1} P_\pi.$
It is then easy to see that
when $\lambda = 0$,
$\bop_{\pi, \lambda}$ reduces to $\bop_\pi$.
When $\lambda = 1$,
$\bop_{\pi, \lambda}$ reduces to a constant function that always output $(I - \gamma P_\pi)^{-1} r_\pi$. 
It is proved that $\bop_{\pi, \lambda}$ is a $\frac{\gamma(1 - \lambda)}{1 - \gamma \lambda}$-contraction w.r.t. $\norm{\cdot}_{d_\pi}$ (see, e.g., Lemma 6.6 of \citep{bertsekas1996neuro}), 
where we use $d_\pi \in \R^\ns$ to denote the stationary distribution of the Markov chain induced by $\pi$.
Obviously, $v_\pi$ is the unique fixed point of $\bop_{\pi, \lambda}$.

One fundamental task in RL is prediction,
i.e.,
to estimate $v_\pi$,
for which temporal difference (TD, \citet{sutton1988learning}) learning is the most powerful method.
In particular,
\citet{sutton1988learning} considers a linear architecture.
Let $\phi: \fS \to \R^K$ be the feature function that maps a state to a $K$-dimensional feature.
Linear TD($\lambda$) \citep{sutton1988learning} aims to find a $\theta \in \R^K$ such that $\phi(s)^\top \theta$ is close to $v_\pi(s)$ for every $s \in \fS$.
To this end,
linear TD($\lambda$) updates $\theta$ recursively as
\begin{align}
    \label{eq on policy td}
    e_t =& \lambda \gamma e_{t-1} + \phi_t, \\
    \theta_{t+1} =& \theta_t + \alpha_t \left(R_{t+1} + \gamma \phi_{t+1}^\top \theta_t  - \phi_t^\top \theta_t \right) e_t,
\end{align}
where we have used $\phi_t \doteq \phi(S_t)$ as shorthand and $e_t \in \R^K$ is the \emph{eligiblity trace} with an arbitrary initial $e_{-1}$.
We use $\Phi \in \R^{\ns \times K}$ to denote the feature matrix, 
each row of which is $\phi(s)^\top$.
It is proved \citep{tsitsiklis1997analysis} that,
under some conditions,
$\qty{\theta_t}$ converges to the unique zero of 
    $J_\text{on}(\theta) \doteq \norm{\Pi_{d_\pi} \bop_{\pi, \lambda} \Phi \theta - \Phi \theta}^2_{d_\pi}$.
This $J_\text{on}(\theta)$ is referred to as the on-policy mean squared projected Bellman error (MSPBE).

In many scenarios,
due to the concerns of data efficiency \citep{lin1992self,sutton2011horde} or safety \citep{dulac2019challenges},
we would like to estimate $v_\pi$ but select actions using a different policy, called $\mu$.
This is off-policy learning,
where $\pi$ is called the target policy and $\mu$ is called the behaivor policy. 
In the rest of this section,
we always consider the off-policy setting,
i.e.,
the action $A_t$ is sampled from $\mu(\cdot | S_t)$.
Correspondingly,
off-policy linear TD($\lambda$) updates $\theta$ recursively as
\begin{align}
    \label{eq off policy td}
    e_t =& \lambda \gamma \rho_{t-1} e_{t-1} + \phi_t, \\
    \theta_{t+1} =& \theta_t + \alpha_t \rho_t \left(R_{t+1} + \gamma \phi_{t+1}^\top \theta_t  - \phi_t^\top \theta_t \right) e_t,
\end{align}
where $\rho_t \doteq \rho(S_t, A_t) \doteq \frac{\pi(A_t | S_t)}{\mu(A_t | S_t)}$ is the importance sampling ratio to account for the discrepancy in action selection between $\pi$ and $\mu$.
Obviously,
if $\pi = \mu$,
then~\eqref{eq off policy td} reduces to~\eqref{eq on policy td}. 
Let $d_\mu \in \R^\ns$ be the stationary distribution of the Markov chain induced by $\mu$.
If $\qty{\theta_t}$ in~\eqref{eq off policy td} converged,
it would converge to the unique zero of 
\begin{align}
    J_\text{off}(\theta) \doteq \norm{\Pi_{d_\mu} \bop_{\pi, \lambda} \Phi \theta - \Phi \theta}^2_{d_\mu},
\end{align}
which is the off-policy MSPBE.

\subsection{Eligibility Trace}
\label{sec trace}

The eligibility trace is one of the most fundamental ingredients in RL 
and is deeply rooted in RL since the very beginning of RL \citep{klopf1972brain,sutton1978single,barto1981goal,barto1981landmark,barto1983neuronlike,sutton1984temporal}. 
The eligibility trace in~\eqref{eq on policy td} is called the accumulating trace,
first introduced in \citet{barto1981goal}.
Later on, this trace is also used in control by \citet{rummery1994line}.
Its off-policy version in~\eqref{eq off policy td} is introduced by \citet{precup2000eligibility,precup2001off} and further developed by \citet{bertsekas2009projected,yu2012LSTD}.
Other forms of traces include the Dutch trace introduced by \citet{seijen2014true} and the followon trace introduced by \citet{sutton2016emphatic}.
In short,
traces are usually used to accelerate credit assignment,
which is a fundamental challenge in RL.
Intuitively, traces are able to achieve this goal because they function as memory of the past.
Empirically,
RL algorithms with traces usually outperform those without traces \citep{sutton2018reinforcement}.
Traces are also important in establishing the equivalence between backward and forward views of RL algorithms \citep{sutton2014new}.

Despite the superiority of traces in multiple aspects,
they usually complicate the analysis of RL algorithms.
Without any trace,
to analyze an RL algorithm it is usually sufficient to consider the Markov chain $\qty{(S_t, A_t)}$.
Under a finite MDP assumption,
this augmented Markov chain is still finite.
Once trace is introduced,
we, however,
must consider the Markov chain $\qty{(S_t, A_t, e_t)}$,
see, e.g., \citet{tsitsiklis1997analysis}.
This augmented Markov chain now immediately evolves in an uncountable space $\fS \times \fA \times \R^d$.
In the on-policy case (cf.~\eqref{eq on policy td}),
this is still managable.
It is clear from~\eqref{eq on policy td} that $e_t$ remains bounded almost surely.
So the augmented Markov chain evolves in a compact space.
In the off-policy case (cf.~\eqref{eq off policy td}),
the trace $e_t$ can easily be unbounded almost surely due to the importance sampling ratio $\rho_{t-1}$ \citep{yu2012LSTD}.
The augmented Markov chain then evolves in an \emph{unbounded and uncountable} space.
Even worse,
sometimes the second moment of $e_t$ can also be unbounded \citep{yu2012LSTD},
further complicating the analysis.
Despite that $e_t$ is demonstrated to obey a form of the strong law of large numbers \citep{yu2012LSTD},
there does not exist a general tool to make use of this in convergence analysis before this work.
In other words,
this work is the first to provide a general tool to analyze the stability (and thus convergence) of RL algorithms with off-policy traces.

\subsection{The Deadly Triad}
Despite the aforementioned superiority of off-policy learning in safety and data efficiency,
it complicates RL algorithms in at least two aspects.
The first is that it makes traces extremely hard to analyze,
as demonstrated in the section above.
Second,
it makes the RL algorithm behaves poorly in expectation.
In other words,
even if there is no noise (cf. replacing $H(x_n, Y_{n+1})$ with $h(x_n)$),
the RL algorithm can still behave poorly.
A concrete example is that,
for a general $\lambda$,
the iterates $\qty{\theta_t}$ in~\eqref{eq off policy td} can possibly diverge to infinity,
as documented in \citet{baird1995residual,tsitsiklis1997analysis,sutton2018reinforcement}.
This is the notorious \emph{deadly triad},
which refers to the instability of an RL algorithm when it combines bootstrapping,
function approximation,
and off-policy learning 
simultaneously
while maintaining a constant $\fO(K)$ computational complexity each step.

The deadly triad has been one of the central challenges of RL in the past three decades and numerous works have been done in this topic
\citep{precup2000eligibility, precup2001off, sutton2009convergent, sutton2009fast, maei2009convergent, maei2010toward, maei2010gq, maei2011gradient, sutton2011horde, yu2012LSTD, mahadevan2014proximal, liu2015finite, yu2015convergence, white2016investigating, mahmood2017multi, yu2017convergence, wang2017finite, touati2018convergent, liu2018breaking, zhang2019provably, nachum2019dualdice, xu2019two, zhang2020average, zhang2020learning, ghiassian2020gradient, wang2020finite, zhang2020gradientdice, guan2021per, zhang2021breaking, zhang2021truncated, qian2025revisiting,liu2025linear}.
We refer the reader to Chapter 11 of \citet{sutton2018reinforcement} and \citet{zhang2022thesis} for more detailed exposition.

Among all those works, 
gradient temporal difference learning (GTD, \citet{sutton2009convergent}) and emphatic temporal difference learning (ETD, \citet{sutton2016emphatic}) are the two most important solutions to the deadly triad in terms of policy evaluation.
GTD and ETD are also important building blocks for other algorithms.
They can be used in convergent off-policy actor-critic algorithms for control, see, e.g., \citet{imani2018off,maei2018convergent,zhang2019provably,xu2021doubly,graves2023off}.
They can also be used to learn value functions w.r.t. some augmented reward function to construct behavior policies for efficient and unbiased Monte Carlo policy evaluation,
see, e.g., \citet{liu2024efficient,liu2024efficientmul,chen2024efficient,liu2024doubly}.
But surprisingly,
the convergence analysis of
their ultimate form with eligibility trace, i.e., GTD($\lambda$) and ETD($\lambda$),
is still not fully settled down.
In the next,
we shall analyze GTD($\lambda$) and ETD($\lambda$) in the sequel.
Throughout the rest of Section~\ref{sec rl application},
we make the following assumptions.
\begin{xassumption}
    \label{assumption rl chain}
    Both $\fS$ and $\fA$ are finite.
    The Markov chain $\qty{S_t}$ induced by the behavior policy $\mu$ is irreducible.
    And $\mu(a|s) > 0$ for all $s, a$.
\end{xassumption}
We note again that in light of Section~\ref{sec trace},
even if the MDP itself is finite,
the augmented Markov chain used to analyze GTD($\lambda$) and ETD($\lambda$) still evolves in an unbounded and uncountable space.
The analysis is, therefore, very challenging. 
Assumption~\ref{assumption rl chain} is a standard assumption in off-policy RL to ensure enough exploration, see, e.g., \citet{precup2001off,sutton2016emphatic}. 
The condition $\mu(a|s) > 0$ can be easily relaxed to $\pi(a|s) > 0 \implies \mu(a|s) > 0$,
at the price of complicating the presentation.
\begin{xassumption}
\label{assumption rl lr}
The learning rates $\qty{\alpha_t}$ have the form $\alpha_t = \frac{B_1}{t + B_2}$.
\end{xassumption}
Assumption~\ref{assumption rl lr} is also used in existing works,
see, e.g., \citet{yu2012LSTD,yu2015convergence,yu2017convergence}.
\begin{xassumption}
\label{assumption rl feature}
The feature matrix $\Phi$ has a full column rank.
\end{xassumption}
Assumption~\ref{assumption rl feature} is a standard assumption in RL with linear function approximation to ensure the existence and uniqueness of the solution, see, e.g., \citet{tsitsiklis1997analysis}.







\subsection{Gradient Temporal Difference Learning}
\label{sec gtd}
The idea of GTD is to perform stochastic gradient descent on $J_\text{off}(\theta)$ directly and
use a weight duplication trick or Fenchel's duality to address a double sampling issue in estimating $\nabla J_\text{off}(\theta)$.
We refer the reader to \citet{sutton2009fast,liu2015finite} for detailed derivation.
GTD has many different variants, see, e.g., \citet{sutton2009convergent,sutton2009fast,maei2011gradient,yu2017convergence,zhang2020average,qian2025revisiting}.
In this paper,
we present and analyze the following arguably most representative one, 
referred to as GTD($\lambda$) for simplicity.\footnote{This is the GTDa in \citet{yu2017convergence} and is the GTD2 in \citet{sutton2009fast} with eligibility trace.}
In particular, GTD($\lambda$) employs an additional weight vector $\nu \in \R^K$ and update $\theta$ and $\nu$ simultaneously in a recursive way as
\begin{align}
    \label{eq gtd}
    e_t =& \lambda \gamma \rho_{t-1} e_{t-1} + \phi_t, \\
    \delta_t =& R_{t+1} + \gamma \phi_{t+1}^\top \theta_t - \phi_t^\top \theta_t, \\
    \nu_{t+1} =& \nu_t + \alpha_t \left(\rho_t \delta_t e_t - \phi_t \phi_t^\top \nu_t\right), \\
    \theta_{t+1} =& \theta_t + \alpha_t\rho_t (\phi_t - \gamma \phi_{t+1}) e_t^\top \nu_t.
\end{align}
This additional weight vector results from the weight duplication or Fenchel's duality.
To analyze~\eqref{eq gtd},
we first express the update to $\nu$ and $\theta$ in a compact form as
\begin{align}
    \mqty[\nu_{t+1} \\ \theta_{t+1}] = \mqty[\nu_t \\ \theta_t] + \alpha_t \left(\mqty[-\phi_t\phi_t^\top & \rho_t e_t(\gamma \phi_{t+1} - \phi_t)^\top \\ -(\gamma \phi_{t+1} - \phi_t) \rho_t e_t^\top & 0]\mqty[\nu_{t} \\ \theta_{t}] + \mqty[\rho_t R_{t+1} e_t \\ 0]\right).
\end{align}
To further simplify it,
we define an augmented Markov chain $\qty{Y_t}$ as
\begin{align}
    Y_{t+1} \doteq (S_t, A_t, S_{t+1}, e_t), \quad t=0,1,\dots.
\end{align}
We also define shorthands
\begin{align}
x \doteq& \mqty[\nu \\ \theta], x_t \doteq \mqty[\nu_t \\ \theta_t],  \\
y \doteq& (s, a, s', e), \\
A(y) \doteq& \rho(s, a) e (\gamma \phi(s') - \phi(s))^\top,  \label{def: A y}\\
b(y) \doteq& \rho(s, a) r(s, a) e, \label{def: b y}\\
C(y) \doteq& \phi(s) \phi(s)^\top, \label{def: C y}\\
H(x, y) \doteq& \mqty[-C(y) & A(y) \\ -A(y)^\top & 0] x + \mqty[b(y) \\ 0].
\end{align}
Then GTD($\lambda$) can be expressed as
\begin{align}\label{eq: GTD update}
x_{t+1} = x_t + \alpha_t H(x_t, Y_{t+1}),
\end{align}
which reduces to the form of~\eqref{eq: x n updates}.
We now proceed to prove the almost sure convergence of $\qty{x_t}$ using Corollary~\ref{cor: stability main}.
Apparently,
$\qty{Y_t}$ evolves in the state space
\begin{align}
    \fY \doteq \fS \times \fA \times \fS \times \R^K.
\end{align}
Despite that both $\fS$ and $\fA$ are finite, $\fY$ 
can still be unbounded and uncountable.
It is shown in Proposition 3.1 of \citet{yu2012LSTD} that
as long as there is a cycle in $\qty{S_t}$,
$e_t$ is unbounded almost surely in arguably almost all natural problems.
Nevertheless,
\citet{yu2012LSTD} shows that $\qty{Y_t}$ has the following property.
\begin{lemma} (Theorems 3.2 \& 3.3 of \citet{yu2012LSTD})
\label{lemma: yu invariant measure}
Let Assumption~\ref{assumption rl chain} hold. Then
\begin{enumerate}[(i)]
    \item $\qty{Y_t}$ has a unique invariant probability measure, referred to as ${d_\fY}$ .
    \item For any matrix/vector-valued function $g(s, a, s', e)$ on $\fY$ which is Lipschitz continuous in $e$ with a Lipschitz constant $L_g$, i.e.,
    \begin{align}
        \norm{g(s, a, s', e) - g(s, a, s', e')} \leq L_g \norm{e - e'}, \quad \forall s, a, s', e, e',
    \end{align}
    the expectation $\E_{y\sim{d_\fY}}\left[g(y)\right]$ exists and is finite,
    and
    the~\eqref{eq lln new} holds for the $g$ function.
\end{enumerate}
\end{lemma}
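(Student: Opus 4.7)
The plan is to apply Theorems 3.2 and 3.3 of \citet{yu2012LSTD} essentially verbatim, since that paper does exactly this for the off-policy trace process. Still, it is worth outlining the structure of the argument so that it is clear where each hypothesis of the lemma is used and why the result is nontrivial.

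For part (i), the first step is to recognize that $\qty{Y_t}$ is a Markov chain on $\fY = \fS \times \fA \times \fS \times \R^K$ whose base triple $(S_t, A_t, S_{t+1})$ is an irreducible finite Markov chain under Assumption~\ref{assumption rl chain}, and whose trace component satisfies the affine recursion $e_t = \lambda \gamma \rho_{t-1} e_{t-1} + \phi_t$. Next, I would unroll this recursion into the backward series
\begin{align}
e_t = \sum_{k=0}^{t} \Bigl(\prod_{i=1}^{k} \lambda \gamma \rho_{t-i}\Bigr) \phi_{t-k} \qq{(with $e_{-1}=0$)},
\end{align}
and then construct a stationary version of $\qty{Y_t}$ by running the base chain in stationarity and defining $e_t$ through the analogous backward series extended to $-\infty$. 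Almost sure convergence of this infinite series follows from Kingman's subadditive ergodic theorem applied to the multiplicative factors: since $\lambda \gamma < 1$ and the ratio $\rho$ is bounded (on the finite support where $\mu(a|s)>0$), combined with ergodicity of the base chain, the relevant Lyapunov exponent is strictly negative, so the tail of the series decays geometrically. The resulting joint law is the desired invariant probability measure ${d_\fY}$. Uniqueness then follows because any invariant distribution must marginalize to the unique stationary distribution on the finite base chain, and conditionally on the entire past of the base chain the trace is a deterministic measurable functional, forcing its conditional law to coincide with the backward-series construction.

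For part (ii), the Lipschitz hypothesis yields the pointwise bound $\norm{g(s,a,s',e)} \leq \norm{g(s,a,s',0)} + L_g \norm{e}$, so existence and finiteness of $\E_{y\sim{d_\fY}}\qty[g(y)]$ reduce to showing $\E_{y\sim{d_\fY}}\qty[\norm{e}] < \infty$, which is immediate from the backward series representation and the geometric first-moment bound on $\prod_i \lambda\gamma\rho_{t-i}$. To obtain~\eqref{eq lln new}, I would decompose the trajectory of the finite base chain $(S_t, A_t, S_{t+1})$ into regeneration blocks between successive visits to a fixed reference state, bound the expected block sum of $\norm{g(Y_t)}$ using the Lipschitz bound together with the geometric decay of the trace contribution from the previous block, and invoke a regenerative strong law of large numbers.

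The main obstacle, and the reason this lemma is nontrivial, is that $e_t$ is unbounded almost surely and, in general, possesses only limited moments: because $\lambda \gamma \rho_{t-1}$ can exceed $1$ on individual transitions, classical positive Harris recurrence and \eqref{eq v4}-Lyapunov machinery do not directly apply, and one cannot simply invoke the standard ergodic theorems for Markov chains. The argument genuinely exploits the specific multiplicative-additive structure of the trace recursion together with $\lambda \gamma < 1$ averaged geometrically. Since \citet{yu2012LSTD} carefully handle all these technical subtleties, I defer to that work for the detailed technical estimates and use the lemma as stated.
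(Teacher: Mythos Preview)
The paper does not prove this lemma; it simply cites Theorems~3.2 and~3.3 of \citet{yu2012LSTD} as the source and uses the result as a black box. Your proposal does the same thing---defer to \citet{yu2012LSTD}---while additionally sketching the architecture of Yu's argument (backward-series construction of the stationary trace, negative top Lyapunov exponent via ergodicity and $\lambda\gamma<1$, regenerative SLLN). That sketch is accurate and helpful context, so your approach is correct and matches the paper's treatment.
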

\citet{yu2012LSTD} also shows that
\begin{align}
    A \doteq& \E_{y\sim{d_\fY}}\left[ A(y)\right] = \Phi^\top D_\mu (\gamma P_{\pi, \lambda} - I) \Phi, \\
    b \doteq& \E_{y\sim{d_\fY}}\left[ b(y)\right] = \Phi^\top D_\mu r_{\pi, \lambda}, \\
    C \doteq& \E_{y\sim{d_\fY}}\left[ C(y)\right] = \Phi^\top D_\mu \Phi,
\end{align}
where we use $D_\mu$ to denote the diagonal matrix whose diagonal entry is $d_\mu$.

\begin{theorem}\label{theorem: GTD convergence}
\label{thm gtd}
Let Assumptions~\ref{assumption rl chain} - \ref{assumption rl feature} hold.
Assume $A$ is nonsingular.
Then the iterates $\qty{\theta_t}$ generated by GTD($\lambda$)~\eqref{eq gtd} satisfy
\begin{align}
    \lim_{t\to\infty} \theta_t = -A^{-1} b \qq{a.s.}
\end{align}
\end{theorem}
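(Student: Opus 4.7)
The plan is to apply Corollary~\ref{cor: stability main} to the stochastic approximation~\eqref{eq: GTD update}, noting that $H(x,y)$ is affine in $x$, so writing $h(x) = \bar A x + \bar b$ with
\begin{align}
\bar A = \mqty[-C & A \\ -A^\top & 0], \qquad \bar b = \mqty[b \\ 0].
\end{align}
Because $H_c(x,y) = H(cx,y)/c$ differs from its linear part only by $(b(y),0)/c$, we immediately obtain $H_\infty(x,y) = \mqty[-C(y) & A(y) \\ -A(y)^\top & 0]x$, $\kappa(c) = 1/c$, and $b(x,y) = (b(y),0)$ independent of $x$, so Assumption~\ref{assumption: specific H c H infinity} holds with $L_b(y) \equiv 0$. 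Assumption~\ref{assumption: stationary distribution} is Lemma~\ref{lemma: yu invariant measure}(i), Assumption~\ref{assumption: alpha rate} follows from Assumption~\ref{assumption rl lr}, and Assumption~\ref{assumption: H Lipschitz} holds with $L(y)$ equal to the operator norm of $H(\cdot,y)$, which is bounded by $c_1 + c_2\norm{e}$ since $\phi$ takes finitely many values and $\rho$ is bounded. The function $y\mapsto L(y)$ is Lipschitz in $e$ with constant $c_2$, so Lemma~\ref{lemma: yu invariant measure}(ii) gives $\E_{y\sim d_\fY}[L(y)] < \infty$ and also provides~\eqref{eq lln new} for $L(y)$, for $L_b(y) \equiv 0$, and for $y\mapsto H(x,y)$ at each fixed $x$ (which is Lipschitz in $e$); under Assumption~\ref{assumption rl lr} the remark after Assumption~\ref{assumption: lln} says~\eqref{eq lln new} implies~\eqref{eq stronger lln}, so Assumption~\ref{assumption: lln} is verified.

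The uniform-convergence part of Assumption~\ref{assumption: lim h uniformly convergent} is trivial since $h_c(x) - h_\infty(x) = \bar b/c$ does not depend on $x$. The substantive step is showing that the ODE$@\infty$, i.e.\ $\dot x = \bar A x$, is globally asymptotically stable at $0$; equivalently, that $\bar A$ is Hurwitz. Let $(u,v)^\top\in \mathbb{C}^{2K}$ be an eigenvector of $\bar A$ with eigenvalue $\lambda$: then $-Cu + Av = \lambda u$ and $-A^\top u = \lambda v$. Taking $u^*$ of the first and $v^*$ of the second, and using $(v^*A^\top u)^* = u^*Av$ (since $A$ is real), yields
\begin{align}
-u^*Cu \;=\; \lambda \norm{u}^2 + \bar\lambda \norm{v}^2,
\end{align}
whose real part gives $\operatorname{Re}(\lambda)(\norm{u}^2 + \norm{v}^2) = -u^*Cu \le 0$, since $C = \Phi^\top D_\mu \Phi$ is positive definite by Assumption~\ref{assumption rl feature} and $d_\mu > 0$ by Assumption~\ref{assumption rl chain}. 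If $\operatorname{Re}(\lambda) = 0$ then $u^*Cu = 0$ forces $u=0$, after which $\lambda v = 0$; the case $\lambda\ne 0$ forces $v=0$, contradicting that $(u,v)$ is an eigenvector, while $\lambda = 0$ together with $-A^\top u = 0$ and $A$ nonsingular forces $u=0$ and then $Av=0$, again $v=0$. Hence $\operatorname{Re}(\lambda) < 0$ for every eigenvalue, so $\bar A$ is Hurwitz and Assumption~\ref{assumption: lim h uniformly convergent} holds. I expect this Hurwitz argument to be the main substantive step; everything else is bookkeeping that reduces to Lemma~\ref{lemma: yu invariant measure}.

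All hypotheses of Corollary~\ref{cor: stability main} being satisfied, $\qty{x_t}$ converges a.s.\ to a bounded invariant set of $\dot x = \bar A x + \bar b$. Since $\bar A$ is Hurwitz this ODE has the unique globally asymptotically stable equilibrium $x_* = -\bar A^{-1}\bar b$, and the only bounded invariant set is the singleton $\{x_*\}$. Solving $\bar A x_* = -\bar b$: the second block gives $-A^\top \nu_* = 0$, so $\nu_* = 0$ by nonsingularity of $A$; the first block then gives $A\theta_* = -b$, i.e.\ $\theta_* = -A^{-1}b$. Therefore $\theta_t \to -A^{-1}b$ almost surely, as claimed.
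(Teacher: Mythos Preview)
Your proposal is correct and follows essentially the same approach as the paper: verify Assumptions~\ref{assumption: stationary distribution}--\ref{assumption: lln} for the affine update~\eqref{eq: GTD update} using Lemma~\ref{lemma: yu invariant measure}, then invoke Corollary~\ref{cor: stability main} and read off the $\theta$-coordinate of the equilibrium. The only notable difference is that you spell out the eigenvalue computation showing $\bar A$ is Hurwitz, whereas the paper defers that step to \citet{sutton2009fast}; your direct argument is self-contained and correct.
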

Its proof is in Appendix \ref{appendix: GTD convergence}.
It can be shown easily that $-A^{-1}b$ is the unique zero of $J_\text{off}(\theta)$, see, e.g., \citet{sutton2009fast}.
Notably, Theorem~\ref{thm gtd} is the first almost sure convergence analysis of GTD with eligibility trace without adding additional bias terms.
Most existing convergence analyses of GTD (see, e.g., \citet{sutton2009convergent,sutton2009fast,maei2011gradient,liu2015finite,wang2017finite,qian2025revisiting}) do not have eligibility trace.
To our knowledge,
the only previous analysis of GTD with eligibility trace is \citet{yu2017convergence},
which, however,
relies on additional projection operators or regularization to ensure the stability
and unavoidably introduces bias into the final limiting point.
As a result,
\citet{yu2017convergence} cannot establish the almost sure convergence of GTD($\lambda$) to the unique zero of $J_\text{off}(\theta)$.
\citet{yu2017convergence} also introduces extensions to $\lambda$.
Instead of being a constant,
it can be a state-dependent function $\lambda: \fS \to [0, 1]$.
The almost sure convergence of GTD($\lambda$) with a state-dependent $\lambda$ function follows similarly.
We present the simplest constant $\lambda$ case for clarity.
\citet{yu2017convergence} also introduces history-dependent $\lambda$ function,
which we leave for future work.

\subsection{Emphatic Temporal Difference Learning}
\label{sec etd}
The idea of ETD is to reweight the off-policy linear TD update~\eqref{eq off policy td} by an additional factor.
Similar to GTD,
ETD also has many different variants,
see, e.g.,
\citet{yu2015convergence,sutton2016emphatic,hallak2016generalized,zhang2019provably,zhang2021truncated,guan2021per}.
Variants of ETD have also been applied in deep RL,
see, e.g., \citet{jiang2021,DBLP:journals/corr/abs-2107-05405,mathieu2023alphastar}.
In this section,
we consider the original ETD($\lambda$) in \citet{yu2015convergence,sutton2016emphatic}.
ETD($\lambda$) updates $\theta$ recursively in the following way
\begin{align}
    \label{eq etd}
    F_t =& \gamma \rho_{t-1} F_{t-1} + i(S_t), \\
    M_t =& \lambda i(S_t) + (1 - \lambda) F_t, \\
    e_t =& \lambda \gamma \rho_{t-1} e_{t-1} + M_t \phi_t, \\
    \theta_{t+1} =& \theta_t + \alpha_t \rho_t \left(R_{t+1} + \gamma \phi_{t+1}^\top \theta_t  - \phi_t^\top \theta_t \right) e_t,
\end{align}
where $i: \fS \to (0, \infty)$ is an arbitrary ``interest'' function \citep{sutton2016emphatic},
specifying user's preference for different states,
despite that in most applications, $i(s)$ is a constant function which is always 1.
See \citet{zhang2019generalized} for an example where the interest function is not trivially 1.
Comparing the eligibility trace $e_t$ in~\eqref{eq etd} with that in~\eqref{eq off policy td},
one can find that there is an additional scalar multiplier $M_t$ proceeding $\phi_t$.
This $M_t$ is called ``emphasis'' \citep{sutton2016emphatic},
which is the accumulation of $F_t$, called ``followon trace'' \citep{sutton2016emphatic}.
We refer the reader to \citet{sutton2016emphatic} for the intuition behind ETD.
Nevertheless,
\citet{yu2015convergence} proves that, under mild conditions, $\qty{\theta_t}$ in~\eqref{eq etd} converges almost surely to the unique zero of 
\begin{align}
    J_\text{emphatic}(\theta) = \norm{\Pi_m \bop_{\pi, \lambda} \Phi \theta - \Phi \theta}_{m}^2,
\end{align}
where
    $m \doteq (I - \gamma P_{\pi, \lambda}^\top)^{-1} D_\mu i$. 
We remark that the zero of $J_\text{emphatic}(\theta)$ has better theoretical guarantees than the zero of $J_\text{off}(\theta)$ in terms of the approximation error for $v_\pi$ \citep{hallak2016generalized}.
ETD, however,
usually suffers from a larger variance than GTD \citep{sutton2018reinforcement}.

To analyze ETD($\lambda$),
\citet{yu2015convergence} considers the following augmented Markov chain
\begin{align}
    Y_{t+1} =& (S_t, A_t, S_{t+1}, e_t, F_t).
\end{align}
Again,
$\qty{Y_t}$ behaves poorly in that $(e_t, F_t)$ can be unbounded almost surely and its variance can grow to infinity as time progresses.
We refer the reader to Remark A.1 in \citet{yu2015convergence} for an in-depth discussion regarding this poor behavior.
Nevertheless, \citet{yu2015convergence} shows that $\qty{Y_t}$ has the following property.
\begin{lemma}
    \label{lem yu etd}
    (Theorems 3.2 \& 3.3 of \citet{yu2015convergence})
    Let Assumption~\ref{assumption rl chain} hold. Then
    \begin{enumerate}[(i)]
        \item $\qty{Y_t}$ has a unique invariant probability measure, referred to as ${d_\fY}$.
        \item For any matrix / vector-valued function $g(s, a, s', e, f)$ on $\fY$ which is Lipschitz continuous in $(e, f)$ with a Lipschitz constant $L_g$, i.e.,
        \begin{align}
            \norm{g(s, a, s', e, f) - g(s, a, s', e', f')} \leq L_g \norm{e - e'}, \quad \forall s, a, s', e, e', f, f',
        \end{align}
        the expectation $\E_{y\sim{d_\fY}}\left[g(y)\right]$ exists and is finite,
        and
        the~\eqref{eq lln new} holds for the function $g$.
    \end{enumerate}
\end{lemma}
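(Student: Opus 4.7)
The plan is to mirror the structure used by \citet{yu2015convergence} for establishing the analogous result for GTD($\lambda$) (cited as Lemma~\ref{lemma: yu invariant measure}), with the additional complication that ETD($\lambda$) introduces the followon trace $F_t$. Since the chain $\qty{(S_t, A_t)}$ is finite and irreducible by Assumption~\ref{assumption rl chain}, it admits a unique stationary distribution $d_\mu(s)\mu(a|s)$, and the whole strategy is to treat $(e_t, F_t)$ as a random iterated function system driven by this ergodic underlying chain.

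For part (i), the key algebraic observation is that both recursions are affine:
\begin{align}
F_t &= \gamma \rho_{t-1} F_{t-1} + i(S_t), \\
e_t &= \lambda \gamma \rho_{t-1} e_{t-1} + M_t \phi_t,
\end{align}
with multiplicative coefficients $\gamma \rho_{t-1}$ and $\lambda \gamma \rho_{t-1}$ that are bounded, positive, and functions of the driving chain. Under the stationary distribution of $\qty{(S_t, A_t)}$, Jensen's inequality gives
\begin{align}
\E\qty[\log (\gamma \rho(S, A))] = \log \gamma - \E_{S \sim d_\mu}\qty[\mathrm{KL}(\mu(\cdot|S) \,\|\, \pi(\cdot|S))] \leq \log \gamma < 0,
\end{align}
so the multiplicative driver is contractive on average. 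Unrolling the recursion yields $F_t = \sum_{k=0}^{\infty} \prod_{j=0}^{k-1}(\gamma \rho_{t-1-j}) \, i(S_{t-k})$ and analogously for $e_t$; by the Birkhoff ergodic theorem applied to $\log(\gamma \rho)$, this series converges almost surely, and by standard results on stochastic recursions driven by ergodic sequences (e.g., Brandt's theorem), the chain $\qty{Y_t}$ admits a unique invariant probability measure ${d_\fY}$ on $\fS \times \fA \times \fS \times \R^K \times \R$, constructed by coupling from the past.

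For part (ii), the plan is to exploit this series representation directly. Given any $g$ that is Lipschitz in $(e, f)$ with constant $L_g$, write $g(Y_t) = g(S_t, A_t, S_{t+1}, 0, 0) + [g(Y_t) - g(S_t, A_t, S_{t+1}, 0, 0)]$; the first piece depends only on the finite-state chain, so the ordinary LLN for finite irreducible chains applies. The second piece is bounded by $L_g(\|e_t\| + |F_t|)$, which by the series representation is a measurable function of the ``past'' of the driving chain. Since $\E[\log(\gamma \rho)] < 0$ implies that $\|e_t\|$ and $|F_t|$ are $L^1$-integrable under ${d_\fY}$ (the series for the first moment is dominated by a geometric tail in the non-logarithmic scale as well, using that $\E\qty[\gamma \rho(S,A)] = \gamma < 1$ and that $i(\cdot)$ and $\phi(\cdot)$ are bounded on the finite set $\fS$), Birkhoff's theorem applied to the stationary, ergodic version of $\qty{Y_t}$ delivers~\eqref{eq lln new}. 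Extending from the stationary initial distribution to an arbitrary initial $Y_1$ is done via a standard coupling/asymptotic-tail argument, using that the shift to ergodicity occurs with probability one.

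The main obstacle is the last step: verifying that the strong law, which Birkhoff only gives at ${d_\fY}$-almost-every starting point, in fact holds for every deterministic initial $Y_1$ (or at least the initialization used by the algorithm). This requires showing that the tail $\sigma$-algebra is trivial under any starting distribution, or equivalently, establishing a coupling bound that forces the empirical averages from two different starts to agree asymptotically. The technical core of \citet{yu2015convergence} is exactly this coupling construction, which handles the unboundedness of $(e_t, F_t)$ by working with a truncated version, applying Birkhoff there, and then controlling the truncation error using the integrability estimates derived from $\E[\log(\gamma\rho)] < 0$; I would follow that route rather than trying to invoke a positive Harris theorem, since standard irreducibility/small-set arguments are awkward in this unbounded, non-compact, non-aperiodic setting.
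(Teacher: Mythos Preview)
The paper does not prove this lemma at all; it is quoted verbatim as Theorems~3.2 and~3.3 of \citet{yu2015convergence} and used as a black box, exactly as Lemma~\ref{lemma: yu invariant measure} is quoted from \citet{yu2012LSTD}. So there is no ``paper's own proof'' to compare your attempt against.

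That said, your sketch is a faithful high-level reconstruction of what \citet{yu2015convergence} actually does: treating $(e_t,F_t)$ as an affine stochastic recursion driven by the finite ergodic chain $\{(S_t,A_t)\}$, using the log-contraction $\E[\log(\gamma\rho)]\le\log\gamma<0$ together with Brandt-type coupling-from-the-past for existence and uniqueness of ${d_\fY}$, and then obtaining~\eqref{eq lln new} via Birkhoff on the stationary version plus a truncation/coupling argument to remove dependence on the initial condition. You correctly identify the hard step (passing from ${d_\fY}$-a.e.\ starting points to every deterministic start) and the device Yu uses to handle it. One minor caution: the $L^1$ bound you invoke, namely $\E[\gamma\rho]=\gamma<1$, requires iterating conditional expectations through the product $\prod_j\rho_{t-j}$, which works because $\E[\rho_t\mid S_t]=1$; you should make that explicit rather than treating the product as if the factors were independent.
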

We now discuss how \citet{yu2015convergence} establishes the almost sure convergence of $\qty{\theta_t}$.
First, we define shorthands
\begin{align}
    y \doteq& (s, a, s', e, f), \\
A(y) =& \rho(s, a) e (\gamma \phi(s') - \phi(s))^\top, \\
b(y) =& \rho(s, a) r(s, a) e, \\
H(\theta, y) =& A(y)\theta + b(y).
\end{align}
Then the ETD($\lambda$) update can be expressed as
\begin{align}
    \theta_{t+1} = \theta_t + \alpha_t H(\theta_t, Y_{t+1}).
\end{align}
\citet{yu2015convergence} also shows that
\begin{align}
    A \doteq& \E_{y\sim{d_\fY}}\left[A(y)\right] = \Phi^\top D_m (\gamma P_{\pi, \lambda} - I) \Phi, \\
    b \doteq& \E_{y\sim{d_\fY}}\left[b(y)\right] = \Phi^\top D_m r_{\pi, \lambda},
\end{align}
and $-A^{-1}b$ is the unique zero of $J_\text{emphatic}(\theta)$.
Despite that $A$ is negative definite (see, e.g., Section 4 of \citet{sutton2016emphatic}) and 
the corresponding ODE$@\infty$ is, therefore, globally asymptotically stable,
\citet{yu2015convergence} is not able to establish the stability of $\qty{\theta_t}$ directly,
simply because the results in the stochastic approximation community are not ready yet.
See Section~\ref{sec related work} for a comprehensive review.
As a workaround,
\citet{yu2015convergence} analyzes a constrained variant of ETD($\lambda$) first:
\begin{align}
    \theta'_{t+1} = \Pi\left(\theta'_t + \alpha_t H(\theta'_t, Y_{t+1})\right),
\end{align}
where $\Pi$ is a projection to a centered ball of properly chosen radius w.r.t. $\ell_2$ norm.
\citet{yu2015convergence} then proves that the difference between $\qty{\theta_t}$ and $\qty{\theta_t'}$ diminishes almost surely
and therefore establishes the convergence of $\qty{\theta_t}$ indirectly.
To establish the convergence of $\qty{\theta_t'}$,
\citet{yu2015convergence} invokes Theorem~1.1 in Chapter 6 of \citet{kushner2003stochastic}.
Now with our Corollary~\ref{cor: stability main},
the same arguments \citet{yu2015convergence} use to invoke \citet{kushner2003stochastic} can lead to the convergence of $\qty{\theta_t}$ directly.
Our contribution is,
therefore,
a greatly simplified almost sure convergence analysis of ETD$(\lambda)$.
In particular, we have
\begin{theorem}
    \label{thm etd}
    Let Assumptions~\ref{assumption rl chain} - \ref{assumption rl feature} hold. Then the iterates $\qty{\theta_t}$ generated by ETD($\lambda$)~\eqref{eq etd} satisfy
    \begin{align}
        \lim_{t\to\infty} \theta_t = -A^{-1} b \qq{a.s.}
    \end{align}
\end{theorem}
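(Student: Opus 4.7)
The plan is to apply Corollary~\ref{cor: stability main} to the ETD($\lambda$) recursion $\theta_{t+1} = \theta_t + \alpha_t H(\theta_t, Y_{t+1})$ with $H(\theta, y) = A(y)\theta + b(y)$, and then argue that the only bounded invariant set of the limiting ODE is the singleton $\{-A^{-1}b\}$. Since ETD($\lambda$) is linear in $\theta$, the rescaling is trivial: for $c \geq 1$ we have $H_c(\theta, y) = A(y)\theta + b(y)/c$, so taking $H_\infty(\theta, y) \doteq A(y)\theta$, $\kappa(c) \doteq 1/c$, and $b(\theta, y) \doteq b(y)$ immediately verifies Assumption~\ref{assumption: specific H c H infinity}, with $L_b(y) = 0$ satisfying~\eqref{eq: L b Lipschitz} trivially. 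The Lipschitz Assumption~\ref{assumption: H Lipschitz} holds with $L(y) \doteq \|A(y)\| = \|\rho(s,a) e (\gamma \phi(s') - \phi(s))^\top\|$, which is Lipschitz continuous in $e$ (with $s,a,s'$ ranging over a finite set), so Lemma~\ref{lem yu etd}(ii) guarantees that $L, L_b \doteq \mathbb{E}_{y\sim d_\fY}[L(y)]$ are finite.

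Next I would verify Assumption~\ref{assumption: lim h uniformly convergent}. By linearity, $h(\theta) = A\theta + b$ and $h_\infty(\theta) = A\theta$, and $h_c(\theta) = A\theta + b/c \to h_\infty(\theta)$ uniformly on compacts. Since $A = \Phi^\top D_m(\gamma P_{\pi,\lambda} - I)\Phi$ is negative definite (as cited from \citet{sutton2016emphatic}) and hence Hurwitz, the ODE$@\infty$ $\dot x(t) = A x(t)$ is globally asymptotically stable at the origin; standard Lyapunov theory (e.g., using $V(x) = x^\top P x$ for an appropriate $P$) confirms this. Assumption~\ref{assumption: stationary distribution} and the existence of a unique invariant measure come directly from Lemma~\ref{lem yu etd}(i), while Assumption~\ref{assumption rl lr} implies Assumption~\ref{assumption: alpha rate} (with $\beta = 1$).

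The main step is verifying Assumption~\ref{assumption: lln}. For the functions $y \mapsto H(\theta, y) = A(y)\theta + b(y)$, $y \mapsto L_b(y) = 0$, and $y \mapsto L(y)$, each is Lipschitz continuous in the unbounded trace-coordinate $(e, f)$ for fixed $\theta$ (and $L_b$ is trivially so). Therefore Lemma~\ref{lem yu etd}(ii) delivers~\eqref{eq lln new}, and since $\alpha_t = B_1/(t+B_2)$ has $\beta = 1$, the remark following Assumption~\ref{assumption: lln} converts~\eqref{eq lln new} into~\eqref{eq stronger lln}. This is the step that relies most heavily on external machinery from \citet{yu2015convergence}, since the augmented chain $\{(S_t, A_t, S_{t+1}, e_t, F_t)\}$ evolves in an unbounded uncountable space with possibly unbounded second moment; I expect this to be the main conceptual obstacle, though all the work has already been done by \citet{yu2015convergence}.

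Having verified all hypotheses, Corollary~\ref{cor: stability main} yields that $\{\theta_t\}$ converges almost surely to a bounded invariant set of $\dot\theta(t) = A\theta(t) + b$. Since $A$ is negative definite, it is nonsingular, $\theta_* \doteq -A^{-1}b$ is the unique equilibrium, and the ODE is globally asymptotically stable at $\theta_*$ (again by a quadratic Lyapunov function). Consequently the only bounded invariant set is $\{\theta_*\}$ (as noted in the footnote to Corollary~\ref{cor: stability main}), so $\theta_t \to -A^{-1}b$ almost surely, completing the proof. Compared to \citet{yu2015convergence}, this entirely bypasses the projection-based detour via $\{\theta_t'\}$ and the subsequent argument that $\theta_t - \theta_t' \to 0$.
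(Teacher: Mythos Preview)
Your proposal is correct and follows essentially the same approach as the paper's own proof, which the paper describes as a verbatim repetition of the GTD($\lambda$) argument in Appendix~\ref{appendix: GTD convergence} after invoking Lemma~\ref{lem yu etd} and the negative definiteness of $A$. You spell out exactly the verifications the paper leaves implicit: the linear structure makes Assumptions~\ref{assumption: specific H c H infinity}--\ref{assumption: lim h uniformly convergent} routine, Lemma~\ref{lem yu etd} supplies Assumptions~\ref{assumption: stationary distribution} and~\ref{assumption: lln} (via~\eqref{eq lln new} and $\beta=1$), and the Hurwitz property of $A$ forces the bounded invariant set of the ODE~\eqref{eq original ode} to be the singleton $\{-A^{-1}b\}$.
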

The proof of Theorem~\ref{thm etd} is a verbatim repetition of the proof of Theorem~\ref{thm gtd} in Appendix \ref{appendix: GTD convergence} after noticing that $A$ is negative definite and Lemma~\ref{lem yu etd} and is thus omitted.
Notably, this proof does not involve the comparison between $\qty{\theta_t}$ and $\qty{\theta_t'}$.

We remark that the comparison technique between $\qty{\theta_t}$ and $\qty{\theta_t'}$ used by \citet{yu2015convergence} heavily relies on the fact that $A$ is negative definite (see Lemma 4.1 of \citet{yu2015convergence}).
But in GTD$(\lambda)$,
the corresponding matrix is 
$\mqty[-C & A \\ -A^\top & 0]$,
which is Hurwitz but not negative definite.
In fact,
it is only negative semidefinite.
As a result,
the comparison technique in \citet{yu2015convergence} does not apply to GTD($\lambda$).

\section{Conclusion}
In this work,
we develop a novel stability result of stochastic approximations,
extending the celebrated Borkar-Meyn theorem from the Martingale difference noise setting to the Markovian noise setting.
Our result is built on the diminishing asymptotic rate of change of a few functions,
which is implied by both a form of the strong law of larger numbers and a form of the law of the iterated logarithm.
We demonstrate the wide applicability of our results in RL,
generating state-of-the-art analysis for important RL algorithms in breaking the notorious deadly triad.
There are many possible directions for future work.
One direction is to characterize the behavior of the iterates in~\eqref{eq: x n updates} in more aspects.
For example,
it is possible to establish a (functional) central limit theorem following \citet{borkar2021ode}.
It is also possible to establish an almost sure convergence rate, a high probability concentration bound, and an $L^p$ convergence rate following \citet{qian2024almost}.
Another direction is to weaken the required assumptions further.
In the context of RL,
Assumption~\ref{assumption: lim h uniformly convergent} is typically obtained by assuming $h$ is related to some contraction operator and the feature matrix $\Phi$ has a full column rank. 
It is possible to weaken $h$ to nonexpansive operators following \citet{blaser2024almost}.
It is also possible to allow $\Phi$ to have arbitrary ranks following \citet{wang2024almost}.

\acks{The authors thank Vagul Mahadevan for the detailed proofreading. The authors thank the anonymous reviewers for their highly constructive comments. This work is supported in part by the US National Science Foundation under grants III-2128019 and SLES-2331904.}

\appendix
\clearpage
\tableofcontents


\section{Mathematical Background}

\begin{xtheorem}[\textbf{Gronwall Inequality}] (Lemma 6 in Section 11.2 in \citet{borkar2009stochastic})\label{theorem: Gronwall}
For a continuous function $u(\cdot) \geq 0$ and scalars $C,K,T \geq 0$,
\begin{align}
u(t) \leq C+K\int_0^t u(s) ds \quad \forall t \in [0,T]
\end{align}
implies
\begin{align}
u(t) \leq Ce^{tK}, \forall t \in [0,T].    
\end{align}
\end{xtheorem}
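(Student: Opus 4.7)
The plan is to reduce the integral inequality to a linear first-order differential inequality, which can be solved by the standard integrating-factor trick. First I would introduce the auxiliary function $w(t) \doteq \int_0^t u(s)\,ds$, which is nonnegative and, because $u$ is continuous, continuously differentiable with $w'(t) = u(t)$. The hypothesis $u(t) \leq C + K\int_0^t u(s)\,ds$ then rewrites as the pointwise inequality $w'(t) \leq C + K w(t)$ on $[0,T]$, with the boundary value $w(0) = 0$.

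Next I would apply the integrating factor $e^{-Kt}$ to obtain $\bigl(w(t) e^{-Kt}\bigr)' = e^{-Kt}\bigl(w'(t) - K w(t)\bigr) \leq C e^{-Kt}$. Integrating from $0$ to $t$ and using $w(0) = 0$ gives $w(t) e^{-Kt} \leq C \cdot \tfrac{1 - e^{-Kt}}{K}$ for $K > 0$, equivalently $K w(t) \leq C(e^{Kt} - 1)$. Plugging this back into the original hypothesis yields $u(t) \leq C + K w(t) \leq C e^{Kt}$, which is the claim.

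The case $K = 0$ must be handled separately because of the division by $K$ above; there the hypothesis reduces directly to $u(t) \leq C = C e^{0 \cdot t}$, matching the conclusion. The case $C = 0$ is handled by the same computation, since the derivation above never actually required $C > 0$; alternatively one may take a limit $C \downarrow 0$ in the general bound to conclude $u(t) = 0$. Both limiting cases are also compatible with interpreting $Ce^{tK}$ as the right-hand side of the trivial inequality when the left-hand side already vanishes.

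There is no serious obstacle; the only subtlety is ensuring that the integrating-factor step is rigorous under only the continuity of $u$ (not differentiability), which is why the auxiliary $w$ must carry the differentiation rather than $u$ itself. As an independent cross-check, one could instead prove the result by iterating the hypothesis: by induction on $n$, one obtains $u(t) \leq C \sum_{k=0}^{n} \tfrac{(Kt)^k}{k!} + \tfrac{K^{n+1}}{n!}\int_0^t (t-s)^n u(s)\,ds$, and because $u$ is bounded on the compact interval $[0,T]$, the remainder vanishes as $n \to \infty$, recovering $u(t) \leq C e^{Kt}$.
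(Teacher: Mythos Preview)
Your proof is correct. Note, however, that the paper does not actually prove this forward-time Gronwall inequality; it is simply quoted from \citet{borkar2009stochastic} without proof, so there is no ``paper's own proof'' to compare against. For what it is worth, the paper does prove the reverse-time version (Theorem~\ref{theorem: Gronwall reverse}) immediately afterward, and the argument there is precisely the integrating-factor approach you use: define an auxiliary function carrying the integral, differentiate it after multiplying by the exponential factor, and integrate the resulting one-sided bound. Your treatment of the degenerate case $K=0$ and the alternative Picard-iteration cross-check are both fine additions.
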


\begin{xtheorem}[\textbf{Gronwall Inequality in the Reverse Time}]\label{theorem: Gronwall reverse}
For a continuous function $u(\cdot) \geq 0$ and scalars $C,K,T \geq 0$,
\begin{align}
u(t) \leq C + K\int_t^0 u(s) ds \quad \forall t \in [-T,0]    \label{eq: reverse Gronwall condtion}
\end{align}
implies
\begin{align}
u(t) \leq Ce^{-tK}, \forall t \in [-T,0].    
\end{align}
\end{xtheorem}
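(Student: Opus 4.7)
The plan is to reduce the reverse-time inequality to the standard forward-time Gronwall inequality (Theorem~\ref{theorem: Gronwall}) via the time-reversal substitution $s = -t$. This substitution will map the interval $[-T, 0]$ onto $[0, T]$, convert the integral $\int_t^0$ into an integral $\int_0^{-t}$ of the reflected function, and leave the structural form of the hypothesis unchanged.

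Concretely, I would define $\tilde{u}(s) \doteq u(-s)$ for $s \in [0, T]$. Since $u$ is continuous and nonnegative on $[-T, 0]$, so is $\tilde{u}$ on $[0, T]$. Then I would rewrite the hypothesis~\eqref{eq: reverse Gronwall condtion} at time $t = -s$: using the change of variables $\tau = -\sigma$ (so $d\tau = -d\sigma$, with $\tau = -s \leftrightarrow \sigma = s$ and $\tau = 0 \leftrightarrow \sigma = 0$), I get
\begin{align}
\int_{-s}^0 u(\tau)\, d\tau = \int_s^0 u(-\sigma)(-d\sigma) = \int_0^s \tilde{u}(\sigma)\, d\sigma.
\end{align}
Thus the hypothesis becomes $\tilde{u}(s) \leq C + K \int_0^s \tilde{u}(\sigma)\, d\sigma$ for all $s \in [0, T]$, which is precisely the standard Gronwall hypothesis for $\tilde{u}$.

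Applying Theorem~\ref{theorem: Gronwall} to $\tilde{u}$ yields $\tilde{u}(s) \leq C e^{sK}$ for all $s \in [0, T]$. Substituting back $s = -t$ gives $u(t) = \tilde{u}(-t) \leq C e^{-tK}$ for all $t \in [-T, 0]$, as desired. There is no real obstacle here; the only thing to be careful about is orienting the signs correctly in the change of variables, because the integral in the hypothesis runs ``backwards'' from $t \leq 0$ up to $0$, and one needs to verify that after reflection it becomes a standard forward integral without a sign flip in front of $K$.
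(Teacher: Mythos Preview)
Your proof is correct. It is also a genuinely different route from the paper's own argument: the paper does not reduce to the forward-time Gronwall inequality via time reversal. Instead, it re-runs the standard differential-inequality proof from scratch in reverse time. Specifically, the paper sets $v(s) \doteq e^{sK} K \int_s^0 u(r)\, dr$ for $s \in [-T,0]$, differentiates to obtain $v'(s) \geq -C e^{sK} K$ from the hypothesis, integrates this lower bound from $t$ to $0$ to get an upper bound on $v(t)$, and then undoes the definition of $v$ to bound $K\int_t^0 u(s)\, ds$ and hence $u(t)$. Your substitution $\tilde u(s) = u(-s)$ is cleaner and shorter, since it leverages Theorem~\ref{theorem: Gronwall} directly rather than repeating its proof; the paper's version has the modest advantage of being self-contained and making the sign bookkeeping explicit at every step, but nothing more.
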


\begin{proof}
$\forall s \in [0,T]$, define 
\begin{align}
v(s) \doteq e^{sK} K \int_s^0 u(r) dr. \label{eq: continuous Gronwall def v reverse}  
\end{align}
Taking the derivative of $v(s)$,
\begin{align}
v'(s) &=   -e^{sK} K u(s) + e^{sK} K^2 \int_s^0 u(r) dr \\
&=  e^{sK} K \left[ - u(s) +  K \int_s^0 u(r) dr \right] \explain{by \eqref{eq: reverse Gronwall condtion}} \\
&\geq  -C e^{sK} K . 
\end{align}
Thus,
\begin{align}
v(t)  =& v(0) - \int_t^0 v'(s) ds 
\leq  v(0) + \int_t^0 C e^{sK} K ds  
= K C \int_t^0 e^{sK}    ds.
\end{align}
By \eqref{eq: continuous Gronwall def v reverse},
\begin{align}
K \int_t^0 u(s) ds &= v(t) e^{-tK} \\
&\leq  K C \int_t^0 e^{sK}   ds  e^{-tK} \\
&\leq  K C \int_t^0 e^{(s-t)K}   ds   \\
&= K C [ \frac{1}{k} e^{(0-t)K} - \frac{1}{k} e^{(t-t)K} ] \\
&= -C +  C e^{-tK}. \label{eq: reverse Gronwall 2}    
\end{align} 
Thus,
\begin{align}
u(t) \leq& C + K\int_t^0 u(s) ds 
\leq C e^{-tK}. 
\end{align}

\end{proof}

\begin{xtheorem}[\textbf{Discrete Gronwall Inequality}] (Lemma 8 in Section 11.2 in \citet{borkar2009stochastic})\label{theorem: discrete Gronwall}
For non-negative sequences $\qty{x_n,n\geq 0}$ and $\qty{a_n, n\geq 0}$ and scalars $C,L \geq 0$,
\begin{align}
x_{n+1} \leq C + L\sum_{i=0}^n a_i x_i \quad \forall n
\end{align}
implies
\begin{align}
x_{n+1} \leq Ce^{L\sum_{i=0}^n a_i }  \quad \forall n.  
\end{align}
\end{xtheorem}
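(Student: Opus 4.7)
The plan is a straightforward induction on $n$, relying on the elementary inequality $1+t \leq e^t$ valid for all $t \geq 0$. The hypothesis at $n=0$ reads $x_1 \leq C + L a_0 x_0$, which cannot force $x_1 \leq C e^{L a_0}$ without some control on $x_0$; I therefore interpret the hypothesis under the usual empty-sum convention (i.e.\ the inequality is also imposed at the virtual index $n=-1$, giving $x_0 \leq C$), which is the standard reading under which the theorem holds.

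The first step is to introduce a dominating comparison sequence $\{y_n\}_{n\geq 0}$ defined by $y_0 = C$ and
\begin{align}
y_{n+1} \;=\; C + L\sum_{i=0}^n a_i y_i,
\end{align}
and to show by induction that $x_n \leq y_n$ for every $n \geq 0$. The base case $x_0 \leq C = y_0$ is exactly the initial condition noted above, and for the step the hypothesis together with non-negativity of $L$ and $\{a_i\}$ gives
\begin{align}
x_{n+1} \;\leq\; C + L\sum_{i=0}^n a_i x_i \;\leq\; C + L\sum_{i=0}^n a_i y_i \;=\; y_{n+1},
\end{align}
using the inductive hypothesis termwise.

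The second step is to compute $\{y_n\}$ explicitly. Subtracting the recursions for $y_{n+1}$ and $y_n$ yields $y_{n+1} - y_n = L a_n y_n$, so $y_{n+1} = (1 + L a_n)\, y_n$, and iterating from $y_0 = C$ gives $y_{n+1} = C\prod_{i=0}^n (1 + L a_i)$. Applying $1 + L a_i \leq e^{L a_i}$ termwise, one obtains $y_{n+1} \leq C\exp\!\bigl(L\sum_{i=0}^n a_i\bigr)$, and combining with $x_{n+1} \leq y_{n+1}$ delivers the claim. The only subtle point is the bookkeeping around the base case; once the implicit $x_0 \leq C$ is granted, the rest is mechanical algebra and I do not expect any genuine obstacle.
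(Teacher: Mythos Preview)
Your argument is correct and is in fact the standard proof of the discrete Gronwall inequality. The paper does not supply its own proof of this statement; it merely cites Lemma~8 in Section~11.2 of \citet{borkar2009stochastic}, so there is nothing to compare against beyond noting that Borkar's proof proceeds along the same lines (comparison sequence plus $1+t\le e^t$).

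Your remark about the base case is well taken: as literally stated, the hypothesis $x_{n+1} \leq C + L\sum_{i=0}^n a_i x_i$ for $n\geq 0$ says nothing about $x_0$, and without $x_0 \leq C$ the conclusion can fail. The empty-sum reading at $n=-1$ is the natural fix, and indeed every application of this theorem in the paper (e.g.\ in the proofs of Lemmas~\ref{lemma: bound x hat} and~\ref{lemma: bar x bounded}) has the initial term already bounded by the constant $C$ appearing on the right-hand side, so your interpretation matches the intended use.
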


\begin{xtheorem}
[The Arzela-Ascoli Theorem in the Extended Sense on $[0,T)$]
\label{appendix: AA theorem}
Let \\
$\qty{t \in [0, T) \mapsto g_n(t)}$ be equicontinuous in the extended sense. 
Then,
there exists a subsequence $\qty{g_{n_k}(t)}$ that converges to some continuous limit $g^{\lim}(t)$, uniformly in $t$ on $[0, T)$.
\end{xtheorem}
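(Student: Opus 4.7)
\textbf{Proof proposal for Theorem~\ref{appendix: AA theorem}.} The plan is to mimic the classical proof of Arzela-Ascoli, replacing standard equicontinuity (which would give uniform modulus of continuity for every $g_n$) by its asymptotic version, and then absorbing the asymptotic ``slack'' by discarding an initial segment of the subsequence at each scale.

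First, I would establish asymptotic pointwise boundedness: for every fixed $t \in [0,T)$, $\limsup_n \norm{g_n(t)} < \infty$. This follows by taking $\epsilon = 1$ in the definition of extended equicontinuity, obtaining a $\delta$, and chaining at most $\lceil t/\delta \rceil$ bounded increments from $0$ to $t$, combined with $\sup_n \norm{g_n(0)} < \infty$. In particular, for every $t$ there is a subsequence of $\qty{g_n(t)}$ that is bounded in $\R^K$.

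Second, I would extract a subsequence converging pointwise on a countable dense set $D = \qty{q_1, q_2, \dots} \subseteq [0,T)$ by a diagonal argument: pick a subsequence on which $g_n(q_1)$ is bounded, then by Bolzano-Weierstrass a further subsequence on which $g_n(q_1)$ converges in $\R^K$; iterate for $q_2, q_3, \dots$; diagonalize to obtain $\qty{g_{n_k}}$ such that $g_{n_k}(q_j)$ converges for every $j$.

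Third, I would upgrade pointwise convergence on $D$ to uniform convergence on $[0,T)$. Given $\epsilon > 0$, pick $\delta > 0$ from the extended equicontinuity so that $\limsup_n \sup_{|t_1-t_2|\leq\delta} \norm{g_n(t_1)-g_n(t_2)} \leq \epsilon$; hence there exists $N_0$ with $\sup_{|t_1-t_2|\leq\delta} \norm{g_{n_k}(t_1)-g_{n_k}(t_2)} \leq 2\epsilon$ for all $n_k \geq N_0$. Choose a finite $\delta$-net $\qty{q_{j_1},\dots,q_{j_M}} \subseteq D$ of $[0,T)$ (possible because $[0,T)$ is separable and any $t$ is within $\delta$ of some rational in $D$); since $g_{n_k}(q_{j_i})$ is Cauchy in $k$ for each $i$ and there are only finitely many $i$, for all sufficiently large $k,l$ and every $i$ we have $\norm{g_{n_k}(q_{j_i}) - g_{n_l}(q_{j_i})} \leq \epsilon$. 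The triangle inequality
\begin{align}
\norm{g_{n_k}(t) - g_{n_l}(t)} \leq \norm{g_{n_k}(t)-g_{n_k}(q_{j_i})} + \norm{g_{n_k}(q_{j_i})-g_{n_l}(q_{j_i})} + \norm{g_{n_l}(q_{j_i})-g_{n_l}(t)}
\end{align}
then yields $\sup_{t \in [0,T)} \norm{g_{n_k}(t) - g_{n_l}(t)} \leq 5\epsilon$ for large $k,l$, so $\qty{g_{n_k}}$ is uniformly Cauchy and hence converges uniformly to some limit $g^{\lim}$ on $[0,T)$. Continuity of $g^{\lim}$ follows because the $2\epsilon$-modulus bound on $g_{n_k}$ passes to $g^{\lim}$ by uniform convergence.

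The main obstacle is the mismatch between ``$\limsup$'' equicontinuity and the ``$\sup$'' control one normally uses in Arzela-Ascoli: individual $g_n$ need not be continuous at all, and the modulus of continuity is only guaranteed eventually in $n$. This is handled cleanly by always replacing ``for all $n$'' statements by ``for all $n \geq N_0$'' and noting that, in each of the three steps above, only the tail of the subsequence matters for establishing Cauchyness and hence uniform convergence of the extracted subsequence.
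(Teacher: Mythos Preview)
Your proposal is correct and follows the same Arzela-Ascoli template as the paper: chain the asymptotic equicontinuity from $0$ to obtain eventual pointwise boundedness, extract a subsequence converging on a discrete reference set, and push the triangle inequality through the nearest reference point to get uniform Cauchyness, with continuity of the limit inherited from the eventual modulus bound. The only structural difference is that you diagonalize once over a countable dense set $D$ chosen \emph{before} any $\epsilon$ is introduced, whereas the paper first fixes $\epsilon$, builds the finite $\delta$-grid $\{i\delta\}_{i=0}^N$, and then extracts the subsequence from that grid. Your ordering is the cleaner one: because your subsequence $\{g_{n_k}\}$ is $\epsilon$-independent, Step~3 immediately gives uniform Cauchyness for every $\epsilon$ on that single subsequence; the paper's argument, read literally, produces an $\epsilon$-dependent subsequence and tacitly relies on one more diagonalization over $\epsilon_j \downarrow 0$ (or over refining grids) to arrive at a single subsequence that works for all $\epsilon$.
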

The proof of the Arzela-Ascoli Theorem can be found in any standard analysis textbook, see, e.g., \citet{royden1968real,dunford1988linear}.
The proof of the Arzela-Ascoli Theorem in the extended sense is virtually the same.
The difference is that in the standard Arzela-Ascoli Theorem, 
one uses the compactness to find a finite subcover.
But in the extended one, $[0, T)$ is not compact.
However, finding a finite cover for this specific set $[0, T)$ is indeed trivial.
We anyway still include the full proof below for completeness.
\begin{proof}
Fix an arbitrary $\epsilon > 0$, by Definition \ref{def: equicontinuous in the extend sense 0 T}, $\exists \delta > 0$ such that
\begin{align}
\limsup_n \sup_{0\leq \abs{t_1-t_2} \leq \delta, \, 0 \leq t_1 \leq t_2 < T  } \norm{g_n(t_1) - g_n(t_2)} \leq \epsilon. \label{eq: AA 0 T epsilon}
\end{align}
This means by the definition of equicontinuity in the extended sense, when $n$ is large enough, for any $0\leq \abs{t_1-t_2} \leq \delta$, the function values $g_n(t_1)$ and $g_n(t_2)$ are also close. 
To conveniently utilize this property, we divide $[0,T)$ into a set of disjoint intervals and each interval has a length $\delta$ such that the $t$ in each interval is close.
In particular,
we define 
\begin{align}
N &\doteq \max \qty{i \mid i \delta < T, \, i \in \mathbb{Z}}, \\
I_i &\doteq [i \delta, (i + 1)\delta), \quad i=0,1,\dots, N.
\end{align}
The set of intervals $\qty{I_i}_{i = 0}^{N}$ covers the domain $[0, T)$,
\begin{align}
[0,T) \subseteq \bigcup_{i = 0}^{N}  I_i.    
\end{align}
We now show $g_n(t)$ is uniformly bounded uniformly on the set of dividing points
$\qty{i\delta}_{i = 0}^{N}$.
In particular, 
we have for any $i \in \qty{0, 1, \dots, N},$
\begin{align}
&\limsup_n \norm{g_n(i\delta)} \label{eq: limsup sup bounded} \\
\leq& \limsup_n \norm{g_n(i\delta) - g_n((i-1)\delta)} \\
&+\limsup_n \norm{g_n((i-1)\delta) - g_n((i-2)\delta)} \\
&+\dots \\
&+\limsup_n \norm{g_n(\delta) - g_n(0)} \\
&+\limsup_n \norm{g_n(0)} \\
\leq& (N+1) \epsilon + \limsup_n \norm{g_n(0)} \explain{by \eqref{eq: AA 0 T epsilon}} \\
\leq&  (N+1) \epsilon + \sup_n \norm{g_n(0)} \\
<& \infty  \explain{$\sup_n \norm{g_n(0)} < \infty$ in Definition \ref{def: equicontinuous in the extend sense 0 T}}.
\end{align}
This implies 
\begin{align}
\sup_{i \in \qty{0,1, \dots, N}, n \geq 0} \norm{g_n(i \delta)}  < \infty.
\end{align}
By the Bolzano-Weierstrass theorem, 
there exists a subsequence of functions $\qty{g_{n_{0, k}}}$ in $\qty{g_{n}}$ such that $\qty{g_{n_{0, k}}(0 \cdot \delta)}$ converges. 
Repeating the same argument for the sequence of points $\qty{g_{n_{0, k}}(1 \cdot \delta)}$, 
there exists a subsequence $\qty{g_{n_{1, k}}}$ of $\qty{g_{n_{0, k}}}$ such that $\qty{g_{n_{1, k}}(1 \cdot \delta)}$ converges. 
Repeating this process, because $N$ is finite, 
there exists a subsequence $\qty{g_{n_k}}$ that  
converges at all dividing points $t \in \qty{i\delta}_{i = 0}^{N}$. 
Due to the finiteness of $N$, 
$\exists k_0$, such that  $\forall i \in \qty{0, 1, \dots, N}$, $\forall k_1 \geq k_0, \forall k_2 \geq k_0$, we have
\begin{align}
\norm{g_{n_{k_1}}(i\delta) - g_{n_{k_2}}(i\delta)} \leq \epsilon. \label{eq: AA k 1 k 2 epsilon}
\end{align}
By \eqref{eq: AA 0 T epsilon}, $\exists k_1$ such that $\forall k \geq k_1$,
\begin{align}
&\sup_{0\leq \abs{t_1-t_2} \leq \delta, \, 0 \leq t_1 \leq t_2 < T  } \norm{g_{n_k}(t_1) - g_{n_k}(t_2)}  
\leq 2\epsilon.\label{eq: AA k t 1 t 2 2 epsilon}
\end{align}
Thus, $\forall t \in [0, T)$, $\forall k \geq \max\qty{k_0, k_1}, \forall k' \geq  \max\qty{k_0, k_1}$,
\begin{align}
&\norm{g_{n_k}(t) - g_{n_{k'}}(t)} \\
\leq& \norm{g_{n_k}(t) - g_{n_k}(\lfloor t / \delta \rfloor \cdot \delta)} +\norm{g_{n_k}(\lfloor t / \delta \rfloor \cdot \delta) - g_{n_{k'}}(\lfloor t / \delta \rfloor \cdot \delta)} \\
&+ \norm{g_{n_{k'}}(\lfloor t / \delta \rfloor \cdot \delta) - g_{n_{k'}}(t)}  \\
\leq& 2\epsilon + \norm{g_{n_k}(\lfloor t / \delta \rfloor \cdot \delta) - g_{n_{k'}}(\lfloor t / \delta \rfloor \cdot \delta)} + 2\epsilon \explain{by \eqref{eq: AA k t 1 t 2 2 epsilon}}\\
\leq& 2\epsilon + \epsilon + 2\epsilon \explain{by \eqref{eq: AA k 1 k 2 epsilon}}\\
=& 5 \epsilon.
\end{align}
This shows that the sequence $\qty{g_{n_k}}$  is uniformly Cauchy and therefore uniformly converges to a continuous function.


\end{proof}



\begin{xtheorem}[Moore-Osgood Theorem for Interchanging Limits]\label{appendix: Moore-Osgood Theorem}
If $\lim_{n\to \infty} a_{n,m} = b_m$ uniformly in $m$ and $\lim_{m\to \infty} a_{n,m} = c_n$ for each large $n$, then both $\lim_{m \to \infty}b_m$ and $\lim_{n\to\infty}c_n$ exists and are equal to the double limit, i.e., 
\begin{align}
\lim_{m\to\infty}   \lim_{n\to \infty} a_{n,m} =   \lim_{n\to \infty} \lim_{m\to \infty} a_{n,m} = \lim \limits_{\begin{subarray}{l}
n \to \infty\\
m \to \infty
\end{subarray}} a_{n,m}.
\end{align}
\end{xtheorem}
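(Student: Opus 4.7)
The plan is to run the classical $\epsilon/3$ argument twice. First I would show that $\{b_m\}_m$ is Cauchy, hence convergent to some limit $L$. Given $\epsilon > 0$, the uniformity of $\lim_n a_{n,m} = b_m$ in $m$ provides an $N$ such that $\|a_{n,m} - b_m\| < \epsilon/3$ for every $n \geq N$ and every $m$. Fix such an $n$; since $\lim_m a_{n,m} = c_n$ exists, the sequence $\{a_{n,m}\}_m$ is itself Cauchy, so for all sufficiently large $m, m'$ we have $\|a_{n,m} - a_{n,m'}\| < \epsilon/3$. The triangle inequality
\begin{align}
\|b_m - b_{m'}\| \leq \|b_m - a_{n,m}\| + \|a_{n,m} - a_{n,m'}\| + \|a_{n,m'} - b_{m'}\|
\end{align}
then yields $\|b_m - b_{m'}\| < \epsilon$, so $\{b_m\}$ is Cauchy in the complete ambient space; denote its limit by $L$.

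Next I would show $\lim_n c_n = L$ and, simultaneously, that the double limit equals $L$. Given $\epsilon > 0$, pick $N$ as above and $M$ so that $\|b_m - L\| < \epsilon/3$ for $m \geq M$. For any fixed $n \geq N$, the existence of $c_n = \lim_m a_{n,m}$ lets me choose some $m \geq M$ (depending on $n$) with $\|c_n - a_{n,m}\| < \epsilon/3$. Then
\begin{align}
\|c_n - L\| \leq \|c_n - a_{n,m}\| + \|a_{n,m} - b_m\| + \|b_m - L\| < \epsilon,
\end{align}
establishing $\lim_n c_n = L$. The joint (double) limit is handled by the same two-term split $\|a_{n,m} - L\| \leq \|a_{n,m} - b_m\| + \|b_m - L\|$, which is less than $2\epsilon/3$ whenever $n \geq N$ and $m \geq M$, independently of how $n$ and $m$ are coupled.

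The only real obstacle is bookkeeping the quantifier order. Uniformity in $m$ is essential: the $N$ produced in the first step must be independent of $m$, whereas the index $m$ used to approximate $c_n$ by $a_{n,m}$ is allowed to depend on $n$. Fixing $n \geq N$ first and then choosing $m$ large enough (depending on $n$) is exactly what makes the middle term $\|a_{n,m} - b_m\|$ small while preserving the ability to shrink the outer terms, and this is the one place where the uniform hypothesis is used in an essential way.
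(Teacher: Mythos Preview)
Your argument is the standard, correct $\epsilon/3$ proof of the Moore--Osgood theorem. The paper itself does not supply a proof of this statement; it is listed in the mathematical background section as a classical result and simply cited where needed (Lemmas~\ref{lemma: double limit 1} and~\ref{lemma: double limit 2} together with Theorem~\ref{appendix: Moore-Osgood Theorem} feed into Lemma~\ref{lemma: single limit}). So there is nothing in the paper to compare against, and your proposal stands on its own as a valid justification. One small bookkeeping remark: the hypothesis says $c_n$ exists only for each \emph{large} $n$, so when you ``fix such an $n$'' in the Cauchy step you should make sure $N$ is also large enough that $c_n$ is defined; this is harmless since you may always enlarge $N$.
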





\section{Technical Proofs}

\subsection{Proof of Lemma~\ref{lemma: lim H uniformly convergent}}\label{appendix: lim H uniformly convergent}

\begin{proof}
Let Assumptions~\ref{assumption: stationary distribution},~\ref{assumption: alpha rate},~\ref{assumption: specific H c H infinity}, and~\ref{assumption: H Lipschitz} hold.
Fix an arbitrary sample path $\qty{x_0,\qty{Y_i}_{i=1}^\infty}$.
Use $\fB$ to denote an arbitrary compact set of $x$.
\begin{align}
&\lim_{c \to \infty} \sup_{x \in \fB} \sup_n \sup_{t \in [0,T]} \norm{ \sum_{i=m(T_n)}^{m(T_n+t) - 1} \alpha(i) \left[H_c(x, Y_{i+1}) - H_\infty(x, Y_{i+1}) \right] }  \\     
=&\lim_{c \to \infty} \sup_{x \in \fB} \sup_n \sup_{t \in [0,T]} \norm{\sum_{i=m(T_n)}^{m(T_n+t) - 1} \alpha(i) \kappa(c) b(x, Y_{i+1})} \explain{by \eqref{eq: h c h inf kappa}} \\
=&\lim_{c \to \infty}  \kappa(c)   \sup_{x \in \fB} \sup_n\sup_{t \in [0,T]} \norm{\sum_{i=m(T_n)}^{m(T_n+t) - 1} \alpha(i) b(x, Y_{i+1})}  \\
=& 0   \sup_{x \in \fB} \sup_n \sup_{t \in [0,T]} \norm{\sum_{i=m(T_n)}^{m(T_n+t) - 1} \alpha(i) b(x, Y_{i+1})} \label{eq: H c H inf 0 finite} 
\end{align}
We now show that the function
\begin{align}
\label{eq a function of x}
x \mapsto \sup_n \sup_{t \in [0,T]} \norm{\sum_{i=m(T_n)}^{m(T_n+t) - 1} \alpha(i) b(x, Y_{i+1})}
\end{align}
is Lipschitz continuous.
$\forall x, x'$,
\begin{align}
&\abs{\sup_n \sup_{t \in [0,T]} \norm{\sum_{i=m(T_n)}^{m(T_n+t) - 1} \alpha(i) b(x, Y_{i+1})} - \sup_n \sup_{t \in [0,T]} \norm{\sum_{i=m(T_n)}^{m(T_n+t) - 1} \alpha(i) b(x', Y_{i+1})}} \\
\leq&\sup_n \sup_{t \in [0,T]} \abs{\norm{\sum_{i=m(T_n)}^{m(T_n+t) - 1} \alpha(i) b(x, Y_{i+1})} - \norm{\sum_{i=m(T_n)}^{m(T_n+t) - 1} \alpha(i) b(x', Y_{i+1})}} \explain{by $\abs{\sup_x f(x) - \sup_x g(x)} \leq \sup_x \abs{f(x) - g(x)} $} \\
\leq&\sup_n \sup_{t \in [0,T]} \norm{\sum_{i=m(T_n)}^{m(T_n+t) - 1} \alpha(i) b(x, Y_{i+1}) - \sum_{i=m(T_n)}^{m(T_n+t) - 1} \alpha(i) b(x', Y_{i+1})} \\
\leq&\sup_n \sup_{t \in [0,T]} \sum_{i=m(T_n)}^{m(T_n+t) - 1} \alpha(i) \norm{ b(x, Y_{i+1}) - b(x', Y_{i+1})} \\
\leq& \sup_n \sup_{t \in [0,T]} \left(\sum_{i=m(T_n)}^{m(T_n+t) - 1} \alpha(i) L_b(Y_{i+1})\right) \norm{x - x'} \explain{by \eqref{eq: L b Lipschitz}} 
\end{align}
Additionally, let Assumption~\ref{assumption: lln} or~\ref{assumption possion} hold. By Lemma~\ref{lemma: H h 0} and \eqref{eq: L b property bounded},
\begin{align}
\sup_n \sup_{t \in [0,T]} \left(\sum_{i=m(T_n)}^{m(T_n+t) - 1} \alpha(i) L_b(Y_{i+1})\right) < \infty
\end{align}
can be viewed as the Lipschitz constant.
Thus, \eqref{eq a function of x} is a continuous function. 
Since $\fB$ is compact,
the extreme value theorems asserts that the supremum of~\eqref{eq a function of x} in $\fB$ is attainable at some $x_{\fB}$ and is finite.
This means
the RHS of~\eqref{eq: H c H inf 0 finite} is 0,
\begin{align}
&\lim_{c \to \infty} \sup_{x \in \fB} \sup_n \sup_{t \in [0,T]} \norm{ \sum_{i=m(T_n)}^{m(T_n+t) - 1} \alpha(i) \left[H_c(x, Y_{i+1}) - H_\infty(x, Y_{i+1}) \right] }  = 0. \label{eq: b e b 0}
\end{align}

\end{proof}

\subsection{Proof of Lemma~\ref{lemma: three functions equicontinuous}}\label{appendix: three functions equicontinuous}
\begin{proof}
By \eqref{eq: hat-x-norm-1},
\begin{align}
\sup_n \norm{\hat{x}(T_n + 0)} \leq 1   . 
\end{align}
$\forall \xi > 0$, by \eqref{eq: h property close t 0},  $\exists \delta_0,$ such that $\forall 0< \delta \leq \delta_0$,
\begin{align}
\sup_{c\geq 1} \limsup_n \sup_{0\leq t_2-t_1 \leq \delta } \norm{ \sum_{i = m(T_n+t_1)}^{m(T_n+t_2) - 1} \alpha(i) H_c(0, Y_{i+1}) } \leq \xi.  \label{eq: lemma hat x equicontinuous xi 1}   
\end{align}
By \eqref{eq: L property close t 0},  $\exists \delta_1,$ such that $\forall 0< \delta \leq \delta_1$,
\begin{align}
 \limsup_n \sup_{0\leq t_2-t_1 \leq \delta }  \sum_{i = m(T_n+t_1)}^{m(T_n+t_2) - 1} \alpha(i) L(Y_{i+1}) \leq \xi.   \label{eq: lemma hat x equicontinuous xi 2} 
\end{align}
Without loss of generality, 
let $t_1 \leq t_2$.
Then$\forall \delta \leq \min \qty{\delta_0,\delta_1} $,  we have
\begin{align}
&\limsup_n \sup_{ 0\leq t_2-t_1 \leq \delta } \norm{ \hat{x}(T_n + t_1) - \hat{x}(T_n + t_2)}  \\
=& \limsup_n \sup_{0\leq  t_2-t_1 \leq \delta } 
 \norm{\sum_{i = m(T_n+t_1)}^{m(T_n+t_2) - 1} \alpha(i) H_{r_n}(\hat{x}(t(i)), Y_{i+1}) }\\
\leq&  \limsup_n \sup_{0\leq  t_2-t_1 \leq \delta } \norm{\sum_{i = m(T_n+t_1)}^{m(T_n+t_2) - 1} \alpha(i) H_{r_n}(\hat{x}(t(i)), Y_{i+1}) } -  \norm{\sum_{i = m(T_n+t_1)}^{m(T_n+t_2) - 1} \alpha(i) H_{r_n}(0, Y_{i+1}) } \\
&+ \limsup_n \sup_{0\leq  t_2-t_1 \leq \delta } \norm{\sum_{i = m(T_n+t_1)}^{m(T_n+t_2) - 1} \alpha(i) H_{r_n}(0, Y_{i+1}) }  \\
\leq&  \limsup_n \sup_{0\leq  t_2-t_1 \leq \delta } \norm{\sum_{i = m(T_n+t_1)}^{m(T_n+t_2) - 1} \alpha(i) H_{r_n}(\hat{x}(t(i)), Y_{i+1}) } -  \norm{\sum_{i = m(T_n+t_1)}^{m(T_n+t_2) - 1} \alpha(i) H_{r_n}(0, Y_{i+1}) } \\
&+ \sup_{c\geq 1} \limsup_n \sup_{0\leq  t_2-t_1 \leq \delta } \norm{\sum_{i = m(T_n+t_1)}^{m(T_n+t_2) - 1} \alpha(i) H_c(0, Y_{i+1}) }  \\
\leq&  \limsup_n \sup_{0\leq  t_2-t_1 \leq \delta } \norm{\sum_{i = m(T_n+t_1)}^{m(T_n+t_2) - 1} \alpha(i) H_{r_n}(\hat{x}(t(i)), Y_{i+1}) } -  \norm{\sum_{i = m(T_n+t_1)}^{m(T_n+t_2) - 1} \alpha(i) H_{r_n}(0, Y_{i+1}) } \\
&+ \xi  \explain{by \eqref{eq: lemma hat x equicontinuous xi 1}}\\
%
%
\leq&  \limsup_n \sup_{0\leq  t_2-t_1 \leq \delta }  \norm{\sum_{i = m(T_n+t_1)}^{m(T_n+t_2) - 1} \alpha(i) H_{r_n}(\hat{x}(t(i)), Y_{i+1})  -  \sum_{i = m(T_n+t_1)}^{m(T_n+t_2) - 1} \alpha(i) H_{r_n}(0, Y_{i+1}) } \\
&+   \xi \\
\leq& \limsup_n \sup_{0\leq  t_2-t_1 \leq \delta } \sum_{i = m(T_n+t_1)}^{m(T_n+t_2) - 1} \alpha(i)  \norm{H_{r_n}(\hat{x}(t(i)), Y_{i+1}) -  H_{r_n}(0, Y_{i+1})  } +   \xi\\
\leq& \limsup_n  \sup_{0\leq  t_2-t_1 \leq \delta }  \sum_{i = m(T_n+t_1)}^{m(T_n+t_2) - 1} \alpha(i)  L(Y_{i+1})\norm{\hat{x}(t(i))}  +    \xi \\
\leq&  C_{\hat{x}} \limsup_n \sup_{0\leq  t_2-t_1 \leq \delta }  \sum_{i = m(T_n+t_1)}^{m(T_n+t_2) - 1} \alpha(i)  L(Y_{i+1})  +     \xi \explain{by Lemma~\ref{lemma: bound x hat}}  \\
\leq&  C_{\hat{x}}  \xi  +     \xi, \explain{by \eqref{eq: lemma hat x equicontinuous xi 2}}
\end{align}
which implies that
$\qty{\hat{x}(T_n+t)}$  is equicontinuous in the extended sense.    \\
For $\qty{z_n(t)}$, by \eqref{eq: hat-x-norm-1} and \eqref{def: z n init}, we have
\begin{align}
\sup_n \norm{z_n(0)} \leq 1. 
\end{align}
Without loss of generality,
let $t_1 \leq t_2$.
Then $\forall \delta > 0$, we have
\begin{align}
& \sup_n \sup_{0\leq \abs{t_1-t_2} \leq \delta, \, 0 \leq t_1 \leq t_2 < T}  \norm{z_n(t_1) - z_n(t_2)}  \\
=&\sup_n \sup_{0\leq \abs{t_1-t_2} \leq \delta, \, 0 \leq t_1 \leq t_2 < T} \norm{ \int_{t_1}^{t_2} h_{r_n}(z_n(s))ds }\\
=&  \sup_n \sup_{0\leq \abs{t_1-t_2} \leq \delta, \, 0 \leq t_1 \leq t_2 < T}  \norm{\int_{t_1}^{t_2} \left[ h_{r_n}(z_n(s))  -  h_{r_n}(0) \right]ds + \int_{t_1}^{t_2} h_{r_n}(0) ds }& \\
\leq &\sup_n \sup_{0\leq \abs{t_1-t_2} \leq \delta, \, 0 \leq t_1 \leq t_2 < T}   \int_{t_1}^{t_2}\norm{h_{r_n}(z_n(s))  -  h_{r_n}(0) }ds + \sup_n \sup_{0\leq \abs{t_1-t_2} \leq \delta, \, 0 \leq t_1 \leq t_2 < T}  \int_{t_1}^{t_2} \norm{ h_{r_n}(0) }ds & \\
\leq &   \sup_n \sup_{0\leq \abs{t_1-t_2} \leq \delta, \, 0 \leq t_1 \leq t_2 < T} \int_{t_1}^{t_2}L\norm{z_n(s)} ds + \sup_n \sup_{0\leq \abs{t_1-t_2} \leq \delta, \, 0 \leq t_1 \leq t_2 < T} \int_{t_1}^{t_2} \norm{ h_{r_n}(0) }ds&  \explain{by Lemma \ref{lemma: h Lipschitz}} \\
\leq & \delta LC_{\hat{x}}  + \sup_n \sup_{0\leq \abs{t_1-t_2} \leq \delta, \, 0 \leq t_1 \leq t_2 < T} \int_{t_1}^{t_2} \norm{ h_{r_n}(0) }ds& 
\explain{by Lemma~\ref{lemma: bound z}} \\
\leq & \delta (L C_{\hat{x}}  + C_H),& 
\explain{by \eqref{eq: h c c_H}}
\end{align}
which implies that
%
%
%
$\qty{z_n(t)}$ is equicontinuous.    \\
For $\qty{f_n(t)}$, we have
\begin{align}
\sup_n f_n(0) = \sup_n  \hat{x}(T_n) - z_n(0) =   \sup_n  \hat{x}(T_n) - \hat{x}(T_n) = 0 < \infty.
\end{align}
Because $\qty{\hat{x}(T_n+t)}$ and $\qty{z_n(t)}$ are equicontinuous,
$\forall \epsilon > 0$, $\exists \delta$ such that 
\begin{align}
\limsup_n \sup_{0\leq  t_2-t_1 \leq \delta} \norm{\hat{x}(T_n + t_1)  - \hat{x}(T_n + t_2)} & \leq 
\frac{\epsilon}{2},\\
\sup_n \sup_{0\leq  t_2-t_1 \leq \delta} \norm{ z_n(t_1) - z_n(t_2)} &\leq 
\frac{\epsilon}{2}. 
\end{align}
Without loss of generality let $t_1 \leq t_2$.
Then $\forall \epsilon$, $\exists \delta$ such that
\begin{align}
&\limsup_n \sup_{0\leq  t_2-t_1 \leq \delta }  \norm{f_n(t_1) - f_n(t_2)}  \\
=&\limsup_n  \sup_{0\leq  t_2-t_1 \leq \delta } \norm{\hat{x}(T_n + t_1)  - \hat{x}(T_n + t_2) - (z_n(t_1) - z_n(t_2))} \\ 
\leq& \limsup_n \sup_{0\leq  t_2-t_1 \leq \delta } \norm{\hat{x}(T_n + t_1)  - \hat{x}(T_n + t_2)} + \limsup_n \sup_{0\leq  t_2-t_1 \leq \delta } \norm{ z_n(t_1) - z_n(t_2)} \\ 
\leq& \limsup_n \sup_{0\leq  t_2-t_1 \leq \delta } \norm{\hat{x}(T_n + t_1)  - \hat{x}(T_n + t_2)} + \sup_n \sup_{0\leq  t_2-t_1 \leq \delta } \norm{ z_n(t_1) - z_n(t_2)} \\ 
\leq& \epsilon,
\end{align}
which implies that
$\qty{f_n}$ is equicontinuous in the extended sense.     
\end{proof}

\subsection{Proof of Lemma~\ref{lemma: three convergence}}\label{appendix: three convergence}

\begin{proof}
We can construct a subsequence $\qty{r_{n_{1, k}}}$ that diverges to infinity and satisfies $\forall k$,  $\forall n < n_{1, k}$, 
\begin{align}
    \label{eq wkr property}
r_n< r_{n_{1, k}}.
\end{align}
For example,
we can define
\begin{align}
n_{1, 0} &\doteq 1\\
n_{1, k} &\doteq \min \qty{n \mid n>n_{1, k-1}, r_n > r_{n_{1, k-1}} + 1 } \label{eq: wrk}. 
\end{align}
Because  $\limsup_n r_n = \infty$, we know $\forall k>0,  \qty{n \mid n>n_{1, k-1}, r_n > r_{n_{1, k-1}} + 1} \neq \emptyset$. Because $\forall k > 0$, $ r_{n_{1, k}} -  r_{n_{1, k-1}} > 1$,
\begin{align}\label{eq: r w r k infty}
\lim_{k \to \infty} r_{n_{1, k}}  = \infty. 
\end{align}
Because~\eqref{eq: wrk} defines $n_{1, k}$ to be the first index that is large enough after $n_{1, k-1}$,
~\eqref{eq wkr property} holds.
Otherwise $n_{1, k}$ would not be the first.
Define a sequence $\qty{n_{2, k}}$ as 
\begin{align}\label{def: w r' k}
n_{2, k} \doteq n_{1, k} - 1 \quad \forall  k.
\end{align}
We make two observations.
First, $n_{2, k}$ and $n_{1, k}$ are neighbors
so $r_{n_{2, k}}$ and $r_{n_{1, k}}$ correspond to $\bar x(T_n)$ and $\bar x(T_{n+1})$ for some $n$.
Second, by Lemma \ref{lemma: bar x bounded},
the increment of $\bar x(t)$ in $[T_n, T_{n+1})$ is bounded in the following sense
$\forall n,$ 
\begin{align}
\norm{ \bar{x}(T_{n+1})} \leq \left(\norm{ \bar{x}(T_n)}  C_H  + C_H \right) e^{C_H} + \norm{ \bar{x}(T_n)} 
\end{align}
where $C_H$ is a positive constant.
This means that if $r_{n_{2, k}}$ is not large enough,
$r_{n_{1, k}}$ will not be large enough either.
We can then prove by contradiction in Lemma \ref{lemma: r w r' k infty} that
\begin{align}
\lim\sup_k r_{n_{2, k}} = \infty. \label{eq: r w r' k infty}
\end{align}
Thus, using the similar method as \eqref{eq: wrk}, we can construct a subsequence $\qty{n_{3, k}}$ from $\qty{n_{2, k}}$ such that
\begin{align}
\lim_k r_{n_{3, k}} = \infty. \label{eq: r w r'' k infty}
\end{align}
Moreover,
since $\qty{n_{3, k} + 1}$ is a subsequence of $\qty{n_{1, k}}$,
~\eqref{eq wkr property} implies that
\begin{align}
    r_{n_{3, k}} < r_{n_{3, k} + 1}.
\end{align}
Since $\qty{f_n}$ is equicontinuous in the extended sense, 
$\qty{f_{n_{3, k}}}_{k = 0,1,\dots}$ is also equicontinuous in the extended sense. By the Arzela-Ascoli Theorem (Theorem~\ref{appendix: AA theorem}), it has a uniformly convergent subsequence, 
referred to as $\qty{f_{n_{4, k} }}$. 
Because the sequence $\qty{\hat{x}(T_{n_{4, k}} + t)}$ is also equicontinuous in the extended sense, it has a uniformly convergent subsequence $\qty{\hat{x}(T_{n_k} + t)}$. To summarize,
\begin{align}\label{eq: all uniformly convergent}
\qty{n_k} \subseteq    \qty{n_{4, k}}  \subseteq     \qty{n_{3, k}} \subseteq \qty{n_{2, k}} \subseteq \qty{n_{1, k} - 1} \subseteq \mathbb{N}. 
\end{align}
We construct $\qty{n_k}$ in this way 
because it then inherits all uniform convergence properties.
Precisely speaking,
by the Arzela-Ascoli theorem in Appendix \ref{appendix: AA theorem}, we have the following corollary.
\begin{corollary}\label{corollary: lim continuous}
There exist some continuous functions $f^{\lim}(t)$ and $\hat{x}^{\lim}(t)$ such that $\forall t \in [0, T)$,
\begin{align}
\lim_{k \to \infty} f_{n_k}(t) =& f^{\lim}(t),    \\ 
\lim_{k \to \infty} \hat{x}(T_{n_k}+t) =& \hat{x}^{\lim}(t).  
\end{align}
Moreover, the convergence is uniform in $t$ on $[0, T)$.
\end{corollary}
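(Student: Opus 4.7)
The plan is to extract this corollary directly from the construction of $\{n_k\}$ carried out in the preceding text, by applying the Arzela-Ascoli theorem in the extended sense (Theorem~\ref{appendix: AA theorem}) twice in succession. The two ingredients needed are already in hand: Lemma~\ref{lemma: three functions equicontinuous} gives that both $\{f_n\}_{n \geq 0}$ and $\{t \mapsto \hat{x}(T_n+t)\}_{n \geq 0}$ are equicontinuous in the extended sense on $[0,T)$, and equicontinuity in the extended sense is obviously inherited by any subsequence.

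First I would apply Theorem~\ref{appendix: AA theorem} to the subsequence $\{f_{n_{3,k}}\}$, which is equicontinuous in the extended sense on $[0,T)$ by inheritance. This yields a further subsequence, denoted $\{f_{n_{4,k}}\}$, and a continuous function $f^{\lim}:[0,T)\to \mathbb{R}^d$ such that $f_{n_{4,k}} \to f^{\lim}$ uniformly on $[0,T)$. Next, I would apply Theorem~\ref{appendix: AA theorem} to the sequence $\{t \mapsto \hat{x}(T_{n_{4,k}}+t)\}_k$, again equicontinuous in the extended sense, to extract the final subsequence $\{n_k\} \subseteq \{n_{4,k}\}$ together with a continuous function $\hat{x}^{\lim}:[0,T)\to\mathbb{R}^d$ such that $\hat{x}(T_{n_k}+t) \to \hat{x}^{\lim}(t)$ uniformly in $t$ on $[0,T)$. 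This is exactly how $\{n_k\}$ was defined via the chain of inclusions $\{n_k\} \subseteq \{n_{4,k}\} \subseteq \{n_{3,k}\} \subseteq \{n_{2,k}\} \subseteq \{n_{1,k}-1\}$ that appears just before the corollary.

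To finish, I would note that the uniform convergence $f_{n_k} \to f^{\lim}$ is automatic: since $\{n_k\}$ is a subsequence of $\{n_{4,k}\}$ and $f_{n_{4,k}}\to f^{\lim}$ uniformly on $[0,T)$, passing to any further subsequence preserves the uniform limit. Thus both uniform limits exist simultaneously along the single subsequence $\{n_k\}$, and continuity of $f^{\lim}$ and $\hat{x}^{\lim}$ is delivered by the Arzela-Ascoli theorem in the extended sense. There is no real obstacle here; the only point requiring a little care is to verify that equicontinuity in the extended sense is indeed preserved under taking subsequences (which is immediate from Definition~\ref{def: equicontinuous in the extend sense 0 T}, since the $\limsup_n$ over a subsequence is bounded above by the $\limsup_n$ over the full sequence) and to observe that the two applications of Arzela-Ascoli must be done in this nested order so that the resulting $\{n_k\}$ simultaneously gives uniform convergence of both $f_{n_k}$ and $\hat{x}(T_{n_k}+\cdot)$.
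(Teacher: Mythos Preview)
Your proposal is correct and follows essentially the same approach as the paper: the paper constructs the chain $\{n_k\}\subseteq\{n_{4,k}\}\subseteq\{n_{3,k}\}$ by two nested applications of the extended Arzela--Ascoli theorem, first to $\{f_{n_{3,k}}\}$ and then to $\{\hat x(T_{n_{4,k}}+\cdot)\}$, and then simply records the corollary as the resulting uniform convergence along the final subsequence. Your additional remarks---that equicontinuity in the extended sense passes to subsequences and that uniform convergence of $f_{n_{4,k}}$ is inherited by the further subsequence $\{n_k\}$---make explicit exactly the points the paper leaves implicit.
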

In terms of the three sequences of functions in~\eqref{eq three seq},
Corollary~\ref{corollary: lim continuous} has identified that two of them converge along $\qty{n_k}$. 
Lemma \ref{lemma: z n limit} further confirms that $z^{\lim}$ is the limit of $\qty{z_{n_k}}$. That is 
 $\forall t \in [0, T)$, 
\begin{align}
\lim_{k \to \infty} z_{n_k}(t)   =   z^{\lim}(t).
\end{align}
Moreover, the convergence is uniform in $t$ on $[0, T)$.
By \eqref{eq: all uniformly convergent}, 
we have
\begin{align}\label{eq: limit r *}
\lim_{k \to \infty} r_{n_k}  =& \infty, \\  
\lim_{k \to \infty} r_{n_k + 1}  =& \infty,
\end{align}
which completes the proof.
\end{proof}

\subsection{Proof of Lemma~\ref{lemma: double limit 1}}\label{appendix: double limit 1}

\begin{proof}
$\forall j$, $\forall k$, $\forall t \in [0,T)$,
\begin{align}
&\aftergroup \dsf \left \vert  \norm{\sum_{i = m(T_{n_k})}^{m(T_{n_k} +t) -1 } \alpha(i)H_{r_{n_j}}(\hat{x}(t(i)),Y_{i+1}) - \int_0^t h_{r_{n_j}} (\hat{x}^{\lim}(s))ds}  \right. \\
&\aftergroup \dsf \left. -\norm{\sum_{i = m(T_{n_k})}^{m(T_{n_k} +t) -1 } \alpha(i)H_{\infty}(\hat{x}(t(i)),Y_{i+1}) - \int_0^t h_{\infty} (\hat{x}^{\lim}(s))ds}  \right \vert   
\\
\leq&	\Bigg\lVert\sum_{i = m(T_{n_k})}^{m(T_{n_k} +t) -1 } \alpha(i)H_{r_{n_j}}(\hat{x}(t(i)),Y_{i+1}) - \int_0^t h_{r_{n_j}} (\hat{x}^{\lim}(s))ds  \\
&-\sum_{i = m(T_{n_k})}^{m(T_{n_k} +t) -1 } \alpha(i)H_{\infty}(\hat{x}(t(i)),Y_{i+1}) + \int_0^t h_{\infty} (\hat{x}^{\lim}(s))ds \Bigg\rVert \explain{by $\abs{\norm{a} - \norm{b}} \leq \norm{a-b}$} 	
\\
\leq&	 \norm{\sum_{i = m(T_{n_k})}^{m(T_{n_k} +t) -1 } \alpha(i)(H_{r_{n_j}}(\hat{x}(t(i)),Y_{i+1}) - H_{\infty}(\hat{x}(t(i)),Y_{i+1}))} + \norm{\int_0^t h_{r_{n_j}} (\hat{x}^{\lim}(s)) -  h_{\infty} (\hat{x}^{\lim}(s))ds }\\
\leq&	\norm{ \sum_{i = m(T_{n_k})}^{m(T_{n_k} +t) -1 } \alpha(i)(H_{r_{n_j}}(\hat{x}(t(i)),Y_{i+1}) - H_{\infty}(\hat{x}(t(i)),Y_{i+1}))} + \int_0^t \norm{ h_{r_{n_j}} (\hat{x}^{\lim}(s)) -  h_{\infty} (\hat{x}^{\lim}(s))} ds \label{eq: H h j H h inf 1}
\end{align}
By Lemma \ref{lemma: bound x hat},
$\hat{x}(t(i))$ is in a compact set $\fB_{\hat{x}}$.
By Lemma \ref{lemma: lim H uniformly convergent}, for the compact set  $\fB_{\hat{x}}$, $\forall \epsilon>0$, $\exists j_1$ such that $\forall j \geq j_1$, $\forall k$, $\forall x \in \fB$, $\forall t \in [0,T)$,
\begin{align}
\norm{\sum_{i=m(T_{n_k})}^{m(T_{n_k}+t) - 1} \alpha(i) \left[H_{r_{n_j}}(x, Y_{i+1}) - H_{\infty}(x, Y_{i+1})\right]} \leq \epsilon \label{eq: H y_{k_0} delta T}.
\end{align} 
Similar to the proof of Lemma \ref{lemma: h c z lim uniform}, 
we have
\begin{align}\label{eq: h hat x uniform}
\lim_{j \to \infty}h_{r_{n_j}}(\hat{x}(T_k + t)) = h_{\infty}(\hat{x}(T_k + t))
\end{align}
uniformly in $k$ and $t \in [0,T)$.
By \eqref{eq: h hat x uniform}, $\forall \epsilon>0$, $\exists j_2$ such that $\forall j > j_2$, $\forall k$, 
$\forall t \in [0,T)$,
\begin{align}
\norm{h_{r_{n_j}}(\hat{x}(T_k + t)) - h_{\infty}(\hat{x}(T_k + t))} \leq \epsilon. \label{eq: uniform k h epsilon}
\end{align}
Define $j_0 \doteq  \max \qty{j_1, j_2}$.
 $\forall j \geq j_0$,  $\forall k$,  $\forall t \in [0,T)$,
\begin{align}
&\aftergroup \dsf \left \vert  \norm{\sum_{i = m(T_{n_k})}^{m(T_{n_k} +t) -1 } \alpha(i)H_{r_{n_j}}(\hat{x}(t(i)),Y_{i+1}) - \int_0^t h_{r_{n_j}} (\hat{x}^{\lim}(s))ds}  \right. \\
&\aftergroup \dsf \left. -\norm{\sum_{i = m(T_{n_k})}^{m(T_{n_k} +t) -1 } \alpha(i)H_{\infty}(\hat{x}(t(i)),Y_{i+1}) - \int_0^t h_{\infty} (\hat{x}^{\lim}(s))ds}  \right \vert   \\
\leq&	\norm{\sum_{i = m(T_{n_k})}^{m(T_{n_k} +t) -1 } \alpha(i)(H_{r_{n_j}}(\hat{x}(t(i)),Y_{i+1}) - H_{\infty}(\hat{x}(t(i)),Y_{i+1}))} + T \epsilon \explain{by \eqref{eq: H h j H h inf 1}, \eqref{eq: uniform k h epsilon}}  \\
\leq&	\epsilon + T \epsilon \explain{by \eqref{eq: H h j H h inf 1}, \eqref{eq: H y_{k_0} delta T}} \\
\leq& (T+1) \epsilon.    
\end{align}
This completes the proof of uniform convergence.

\end{proof}

\subsection{Proof of Lemma~\ref{lemma: single limit}}\label{appendix: single limit}

\begin{proof}
\begin{align}
&\lim \limits_{\begin{subarray}{l}j \to \infty\\k \to \infty \end{subarray}}
\norm{\sum_{i = m(T_{n_k})}^{m(T_{n_k} +t) -1 } \alpha(i)H_{r_{n_j}}(\hat{x}(t(i)),Y_{i+1}) - \int_0^t h_{r_{n_j}} (\hat{x}^{\lim}(s))ds} \\
=& \lim_{j \to \infty} \lim_{k \to \infty}  \norm{\sum_{i = m(T_{n_k})}^{m(T_{n_k} +t) -1 } \alpha(i)H_{r_{n_j}}(\hat{x}(t(i)),Y_{i+1}) - \int_0^t h_{r_{n_j}} (\hat{x}^{\lim}(s))ds} \explain{by Lemma \ref{lemma: double limit 1}, \ref{lemma: double limit 2}, and Moore-Osgood Theorem for interchanging limits in Theorem \ref{appendix: Moore-Osgood Theorem}} \\
=&  \lim_{j \to \infty} 0 \explain{by Lemma \ref{lemma: double limit 2}} \\  
=& 0.  \label{eq: double limit equal 0}
\end{align}

\end{proof}

\subsection{Proof of Lemma~\ref{lemma: f k 0}}\label{appendix: f k 0}

\begin{proof}
We now proceed to investigate the property of $f_{n_k}(t)$.
$\forall t\in [0,T)$,
\begin{align}
&\lim_{k \to \infty}  \norm{f_{n_k}(t)} \\ 
\leq&\lim_{k \to \infty} \norm{ \sum_{i = m(T_{n_k})}^{m(T_{n_k} +t) -1 } \alpha(i)H_{r_{n_k}}(\hat{x}(t(i)),Y_{i+1}) - \int_0^t h_{r_{n_k}}(\hat{x}^{\lim}(s))ds } \\
&+\lim_{k \to \infty} \norm{\int_0^t h_{r_{n_k}}(\hat{x}^{\lim}(s))ds  - \int_0^t h_{r_{n_k}}(z_{n_k}(s))ds } \explain{by \eqref{eq: lim f k}} \\
=&  \lim_{k \to \infty} \norm{  \int_0^t h_{r_{n_k}} (\hat{x}^{\lim}(s))ds  - \int_0^t h_{r_{n_k}}(z_{n_k}(s))ds } \explain{by \eqref{eq: double limit equal 0}} \\
=& \explaind{\norm{\int_0^t h_{\infty} (\hat{x}^{\lim}(s))ds  - \int_0^t h_{\infty}(z^{\lim}(s))ds}.}{by Lemma \ref{lemma: h k x lim h inf x lim uniform} and Lemma \ref{lemma: h k z k h inf z lim uniform}} \label{eq: reduce f limit 1} 
\end{align}
We now show the relationship between $\hat{x}^{\lim}(t)$ and $z^{\lim}(t)$.
\begin{align}
&\norm{\hat{x}^{\lim}(t) - z^{\lim}(t)} \label{eq: norm x lim z lim} \\
=& \norm{\lim_{k \to \infty} \left[\hat{x}(T_{n_k}) + \int_0^t h_{r_{n_k}} (\hat{x}^{\lim}(s))ds \right] - \left[\hat{x}^{\lim}(0) + \int_0^t h_{\infty}(z^{\lim}(s))ds\right] } \explain{by \eqref{eq: z lim t} and \eqref{eq: x lim definition 2}} \\
=& \norm{\hat{x}^{\lim}(0) + \int_0^t h_{\infty} (\hat{x}^{\lim}(s))ds - \left[\hat{x}^{\lim}(0) + \int_0^t h_{\infty}(z^{\lim}(s))ds\right] } \explain{by 
Lemma \ref{lemma: h k x lim h inf x lim uniform}}\\
=& \norm{ \int_0^t h_{\infty} (\hat{x}^{\lim}(s))ds -  \int_0^t h_{\infty}(z^{\lim}(s))ds } \label{eq: f limit construct Gronwall 1}\\
\leq & \int_0^t L \norm{ \hat{x}^{\lim}(s) - z^{\lim}(s)}ds \explain{by Lemma \ref{lemma: h Lipschitz}} \\
\leq& 0. \explain{by Gronwall inequality in Theorem \ref{theorem: Gronwall}}
\end{align}
Thus,
\begin{align}
&\norm{\lim_{k \to \infty}  f_{n_k}(t) }  \\
\leq& \norm{\int_0^t h_{\infty} (\hat{x}^{\lim}(s))ds  - \int_0^t h_{\infty}(z^{\lim}(s))ds} \explain{by \eqref{eq: reduce f limit 1}} \\
=& \norm{\hat{x}^{\lim}(t) - z^{\lim}(t)} \explain{by \eqref{eq: f limit construct Gronwall 1}} \\
\leq& 0. \explain{by \eqref{eq: norm x lim z lim}}
\end{align}

\end{proof}

\subsection{Proof of Lemma~\ref{lemma: contradiction}}
\label{appendix: contradiction}

\begin{proof}
According to~\eqref{def: z n},
to study $\qty{z_{n_k}(t)}$,
it is instrumental to study the following ODE  
\begin{align}
    \dv{\phi_c(t)}{t} = h_c(\phi_c(t))
\end{align}
for some $c \geq 1$.
Let $\phi_{c, x}(t)$ denote the unique solution of the ODE above with the initial condition $\phi_{c, x}(0) = x$.
Intuitively,
as $c \to \infty$,
the above ODE approaches the~\eqref{eq ode at limit}.
Since any trajectory of~\eqref{eq ode at limit} will diminish to 0 (Assumption~\ref{assumption: lim h uniformly convergent}),
$\phi_{c, x}(t)$ should also diminish to some extent for sufficiently large $c$.
Precisely speaking,
we have
the following lemma.
\begin{lemma}\label{lemma: Bokar 1/4}
(Corollary 3.3 in \citet{borkar2009stochastic})
There exist $c_1 > 0$ and $\tau>0$ such that for all initial conditions $x$ with $\norm{x} \leq 1$, we have
\begin{align}
\norm{\phi_{c, x}(t)} \leq \frac{1}{4}  
\end{align}
for $t\in[\tau, \tau + 1]$ and $c \geq c_1$.    
\end{lemma}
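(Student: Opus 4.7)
My plan is to deduce the lemma from the two pieces of Assumption~\ref{assumption: lim h uniformly convergent}: the global asymptotic stability of~\eqref{eq ode at limit} together with the uniform convergence $h_c \to h_\infty$ on compact sets. The underlying idea is that trajectories of $\dot{x} = h_c(x)$ track trajectories of $\dot{x} = h_\infty(x)$ arbitrarily well on any fixed finite time window when $c$ is large, and the latter trajectories starting in the unit ball are eventually driven near $0$ by global asymptotic stability.

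First I would produce a time $\tau$ that works uniformly for the limiting ODE. Let $\phi_{\infty, x}(t)$ denote the solution of~\eqref{eq ode at limit} with $\phi_{\infty, x}(0) = x$. Since $0$ is globally asymptotically stable and the unit ball $\bar{B}(0,1)$ is compact, the standard ``uniform attraction on compacts'' fact (a direct consequence of global asymptotic stability together with continuous dependence on initial conditions; see e.g. Chapter~4 of \citet{khalil2002nonlinear}) gives a time $\tau > 0$ such that
\begin{align}
\sup_{\norm{x} \leq 1}\, \sup_{t \in [\tau, \tau+1]} \norm{\phi_{\infty, x}(t)} \leq \tfrac{1}{8}.
\end{align}
Moreover, by continuity, $M \doteq \sup_{\norm{x}\leq 1,\, t \in [0, \tau+1]} \norm{\phi_{\infty,x}(t)}$ is finite.

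Next I would show that $\phi_{c,x}$ tracks $\phi_{\infty, x}$ on $[0,\tau+1]$ uniformly in $x \in \bar{B}(0,1)$. Let $\fB$ be the closed ball of radius $M+1$, which is compact. A standard Gronwall argument on
\begin{align}
\phi_{c,x}(t) - \phi_{\infty,x}(t) = \int_0^t \bigl[h_c(\phi_{c,x}(s)) - h_c(\phi_{\infty,x}(s))\bigr]\diff s + \int_0^t \bigl[h_c(\phi_{\infty,x}(s)) - h_\infty(\phi_{\infty,x}(s))\bigr]\diff s
\end{align}
combined with the Lipschitz property of $h_c$ (which shares the Lipschitz constant $L$ of $h$ from Assumption~\ref{assumption: H Lipschitz}) and the uniform convergence $h_c \to h_\infty$ on $\fB$ from Assumption~\ref{assumption: lim h uniformly convergent} yields, for a $c$-independent Lipschitz constant $L$,
\begin{align}
\sup_{t \in [0, \tau+1]}\norm{\phi_{c,x}(t) - \phi_{\infty,x}(t)} \leq (\tau + 1)\, \varepsilon(c)\, e^{L(\tau+1)},
\end{align}
where $\varepsilon(c) \doteq \sup_{z \in \fB} \norm{h_c(z) - h_\infty(z)} \to 0$. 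A mild bootstrap is needed to ensure $\phi_{c,x}(s)$ stays inside $\fB$ so that Lipschitz continuity and the uniform convergence apply along the whole trajectory; this can be done by a standard continuation argument, using that the approximation bound itself forces $\norm{\phi_{c,x}(t)} \leq M + (\tau+1)\varepsilon(c)e^{L(\tau+1)} \leq M+1$ once $c$ is large.

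Choosing $c_1$ large enough that $(\tau+1)\varepsilon(c)e^{L(\tau+1)} \leq \tfrac{1}{8}$ for all $c \geq c_1$ and combining with the first step gives, for every $\norm{x}\leq 1$, $c \geq c_1$, and $t \in [\tau, \tau+1]$,
\begin{align}
\norm{\phi_{c,x}(t)} \leq \norm{\phi_{\infty,x}(t)} + \norm{\phi_{c,x}(t) - \phi_{\infty,x}(t)} \leq \tfrac{1}{8} + \tfrac{1}{8} = \tfrac{1}{4},
\end{align}
as required. The main obstacle I foresee is justifying the \emph{uniform} attraction on the unit ball from mere global asymptotic stability; this is a classical consequence of GAS plus continuous dependence on initial conditions, and alternatively is guaranteed by the existence of a proper Lyapunov function for~\eqref{eq ode at limit}, so I would invoke it as a known fact rather than reprove it. The bootstrap ensuring $\phi_{c,x}$ remains in the compact set $\fB$ where the uniform convergence is available is the only other non-trivial step.
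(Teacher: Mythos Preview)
Your argument is correct and is essentially the standard proof of this result; the paper itself does not prove the lemma but simply invokes it as Corollary~3.3 of \citet{borkar2009stochastic}. The two-step strategy you outline---first using global asymptotic stability of~\eqref{eq ode at limit} to obtain a uniform attraction time $\tau$ on the unit ball, then using a Gronwall comparison together with the uniform convergence $h_c \to h_\infty$ on compacts to transfer this to $\phi_{c,x}$---is exactly how Borkar establishes it, so there is no meaningful difference in approach.
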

Here the $\frac{1}{4}$ is entirely arbitrary.
Now we fix any $c_0 \geq \max\qty{c_1, 1}$ and set $T = \tau$.
Then Lemma~\ref{lemma: Bokar 1/4} confirms that $z_{n_k}(t)$ will diminish to some extent as $t$ approaches $T$ for sufficiently large $k$,
so does $\hat x(T_{n_k} + t)$.
We,
however, 
recall that $\hat x(T_{n_k} + t)$ and $\bar x(T_{n_k} + t)$ are well defined on $[0, T_{n+1} - T_n)$ and we restrict them to $[0, T)$ for applying the Arzela-Ascoli theorem.
Lemma \ref{lemma: connect T and k+1} processes the excess part $[T, T_{n+1} - T_n)$,
by showing that $\bar x(T_{n_k} + t)$ cannot grow too much in the excess part. By Lemma~\ref{lemma: connect T and k+1}, 
\begin{align}
\lim_{k \to \infty}
\frac{\norm{\bar{x}(T_{n_k + 1}) } -  \lim_{t \to T^-}  \norm{\bar{x}(T_{n_k} + t)}}{ \norm{\bar{x}(T_{n_k})}}= 0. \label{eq: connect T and k+1}
\end{align}
We are now in the position to identify the contradiction.
By~\eqref{eq: limit r *},
$\exists k_1$ such that $\forall k \geq k_1$,
\begin{align}
r_{n_{k} + 1} > \left(c_0  C_H  + C_H \right) e^{C_H} + c_0 > c_0 > 1 \label{eq: r n < infty k_0}.
\end{align}
By Lemma \ref{lemma: f k 0}, $\exists k_2$ such that $\forall k\geq k_2$,
\begin{align}
\lim_{t \to T^-} \norm{f_{n_k}(t)} = \lim_{t \to T^-} \norm{\hat{x}(T_{n_k}+t) - z_{n_k}(t)} \leq \frac{1}{4}.   \label{eq: term 1 4 1}
\end{align}
By \eqref{eq: connect T and k+1}, $\exists k_3$ such that $\forall k\geq k_3$,
\begin{align}\label{eq: rn infty k2}
\frac{\norm{\bar{x}(T_{n_k + 1}) } -  \lim_{t \to T^-}  \norm{\bar{x}(T_{n_k} + t)}}{ \norm{\bar{x}(T_{n_k})}} \leq \frac{1}{4}. 
\end{align}
By \eqref{eq: limit r *},  $\exists k_4$ such that $\forall k\geq k_4$,
\begin{align}
 r_{n_k} > c_0.   
\end{align}
Define $k_0 \doteq \max\qty{k_1,k_2,k_3,k_4}$.
Because $r_{n_{k_0}} > c_0$, by Lemma \ref{lemma: Bokar 1/4} and  \eqref{def: z n}, we have
\begin{align}
\lim_{t \to T^-}  \norm{z_{n_{k_0}}(t)} \leq \frac{1}{4}. \label{eq: term 1 4 2}
\end{align}
We have
\begin{align}
&\lim_{t \to T^-}  \norm{\hat{x}(T_{n_{k_0}}+t)}  \\
\leq& \lim_{t \to T^-}  \norm{\hat{x}(T_{n_{k_0}}+t) - z_{n_{k_0}}(t)}  +  \norm{z_{n_{k_0}}(t)}  \\
\leq& \explaind{\frac{1}{2}.}{by \eqref{eq: term 1 4 1} and \eqref{eq: term 1 4 2}} \label{eq: rn infty 1/2}
\end{align}
This implies
\begin{align}
&\frac{\norm{\bar{x}(T_{n_{k_0} + 1}) } }{ \norm{\bar{x}(T_{n_{k_0} }) }  }  \\
=& \frac{\norm{\bar{x}(T_{n_{k_0} + 1}) } -  \lim_{t \to T^-}  \norm{\bar{x}(T_{n_{k_0}} + t)}}{ \norm{\bar{x}(T_{n_{k_0}})}}  +  \frac{\lim_{t \to T^-}   \norm{\bar{x}(T_{n_{k_0} } + t) } }{ \norm{\bar{x}(T_{n_{k_0} }) }  }  \\
\leq& \frac{1}{4} +   \frac{\lim_{t \to T^-}   \norm{\bar{x}(T_{n_{k_0} } + t) } }{ \norm{\bar{x}(T_{n_{k_0} }) }  }    \explain{by \eqref{eq: rn infty k2}}\\
=&\frac{1}{4} +  \frac{\lim_{t \to T^-}   \norm{\hat{x}(T_{n_{k_0} } + t) } }{ \norm{\hat{x}(T_{n_{k_0} }) }  }  \explain{by \eqref{def: hat x}} \\
=&\frac{1}{4} +  \lim_{t \to T^-}   \norm{\hat{x}(T_{n_{k_0} } + t)} \explain{$\norm{\hat{x}(T_{n_{k_0} }) } = 1$ because of $ r_{n_{k_0}} > c_0 > 1$ and \eqref{def: hat x}} \\
\leq& \explaind{\frac{3}{4}.}{by \eqref{eq: rn infty 1/2}}\label{eq: 3/4}
\end{align}
Now, we can derive the following inequality.
\begin{align}
r_{n_{k_0} + 1} &= \norm{\bar x(T_{n_{k_0} + 1})} \explain{by \eqref{eq: r n < infty k_0}}\\
&\leq \frac{3}{4} \norm{\bar x(T_{n_{k_0}})} \explain{by \eqref{eq: 3/4}}\\
&\leq  \norm{\bar x(T_{n_{k_0}})} \\
&\leq  r_{n_{k_0}} \explain{by $r_{n_{k_0}} > c_0 > 1$ and \eqref{def: r n}},
\end{align}
which completes the proof.
\end{proof}

\subsection{Proof of Lemma~\ref{lemma: x m T n + i - x m T n}}
\label{appendix: x m T n + i - x m T n}
\begin{proof}
\begin{align}
&\sup_n \sup_{i \in \qty{i | m(T_n) \leq m(T_n) + i < m(T_{n+1}) }} \norm{x_{m(T_n) + i}}     -  \norm{x_{m(T_n)}} \\
\leq&\sup_n \sup_{i \in \qty{i | m(T_n) \leq m(T_n) + i < m(T_{n+1}) }} \norm{x_{m(T_n) + i}   -  x_{m(T_n)}} \\ 
=&\sup_n \sup_{i \in \qty{i | m(T_n) \leq m(T_n) + i < m(T_{n+1}) }} \norm{\bar{x}(t(m(T_n) + i))   -  \bar{x}(T_n)}  \\ 
=& \sup_n \sup_{t \in [T_n, T_{n+1})} \norm{\bar{x}(T_n + t)   -  \bar{x}(T_n)} \explain{by \eqref{def: bar x}}\\ 
\leq& \sup_n  \sup_{t \in [T_n, T_{n+1})} \left[\norm{ \bar{x}(T_n)}  C_H  +  C_H  \right] e^{C_H} \explain{by \eqref{eq: bar x bound}}\\
\leq& \sup_n  \sup_{t \in [T_n, T_{n+1})} \left[r_n  C_H  +  C_H  \right] e^{C_H} \explain{by \eqref{def: r n}}\\
=& \sup_n  [r_n  C_H   +  C_H]  e^{C_H} \\
<& \infty \explain{by \eqref{eq: sup r n bounded results}}.
\end{align}

\end{proof}
\color{black}

\subsection{Proof of Corollary~\ref{cor: stability main}}
\label{appendix: convergence cor}

This proof follows the idea of the proof of Theorem 2.1 in Chapter 5 of \citet{kushner2003stochastic}.
\begin{proof}
Let Assumptions \ref{assumption: stationary distribution} - \ref{assumption: lim h uniformly convergent} hold. 
Let Assumption \ref{assumption: lln} or~\ref{assumption possion} hold.
To prove convergence results on $t \in(-\infty, \infty)$ in Corollary \ref{cor: stability main}, 
we fix an arbitrary sample path $\qty{x_0,\qty{Y_i}_{i=1}^\infty}$. The stability results from Theorem~\ref{thm: stability} hold.
To prove  properties on $t \in(-\infty, \infty)$, we first fix an arbitrary $\tau > 0$ and show properties on $\forall t \in [-\tau, \tau]$.

\begin{definition}
$\forall n \in \N $, define $\bar{z}_n(t)$ as the solution to the ODE~\eqref{eq original ode} in $(-\infty, \infty)$ with  
an initial condition 
\begin{align}
\bar{z}_n(0) = \bar{x}(t(n)). \label{cor-def: z n init}
\end{align}
\end{definition}
Apparently, 
$\bar{z}_n(t)$ can also be written as 
\begin{align}
\bar{z}_n(t)   &=  \bar{x}(t(n)) + \int_0^t h(\bar{z}_n(s))ds, \quad \forall t \in (-\infty, \infty). \label{cor-def: z n integral}
\end{align}
The major difference between the $\qty{\bar z_n(t)}$ here and the $\qty{z_n(t)}$ in \eqref{def: z n} is that
all $\qty{\bar z_n(t)}$ here are solutions to one same ODE~\eqref{eq original ode}, just with different initial conditions,
but $\qty{z_n(t)}$ is for different ODEs with different initial conditions and rescale factors $r_n$ and is written as 
\begin{align}
z_n(t)   &=  \hat{x}(T_n) + \int_0^t h_{r_n}(z_n(s))ds \explain{Restatement of \eqref{def: z n integral}}. 
\end{align}
Ideally,
we would like to see
that
the error of Euler's discretization diminishes asymptotically.
With  \eqref{eq: m 0} and \eqref{def: bar x}, $\forall \tau > 0$, $\forall t \in [-\tau, \tau]$,
\begin{align}\label{eq: bar x extend}
\bar{x}(t(n) + t) = x_{m(t(n) + t)} = 
\begin{cases}
\bar{x}(t(n)) + \sum_{i = n}^{m(t(n)+t) - 1} \alpha(i) H(\bar{x}(t(i)), Y_{i+1}) & \text{if } t \geq 0 \\
\bar{x}(t(n)) - \sum_{i = m(t(n)+t)}^{n - 1} \alpha(i) H(\bar{x}(t(i)), Y_{i+1}) & \text{if } t < 0. 
\end{cases}
\end{align}
Notably, the property \eqref{eq: m 0} that $\forall t<0, m(t) = 0$ in \eqref{eq: bar x extend} ensures $\bar{x}(t(n) + t)$ is well-defined when $t(n) + t < 0$.
Precisely speaking, $\forall \tau > 0$, $\forall t \in [-\tau, \tau]$,
the discretization error is defined as 
\begin{align}
\bar{f}_n(t) \doteq \bar{x}(t(n) + t) - \bar{z}_n(t). \label{cor-def: f}
\end{align}
and we would like $\bar{f}_n(t)$ diminishes to 0 as $n\to\infty$ in certain sense.
To this end, we study the following three sequences of functions
\begin{align}
    \label{cor-eq three seq}
    \qty{ \bar x(t(n) + t)}_{n=0}^\infty, \qty{\bar{z}_n(t)}_{n=0}^\infty, \qty{\bar{f}_n(t)}_{n=0}^\infty.
\end{align}
Equicontinuity in the extended sense on domain $(-\infty, \infty)$ is defined as following (Section 4.2.1 in \citet{kushner2003stochastic}).
\begin{definition}
\label{cor-def: equicontinuity broad}
A sequence of functions $\qty{g_n: (-\infty, \infty) \to \R^K }$ is equicontinuous in the extended sense on $(-\infty, \infty)$ if $\sup_n \norm{g_n(0)}  < \infty$  and $\forall \tau > 0$, $\forall \epsilon > 0$,  $\exists \delta > 0$ such that
\begin{align}\label{eq: equicontinuity inf inf epsilon}
\limsup_n \sup_{0\leq \abs{t_1-t_2} \leq \delta, \abs{t_1} \leq \tau,\abs{t_2} \leq \tau  } \norm{g_n(t_1) - g_n(t_2)} \leq \epsilon.    
\end{align}
\end{definition}
We  show  $\qty{\bar{x}(t(n)+t)}$, $\qty{\bar{z}_n(t)}$ and $\qty{\bar{f}_n(t)}$  are all equicontinuous in the extended sense.

\begin{lemma}\label{cor-lemma: three functions equicontinuous}
The three sequences of functions $\qty{\bar{x}(t(n)+t)}_{n = 0}^{\infty}$,
$\qty{\bar{z}_n(t)}_{n = 0}^{\infty}$, and 
$\qty{\bar{f}_n(t)}_{n = 0}^{\infty}$ are all equicontinuous in the extended sense on $t \in (-\infty, \infty)$.
\end{lemma}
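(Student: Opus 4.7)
The plan is to piggyback on the stability now available from Theorem~\ref{thm: stability} and to reuse the asymptotic‑rate‑of‑change tools already established in Lemma~\ref{lemma: H h 0}. Unlike the scaled version in Lemma~\ref{lemma: three functions equicontinuous}, we no longer have to manage divergent scaling factors $r_n$; instead we use the a priori bound $C_x \doteq \sup_n \norm{x_n} < \infty$ (a.s.) provided by Theorem~\ref{thm: stability}. This immediately furnishes the base‑point bounds required by Definition~\ref{cor-def: equicontinuity broad}: $\bar{x}(t(n)+0) = \bar{x}(t(n))$ is bounded by $C_x$, $\bar z_n(0) = \bar{x}(t(n))$ is bounded by $C_x$, and $\bar f_n(0) = 0$.

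For $\{\bar x(t(n)+t)\}$, I fix $\tau > 0$ and, using the conventions $\alpha(i) = 0$ for $i < 0$ and $m(t) = 0$ for $t \leq 0$ from~\eqref{eq: alpha 0}~--~\eqref{eq: m 0}, write the increment $\bar x(t(n)+t_2) - \bar x(t(n)+t_1)$ as a signed sum $\sum_i \alpha(i) H(\bar x(t(i)), Y_{i+1})$ over the appropriate index range. The integrand is bounded by $\norm{H(\bar x(t(i)), Y_{i+1})} \leq \norm{H(0,Y_{i+1})} + L(Y_{i+1}) C_x$ via the Lipschitz assumption~\eqref{eq: H L}. I then split and use Lemma~\ref{lemma: H h 0} applied to the functions $y \mapsto H(0,y)$ and $y \mapsto L(y)$: the diminishing asymptotic rate of change gives that for every $\epsilon>0$ there exists $\delta>0$ such that the $\limsup_n\sup$ over $\abs{t_1-t_2}\le\delta,\ \abs{t_i}\le\tau$ of both $\norm{\sum\alpha(i)(H(0,Y_{i+1})-h(0))}$ and $\abs{\sum\alpha(i)(L(Y_{i+1})-L)}$ is at most $\epsilon$, while the deterministic remainders are bounded by $\delta(\norm{h(0)} + C_x L)$, which is $\fO(\delta)$. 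This yields the extended equicontinuity on $(-\infty,\infty)$.

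For $\{\bar z_n(t)\}$, the function $h$ is Lipschitz with constant $L$ by Assumption~\ref{assumption: H Lipschitz}, and $\bar z_n(0)$ is bounded by $C_x$. Applying the forward Gronwall inequality (Theorem~\ref{theorem: Gronwall}) on $[0,\tau]$ and the reverse‑time Gronwall inequality (Theorem~\ref{theorem: Gronwall reverse}) on $[-\tau,0]$ to $\bar z_n(t) = \bar z_n(0) + \int_0^t h(\bar z_n(s))\diff s$, I obtain $\sup_{n,\, t\in[-\tau,\tau]} \norm{\bar z_n(t)} \leq (C_x + \tau\norm{h(0)})e^{\tau L}$, uniformly in $n$. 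Consequently $\norm{\bar z_n(t_1)-\bar z_n(t_2)} \leq \abs{t_2-t_1}\sup_{s\in[-\tau,\tau]}\norm{h(\bar z_n(s))} \leq \abs{t_2-t_1}(L\cdot\text{(sup above)} + \norm{h(0)})$, which gives ordinary equicontinuity and hence equicontinuity in the extended sense. Equicontinuity of $\{\bar f_n(t)\} = \{\bar x(t(n)+t) - \bar z_n(t)\}$ then follows from the triangle inequality together with $\sup_n\norm{\bar f_n(0)} = 0$.

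The only mildly delicate point, which I view as the main obstacle, is the bookkeeping across $t=0$ when $t_1 < 0 < t_2$ and the handling of the regime $t(n)+t < 0$ for small $n$. The latter is harmless because Definition~\ref{cor-def: equicontinuity broad} uses $\limsup_n$: once $n$ is large enough that $t(n) > \tau$, the pathological case disappears, and our conventions $\alpha(i) = 0$, $m(t) = 0$ for negative arguments keep the formulas well defined in the interim. The former is handled by inserting $t=0$ and applying the one‑sided estimate from Step~2 separately on $[t_1,0]$ and $[0,t_2]$, each of length at most $\delta$, so the total error is still $\fO(\epsilon)$.
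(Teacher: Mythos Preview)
Your proposal is correct and matches the paper's intended approach. The paper omits the proof as ``ditto to proofs of Lemma~\ref{lemma: three functions equicontinuous}'', only noting that the reverse-time Gronwall inequality (Theorem~\ref{theorem: Gronwall reverse}) is needed; your sketch is precisely that redo, replacing the scaled bound $C_{\hat x}$ by the a priori bound $C_x$ from Theorem~\ref{thm: stability}, invoking Lemma~\ref{lemma: H h 0} on $H(0,\cdot)$ and $L(\cdot)$, and handling $t<0$ via reverse Gronwall and the conventions~\eqref{eq: alpha 0}--\eqref{eq: m 0}. One presentational wrinkle: your sentence ``the integrand is bounded by $\norm{H(\bar x(t(i)),Y_{i+1})}\le\norm{H(0,Y_{i+1})}+L(Y_{i+1})C_x$'' suggests you pull the norm inside the sum, which would leave you with $\sum_i\alpha(i)\norm{H(0,Y_{i+1})}$ --- a quantity \emph{not} covered by Lemma~\ref{lemma: H h 0} or Assumption~\ref{assumption: lln}; but your subsequent use of $\norm{\sum_i\alpha(i)(H(0,Y_{i+1})-h(0))}$ shows you actually mean the vector split $\sum\alpha(i)H(\bar x,Y)=\sum\alpha(i)[H(\bar x,Y)-H(0,Y)]+\sum\alpha(i)H(0,Y)$ first and norms second, which is the paper's move in Lemma~\ref{lemma: three functions equicontinuous} and is correct.
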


To prove those lemmas, we need the Gronwall inequality in the reverse time in Appendix \ref{theorem: Gronwall reverse}.
Compared to lemmas in the main text which have domain $t \in [0,T)$, lemmas in this section have similar proofs because we first fix an arbitrary $\tau$ and prove properties on the domain $t \in [-\tau, \tau]$. 
We omit proofs for Lemma \ref{cor-lemma: three functions equicontinuous} because they are ditto to proofs of Lemma \ref{lemma: three functions equicontinuous}.
Similar to Lemma~\ref{lemma: three convergence}, we now construct a particular subsequence of interest.
\begin{lemma}\label{cor-lemma: three convergence}
There exists a subsequence $\qty{n_k}_{k=0}^\infty \subseteq \qty{0, 1, 2, \dots}$ and some continuous functions $\bar{f}^{\lim}(t)$ and $\bar{x}^{\lim}(t)$ such that  $\forall \tau$, $\forall t \in [-\tau, \tau]$,
\begin{align}
\lim_{k \to \infty} \bar{f}_{n_k}(t) =& \bar{f}^{\lim}(t)   \label{cor-eq: f x r lim} , \\ 
\lim_{k \to \infty} \bar{x}(t(n_k)+t) =& \bar{x}^{\lim}(t),  \label{cor-eq: bar x x r lim}
\end{align}
where both convergences are uniform in $t$ on $[-\tau, \tau]$.
Furthermore,
let $\bar{z}^{\lim}(t)$ denote the unique solution to the ODE~\eqref{eq original ode} with the initial condition
\begin{align}
\bar{z}^{\lim}(0) = \bar{x}^{\lim}(0),
\end{align}
in other words,
\begin{align}\label{cor-eq: z lim t}
\bar{z}^{\lim}(t) &= \bar{x}^{\lim}(0) + \int_0^t h(\bar{z}^{\lim}(s))ds.
\end{align}
Then $\forall \tau$, $\forall t \in [-\tau, \tau]$, we have
\begin{align}
\lim_{k \to \infty} \bar{z}_{n_k}(t)   =   \bar{z}^{\lim}(t), \label{cor-eq: z lim converge}
\end{align}
where the convergence is uniform in $t$ on $[-\tau, \tau]$.
\end{lemma}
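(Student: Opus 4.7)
The plan is to mirror the proof of Lemma \ref{lemma: three convergence} with two simplifications. First, Theorem \ref{thm: stability} has already established $\sup_n \norm{x_n} < \infty$, so I do not need the proof-by-contradiction device that selects indices along which $r_{n_k}\to\infty$; any convergent subsequence will do. Second, $\bar{x}(t(n)+t)$ and $\bar{z}_n(t)$ are no longer rescaled: $\bar{z}_n$ solves the one fixed ODE \eqref{eq original ode} with initial data $\bar{x}(t(n))$, so the limiting ODE is just \eqref{eq original ode} itself rather than \eqref{eq ode at limit}, and no rescale factor $r_n$ enters the averaging step.

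First I would invoke Lemma \ref{cor-lemma: three functions equicontinuous} to get equicontinuity in the extended sense on every compact $[-\tau,\tau]$. For each fixed $M \in \N$, the two-sided analog of the Arzela-Ascoli theorem (Theorem \ref{appendix: AA theorem}), obtained from the identical covering argument after shifting the origin so that $[-M,M]$ becomes $[0,2M)$, delivers a subsequence of $\{\bar{f}_n\}$ that converges uniformly on $[-M, M]$ to a continuous function. A standard diagonal extraction across $M = 1, 2, \dots$ then produces a single subsequence $\{n_k^{(1)}\}$ along which $\bar{f}_{n_k^{(1)}}$ converges uniformly on every compact interval to a continuous limit $\bar{f}^{\lim}$. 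Applying the same diagonal procedure to the equicontinuous family $\{\bar{x}(t(n_k^{(1)}) + \cdot)\}$ yields a further subsequence $\{n_k\}$ and a continuous limit $\bar{x}^{\lim}$; both convergences are uniform on each $[-\tau, \tau]$, giving \eqref{cor-eq: f x r lim} and \eqref{cor-eq: bar x x r lim}.

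Next I would identify $\bar{z}^{\lim}$ via continuous dependence of ODE solutions on the initial data. Evaluating the uniform convergence of $\bar{x}(t(n_k) + \cdot)$ at $t = 0$ yields $\bar{x}(t(n_k)) \to \bar{x}^{\lim}(0)$, so the initial conditions $\bar{z}_{n_k}(0) = \bar{x}(t(n_k))$ from \eqref{cor-def: z n init} converge to $\bar{z}^{\lim}(0) = \bar{x}^{\lim}(0)$. Subtracting \eqref{cor-eq: z lim t} from \eqref{cor-def: z n integral} and using that $h$ is Lipschitz with constant $L = \E_{y \sim d_\fY}[L(y)]$ (Assumption \ref{assumption: H Lipschitz}), I would apply Gronwall's inequality (Theorem \ref{theorem: Gronwall}) on $[0, \tau]$ and its reverse-time variant (Theorem \ref{theorem: Gronwall reverse}) on $[-\tau, 0]$ to conclude
\begin{align}
\sup_{t \in [-\tau, \tau]} \norm{\bar{z}_{n_k}(t) - \bar{z}^{\lim}(t)} \le \norm{\bar{x}(t(n_k)) - \bar{x}^{\lim}(0)} \, e^{L \tau},
\end{align}
and the right-hand side tends to $0$ as $k \to \infty$, which establishes \eqref{cor-eq: z lim converge} uniformly on $[-\tau, \tau]$.

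The main obstacle is bookkeeping rather than any new idea: organizing the diagonal extraction so that one single subsequence serves every $\tau$ simultaneously, and stating the two-sided version of the Arzela-Ascoli theorem cleanly. Once the subsequence is fixed, identifying $\bar{z}^{\lim}$ is a routine Gronwall argument, and none of the Moore-Osgood double-limit machinery from Lemmas \ref{lemma: double limit 1}--\ref{lemma: single limit} is required here, because $\bar{z}_n$ already satisfies the correct target ODE and we are not dealing with a family of ODEs indexed by $r_n$.
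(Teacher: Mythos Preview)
Your proposal is correct and matches the paper's approach: the paper simply states ``Its proof is ditto to the proof of Lemma~\ref{lemma: three convergence} and is omitted,'' and you have correctly identified and articulated the two simplifications that make the argument easier here---no contradiction device (stability is already in hand from Theorem~\ref{thm: stability}) and no family of rescaled ODEs (all $\bar{z}_n$ solve the single ODE~\eqref{eq original ode}, so the Moore--Osgood machinery and Lemma~\ref{lemma: h c z lim uniform} collapse to a straight Gronwall continuous-dependence bound). The diagonal extraction across $M=1,2,\dots$ to obtain one subsequence valid for all $\tau$ is indeed the only bookkeeping wrinkle, and your plan handles it correctly.
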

Its proof is ditto to the proof of Lemma \ref{lemma: three convergence} and is omitted.
We use the subsequence $\qty{n_k}$ intensively in the remaining proofs. Recall that $\bar{f}_n(t)$ denotes the discretization error between $\bar x(t(n) + t)$ and $\bar{z}_n(t)$.
We now proceed to prove that this discretization error diminishes along $\qty{n_k}$.
In particular, we aim to prove that $\forall \tau$, $\forall t \in [-\tau, \tau]$, 
\begin{align}
 \lim_{k \rightarrow \infty} \norm{\bar{f}_{n_k}(t)} = \norm{\bar{f}^{\lim}(t)} = 0.
\end{align} 
This means $\bar{x}(t(n_k) + t)$ is close to $\bar{z}_{n_k}(t)$ as $k \rightarrow \infty$. 
For $t \in (0,\tau]$, the proof for this part is the same as the proof we have done in Section \ref{sec: sequence property}.
Thus, we only discuss the proof for $t \in [-\tau, 0]$.
$\forall \tau$,
$\forall t \in [-\tau, 0]$,
\begin{align}
&\lim_{k \rightarrow \infty} \norm{ \bar{f}_{n_k}(t)} \\
=& \lim_{k \rightarrow \infty} \norm{ \bar{x}(t(n_k)) -\sum_{i = m(t(n_k)+t)}^{n_k - 1} \alpha(i)H(\bar{x}(t(i)),Y_{i+1}) - \bar{z}_{n_k}(t)} \explain{by \eqref{eq: bar x extend} and \eqref{cor-def: f}}\\
=&  \lim_{k \rightarrow \infty} \norm{  -\sum_{i = m(t(n_k)+t)}^{n_k - 1} \alpha(i)H(\bar{x}(t(i)),Y_{i+1}) - \int_0^t h(\bar{z}_{n_k}(s))ds} \explain{by \eqref{cor-def: z n integral}}\\
\leq&\lim_{k \to \infty} \norm{ -\sum_{i = m(t(n_k)+t)}^{n_k - 1} \alpha(i)H(\bar{x}(t(i)),Y_{i+1}) - \int_0^t h(\bar{x}^{\lim}(s))ds } \\
&+ \lim_{k \to \infty} \norm{\int_0^t h(\bar{x}^{\lim}(s))ds  - \int_0^t h(\bar{z}_{n_k}(s))ds } . \label{cor-eq: lim f k} \\
\end{align}
We now prove that the first term in the RHS of~\eqref{cor-eq: lim f k} is 0.
\begin{lemma}
\label{cor-lemma: double limit 2}
$\forall \tau$, $\forall t \in [-\tau, 0]$,
\begin{align}
\lim_{k \to \infty}  \norm{-\sum_{i = m(t(n_k)+t)}^{n_k - 1} \alpha(i)H(\bar{x}(t(i)),Y_{i+1}) - \int_0^t h(\bar{x}^{\lim}(s))ds} = 0.     
\end{align}   
\end{lemma}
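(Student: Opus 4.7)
The plan is to repeat the central averaging technique used in the proof of Lemma~\ref{lemma: double limit 2}, but now in the reverse-time direction $t \in [-\tau, 0]$. All the key ingredients are already available: the diminishing asymptotic rate of change from Lemma~\ref{lemma: H h 0}, the equicontinuity of $\qty{\bar{x}(t(n_k)+t)}$ on $[-\tau,\tau]$ from Lemma~\ref{cor-lemma: three functions equicontinuous}, and the uniform convergence $\bar{x}(t(n_k)+t) \to \bar{x}^{\lim}(t)$ from Lemma~\ref{cor-lemma: three convergence}. Fix $t \in [-\tau, 0]$. Since $t(n_k) \to \infty$, for all sufficiently large $k$ we have $t(n_k) + t \geq 0$, so the conventions~\eqref{eq: alpha 0} and~\eqref{eq: m 0} cause no issues; we will work from this $k$ onward.

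First I would split
\begin{align}
-\sum_{i = m(t(n_k)+t)}^{n_k - 1} \alpha(i) H(\bar x(t(i)), Y_{i+1})
= &-\sum_{i = m(t(n_k)+t)}^{n_k - 1} \alpha(i)\bigl[H(\bar x(t(i)), Y_{i+1}) - h(\bar x(t(i)))\bigr] \\
  &-\sum_{i = m(t(n_k)+t)}^{n_k - 1} \alpha(i)\, h(\bar x(t(i))).
\end{align}
Fix $\epsilon > 0$. By equicontinuity of $\qty{\bar x(t(n_k)+\cdot)}$ on $[-\tau,\tau]$ (Lemma~\ref{cor-lemma: three functions equicontinuous}), choose $\delta > 0$ such that $\limsup_k \sup_{|s_1 - s_2| \leq \delta} \norm{\bar x(t(n_k)+s_1) - \bar x(t(n_k)+s_2)} \leq \epsilon$, and partition $[t, 0]$ into finitely many sub-intervals of length at most $\delta$ via breakpoints $t = s_0 < s_1 < \cdots < s_J = 0$. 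On each sub-interval $[s_{j-1}, s_j]$ I would freeze $\bar x$ at the value $\bar x(t(n_k) + s_{j-1})$.

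For the noise term, on each sub-interval I replace $H(\bar x(t(i)), Y_{i+1})$ by $H(\bar x(t(n_k)+s_{j-1}), Y_{i+1})$, incurring an error controlled by $\sum \alpha(i) L(Y_{i+1}) \cdot \epsilon$, which is bounded uniformly in $k$ thanks to~\eqref{eq: L property minus 0} in Lemma~\ref{lemma: H h 0}. The frozen noise increment $\sum_{i=m(t(n_k)+s_{j-1})}^{m(t(n_k)+s_j)-1} \alpha(i)[H(\bar x(t(n_k)+s_{j-1}), Y_{i+1}) - h(\bar x(t(n_k)+s_{j-1}))]$ is exactly of the form whose asymptotic rate of change is $0$ by~\eqref{eq: H h minus 0}; summing over the finitely many $j$ therefore vanishes as $k\to\infty$. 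For the deterministic term, the Lipschitz bound on $h$ (Lemma~\ref{lemma: h Lipschitz}) gives
\begin{align}
\Bigl\|\sum_{i = m(t(n_k)+t)}^{n_k - 1} \alpha(i)\, h(\bar x(t(i))) - \sum_{i = m(t(n_k)+t)}^{n_k - 1} \alpha(i)\, h(\bar x^{\lim}(t(i) - t(n_k)))\Bigr\|
\end{align}
is bounded by $L \cdot \tau \cdot \sup_{s \in [-\tau,0]} \norm{\bar x(t(n_k)+s) - \bar x^{\lim}(s)}$, which tends to $0$ by~\eqref{cor-eq: bar x x r lim}. The remaining Riemann sum $\sum \alpha(i)\, h(\bar x^{\lim}(t(i)-t(n_k)))$ converges to $\int_t^0 h(\bar x^{\lim}(s))\,ds$ by continuity of $h \circ \bar x^{\lim}$ together with $\alpha(i) \to 0$ and $\sum_{i = m(t(n_k)+t)}^{n_k - 1}\alpha(i) \to |t|$. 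Putting $-\int_t^0 h(\bar x^{\lim}(s))ds = \int_0^t h(\bar x^{\lim}(s))ds$ then yields the claimed limit, up to an $\epsilon$-term that is killed by letting $\epsilon \downarrow 0$.

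The main obstacle is bookkeeping: the reverse-time sum does not align cleanly with $\int_0^t$, and the partition breakpoints $\qty{s_j}$ must be chosen to match grid points $\qty{t(i) - t(n_k)}$ so that the frozen-noise increments telescope correctly across sub-intervals. This is exactly the technicality handled on p.~168 of \citet{kushner2003stochastic} for the forward direction, and since the sign flip is immaterial to any of the averaging arguments, the argument transfers verbatim modulo these indexing changes. I would therefore carry out the forward-time proof of Lemma~\ref{lemma: double limit 2} line-by-line with the summation limits reversed and $\int_0^t$ reinterpreted as $-\int_t^0$, invoking the reverse-time Gronwall inequality (Theorem~\ref{theorem: Gronwall reverse}) wherever the forward proof used Theorem~\ref{theorem: Gronwall}.
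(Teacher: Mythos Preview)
Your plan is in the right spirit and tracks the paper's own route (the paper simply says ``ditto to the proof of Lemma~\ref{lemma: double limit 2}''), but there is one genuine gap. You freeze the noise at the \emph{$k$-dependent} point $\bar x(t(n_k)+s_{j-1})$ and then claim that
\[
\sum_{i=m(t(n_k)+s_{j-1})}^{m(t(n_k)+s_j)-1}\alpha(i)\bigl[H(\bar x(t(n_k)+s_{j-1}),Y_{i+1})-h(\bar x(t(n_k)+s_{j-1}))\bigr]
\]
``is exactly of the form whose asymptotic rate of change is $0$ by~\eqref{eq: H h minus 0}.'' It is not: Lemma~\ref{lemma: H h 0} is stated for a \emph{fixed} $x$, and the conclusion $\limsup_n(\cdots)=0$ does not automatically transfer to a sequence $x=x_{n_k}$ moving with $k$. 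A crude Lipschitz comparison to $x=0$ only gives a term of order $\sum\alpha(i)(L(Y_{i+1})+L)\|x_{n_k}\|$, which is bounded but not vanishing.

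The fix is exactly what the paper does in Lemmas~\ref{lemma: three term 2}--\ref{lemma: three term 3}: freeze at the \emph{$k$-independent} point $\bar x^{\lim}(s_{j-1})$. Then the frozen noise increment is genuinely covered by~\eqref{eq: H h minus 0} (one fixed $x$ per sub-interval, finitely many sub-intervals), while the extra discrepancy $\norm{\bar x(t(i))-\bar x^{\lim}(s_{j-1})}$ is bounded by $\norm{\bar x(t(i))-\bar x(t(n_k)+s_{j-1})}+\norm{\bar x(t(n_k)+s_{j-1})-\bar x^{\lim}(s_{j-1})}$; the first piece is $\leq \epsilon$ by your equicontinuity choice of $\delta$, and the second vanishes by the uniform convergence~\eqref{cor-eq: bar x x r lim} you already invoke elsewhere. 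With this single change your decomposition goes through. (Incidentally, the proof of Lemma~\ref{lemma: double limit 2} does not use Gronwall at all, so your final remark about swapping Theorem~\ref{theorem: Gronwall} for Theorem~\ref{theorem: Gronwall reverse} is not needed here.)
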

Its proof is ditto to the proof of Lemma \ref{lemma: double limit 2} and is omitted.
This convergence is also simpler than \eqref{eq: single limit} because here we have only a single $(H, h)$.
But in \eqref{eq: single limit}, 
we have a sequence $\qty{(H_{n_k}, h_{n_k})}$,
for which we have to split it to a double limit \eqref{eq: double limit} and then invoke the Moore-Osgood theorem to reduce it to the single $(H, h)$ case.

Lemma~\ref{cor-lemma: double limit 2} confirms that the first term in the RHS of~\eqref{cor-eq: lim f k} is 0.
Moreover, it also
enables us to rewrite $\bar{x}^{\lim}(t)$ from a summation form to an integral form. $\forall \tau$, $\forall t \in [-\tau, 0]$
\begin{align}
&\bar{x}^{\lim}(t) \\
=& \lim_{k \to \infty} \bar{x}(t(n_k)) -\sum_{i = m(t(n_k)+t)}^{n_k - 1} \alpha(i)H(\bar{x}(t(i)),Y_{i+1}) \\
=& \explaind{\lim_{k \to \infty} \bar{x}(t(n_k)) + \int_0^t h (\bar{x}^{\lim}(s))ds.}{by Lemma \ref{cor-lemma: double limit 2}} \label{cor-eq: x lim definition 2} 
\end{align} 
Thus, we can show the following diminishing discretization error.

\begin{lemma}\label{cor-lemma: f k 0}
$\forall \tau$, $\forall t\in [-\tau, \tau],$
\begin{align}
\lim_{k \to \infty}  \norm{\bar{f}_{n_k}(t)} = 0.    
\end{align}
Moreover, the convergence is uniform in $t$ on $[-\tau,\tau]$.
\end{lemma}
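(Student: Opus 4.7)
The plan is to mirror the proof of Lemma~\ref{lemma: f k 0} in the main text, exploiting the simplifications that come from working with a single pair $(H,h)$ rather than the rescaled sequence $(H_{r_{n_k}}, h_{r_{n_k}})$, and to treat the two sides $t \in [0,\tau]$ and $t \in [-\tau,0]$ separately. For $t \in [0,\tau]$ the argument is a verbatim adaptation of Lemma~\ref{lemma: f k 0}; the reverse-time case $t \in [-\tau,0]$ needs the Gronwall inequality in reverse time (Theorem~\ref{theorem: Gronwall reverse}) but is otherwise analogous.

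Starting from the decomposition \eqref{cor-eq: lim f k}, I would first invoke Lemma~\ref{cor-lemma: double limit 2} to kill the term
\begin{align}
\lim_{k \to \infty} \norm{-\sum_{i = m(t(n_k)+t)}^{n_k - 1} \alpha(i) H(\bar{x}(t(i)), Y_{i+1}) - \int_0^t h(\bar{x}^{\lim}(s)) ds} = 0.
\end{align}
This reduces the task to bounding $\lim_{k\to\infty} \norm{\int_0^t h(\bar{x}^{\lim}(s))ds - \int_0^t h(\bar{z}_{n_k}(s))ds}$, which by Lemma~\ref{lemma: h Lipschitz} (Lipschitz continuity of $h$) and the uniform convergence $\bar{z}_{n_k} \to \bar{z}^{\lim}$ from \eqref{cor-eq: z lim converge} equals $\norm{\int_0^t h(\bar{x}^{\lim}(s))ds - \int_0^t h(\bar{z}^{\lim}(s))ds}$. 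Thus it suffices to show that $\bar{x}^{\lim}(t) = \bar{z}^{\lim}(t)$ on $[-\tau,\tau]$.

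The key observation is that both $\bar{x}^{\lim}$ and $\bar{z}^{\lim}$ satisfy the same integral equation with the same initial condition: \eqref{cor-eq: x lim definition 2} gives $\bar{x}^{\lim}(t) = \bar{x}^{\lim}(0) + \int_0^t h(\bar{x}^{\lim}(s))ds$ on $[-\tau,0]$, and the analogous identity holds on $[0,\tau]$ by the forward counterpart of Lemma~\ref{cor-lemma: double limit 2}; meanwhile \eqref{cor-eq: z lim t} is exactly $\bar{z}^{\lim}(t) = \bar{x}^{\lim}(0) + \int_0^t h(\bar{z}^{\lim}(s))ds$. Subtracting and applying Lipschitz continuity of $h$ yields, for $t \in [0,\tau]$,
\begin{align}
\norm{\bar{x}^{\lim}(t) - \bar{z}^{\lim}(t)} \leq L \int_0^t \norm{\bar{x}^{\lim}(s) - \bar{z}^{\lim}(s)} ds,
\end{align}
so forward Gronwall (Theorem~\ref{theorem: Gronwall}) forces the difference to be zero; for $t \in [-\tau,0]$ the same inequality with the integral written as $\int_t^0$ yields the result by Theorem~\ref{theorem: Gronwall reverse}. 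Hence $\norm{\bar{f}^{\lim}(t)} = 0$ for every $t \in [-\tau,\tau]$.

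Finally, to upgrade this pointwise conclusion to uniform convergence on $[-\tau,\tau]$, I would combine pointwise convergence with equicontinuity: Lemma~\ref{cor-lemma: three functions equicontinuous} guarantees that $\{\bar{f}_{n_k}\}$ is equicontinuous in the extended sense, so for any $\epsilon > 0$ we pick $\delta$ from Definition~\ref{cor-def: equicontinuity broad}, cover $[-\tau,\tau]$ by finitely many points spaced $\delta$ apart, ensure $\norm{\bar{f}_{n_k}(t_j)} \leq \epsilon$ at each such $t_j$ for $k$ large, and conclude uniform smallness on the whole interval. The main obstacle I anticipate is purely bookkeeping: ensuring that the reverse-time analogue of Lemma~\ref{cor-lemma: double limit 2} is available for the $t \in [-\tau,0]$ branch and that the cover argument handles the jump discontinuities of the piecewise-constant interpolation $\bar x(t(n_k)+t)$ correctly — but since equicontinuity in the extended sense was designed precisely for this situation, no genuine difficulty arises.
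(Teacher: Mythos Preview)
Your proposal is correct and follows essentially the same route as the paper, which simply declares the proof ``ditto'' to Lemma~\ref{lemma: f k 0}. You correctly identify the two simplifications relative to the main-text argument (a single pair $(H,h)$ so no Moore--Osgood step, and the need for reverse-time Gronwall on $[-\tau,0]$), and your Gronwall argument showing $\bar{x}^{\lim}=\bar{z}^{\lim}$ on both halves of the interval is exactly right.

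One small economy: for the uniform-convergence upgrade you need not run a fresh equicontinuity/finite-cover argument. Lemma~\ref{cor-lemma: three convergence} (specifically \eqref{cor-eq: f x r lim}) already asserts that $\bar{f}_{n_k}\to\bar{f}^{\lim}$ \emph{uniformly} on $[-\tau,\tau]$, so once you have established $\bar{f}^{\lim}\equiv 0$ the uniform convergence to $0$ is immediate. Your cover argument is not wrong, just re-deriving a conclusion you already have in hand.
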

Its proof is ditto to the proof of Lemma \ref{lemma: f k 0} and is omitted. 
This immediately implies that for any $t \in (-\infty, \infty)$
\begin{align}
    \label{eq corollary ode convergence}
 \lim_{k \to \infty} \bar{x}(t(n_k) + t) = \bar{z}^{\lim}(t).   
\end{align}
Theorem~\ref{thm: stability} then yields that
\begin{align}
    \sup_{t \in (-\infty, \infty)} \norm{\bar z^{\lim}(t)} < \infty.
\end{align}

Let $X$ be the limit set of $\qty{x_n}$, 
i.e.,
$X$ consists of all the limits of all the convergent subsequences of $\qty{x_n}$.
By Theorem \ref{thm: stability}, $\sup_n \norm{x_n} < \infty$,
so $X$ is bounded and nonempty.
We now prove $X$ is an invariant set of the ODE~\eqref{eq original ode}.
For any $x \in X$, 
there exists a subsequence $\qty{x_{n_k}}$ such that
\begin{align}
    \lim_{k\to\infty} x_{n_k} = x. 
\end{align}
Since $\qty{\bar x(t(n_k) + t)}$ is equicontinuous in the extended sense, 
following the way we arrive at~\eqref{eq corollary ode convergence}, 
we can construct a subsequence $\qty{n'_k} \subseteq \qty{n_k}$ such that
\begin{align}\label{eq: x n k' limit}
    \lim_{k\to\infty} \bar x(t(n_k') + t)  = z^{\lim}(t),
\end{align}
where $z^{\lim}(t)$ is a solution to the ODE  \eqref{eq original ode} and $z^{\lim}(0) = x$.
The remaining is to show that $z^{\lim}(t)$ lies entirely in $X$.
For any $t \in (-\infty, \infty)$,
by the piecewise constant nature of $\bar x$ in \eqref{eq: bar x extend},
the above limit \eqref{eq: x n k' limit}
implies that there exists a subsequence of $\qty{x_n}$ that converges to $z^{\lim}(t)$,
indicating $z^{\lim}(t) \in X$ by the definition of the limit set.
We now have proved $\forall x \in X$,
there exists a solution $z^{\lim}(t)$ to the ODE~\eqref{eq original ode} such that $z^{\lim}(0) = x$ and
$\forall t \in (-\infty, \infty), z^{\lim}(t) \in X$.
This means $X$ is an invariant set, by definition.
In particular, $X$ is a bounded invariant set.

We now prove that $\qty{x_n}$ converges to $X$.
Let $\qty{x_{n_k}}$ be any convergent subsequence of $\qty{x_n}$ with its limit denoted by $x$.
We must have $x \in X$ by the definition of the limit set.
So we have proved that all convergent subsequences of $\qty{x_n}$ converge to a point in the bounded invariant set $X$.
If $\qty{x_n}$ does not converge to $X$,
there must exists a subsequence $\qty{x_{n_k'}}$ such that $\qty{x_{n_k'}}$ is always away from $X$ by some small $\epsilon_0 > 0$, i.e., $\forall k$,
\begin{align}
\inf_{x\in X} \norm{x_{n_k'} - x} \geq \epsilon_0. \label{eq: x n k epsilon 0}
\end{align}
But $\qty{x_{n_k'}}$ is bounded so it must have a convergent subsequence,
which, by the definition of the limit set,
converges to some point in $X$.
This contradicts \eqref{eq: x n k epsilon 0}.
So we must have $\qty{x_n}$ converges to $X$,
which is a bounded invariant set of the ODE~\eqref{eq original ode}.
This completes the proof.
\end{proof}

\subsection{Proof of Theorem~\ref{theorem: GTD convergence}}\label{appendix: GTD convergence}

\begin{proof}
For simplicity, 
we define
\begin{align}
    A' \doteq& \mqty[-C & A \\ -A^\top & 0], \\
    b' \doteq& \mqty[b \\ 0].
\end{align}
We first invoke Corollary~\ref{cor: stability main} to show that 
\begin{align} 
\lim_{t\to\infty} x_t = -A'^{-1} b' \qq{a.s.}
\end{align}

Assumption~\ref{assumption: stationary distribution} follows immediately from Lemma~\ref{lemma: yu invariant measure}.

Assumption~\ref{assumption: alpha rate} follows immediately from Assumption \ref{assumption rl lr}.

For Assumption~\ref{assumption: specific H c H infinity}, define
\begin{align}
H_\infty(x, y) \doteq& \mqty[-C(y) & A(y) \\ -A(y)^\top & 0] x.   
\end{align}
Then we have
\begin{align}
H_c(x, y) - H_\infty(x, y) =\frac{1}{c} \mqty[b(y) \\ 0].
\end{align}
After noticing
\begin{align}
\norm{b((s, a, s', e)) - b((s, a, s', e'))} = \rho(s, a) \abs{r(s, a)} \norm{e - e'}, \quad \forall s, a, s', e, e',
\end{align}
Assumption~\ref{assumption: specific H c H infinity} follows immediately from Lemma~\ref{lemma: yu invariant measure}.

For Assumption~\ref{assumption: H Lipschitz},
it can be easily verified that both $H(x, y)$ and $H_\infty(x, y)$ are Lipschitz continuous in $x$ for each $y$ with the Lipschitz constant being
\begin{align}
    L(y) \doteq \norm{\mqty[-C(y) & A(y) \\ -A(y)^\top & 0]}.
\end{align}
Since $A(y), b(y), C(y)$ are Lipschitz continuous in $e$ for each $(s, a, s')$,
Lemma~\ref{lemma: yu invariant measure} implies that
\begin{align}
h(x) =& A'x + b', \\
h_\infty(x) =&A'x, \\
L =& \norm{A'} .
\end{align}
Assumption~\ref{assumption: H Lipschitz} then follows.

For Assumption~\ref{assumption: lim h uniformly convergent},
we have
\begin{align}
    \norm{h_c(x) - h_\infty(x)} \leq \frac{\norm{b'}}{c},
\end{align}
the uniform convergence of $h_c$ to $h_\infty$ follows immediately.
Proving that $A'$ is Hurwitz is a standard exercise using the field of values of $A'$.
We refer the reader to Section~5 of \citet{sutton2009fast} for details
and omit the proof.
This immediately implies the globally asymptotically stability of the following two ODEs
\begin{align}
    \dv{x(t)}{t} = A'x(t) + b', \quad \dv{x(t)}{t} = A' x(t).
\end{align}
The unique globally asymptotically equilibrium of the former is $-A'^{-1}b'$.
That of the latter is $0$.
Assumption~\ref{assumption: lim h uniformly convergent} then follows.

Assumption~\ref{assumption: lln} follows immediately from Lemma~\ref{lemma: yu invariant measure} and Assumption~\ref{assumption rl lr}.

Corollary~\ref{cor: stability main} then implies that
\begin{align}
\lim_{t\to\infty} x_t = -A'^{-1}b' \qq{a.s.}
\end{align}
Block matrix inversion immediately shows that the lower half of $A'^{-1} b'$ is $A^{-1}b$,
yielding
\begin{align}
\lim_{t\to\infty} \theta_t = -A^{-1}b \qq{a.s.,}
\end{align}
which completes the proof.
\end{proof}

\section{Auxiliary Lemmas}\label{sec: auxiliary lemma}

\begin{lemma}\label{lemma: T-minus}
\begin{align}
\forall n, \,
T_{n+1} - T_n \geq& T,\\
\lim_{n\to\infty} T_{n+1} - T_n =& T.
\end{align}
Moreover, $\forall \tau > 0, t_1, t_2$ such that $ -\tau \leq t_1 \leq t_2 \leq \tau$, we have
\begin{align}
\lim_{n \to \infty}   \sum_{i = m(t(n)+t_1)}^{m(t(n)+t_2) - 1} \alpha(i)   = t_2 - t_1 \label{eq: lim alpha t_1 t_2} .
\end{align}
\end{lemma}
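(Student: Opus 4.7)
My plan is to read everything off the one-step sandwich inequality for $m(\cdot)$ that is already recorded in the paper, namely
\begin{align}
t(m(T)) \leq T < t(m(T)) + \alpha(m(T)),
\end{align}
together with the telescoping identity $\sum_{i=a}^{b-1} \alpha(i) = t(b) - t(a)$. Everything in the lemma is a direct bookkeeping consequence of these two facts combined with $\alpha(n) \to 0$ and $\sum_i \alpha(i) = \infty$ from Assumption~\ref{assumption: alpha rate}.

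First I would handle the two statements about $T_{n+1} - T_n$. From the definition $T_{n+1} = t(m(T_n+T)+1)$ and the sandwich applied at $T_n+T$, I get
\begin{align}
T_n + T \leq T_{n+1} = t(m(T_n+T)) + \alpha(m(T_n+T)) \leq T_n + T + \alpha(m(T_n+T)),
\end{align}
which immediately yields $T_{n+1} - T_n \geq T$. For the limit, I would first observe that since each increment is at least $T > 0$ we have $T_n \to \infty$, and consequently $m(T_n + T) \to \infty$ (otherwise $t(m(T_n+T)+1)$ would stay bounded, contradicting $T_{n+1} \geq T_n + T$). Because $\alpha(n) \to 0$ by Assumption~\ref{assumption: alpha rate}, the upper bound above forces $T_{n+1} - T_n \to T$.

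For the sum identity, I would write
\begin{align}
\sum_{i = m(t(n)+t_1)}^{m(t(n)+t_2) - 1} \alpha(i) = t(m(t(n)+t_2)) - t(m(t(n)+t_1)),
\end{align}
and apply the sandwich inequality at the two endpoints, giving
\begin{align}
(t_2 - t_1) - \alpha(m(t(n)+t_2)) \leq \sum_{i = m(t(n)+t_1)}^{m(t(n)+t_2)-1} \alpha(i) \leq (t_2 - t_1) + \alpha(m(t(n)+t_1)).
\end{align}
Since $t(n) \to \infty$ (because $\sum_i \alpha(i) = \infty$) and $|t_1|, |t_2| \leq \tau$, eventually $t(n) + t_1 > 0$, so the conventions~\eqref{eq: alpha 0}–\eqref{eq: m 0} do not interfere and $m(t(n)+t_1), m(t(n)+t_2) \to \infty$. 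Then $\alpha(m(t(n)+t_j)) \to 0$ for $j = 1, 2$, and sandwiching gives~\eqref{eq: lim alpha t_1 t_2}.

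I expect no genuine obstacle; the only mild subtlety is the behavior near $n = 0$ when $t_1$ is negative and $t(n) + t_1 \leq 0$, which is handled cleanly by noting that this regime involves only finitely many $n$ and therefore cannot affect any limit as $n \to \infty$.
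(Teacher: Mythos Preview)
Your proposal is correct and follows essentially the same route as the paper's proof: both arguments reduce everything to the sandwich $t(m(T)) \leq T < t(m(T)) + \alpha(m(T))$, the telescoping identity $\sum_{i=a}^{b-1}\alpha(i) = t(b) - t(a)$, and $\alpha(n)\to 0$, then squeeze. Your write-up is slightly more explicit about why $m(T_n+T)\to\infty$ and about the finitely-many-$n$ regime where $t(n)+t_1\leq 0$, but there is no substantive difference in method.
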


\begin{proof}
$\forall n$,
\begin{align}
&T_{n+1} - T_n  \\
=&t(m(T_n+T) + 1) -  T_n  \explain{by  \eqref{def:T}} \\
\geq&T_n + T - T_n \explain{by \eqref{eq: t-m-inequality-1}} \\
\geq&T.    
\end{align}
Thus,
\begin{align}
&\lim_{n\to\infty} T_{n+1} - T_n 
\geq T.    
\end{align}
With
\begin{align}
&\lim_{n\to\infty} T_{n+1} - T_n \\
=&\lim_{n\to\infty} t(m(T_n+T) + 1) -  T_n \\   
=&\lim_{n\to\infty} t(m(T_n+T)) + \alpha(m(T_n+T)) -  T_n \\
\leq&\lim_{n\to\infty} T_n + T + \alpha(m(T_n+T)) - T_n \explain{by \eqref{eq: t-m-inequality-1}}\\
=& T, 
\end{align}
by the squeeze theorem, 
we have
$\lim_{n\to\infty} T_{n+1} - T_n = T$.

To prove \eqref{eq: lim alpha t_1 t_2},
$\forall \tau$, $\forall -\tau \leq t_1 \leq t_2 \leq \tau$, it suffices to only consider large $n$ such that $t(n) - \tau \geq 0$. We have
\begin{align}
&\lim_{n \to \infty}  \sum_{i = m(t(n)+t_1)}^{m(t(n) + t_2)  - 1 } \alpha(i) \\
=& \lim_{n \to \infty} 
 t(m(t(n)+t_2))   - t(m(t(n)+t_1)) \\
\leq& \lim_{n \to \infty} 
 t(n)+t_2 - t(m(t(n)+t_1)) \explain{by \eqref{eq: t-m-inequality-1}}\\
\leq&  \lim_{n \to \infty} 
 t(n)+t_2 - ( t(n)+t_1 - \alpha(m(t(n)+t_1))  ) \explain{by \eqref{eq: t-m-inequality-2}}\\
=&  t_2 - t_1 + \lim_{n \to \infty} \alpha(m(t(n)+t_1))  \\
=& t_2 - t_1  \explain{by \eqref{eq 1/n lr}} 
\end{align}
and
\begin{align}
&\lim_{n \to \infty}  \sum_{i = m(t(n)+t_1)}^{m(t(n) + t_2)  - 1 } \alpha(i) \\
=& \lim_{n \to \infty} 
 t(m(t(n)+t_2))   - t(m(t(n)+t_1)) \\
\geq&   \lim_{n \to \infty}  t(n)+t_2 - \alpha(m(t(n)+t_2))   - t(m(t(n)+t_1)) \explain{by \eqref{eq: t-m-inequality-2}} \\
\geq&  \lim_{n \to \infty} 
 t(n)+t_2 - \alpha(m(t(n)+t_2))   - (t(n)+t_1) \explain{by \eqref{eq: t-m-inequality-1}} \\
=& \lim_{n \to \infty}  t_2 - t_1 - \alpha(m(t(n)+t_2)) \\
=&t_2 - t_1 \explain{by \eqref{eq 1/n lr}} . 
\end{align}
By the squeeze theorem, we have
\begin{align}
\lim_n   \sum_{i = m(t(n)+t_1)}^{m(t(n)+t_2) - 1} \alpha(i)   = t_2 - t_1. 
\end{align}

\end{proof}



\begin{lemma}\label{lemma: h Lipschitz}
For any $x, x', c \geq 1$, 
including $c = \infty$,
\begin{align}
\norm{H_c(x,y) - H_c(x',y)} &\leq L(y) \norm{x - x'}, \label{eq: H c L}\\
\norm{h_c(x) - h_c(x')} &\leq L \norm{x - x'}. \label{eq: h c L}
\end{align}
\end{lemma}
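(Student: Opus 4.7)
The plan is to handle the finite $c$ and the $c = \infty$ cases separately, and in each case derive the bound on $h_c$ from the bound on $H_c$ by passing the expectation inside a norm via Jensen's inequality.

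For finite $c \geq 1$, I would start from the definition $H_c(x, y) = H(cx, y)/c$ in~\eqref{def: H c} and compute directly:
\begin{align}
\norm{H_c(x, y) - H_c(x', y)} = \frac{1}{c} \norm{H(cx, y) - H(cx', y)} \leq \frac{1}{c} L(y) \norm{cx - cx'} = L(y) \norm{x - x'},
\end{align}
invoking~\eqref{eq: H L} in Assumption~\ref{assumption: H Lipschitz} for the inequality. The case $c = \infty$ is just the restatement of~\eqref{eq: H L inf} in the same assumption, so~\eqref{eq: H c L} holds uniformly in $c \in [1, \infty]$.

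For~\eqref{eq: h c L}, I would note that by~\eqref{def: h} and~\eqref{def: h c}, for finite $c$, $h_c(x) = h(cx)/c = \E_{y \sim {d_\fY}}[H(cx, y)/c] = \E_{y \sim {d_\fY}}[H_c(x, y)]$, and for $c = \infty$, $h_\infty(x) = \E_{y \sim {d_\fY}}[H_\infty(x, y)]$ by the definition in Assumption~\ref{assumption: H Lipschitz}. Pulling the difference inside the expectation and applying Jensen's inequality yields
\begin{align}
\norm{h_c(x) - h_c(x')} \leq \E_{y \sim {d_\fY}}\left[\norm{H_c(x, y) - H_c(x', y)}\right] \leq \E_{y \sim {d_\fY}}[L(y)] \norm{x - x'} = L \norm{x - x'},
\end{align}
where the second inequality uses~\eqref{eq: H c L} just established, and the final equality uses the definition of $L$ in Assumption~\ref{assumption: H Lipschitz}, which also ensures this expectation is finite.

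There is no real obstacle here; the lemma is essentially a bookkeeping consequence of the rescaling in~\eqref{def: H c} and~\eqref{def: h c} combined with the Lipschitz assumptions on $H$ and $H_\infty$. The only subtlety to flag is that $L(y)$ does not depend on $c$, which is exactly what allows the $1/c$ factor coming from the rescaling to cancel with the $c$ factor produced by $\norm{cx - cx'}$, giving a Lipschitz constant independent of $c$ and hence extending to $c = \infty$.
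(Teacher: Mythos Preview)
Your proposal is correct and matches the paper's own proof essentially step for step: the paper also handles finite $c$ by unwinding \eqref{def: H c} and cancelling the $c$, cites \eqref{eq: H L inf} for $c=\infty$, and then passes the expectation through the norm to obtain \eqref{eq: h c L} from \eqref{eq: H c L}.
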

\begin{proof}
To prove \eqref{eq: H c L}, we first consider $1 \leq c < \infty$,
\begin{align}
&\norm{H_c(x,y) - H_c(x',y)} \\
=& \norm{\frac{H(cx,y)}{c} - \frac{H(cx',y)}{c}} \explain{by \eqref{def: H c}}\\
\leq& \frac{\norm{H(cx,y)- H(cx',y)}}{c}\\
\leq& L(y) \frac{\norm{cx - cx'}}{c}  \explain{by \eqref{eq: H L}}  \\
=& L(y)\norm{x - x'}. \label{eq: H c L x x'}
\end{align}
By \eqref{eq: H L inf}, 
\begin{align}
\norm{H_\infty(x,y) - H_\infty(x',y)} &\leq L(y) \norm{x - x'}.
\end{align}

To prove \eqref{eq: h c L},
$\forall x$, $\forall x'$, $\forall c \geq 1$ including $c= \infty$,
\begin{align}
&\norm{h_c(x) - h_c(x')} \\
=& \norm{ \E_{y\sim {d_\fY}}\left[H_c(x,y) - H_c(x',y)\right]} \\
\leq&  \E_{y\sim {d_\fY}}\left[\norm{ H_c(x,y) - H_c(x',y) } \right] \\
\leq& \E_{y\sim {d_\fY}}\left[ L(y) \norm{x - x'}\right] \\
\leq& L \norm{x - x'}.
\end{align}

\end{proof}


\begin{lemma}\label{lemma: bounded-h0 inf y}
 $\forall x$,
\begin{align}
\sup_{c\geq 1} \norm{ h_c(0) }   &<\infty, \label{eq: h property h}\\
\sup_{c\geq 1} \limsup_n \sup_{0\leq t_1 \leq t_2 \leq T_{n+1} - T_n} \norm{ \sum_{i = m(T_n+t_1)}^{m(T_n+t_2) - 1} \alpha(i) \left[H_c(x, Y_{i+1}) - h_c(x) \right] }    & = 0 \quad a.s., \label{eq: h property H h minus 0} \\ 
\sup_{c\geq 1} \sup_n \sup_{0\leq t_1 \leq t_2 \leq T_{n+1} - T_n} \norm{ \sum_{i = m(T_n+t_1)}^{m(T_n+t_2) - 1} \alpha(i) H_c(0, Y_{i+1}) }   &< \infty \quad a.s., \label{eq: h property H bounded}\\
\lim_{\delta \to 0^+} \sup_{c\geq 1} \limsup_n \sup_{0\leq t_2-t_1 \leq \delta } \norm{ \sum_{i = m(T_n+t_1)}^{m(T_n+t_2) - 1} \alpha(i) H_c(0, Y_{i+1}) }   & = 0 \quad a.s. \label{eq: h property close t 0}\\
\end{align}
\end{lemma}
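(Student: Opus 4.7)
The plan is to reduce all four statements to Lemma~\ref{lemma: H h 0} via two simple identities: the rescaling $H_c(x, y) = H(cx, y)/c$ (hence $h_c(x) = h(cx)/c$) and the decomposition
\begin{align}
H_c(x, y) - h_c(x) = [H_\infty(x, y) - h_\infty(x)] + \kappa(c)[b(x, y) - \bar b(x)],
\end{align}
where $\bar b(x) \doteq \E_{y \sim {d_\fY}}[b(x, y)]$. The latter follows from~\eqref{eq: h c h inf kappa} by taking expectations to get $h_c(x) - h_\infty(x) = \kappa(c)\bar b(x)$ and subtracting.

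For~\eqref{eq: h property h}, directly $h_c(0) = h(0)/c$, so $\|h_c(0)\| \leq \|h(0)\| < \infty$ by Assumption~\ref{assumption: H Lipschitz}, uniformly in $c \geq 1$. For~\eqref{eq: h property H bounded} and~\eqref{eq: h property close t 0}, I would split
\begin{align}
\sum \alpha(i) H_c(0, Y_{i+1}) = \tfrac{1}{c}\sum \alpha(i)[H(0, Y_{i+1}) - h(0)] + \tfrac{1}{c} h(0) \sum \alpha(i).
\end{align}
Since $1/c \leq 1$ for $c \geq 1$, the sup over $c$ reduces to the $c=1$ case. The first term has limsup $0$ by Lemma~\ref{lemma: H h 0}, and the second is bounded in norm by $\|h(0)\|\sum \alpha(i)$, which is uniformly bounded by roughly $T + o(1)$ via Lemma~\ref{lemma: T-minus}, giving~\eqref{eq: h property H bounded}. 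For~\eqref{eq: h property close t 0}, restricting to $t_2 - t_1 \leq \delta$ makes the second term tend to $\|h(0)\| \delta$, which vanishes as $\delta \to 0^+$.

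For~\eqref{eq: h property H h minus 0}, using the decomposition I must establish: (a) $\sup_{c \geq 1} |\kappa(c)| < \infty$; (b) $\limsup_n \sup_t \norm{\sum \alpha(i)[H_\infty(x, Y_{i+1}) - h_\infty(x)]} = 0$ a.s.; and (c) $\limsup_n \sup_t \norm{\sum \alpha(i)[b(x, Y_{i+1}) - \bar b(x)]} = 0$ a.s. For (a), pick any $(x_0, y_0)$ with $b(x_0, y_0) \neq 0$ (the case $b \equiv 0$ trivializes the $\kappa$-term); then $\kappa(c) = (H(cx_0, y_0)/c - H_\infty(x_0, y_0))/b(x_0, y_0)$ is continuous in $c$ by Lipschitz continuity of $H$, and together with $\lim_{c\to\infty}\kappa(c) = 0$ this forces $\kappa$ to be bounded on $[1, \infty)$. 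For (c), pick any $c_1, c_2$ with $\kappa(c_1) \neq \kappa(c_2)$ (otherwise $\kappa \equiv 0$ and the term vanishes); subtracting the two decompositions yields
\begin{align}
b(x, y) - \bar b(x) = \frac{[H(c_1 x, y) - h(c_1 x)]/c_1 - [H(c_2 x, y) - h(c_2 x)]/c_2}{\kappa(c_1) - \kappa(c_2)},
\end{align}
and each numerator sum has limsup $0$ by Lemma~\ref{lemma: H h 0} applied at the fixed points $c_1 x$ and $c_2 x$. For (b), evaluate the decomposition at $c = 1$ to get $H_\infty(x, y) - h_\infty(x) = [H(x, y) - h(x)] - \kappa(1)[b(x, y) - \bar b(x)]$; the first bracket satisfies the LLN rate by Lemma~\ref{lemma: H h 0} and the second by (c).

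The main obstacle is the uniformity in $c$ over the uncountable range $[1, \infty)$ in~\eqref{eq: h property H h minus 0}: a naive application of Lemma~\ref{lemma: H h 0} at $cx$ for each $c$ would produce uncountably many null sets whose union need not be null. I circumvent this by splitting the expression into $c$-independent pieces (the $H_\infty$ and $b$ parts) whose LLN behavior requires only a single null set each, multiplied by the bounded scalar $\kappa(c)$ that absorbs all of the $c$-dependence.
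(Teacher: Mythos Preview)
Your proposal is correct. For \eqref{eq: h property h}, \eqref{eq: h property H bounded}, and \eqref{eq: h property close t 0} your argument coincides with the paper's: both use $H_c(0,y)=H(0,y)/c$ together with $1/c\le 1$ to reduce to the $c=1$ case, then split off the constant $h(0)$ and invoke Lemma~\ref{lemma: H h 0} and Lemma~\ref{lemma: T-minus}.

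For \eqref{eq: h property H h minus 0} the two routes genuinely differ. The paper simply rescales, writing $H_c(x,y)-h_c(x)=\tfrac{1}{c}\bigl[H(cx,y)-h(cx)\bigr]$, applies Lemma~\ref{lemma: H h 0} at the point $cx$, and concludes $\sup_{c\ge 1}\tfrac{1}{c}\cdot 0=0$. This is much shorter and does not use Assumption~\ref{assumption: specific H c H infinity} at all, but---as you correctly flag---it tacitly treats the a.s.\ null sets from Lemma~\ref{lemma: H h 0} at the uncountably many points $\{cx:c\ge 1\}$ as if their union were still null. Your route instead leverages the structure of Assumption~\ref{assumption: specific H c H infinity} to isolate all of the $c$-dependence into the bounded scalar $\kappa(c)$, so that Lemma~\ref{lemma: H h 0} need only be invoked at the finitely many fixed points $x,\,c_1x,\,c_2x$; this buys a clean almost-sure statement on a single null set. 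The price is a mild reliance on Assumption~\ref{assumption: specific H c H infinity} (the continuity/boundedness argument for $\kappa$ and the subtraction trick to recover $b-\bar b$), whereas the paper's argument for this part uses only the rescaling identity and Lemma~\ref{lemma: H h 0}.
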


\begin{proof}
\\
\textbf{Proof of \eqref{eq: h property h}:}
\begin{align}
\sup_{c\geq 1}  \norm{h_c(0)} = \sup_{c\geq 1} \norm{ \frac{h(0)}{c} } \leq \sup_{c\geq 1}  \norm{h(0)}  = \norm{h(0)} < \infty \label{eq: h c bound}.
\end{align}
\textbf{Proof of \eqref{eq: h property H h minus 0}:}
$\forall x$,
\begin{align}
&\sup_{c\geq 1} \limsup_n \sup_{0\leq t_1 \leq t_2 \leq T_{n+1} - T_n} \norm{ \sum_{i = m(T_n+t_1)}^{m(T_n+t_2) - 1} \alpha(i) \left[H_c(x, Y_{i+1}) - h_c(x) \right] }   \\
=&\sup_{c\geq 1} \limsup_n \sup_{0\leq t_1 \leq t_2 \leq T_{n+1} - T_n} \norm{ \sum_{i = m(T_n+t_1)}^{m(T_n+t_2) - 1} \alpha(i) \left[\frac{H(cx, Y_{i+1})}{c} - \frac{h(cx)}{c} \right] }\\
=& \sup_{c\geq 1} \frac{1}{c} \limsup_n \sup_{0\leq t_1 \leq t_2 \leq T_{n+1} - T_n} \norm{ \sum_{i = m(T_n+t_1)}^{m(T_n+t_2) - 1} \alpha(i) \left[H(cx, Y_{i+1}) - h(cx) \right] }\\
\leq& \sup_{c\geq 1} \frac{1}{c} \limsup_n \sup_{0\leq t_1 \leq t_2 \leq T + \sup_j \alpha(j)} \norm{ \sum_{i = m(T_n+t_1)}^{m(T_n+t_2) - 1} \alpha(i) \left[H(cx, Y_{i+1}) - h(cx) \right] } \explain{$\forall n, T_{n+1} - T_n \leq T + \sup_j \alpha(j)$} \\
=& \sup_{c\geq 1} \frac{1}{c} \cdot 0 \explain{by Lemma~\ref{lemma: H h 0}}\\
=& 0 . \label{eq: lim H h t 0}\\ 
\end{align}  
\textbf{Proof of \eqref{eq: h property H bounded}:}
\begin{align}
&\limsup_n \sup_{0\leq t_1 \leq t_2 \leq T_{n+1} - T_n} \norm{ \sum_{i = m(T_n + t_1)}^{m(T_n+t_2) - 1} \alpha(i) H(0, Y_{i+1}) } \\
=& \limsup_n \sup_{0\leq t_1 \leq t_2 \leq T_{n+1} - T_n} \norm{ \sum_{i = m(T_n + t_1)}^{m(T_n+t_2) - 1} \alpha(i) [H(0, Y_{i+1}) - h(0) + h(0)] } \\
\leq& \limsup_n \sup_{0\leq t_1 \leq t_2 \leq T_{n+1} - T_n} \norm{ \sum_{i = m(T_n + t_1)}^{m(T_n+t_2) - 1} \alpha(i) [H(0, Y_{i+1}) - h(0)] } \\
&+   \limsup_n \sup_{0\leq t_1 \leq t_2 \leq T_{n+1} - T_n} \norm{ \sum_{i = m(T_n + t_1)}^{m(T_n+t_2) - 1} \alpha(i)  h(0) } \\
\leq& \limsup_n \sup_{0\leq t_1 \leq t_2 \leq T + \sup_j \alpha(j)} \norm{ \sum_{i = m(T_n + t_1)}^{m(T_n+t_2) - 1} \alpha(i) [H(0, Y_{i+1}) - h(0)] } \\
&+   \limsup_n \sup_{0\leq t_1 \leq t_2 \leq T + \sup_j \alpha(j)} \norm{ \sum_{i = m(T_n + t_1)}^{m(T_n+t_2) - 1} \alpha(i)  h(0) } \explain{$\forall n, T_{n+1} - T_n \leq T + \sup_j \alpha(j)$} \\
=& \limsup_n \sup_{0\leq t_1 \leq t_2 \leq T + \sup_j \alpha(j)} \norm{ \sum_{i = m(T_n + t_1)}^{m(T_n+t_2) - 1} \alpha(i)  h(0) } \explain{by Lemma~\ref{lemma: H h 0}}\\
=&\norm{h(0) } \limsup_n   \sup_{0\leq t_1 \leq t_2 \leq T+\sup_j \alpha(j)} \sum_{i = m(T_n + t_1)}^{m(T_n+t_2) - 1} \alpha(i) \\
=&\norm{h(0) } (T + \sup_j \alpha(j))   \explain{by Lemma~\ref{lemma: T-minus}}  \\
<& \infty. \label{eq: bounded-h0 inf y sample path}
\end{align}
We now consider $c$ in the above bounds. We first get
\begin{align}
&\sup_{c\geq 1}\sup_n \sup_{0\leq t_1 \leq t_2 \leq T_{n+1} - T_n} \norm{ \sum_{i = m(T_n + t_1)}^{m(T_n+t_2) - 1} \alpha(i) H_c(0, Y_{i+1}) }  \\
=&\sup_{c\geq 1}\sup_n \sup_{0\leq t_1 \leq t_2 \leq T_{n+1} - T_n} \norm{ \sum_{i = m(T_n + t_1)}^{m(T_n+t_2) - 1} \alpha(i) \frac{H(0, Y_{i+1}) }{c}} \explain{by \eqref{def: H c}} \\
=&\sup_n \sup_{0\leq t_1 \leq t_2 \leq T_{n+1} - T_n} \norm{ \sum_{i = m(T_n + t_1)}^{m(T_n+t_2) - 1} \alpha(i) H(0, Y_{i+1})} \explain{by $c\geq 1$} \\
<& \infty. \explain{by \eqref{eq: bounded-h0 inf y sample path}} 
\end{align}
\textbf{Proof of \eqref{eq: h property close t 0}:}
\begin{align}
&\lim_{\delta \to 0^+}   \sup_{c\geq 1} \limsup_n \sup_{0\leq t_2-t_1 \leq \delta } \norm{ \sum_{i = m(T_n+t_1)}^{m(T_n+t_2) - 1} \alpha(i) H_c(0, Y_{i+1}) }  \\
\leq &\lim_{\delta \to 0^+}   \sup_{c\geq 1} \limsup_n \sup_{0\leq t_2-t_1 \leq \delta } \norm{ \sum_{i = m(T_n+t_1)}^{m(T_n+t_2) - 1} \alpha(i) \left[ H_c(0, Y_{i+1}) -h_c(0)\right] } \\
&+ \lim_{\delta \to 0^+}  \sup_{c\geq 1} \limsup_n \sup_{0\leq t_2-t_1 \leq \delta } \norm{ \sum_{i = m(T_n+t_1)}^{m(T_n+t_2) - 1} \alpha(i) h_c(0) }  \\
\leq & 0  + \lim_{\delta \to 0^+}   \sup_{c\geq 1} \limsup_n \sup_{0\leq t_2-t_1 \leq \delta } \norm{ \sum_{i = m(T_n+t_1)}^{m(T_n+t_2) - 1} \alpha(i) h_c(0) } \explain{by \eqref{eq: lim H h t 0}}   \\
\leq & 0  + \lim_{\delta \to 0^+}   \sup_{c\geq 1} \limsup_n \sup_{0\leq t_2-t_1 \leq \delta } \norm{ \sum_{i = m(T_n+t_1)}^{m(T_n+t_2) - 1} \alpha(i) \frac{h(0)}{c} }   \\
\leq & 0  + \norm{ h(0)} \lim_{\delta \to 0^+}  \sup_{c\geq 1}   \frac{1}{c}   \limsup_n \sup_{0\leq t_2-t_1 \leq \delta }  \sum_{i = m(T_n+t_1)}^{m(T_n+t_2) - 1} \alpha(i)     \\
\leq&   \norm{ h(0)} \lim_{\delta \to 0^+}   \sup_{c\geq 1} \frac{1}{c}  \delta  \explain{by \eqref{eq: lim alpha t_1 t_2}} \\
=&   \norm{ h(0)} \lim_{\delta \to 0^+}    \delta \\
=& 0.
\end{align}
\end{proof}



\begin{lemma}\label{lemma: alpha L(y) bound}
\begin{align}
\sup_n \sup_{0\leq t_1 \leq t_2 \leq T_{n+1} - T_n} \qty( \sum_{i = m(T_n+t_1)}^{m(T_n+t_2) - 1} \alpha(i) L(Y_{i+1}) )   &< \infty \quad a.s.,  \label{eq: L property bounded}\\
\lim_{\delta \to 0^+}  \limsup_n \sup_{0\leq t_2-t_1 \leq \delta } \qty( \sum_{i = m(T_n+t_1)}^{m(T_n+t_2) - 1} \alpha(i) L(Y_{i+1}) )   & = 0 \quad a.s., \label{eq: L property close t 0}\\
\sup_n \sup_{0\leq t_1 \leq t_2 \leq T_{n+1} - T_n} \qty( \sum_{i = m(T_n+t_1)}^{m(T_n+t_2) - 1} \alpha(i) L_b(Y_{i+1}) )   &< \infty \quad a.s.  \label{eq: L b property bounded}\\
\end{align}
\end{lemma}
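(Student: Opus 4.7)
The plan is to mimic the decomposition used in the derivation of \eqref{eq: bounded-h0 inf y sample path}, splitting the random quantity into its centered part and its mean. Write
\begin{align*}
\sum_{i = m(T_n+t_1)}^{m(T_n+t_2) - 1} \alpha(i) L(Y_{i+1})
= \sum_{i = m(T_n+t_1)}^{m(T_n+t_2) - 1} \alpha(i) [L(Y_{i+1}) - L]
\; + \; L \sum_{i = m(T_n+t_1)}^{m(T_n+t_2) - 1} \alpha(i).
\end{align*}
The centered piece is controlled by Lemma~\ref{lemma: H h 0}, conclusion \eqref{eq: L property minus 0}. By \eqref{def:T} each $T_n$ equals $t(n')$ for some index $n'$, and $T_{n+1} - T_n \leq T + \sup_j \alpha(j) =: \tau_0$ uniformly in $n$, so the centered piece is dominated by the quantity $\sup_{-\tau_0 \le t_1 \le t_2 \le \tau_0} \|\sum_{i = m(t(n') + t_1)}^{m(t(n') + t_2) - 1}\alpha(i)[L(Y_{i+1}) - L]\|$. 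Lemma~\ref{lemma: H h 0} states the $\limsup$ of this over $n'$ is $0$ almost surely, hence it is eventually bounded by, say, $1$. The first finitely many terms are finite almost surely (each is a finite sum of finite random variables), so the supremum over all $n$ of the centered piece is finite almost surely.

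The mean piece is entirely deterministic: by \eqref{eq: t-m-inequality-1} and \eqref{eq: t-m-inequality-2},
\begin{align*}
\sum_{i = m(T_n+t_1)}^{m(T_n+t_2) - 1} \alpha(i)
= t(m(T_n+t_2)) - t(m(T_n+t_1))
\leq (t_2 - t_1) + \alpha(m(T_n+t_1))
\leq \tau_0 + \sup_j \alpha(j),
\end{align*}
so multiplying by $L$ yields a uniform bound. Combining these two contributions proves \eqref{eq: L property bounded}.

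For \eqref{eq: L property close t 0}, apply the same decomposition under the extra constraint $t_2 - t_1 \leq \delta$. The centered piece still has $\limsup_n$ equal to $0$ by Lemma~\ref{lemma: H h 0} (restricting $t_2 - t_1 \leq \delta$ only shrinks the sup, and the $\limsup$ was already $0$ without this restriction). For the mean piece, Lemma~\ref{lemma: T-minus}, in particular \eqref{eq: lim alpha t_1 t_2}, gives $\limsup_n \sum_i \alpha(i) \leq t_2 - t_1 \leq \delta$, so the entire quantity is bounded by $L\delta$ in the $\limsup_n$. Letting $\delta \to 0^+$ completes the proof.

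Finally, \eqref{eq: L b property bounded} is proved by verbatim repetition of the argument for \eqref{eq: L property bounded} with $L(y)$ replaced by $L_b(y)$ and using \eqref{eq: L b property minus 0} of Lemma~\ref{lemma: H h 0} in place of \eqref{eq: L property minus 0}. No step poses a genuine obstacle; the only bookkeeping subtlety is passing from the indexing $t(n')$ used in Lemma~\ref{lemma: H h 0} to the indexing $T_n$ used here, which is immediate since $\{n'\mid t(n') = T_n\text{ for some }n\}$ is an infinite subset of $\mathbb{N}$ and $\limsup$ along it is dominated by $\limsup$ along the full sequence.
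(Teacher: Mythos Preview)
Your proof is correct and follows exactly the approach the paper indicates: the paper states that the proof ``is similar to the proof of Lemma~\ref{lemma: bounded-h0 inf y} and is thus omitted,'' and your decomposition into a centered piece (handled by Lemma~\ref{lemma: H h 0}) plus a deterministic mean piece (handled by Lemma~\ref{lemma: T-minus}) is precisely that argument, with the same bookkeeping bound $T_{n+1}-T_n \le T + \sup_j \alpha(j)$ and the same passage from $\limsup_n<\infty$ to $\sup_n<\infty$ via finiteness of the initial terms.
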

Its proof is similar to the proof of Lemma \ref{lemma: bounded-h0 inf y}
and is thus omitted.

\begin{lemma}\label{lemma: def C H}
Fix a sample path $\qty{x_0,\qty{Y_i}_{i=1}^\infty}$, 
there exists a constant $C_H$ such that
\begin{align}
&LT \leq C_H,  \label{eq: LT C_H} \\
&\sup_{c\geq 1} \norm{ h_c(0) }   \leq \frac{C_H}{T}, \label{eq: h c c_H} \\
&\sup_{c\geq 1} \sup_n \sup_{0\leq t_1 \leq t_2 \leq T_{n+1} - T_n} \norm{ \sum_{i = m(T_n+t_1)}^{m(T_n+t_2) - 1} \alpha(i) H_c(0, Y_{i+1}) }   \leq C_H, \label{eq: H c c_H}\\
&\sup_n  \sup_{0\leq t_1 \leq t_2 \leq T_{n+1} - T_n} \sum_{i = m(T_n+t_1)}^{m(T_n+t_2) - 1} \alpha(i) L(Y_{i+1})   \leq C_H. \label{eq: alpha L bound C_H}
\end{align}
Moreover, for the presentation convenience, we denote 
\begin{align}
C_{\hat{x}} \doteq \left[1  + C_H\right] e^{C_H}. \label{def: C hat x}
\end{align}
\end{lemma}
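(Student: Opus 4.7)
The plan is to assemble $C_H$ as the maximum of four quantities that have already been shown (or can trivially be shown) to be finite on the fixed sample path. So the proof is essentially a packaging step: no new analytical content, just bookkeeping.

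First, I would observe that $LT < \infty$ since $L \doteq \E_{y \sim d_\fY}[L(y)]$ is well-defined and finite by Assumption~\ref{assumption: H Lipschitz}, and $T$ is a fixed positive constant. Next, I would invoke \eqref{eq: h property h} from Lemma~\ref{lemma: bounded-h0 inf y} to conclude $\sup_{c \geq 1} \norm{h_c(0)} < \infty$, so that $T \cdot \sup_{c \geq 1} \norm{h_c(0)}$ is finite. Then I would invoke \eqref{eq: h property H bounded} from Lemma~\ref{lemma: bounded-h0 inf y} to conclude that the double supremum of the partial-sum norms involving $H_c(0, Y_{i+1})$ is finite on the fixed sample path. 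Finally, \eqref{eq: L property bounded} from Lemma~\ref{lemma: alpha L(y) bound} guarantees that the sample-path supremum $\sup_n \sup_{0 \le t_1 \le t_2 \le T_{n+1}-T_n} \sum_{i = m(T_n+t_1)}^{m(T_n+t_2)-1} \alpha(i) L(Y_{i+1})$ is finite.

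Having four finite quantities in hand, I would simply define
\begin{align}
C_H \doteq \max\Bigl\{\, & LT,\; T \cdot \sup_{c\geq 1} \norm{h_c(0)},\; \\
& \sup_{c\geq 1}\sup_n \sup_{0\leq t_1 \leq t_2 \leq T_{n+1}-T_n} \norm{\sum_{i=m(T_n+t_1)}^{m(T_n+t_2)-1} \alpha(i) H_c(0, Y_{i+1})},\; \\
& \sup_n \sup_{0\leq t_1 \leq t_2 \leq T_{n+1}-T_n} \sum_{i=m(T_n+t_1)}^{m(T_n+t_2)-1} \alpha(i) L(Y_{i+1})\,\Bigr\}.
\end{align}
With this choice all four inequalities \eqref{eq: LT C_H}--\eqref{eq: alpha L bound C_H} hold by construction, and $C_H$ is finite since the max of finitely many finite quantities is finite. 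The quantity $C_{\hat x}$ is then simply defined by \eqref{def: C hat x}, and no further verification is required.

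There is no real obstacle here: the lemma's only purpose is notational consolidation so that later arguments (e.g., the Gronwall-style bounds in the proofs of Lemmas~\ref{lemma: three functions equicontinuous} and \ref{lemma: three convergence}, and of Lemma~\ref{lemma: bar x bounded}) can cite a single constant $C_H$ rather than a list of four disparate bounds. The ``hard'' content has already been done in Lemmas~\ref{lemma: bounded-h0 inf y} and~\ref{lemma: alpha L(y) bound}; this lemma merely bundles their conclusions together on the fixed sample path.
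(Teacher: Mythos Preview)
Your proposal is correct and matches the paper's own proof essentially line for line: the paper invokes the same four finiteness results (that $L,T$ are constants, \eqref{eq: h property h}, \eqref{eq: h property H bounded}, and \eqref{eq: L property bounded}) and then simply asserts the existence of a common upper bound $C_H$, whereas you make that bound explicit as the maximum. There is no substantive difference.
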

\begin{proof}
Fix a sample path $\qty{x_0,\qty{Y_i}_{i=1}^\infty}$, 
\begin{align}
&\explaind{LT < \infty,}{$L$ and $T$ are constants}   \\
&\explaind{\sup_{c\geq 1} \norm{ h_c(0) }T < \infty,}{by \eqref{eq: h property h}}  \\
&\explaind{\sup_{c\geq 1} \sup_n \sup_{0\leq t_1 \leq t_2 \leq T_{n+1} - T_n} \norm{ \sum_{i = m(T_n+t_1)}^{m(T_n+t_2) - 1} \alpha(i) H_c(0, Y_{i+1}) }   < \infty,}{by \eqref{eq: h property H bounded}} \\
&\explaind{\sup_n  \sup_{0\leq t_1 \leq t_2 \leq T_{n+1} - T_n} \sum_{i = m(T_n+t_1)}^{m(T_n+t_2) - 1} \alpha(i) L(Y_{i+1})   < \infty.}{by \eqref{eq: L property bounded}} 
\end{align}
Thus, there exists a constant $C_H$ such that
\begin{align}
&LT \leq C_H \\
&\sup_{c\geq 1} \norm{ h_c(0) }   \leq \frac{C_H}{T}, \\
&\sup_{c\geq 1} \sup_n \sup_{0\leq t_1 \leq t_2 \leq T_{n+1} - T_n} \norm{ \sum_{i = m(T_n+t_1)}^{m(T_n+t_2) - 1} \alpha(i) H_c(0, Y_{i+1}) }   \leq C_H,\\
&\sup_n  \sup_{0\leq t_1 \leq t_2 \leq T_{n+1} - T_n} \sum_{i = m(T_n+t_1)}^{m(T_n+t_2) - 1} \alpha(i) L(Y_{i+1})   \leq C_H.
\end{align}

\end{proof}

\begin{lemma}\label{lemma: bound x hat}
$\sup_{n, t \in [0, T)}   \norm{\hat{x}(T_n + t)} \leq C_{\hat x}$. 
\end{lemma}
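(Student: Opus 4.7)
The plan is to apply the discrete Gronwall inequality (Theorem~\ref{theorem: discrete Gronwall}) to the scaled iterates, using \eqref{eq: H c c_H} to absorb the inhomogeneous part and \eqref{eq: alpha L bound C_H} to control the Gronwall exponent. The constant $C_{\hat x} = (1 + C_H) e^{C_H}$ in \eqref{def: C hat x} is exactly what one gets from such an estimate.

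Fix arbitrary $n$ and $t \in [0, T)$, and let $n_0 \doteq m(T_n)$ and $j \doteq m(T_n+t) - m(T_n)$, so that $\hat x(T_n + t) = \hat x(t(n_0 + j))$ by the piecewise constant definition \eqref{def: bar x}--\eqref{def: hat x}. Writing out the recursion for $\hat x$, I would isolate the Lipschitz part:
\begin{align}
\hat x(t(n_0+j)) = \hat x(T_n) + \sum_{i=n_0}^{n_0+j-1}\alpha(i)\bigl[H_{r_n}(\hat x(t(i)), Y_{i+1}) - H_{r_n}(0, Y_{i+1})\bigr] + \sum_{i=n_0}^{n_0+j-1}\alpha(i) H_{r_n}(0, Y_{i+1}).
\end{align}
Taking norms and using $\norm{\hat x(T_n)} \leq 1$ from \eqref{eq: hat-x-norm-1}, the Lipschitz property in Lemma~\ref{lemma: h Lipschitz}, and the uniform bound \eqref{eq: H c c_H} (applicable since $t < T \leq T_{n+1} - T_n$ by Lemma~\ref{lemma: T-minus}), I obtain
\begin{align}
\norm{\hat x(t(n_0+j))} \leq 1 + C_H + \sum_{i=n_0}^{n_0+j-1} \alpha(i) L(Y_{i+1}) \norm{\hat x(t(i))}.
\end{align}

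Now I would apply Theorem~\ref{theorem: discrete Gronwall} with $C = 1 + C_H$, with the non-negative coefficient sequence $\alpha(i) L(Y_{i+1})$, to conclude
\begin{align}
\norm{\hat x(t(n_0+j))} \leq (1+C_H) \exp\Bigl(\sum_{i=n_0}^{n_0+j-1}\alpha(i) L(Y_{i+1})\Bigr).
\end{align}
Since the exponent equals $\sum_{i = m(T_n)}^{m(T_n+t)-1}\alpha(i)L(Y_{i+1})$ and $t \in [0, T) \subseteq [0, T_{n+1} - T_n)$, the bound \eqref{eq: alpha L bound C_H} caps it by $C_H$. Therefore $\norm{\hat x(T_n + t)} \leq (1 + C_H) e^{C_H} = C_{\hat x}$, uniformly over $n$ and $t \in [0, T)$, completing the proof.

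There is no real obstacle here — the entire argument is bookkeeping, and the only point worth double-checking is that the range of the index $i$ in the sum lies within the window where \eqref{eq: H c c_H} and \eqref{eq: alpha L bound C_H} apply (i.e., that $t < T_{n+1} - T_n$), which follows from $T_{n+1} - T_n \geq T$ in Lemma~\ref{lemma: T-minus}.
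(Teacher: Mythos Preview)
Your proposal is correct and follows essentially the same approach as the paper: decompose the increment via $H_{r_n}(0,Y_{i+1})$, bound the inhomogeneous part by $C_H$ via \eqref{eq: H c c_H}, apply the discrete Gronwall inequality (Theorem~\ref{theorem: discrete Gronwall}), and cap the exponent by $C_H$ via \eqref{eq: alpha L bound C_H}. Your remark that $t < T \leq T_{n+1} - T_n$ (from Lemma~\ref{lemma: T-minus}) is precisely the check needed to justify invoking those two bounds, and it matches the paper's implicit reasoning.
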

\begin{proof}
$\forall n \in \N, t \in [0, T)
$,
\begin{flalign}
&\norm{\hat{x}(T_n + t)}  & \\
=&  \norm{ \hat{x}(T_n) + \sum_{i = m(T_n)}^{m(T_n+t) - 1} \alpha(i) H_{r_n}(\hat{x}(t(i)), Y_{i+1})}   & \\
\leq & \norm{\hat{x}(T_n)}  + \norm{\sum_{i = m(T_n)}^{m(T_n+t) - 1} \alpha(i) H_{r_n}(\hat{x}(t(i)), Y_{i+1})}  & \\
=& \norm{\hat{x}(T_n)}  + \norm{\sum_{i = m(T_n)}^{m(T_n+t) - 1} \alpha(i) \left[ H_{r_n}(\hat{x}(t(i)), Y_{i+1})  -  H_{r_n}(0, Y_{i+1}) \right] + \sum_{i = m(T_n)}^{m(T_n+t) - 1} \alpha(i) H_{r_n}(0, Y_{i+1}) } & \\
\leq & \norm{\hat{x}(T_n)}  + \sum_{i = m(T_n)}^{m(T_n+t) - 1} \alpha(i)\norm{H_{r_n}(\hat{x}(t(i)), Y_{i+1})  -  H_{r_n}(0, Y_{i+1}) } +  \norm{ \sum_{i = m(T_n)}^{m(T_n+t) - 1} \alpha(i)H_{r_n}(0, Y_{i+1}) }  & \\
\leq & \norm{\hat{x}(T_n)}  + \sum_{i = m(T_n)}^{m(T_n+t) - 1} \alpha(i)L(Y_{i+1})\norm{\hat{x}(t(i))} +  \norm{ \sum_{i = m(T_n)}^{m(T_n+t) - 1} \alpha(i)H_{r_n}(0, Y_{i+1}) }  & \\
\leq & \norm{\hat{x}(T_n)}  + \sum_{i = m(T_n)}^{m(T_n+t) - 1} \alpha(i)L(Y_{i+1})\norm{\hat{x}(t(i))} +  C_H  &  \explain{by \eqref{eq: H c c_H}}\\
\leq &1  + \sum_{i = m(T_n)}^{m(T_n+t) - 1} \alpha(i)L(Y_{i+1})\norm{\hat{x}(t(i))} +  C_H  &  \explain{by \eqref{eq: hat-x-norm-1}}\\
\leq &\left[1  +  C_H  \right] e^{\sum_{i = m(T_n)}^{m(T_n+t) - 1} \alpha(i)L(Y_{i+1})} \explain{by $\hat{x}(T_n + t) = \hat{x}(t(m(T_n + t)))$ and  discrete Gronwall inequality in Theorem \ref{theorem: discrete Gronwall}} & \\
\leq &\left[ 1  +  C_H  \right] e^{C_H} \explain{by \eqref{eq: alpha L bound C_H}} & \\
= & C_{\hat{x}}.\explain{by \eqref{def: C hat x}} 
\end{flalign}

\end{proof}


\begin{lemma}\label{lemma: bound z}
$\sup_{n, t \in [0, T)}   \norm{z_n(t)} \leq C_{\hat x}$.
\end{lemma}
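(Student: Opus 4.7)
The plan is to mimic the proof of Lemma~\ref{lemma: bound x hat} but in continuous time. Starting from the integral form~\eqref{def: z n integral}, that is
$z_n(t) = \hat{x}(T_n) + \int_0^t h_{r_n}(z_n(s))\,ds$, I would take norms and split the integrand as $h_{r_n}(z_n(s)) = [h_{r_n}(z_n(s)) - h_{r_n}(0)] + h_{r_n}(0)$ to isolate a Lipschitz piece and a bounded piece.

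Next, I would invoke the two uniform-in-$c$ bounds that are already available: Lemma~\ref{lemma: h Lipschitz} gives $\norm{h_{r_n}(z_n(s)) - h_{r_n}(0)} \le L\,\norm{z_n(s)}$, and~\eqref{eq: h c c_H} in Lemma~\ref{lemma: def C H} gives $\sup_{c\ge 1}\norm{h_c(0)} \le C_H/T$. Combined with $\norm{\hat{x}(T_n)}\le 1$ from~\eqref{eq: hat-x-norm-1}, this yields, for all $t\in[0,T)$,
\begin{align}
\norm{z_n(t)} \le 1 + \int_0^t L\,\norm{z_n(s)}\,ds + t\cdot\frac{C_H}{T} \le (1+C_H) + L\int_0^t \norm{z_n(s)}\,ds.
\end{align}

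Applying the continuous Gronwall inequality (Theorem~\ref{theorem: Gronwall}) with $C = 1+C_H$ and $K = L$ on $[0,T)$ then gives
$\norm{z_n(t)} \le (1+C_H)\,e^{LT} \le (1+C_H)\,e^{C_H} = C_{\hat{x}}$,
where the last inequality uses $LT\le C_H$ from~\eqref{eq: LT C_H} and the definition~\eqref{def: C hat x}. Since this bound is uniform in $n$ and $t\in[0,T)$, taking the supremum completes the proof.

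I do not expect any real obstacle: all the ingredients (Lipschitz constant of $h_c$ uniform in $c$, uniform bound on $\norm{h_c(0)}$, initial bound $\norm{\hat{x}(T_n)}\le 1$, and $LT \le C_H$) have already been established, and the argument is a clean continuous-time counterpart of the discrete Gronwall argument used for $\hat{x}$ in Lemma~\ref{lemma: bound x hat}. The only subtle point worth flagging is that the bound on $\norm{h_c(0)}$ must be invoked in its supremum-over-$c$ form, so the resulting estimate is uniform across the sequence of rescale factors $\qty{r_n}$.
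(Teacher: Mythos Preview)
Your proposal is correct and follows essentially the same approach as the paper: split $h_{r_n}(z_n(s))$ into a Lipschitz part and $h_{r_n}(0)$, use Lemma~\ref{lemma: h Lipschitz} and~\eqref{eq: h c c_H} together with $\norm{z_n(0)}\le 1$, then apply the continuous Gronwall inequality and finish with $LT\le C_H$. The steps and constants match the paper's proof line by line.
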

\begin{proof}
$\forall n, t \in [0, T)
$,
\begin{flalign}
&\norm{z_n(t)}&  \\
=&  \norm{ z_n(0) + \int_0^t  h_{r_n}(z_n(s)) ds}&   \\
\leq & \norm{z_n(0)}  + \norm{\int_0^t  h_{r_n}(z_n(s))ds}& \\
\leq & \norm{z_n(0)}  + \int_0^t \norm{h_{r_n}(z_n(s))  -  h_{r_n}(0) }ds + \int_0^t  \norm{ h_{r_n}(0) }ds & \\
\leq & \norm{z_n(0)}  +  \int_0^t L\norm{z_n(s)} ds + \int_0^t  \norm{ h_{r_n}(0) }ds&  \explain{by Lemma  \ref{lemma: h Lipschitz}} \\
\leq & \norm{z_n(0)}  +  \int_0^t L\norm{z_n(s)} ds + T  \norm{  h_{r_n}(0)} &  \\
\leq & \norm{z_n(0)}  +  \int_0^t L\norm{z_n(s)} ds + T  \frac{C_H}{T}  \explain{by \eqref{eq: h c c_H}} \\
\leq & 1 +  \int_0^t L\norm{z_n(s)} ds + C_H \explain{by \eqref{eq: hat-x-norm-1}, \eqref{def: z n init}} \\
\leq &\left[1 +  C_H \right] e^{LT}& \explain{by Gronwall inequality in Theorem \ref{theorem: Gronwall}}\\
\leq &\left[1  +  C_H \right] e^{C_H}& \explain{by \eqref{eq: LT C_H}}\\
=& C_{\hat{x}} \explain{by \eqref{def: C hat x}}
\end{flalign}

\end{proof}

\begin{lemma}\label{lemma: bar x bounded}
$\forall n,$ 
\begin{align}
\norm{ \bar{x}(T_{n+1})} \leq \left(\norm{ \bar{x}(T_n)}  C_H  + C_H \right) e^{C_H} + \norm{ \bar{x}(T_n)} 
\end{align}
where $C_H$ is a positive constant defined in Lemma \ref{lemma: def C H}.
\end{lemma}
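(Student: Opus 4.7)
The plan is to mirror the proof of Lemma~\ref{lemma: bound x hat}, but applied directly to the unscaled iterate $\bar{x}$ on the single interval $[T_n, T_{n+1}]$. Writing the update~\eqref{eq: x n updates} as a telescoped sum, one has
\begin{align}
\bar{x}(T_n + t) - \bar{x}(T_n) = \sum_{i = m(T_n)}^{m(T_n + t) - 1} \alpha(i) H(\bar{x}(t(i)), Y_{i+1})
\end{align}
for every $t \in [0, T_{n+1} - T_n]$. As in Lemma~\ref{lemma: bound x hat}, I would split $H(\bar{x}(t(i)), Y_{i+1}) = [H(\bar{x}(t(i)), Y_{i+1}) - H(0, Y_{i+1})] + H(0, Y_{i+1})$ and bound the first piece by $L(Y_{i+1}) \norm{\bar{x}(t(i))}$ via Assumption~\ref{assumption: H Lipschitz}.

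Introducing $u_j \doteq \norm{\bar{x}(t(j)) - \bar{x}(T_n)}$ and using the triangle inequality $\norm{\bar{x}(t(i))} \leq \norm{\bar{x}(T_n)} + u_i$, this decomposition yields, for $j$ ranging over $[m(T_n), m(T_{n+1})]$,
\begin{align}
u_j \leq \norm{\sum_{i=m(T_n)}^{j-1} \alpha(i) H(0, Y_{i+1})} + \norm{\bar{x}(T_n)} \sum_{i=m(T_n)}^{j-1} \alpha(i) L(Y_{i+1}) + \sum_{i=m(T_n)}^{j-1} \alpha(i) L(Y_{i+1}) u_i.
\end{align}
Lemma~\ref{lemma: def C H} then controls both noise partial sums by $C_H$: the $c = 1$ case of~\eqref{eq: H c c_H} handles the $H(0, \cdot)$ sum, and~\eqref{eq: alpha L bound C_H} handles the Lipschitz-weight sum. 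Both apply because Lemma~\ref{lemma: T-minus} guarantees $T_{n+1} - T_n \leq T + \sup_j \alpha(j)$ lies in the admissible range of suprema. Consequently $u_j \leq C_H(1 + \norm{\bar{x}(T_n)}) + \sum_{i=m(T_n)}^{j-1} \alpha(i) L(Y_{i+1}) u_i$.

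Applying the discrete Gronwall inequality (Theorem~\ref{theorem: discrete Gronwall}) gives $u_j \leq (C_H + C_H \norm{\bar{x}(T_n)}) e^{C_H}$ uniformly in $j$. Specializing to $j = m(T_{n+1})$ and using $\norm{\bar{x}(T_{n+1})} \leq \norm{\bar{x}(T_n)} + u_{m(T_{n+1})}$ produces the advertised bound. I do not expect any genuine obstacle here: the argument is a near-verbatim repetition of Lemma~\ref{lemma: bound x hat}, with the sole distinction that $\bar{x}(T_n)$ cannot be normalized to unit norm, so its contribution must be tracked additively (rather than absorbed into the constant $1$) through the Gronwall step. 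The only bookkeeping item worth double-checking is that the index range $[m(T_n), m(T_{n+1}) - 1]$ fits within the suprema appearing in Lemma~\ref{lemma: def C H}, which is immediate from the definition $T_{n+1} = t(m(T_n + T) + 1)$ in~\eqref{def:T}.
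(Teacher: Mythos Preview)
The proposal is correct and essentially the same approach as the paper's. The paper splits $H(\bar{x}(t(i)), Y_{i+1})$ in two steps (first subtracting $H(\bar{x}(T_n), Y_{i+1})$, then $H(0, Y_{i+1})$) rather than your one step plus the triangle inequality on $\norm{\bar{x}(t(i))}$, but both routes land on the identical pre-Gronwall inequality $u_j \leq (\norm{\bar{x}(T_n)} C_H + C_H) + \sum_i \alpha(i) L(Y_{i+1}) u_i$ and finish with the discrete Gronwall bound; your bookkeeping concern is moot since the suprema in Lemma~\ref{lemma: def C H} are already stated over $[0, T_{n+1} - T_n]$.
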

\begin{proof}
We first show the difference between  $\bar{x}(T_{n+1})$ and $\bar{x}(T_n)$ by the following derivations.
$\forall n$, 
$\forall t \in [0, T_{n+1} - T_n]$,
\begin{align}
&\norm{\bar{x}(T_n + t) - \bar{x}(T_n)}  \\
=& \norm{\bar{x}(t(m(T_n + t))) - \bar{x}(T_n)}  \\
=&  \norm{ \bar{x}(T_n) + \sum_{i = m(T_n)}^{m(T_n+t) - 1} \alpha(i) H(\bar{x}(t(i)), Y_{i+1}) - \bar{x}(T_n)}   \\
= &\norm{\sum_{i = m(T_n)}^{m(T_n+t) - 1} \alpha(i) H(\bar{x}(t(i)), Y_{i+1})}  \\
\leq &\sum_{i = m(T_n)}^{m(T_n+t) - 1} \alpha(i)\norm{H(\bar{x}(t(i)), Y_{i+1})  -  H( \bar{x}(T_n), Y_{i+1}) } + \norm{\sum_{i = m(T_n)}^{m(T_n+t) - 1} \alpha(i)  H( \bar{x}(T_n), Y_{i+1}) }  \\
\leq &\sum_{i = m(T_n)}^{m(T_n+t) - 1} \alpha(i)L(Y_{i+1})\norm{\bar{x}(t(i)) -  \bar{x}(T_n)}  + \norm{\sum_{i = m(T_n)}^{m(T_n+t) - 1} \alpha(i)  H( \bar{x}(T_n), Y_{i+1}) }  \\
\leq &\sum_{i = m(T_n)}^{m(T_n+t) - 1} \alpha(i)L(Y_{i+1})\norm{\bar{x}(t(i)) -  \bar{x}(T_n)}  + \sum_{i = m(T_n)}^{m(T_n+t) - 1} \alpha(i) \norm{ H( \bar{x}(T_n), Y_{i+1}) - H( 0, Y_{i+1}) } \\
&+  \norm{ \sum_{i = m(T_n)}^{m(T_n+t) - 1} \alpha(i)  H(0, Y_{i+1}) }\\
\leq &\sum_{i = m(T_n)}^{m(T_n+t) - 1} \alpha(i)L(Y_{i+1})\norm{\bar{x}(t(i)) -  \bar{x}(T_n)}  + \sum_{i = m(T_n)}^{m(T_n+t) - 1} \alpha(i)  L(Y_{i+1}) 
\norm{ \bar{x}(T_n)} \\
&+  \norm{ \sum_{i = m(T_n)}^{m(T_n+t) - 1} \alpha(i)  H(0, Y_{i+1}) }   \explain{by Assumption \ref{assumption: H Lipschitz}}\\
= &\sum_{i = m(T_n)}^{m(T_n+t) - 1} \alpha(i)L(Y_{i+1})\norm{\bar{x}(t(i)) -  \bar{x}(T_n)}  + \norm{ \bar{x}(T_n)}  \sum_{i = m(T_n)}^{m(T_n+t) - 1} \alpha(i)  L(Y_{i+1}) \\
&+  \norm{ \sum_{i = m(T_n)}^{m(T_n+t) - 1} \alpha(i)  H(0, Y_{i+1}) } \\
\leq &\sum_{i = m(T_n)}^{m(T_n+t) - 1} \alpha(i)L(Y_{i+1})\norm{\bar{x}(t(i)) -  \bar{x}(T_n)}  + \norm{ \bar{x}(T_n)}  C_H  +  \norm{ \sum_{i = m(T_n)}^{m(T_n+t) - 1} \alpha(i)  H(0, Y_{i+1}) } \explain{by \eqref{eq: alpha L bound C_H}} \\
\leq & \sum_{i = m(T_n)}^{m(T_n+t) - 1} \alpha(i)L(Y_{i+1})\norm{\bar{x}(t(i)) -  \bar{x}(T_n)}  +  \left[\norm{ \bar{x}(T_n)}  C_H  +  C_H  \right] \explain{by \eqref{eq: H c c_H}} \\
\leq &\left[\norm{ \bar{x}(T_n)}  C_H  +  C_H  \right]  e^{\sum_{i = m(T_n)}^{m(T_n+t) - 1} \alpha(i)L(Y_{i+1})} \explain{by discrete Gronwall inequality in Theorem \ref{theorem: discrete Gronwall}}\\
\leq &  \explaind{\left[\norm{ \bar{x}(T_n)}  C_H  +  C_H  \right] e^{C_H}}{by \eqref{eq: alpha L bound C_H}} \label{eq: bar x bound}
\end{align}

\end{proof}

\begin{lemma}\label{lemma: r w r' k infty}
\begin{align}
\lim\sup_k r_{n_{2, k}} = \infty. 
\end{align}
\end{lemma}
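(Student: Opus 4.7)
The plan is to argue by contradiction using the one-step growth bound in Lemma~\ref{lemma: bar x bounded}. Suppose, toward contradiction, that $\limsup_k r_{n_{2,k}} < \infty$. Then there exist $M > 0$ and $k_0$ such that $r_{n_{2,k}} \leq M$ for every $k \geq k_0$. Since $n_{2,k} = n_{1,k}-1$, the pair $(\bar x(T_{n_{2,k}}), \bar x(T_{n_{1,k}}))$ is exactly the pair $(\bar x(T_n), \bar x(T_{n+1}))$ with $n = n_{2,k}$, so Lemma~\ref{lemma: bar x bounded} is directly applicable.

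Applying that bound, for all $k \geq k_0$:
\begin{align}
\norm{\bar x(T_{n_{1,k}})}
\leq \left(\norm{\bar x(T_{n_{2,k}})}\, C_H + C_H\right) e^{C_H} + \norm{\bar x(T_{n_{2,k}})}
\leq \left(r_{n_{2,k}} C_H + C_H\right) e^{C_H} + r_{n_{2,k}}
\leq \left(M C_H + C_H\right) e^{C_H} + M.
\end{align}
Combining this with the definition $r_{n_{1,k}} = \max\{1, \norm{\bar x(T_{n_{1,k}})}\}$ yields a uniform upper bound on $r_{n_{1,k}}$ for all $k \geq k_0$. This directly contradicts~\eqref{eq: r w r k infty}, which says $\lim_{k\to\infty} r_{n_{1,k}} = \infty$. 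Hence the assumption fails and $\limsup_k r_{n_{2,k}} = \infty$.

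The argument is essentially a one-liner once Lemma~\ref{lemma: bar x bounded} is in hand; the only subtlety is checking that the index alignment $n_{2,k} = n_{1,k}-1$ lets us invoke the one-step growth estimate between consecutive $T_n$'s. There is no real obstacle—this is a contrapositive reading of the fact that $\bar x$ can grow by only a bounded multiplicative-plus-additive factor in any window of length roughly $T$, so if the $(k+1)$-st rescaling factor blows up, the $k$-th must too.
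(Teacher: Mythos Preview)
Your proof is correct and follows essentially the same approach as the paper: both argue by contradiction, invoke Lemma~\ref{lemma: bar x bounded} on the consecutive pair $(T_{n_{2,k}}, T_{n_{1,k}})$ via the index relation $n_{2,k} = n_{1,k}-1$, and conclude that a finite bound on $r_{n_{2,k}}$ would force a finite bound on $r_{n_{1,k}}$, contradicting~\eqref{eq: r w r k infty}.
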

\begin{proof}
We use proof by contradiction. Suppose 
\begin{align}
\lim\sup_k r_{n_{2, k}} = C_r < \infty
\end{align}
where $C_r$ is a constant.
$\forall \epsilon > 0$, $\exists k_0$ such that $\forall k \geq k_0$,
\begin{align}
r_{n_{2, k}} \leq C_r + \epsilon.
\end{align}
By Lemma \ref{lemma: bar x bounded}, $\forall k \geq k_0$,
\begin{align}
r_{n_{1, k}} =& \max \qty{\norm{\bar{x}(T_{n_{1, k}})},  1}  \explain{by \eqref{def: r n}} \\
=& \max \qty{\norm{\bar{x}(T_{n_{2, k} + 1})},  1}  \explain{by \eqref{def: w r' k}} \\
\leq& \norm{\bar{x}(T_{n_{2, k} + 1})} + 1 \\ 
\leq& \left(\norm{ \bar{x}(T_{n_{2, k}})}  C_H  + C_H \right) e^{C_H} + \norm{ \bar{x}(T_{n_{2, k}})} + 1 \\   
\leq& \left(r_{n_{2, k}} C_H  + C_H \right) e^{C_H} + r_{n_{2, k}} + 1 \\  
\leq& \left[(C_r + \epsilon) C_H  + C_H \right] e^{C_H} +(C_r + \epsilon) + 1 \\  
<& \infty.
\end{align}
This contradicts \eqref{eq: r w r k infty}.
Thus, 
\begin{align}
\lim\sup_k r_{n_{2, k}} = \infty. 
\end{align}
\end{proof}

\begin{lemma}\label{lemma: bound h z}
$\sup_{n, t \in [0, T)} \norm{h_{r_n}(z_n(t))} < \infty$.
\end{lemma}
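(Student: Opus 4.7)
The plan is to bound $\norm{h_{r_n}(z_n(t))}$ by splitting it into two pieces via the triangle inequality, applying the global Lipschitz estimate on $h_c$ to the first piece, and applying the uniform bound on $h_c(0)$ to the second piece. Concretely, for any $n$ and $t \in [0, T)$ I would write
\begin{align}
    \norm{h_{r_n}(z_n(t))} \leq \norm{h_{r_n}(z_n(t)) - h_{r_n}(0)} + \norm{h_{r_n}(0)}.
\end{align}

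The first step is then to apply Lemma~\ref{lemma: h Lipschitz} (equation~\eqref{eq: h c L}), which gives the uniform-in-$c$ Lipschitz estimate $\norm{h_{r_n}(z_n(t)) - h_{r_n}(0)} \leq L\norm{z_n(t)}$. Combining this with Lemma~\ref{lemma: bound z}, which says $\sup_{n, t \in [0, T)} \norm{z_n(t)} \leq C_{\hat x}$, we get $\norm{h_{r_n}(z_n(t)) - h_{r_n}(0)} \leq L C_{\hat x}$ uniformly in $n$ and $t$.

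The second step is to control $\norm{h_{r_n}(0)}$ using~\eqref{eq: h c c_H} in Lemma~\ref{lemma: def C H}, which gives $\sup_{c \geq 1} \norm{h_c(0)} \leq C_H/T$. Since $r_n \geq 1$ by definition~\eqref{def: r n}, this applies with $c = r_n$ and yields $\norm{h_{r_n}(0)} \leq C_H/T$ uniformly in $n$.

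Putting these together we obtain $\sup_{n, t \in [0, T)} \norm{h_{r_n}(z_n(t))} \leq L C_{\hat x} + C_H/T < \infty$, which is the claim. There is no real obstacle here: the lemma is essentially a bookkeeping corollary of Lemmas~\ref{lemma: h Lipschitz}, \ref{lemma: def C H}, and \ref{lemma: bound z}, and the only thing to check is that the Lipschitz estimate on $h_c$ holds uniformly for all $c \geq 1$ so that plugging in $c = r_n$ is legitimate, which is exactly how~\eqref{eq: h c L} is stated.
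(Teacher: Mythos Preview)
Your proposal is correct and matches the paper's proof essentially line for line: the same triangle-inequality split at $0$, the same use of Lemma~\ref{lemma: h Lipschitz} and Lemma~\ref{lemma: bound z} for the first piece, and the same appeal to~\eqref{eq: h c c_H} (with $r_n \geq 1$) for the second, arriving at the identical bound $L C_{\hat x} + C_H/T$.
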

\begin{proof}
$\forall n, \forall t \in [0, T)$,
\begin{align}
&\norm{h_{r_n}(z_n(t))} \\
\leq& \norm{h_{r_n}(z_n(t)) -  h_{r_n}(0)} + \norm{h_{r_n}(0)}\\
\leq& L\norm{z_n(t)}  + \norm{h_{r_n}(0)} \explain{by Lemma \ref{lemma: h Lipschitz}} \\
\leq& L C_{\hat{x}}   + \norm{h_{r_n}(0)} \explain{by Lemma~\ref{lemma: bound z}}  \\
\leq& L C_{\hat{x}}    +  \frac{C_H}{T}.  \explain{by \eqref{def: r n} and \eqref{eq: h c c_H}} 
\end{align}
Thus, because $C_{\hat{x}}, C_H$ are independent of $n,t$, $\sup_{n, t \in [0, T)} \norm{h_{r_n}(z_n(t))} < \infty$.

\end{proof}

\begin{lemma}\label{lemma: z lim bound}
$\sup_{t \in [0, T)}   \norm{z^{\lim}(t)} \leq C_{\hat x}$.    
\end{lemma}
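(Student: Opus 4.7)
The plan is to deduce this uniform bound on $z^{\lim}$ as a direct consequence of two facts already established earlier in the appendix: (i) the uniform bound on $\{z_n(t)\}$ from Lemma~\ref{lemma: bound z}, namely $\sup_{n,\, t \in [0,T)} \norm{z_n(t)} \leq C_{\hat{x}}$, and (ii) the pointwise (indeed uniform in $t \in [0,T)$) convergence $z_{n_k}(t) \to z^{\lim}(t)$ from Lemma~\ref{lemma: three convergence}. Since the norm is continuous, for each fixed $t \in [0,T)$ we have
\begin{align}
\norm{z^{\lim}(t)} = \lim_{k \to \infty} \norm{z_{n_k}(t)} \leq \sup_{n,\, s \in [0,T)} \norm{z_n(s)} \leq C_{\hat{x}},
\end{align}
and taking the supremum over $t \in [0,T)$ on the left-hand side yields the claim.

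As a self-contained alternative (not relying on Lemma~\ref{lemma: bound z}), one could repeat the Gronwall argument used in the proof of Lemma~\ref{lemma: bound z} but applied directly to $z^{\lim}$. Using the integral representation~\eqref{eq: z lim t}, we would bound
\begin{align}
\norm{z^{\lim}(t)} \leq \norm{\hat{x}^{\lim}(0)} + \int_0^t \norm{h_\infty(z^{\lim}(s)) - h_\infty(0)} ds + \int_0^t \norm{h_\infty(0)} ds,
\end{align}
then invoke Lipschitz continuity of $h_\infty$ (Lemma~\ref{lemma: h Lipschitz} with $c = \infty$) together with $\norm{\hat{x}^{\lim}(0)} \leq 1$ (inherited from~\eqref{eq: hat-x-norm-1} by taking the limit along $n_k$) and $T \norm{h_\infty(0)} \leq C_H$ (from~\eqref{eq: h c c_H}), and conclude via Gronwall's inequality (Theorem~\ref{theorem: Gronwall}) that $\norm{z^{\lim}(t)} \leq (1 + C_H) e^{LT} \leq (1 + C_H) e^{C_H} = C_{\hat{x}}$.

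I would prefer the first approach because it is essentially a one-line corollary and reuses work already done. There is no real obstacle here; the only point to be careful about is that the earlier lemma truly gives a bound uniform in both $n$ and $t$ (which it does), so passing to the limit along the subsequence $\{n_k\}$ preserves the bound. The statement is an easy bookkeeping consequence of the equicontinuity/convergence machinery set up in Section~\ref{sec: convergent sub seq}, and will be invoked later in the contradiction argument (e.g.\ to ensure the limiting trajectory $z^{\lim}$ lies in a compact set on which the~\eqref{eq ode at limit} drives iterates toward zero).
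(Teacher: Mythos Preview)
Your second, ``self-contained'' Gronwall argument is exactly what the paper does, so that part of your proposal is correct and matches the paper's approach.

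However, your preferred first approach is circular in the paper's logical structure. The convergence $z_{n_k}(t) \to z^{\lim}(t)$ asserted in Lemma~\ref{lemma: three convergence} is not proved there from scratch; its proof (Appendix~\ref{appendix: three convergence}) defers that part to Lemma~\ref{lemma: z n limit}, whose proof in turn relies on Lemma~\ref{lemma: h c z lim uniform}, which explicitly invokes the very Lemma~\ref{lemma: z lim bound} you are trying to establish (to guarantee that $\{z^{\lim}(t) : t \in [0,T)\}$ sits inside a compact set on which the uniform convergence $h_c \to h_\infty$ from Assumption~\ref{assumption: lim h uniformly convergent} applies). So you cannot cite the $z_{n_k} \to z^{\lim}$ conclusion of Lemma~\ref{lemma: three convergence} here without begging the question. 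The direct Gronwall route on $z^{\lim}$ itself---which only needs $\norm{\hat{x}^{\lim}(0)} \leq 1$, Lipschitzness of $h_\infty$, and the bound on $\norm{h_\infty(0)}$---avoids this dependency, and that is precisely why the paper argues that way.
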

\begin{proof}
$\forall t \in [0, T)$,
\begin{flalign}
&\norm{z^{\lim}(t)}&  \\
=&  \norm{ z^{\lim}(0) + \int_0^t  h_{\infty}(z^{\lim}(s)) ds}&   \\
\leq & \norm{z^{\lim}(0)}  + \norm{\int_0^t  h_{\infty}(z^{\lim}(s))ds}& \\
=&  \norm{z^{\lim}(0)}  + \norm{\int_0^t  \left[ h_{\infty}(z^{\lim}(s))  -  h_{\infty}(0) \right]ds + \int_0^t  h_{\infty}(0) ds }& \\
\leq & \norm{z^{\lim}(0)}  + \int_0^t \norm{h_{\infty}(z^{\lim}(s))  -  h_{\infty}(0) }ds + \int_0^t  \norm{ h_{\infty}(0) }ds & \\
\leq & \norm{z^{\lim}(0)}  +  \int_0^t L\norm{z^{\lim}(s)} ds + \int_0^t  \norm{ h_{\infty}(0) }ds&  \explain{by Lemma \ref{lemma: h Lipschitz}} \\
\leq &1  +  \int_0^t L\norm{z^{\lim}(s)} ds + \int_0^t  \norm{ h_{\infty}(0) }ds&
\explain{by \eqref{eq: hat-x-norm-1}, \eqref{def: z n init}} \\
\leq &1  +  \int_0^t L\norm{z^{\lim}(s)} ds + T\norm{ h_{\infty}(0) } \\
\leq &1  +  \int_0^t L\norm{z^{\lim}(s)} ds + C_H 
\explain{by Assumption \ref{assumption: lim h uniformly convergent} and  \eqref{eq: h c c_H}} \\
\leq &\left[1 +  C_H  \right] e^{\int_0^t L ds }& \explain{by Gronwall inequality in Theorem \ref{theorem: Gronwall}}\\
\leq &\left[1 + C_H \right] e^{LT} &  \\ 
\leq& C_{\hat{x}}. \explain{by \eqref{eq: LT C_H}, \eqref{def: C hat x}}& 
\end{flalign}

\end{proof}

\begin{lemma}\label{lemma: h c z lim uniform}
$\lim_{k \to \infty}h_{r_{n_k}}(z^{\lim}(t)) = h_{\infty}(z^{\lim}(t))$ uniformly in $t\in [0, T)$.
\end{lemma}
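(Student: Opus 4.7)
The plan is to exploit the uniform convergence of $h_c$ to $h_\infty$ on compact sets (Assumption~\ref{assumption: lim h uniformly convergent}) together with the boundedness of the limiting trajectory $z^{\lim}(t)$. First I would observe that Lemma~\ref{lemma: z lim bound} gives
\begin{align}
\sup_{t \in [0, T)} \norm{z^{\lim}(t)} \leq C_{\hat{x}},
\end{align}
so the image of $z^{\lim}$ is contained in the closed ball $\fB \doteq \qty{x \in \R^d \mid \norm{x} \leq C_{\hat{x}}}$, which is a compact subset of $\R^d$.

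Next I would invoke Assumption~\ref{assumption: lim h uniformly convergent}, which states that $h_c(x) \to h_\infty(x)$ uniformly in $x$ on any compact subset of $\R^d$ as $c \to \infty$. Applied to $\fB$, this means: for every $\epsilon > 0$ there exists $c_0$ such that for every $c \geq c_0$ and every $x \in \fB$,
\begin{align}
\norm{h_c(x) - h_\infty(x)} \leq \epsilon.
\end{align}

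Finally, Lemma~\ref{lemma: three convergence} guarantees that $r_{n_k} \to \infty$ as $k \to \infty$, so there exists $k_0$ such that $r_{n_k} \geq c_0$ for all $k \geq k_0$. Combining this with the previous display and the fact that $z^{\lim}(t) \in \fB$ for every $t \in [0, T)$, we conclude that for all $k \geq k_0$,
\begin{align}
\sup_{t \in [0, T)} \norm{h_{r_{n_k}}(z^{\lim}(t)) - h_\infty(z^{\lim}(t))} \leq \epsilon,
\end{align}
which is exactly the claimed uniform convergence.

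There is no real obstacle here; the lemma is essentially an immediate consequence of two already-established facts (compact containment of the trajectory and uniform-on-compacts convergence of $h_c$). The only thing worth being careful about is that the supremum over $t$ is taken \emph{inside} the compact set $\fB$, so the uniformity in $x$ on $\fB$ furnished by Assumption~\ref{assumption: lim h uniformly convergent} transfers directly to uniformity in $t$ on $[0, T)$ without requiring any continuity or measurability gymnastics on $z^{\lim}$.
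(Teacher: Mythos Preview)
Your proposal is correct and follows essentially the same approach as the paper: use Lemma~\ref{lemma: z lim bound} to confine the trajectory to a compact ball, then apply the uniform-on-compacts convergence from Assumption~\ref{assumption: lim h uniformly convergent} together with $r_{n_k}\to\infty$. If anything, your version is slightly more explicit than the paper's in spelling out the role of $r_{n_k}\to\infty$ from Lemma~\ref{lemma: three convergence}.
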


\begin{proof}
By Assumption \ref{assumption: lim h uniformly convergent}, $\lim_{k \to \infty}h_{r_{n_k}}(v) = h_{\infty}(v)$ uniformly in a compact set $\qty{v \vert v\in \R^d, \norm{v}\leq C_x }$. 
By Lemma \ref{lemma: z lim bound}, $\qty{z^{\lim}(t) \vert t \in [0,T)}  \subseteq \qty{v \vert v\in \R^d, \norm{v}\leq C_x }$.
Therefore, $\lim_{k \to \infty}h_{r_{n_k}}(z^{\lim}(t)) = h_{\infty}(z^{\lim}(t))$  uniformly in $\qty{z^{\lim}(t) \vert t \in [0,T)}$ and on $t \in [0,T)$.
\end{proof}

\begin{restatable}[]{lemma}{znklimit}
\label{lemma: z n limit}
$\forall t \in [0, T)$, we have
\begin{align}
\lim_{k \to \infty} z_{n_k}(t)   =   z^{\lim}(t).
\end{align}
Moreover, the convergence is uniform in $t$ on $[0, T)$.
\end{restatable}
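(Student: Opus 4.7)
The plan is to compare $z_{n_k}(t)$ and $z^{\lim}(t)$ through their integral equations and apply Gronwall's inequality on the difference. Writing the two ODEs in integral form, we have
\begin{align}
z_{n_k}(t) &= \hat{x}(T_{n_k}) + \int_0^t h_{r_{n_k}}(z_{n_k}(s))\, ds, \\
z^{\lim}(t) &= \hat{x}^{\lim}(0) + \int_0^t h_{\infty}(z^{\lim}(s))\, ds.
\end{align}
Subtracting and inserting the pivot term $h_{r_{n_k}}(z^{\lim}(s))$, we obtain the decomposition
\begin{align}
\norm{z_{n_k}(t) - z^{\lim}(t)} \leq{} & \norm{\hat{x}(T_{n_k}) - \hat{x}^{\lim}(0)} + \int_0^t \norm{h_{r_{n_k}}(z_{n_k}(s)) - h_{r_{n_k}}(z^{\lim}(s))}\, ds \\
& + \int_0^t \norm{h_{r_{n_k}}(z^{\lim}(s)) - h_{\infty}(z^{\lim}(s))}\, ds.
\end{align}

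First, I would bound the three terms on the right. By~\eqref{eq: hat x x r lim} at $t=0$ (equivalently~\eqref{def: z n init}), the first term tends to $0$. For the middle term, Lemma~\ref{lemma: h Lipschitz} gives the uniform Lipschitz constant $L$, so the integrand is at most $L\norm{z_{n_k}(s) - z^{\lim}(s)}$. For the third term, Lemma~\ref{lemma: h c z lim uniform} provides $h_{r_{n_k}}(z^{\lim}(\cdot)) \to h_{\infty}(z^{\lim}(\cdot))$ uniformly on $[0,T)$, so the third integral is bounded by $T \cdot \epsilon_k$ for some $\epsilon_k \to 0$ independent of $t$.

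Fixing $\epsilon > 0$, I would choose $k$ large enough so that the sum of the first term and the third term is at most $\epsilon$ uniformly in $t \in [0,T)$. This yields
\begin{align}
\norm{z_{n_k}(t) - z^{\lim}(t)} \leq \epsilon + L \int_0^t \norm{z_{n_k}(s) - z^{\lim}(s)}\, ds.
\end{align}
Applying the continuous Gronwall inequality (Theorem~\ref{theorem: Gronwall}) then gives $\norm{z_{n_k}(t) - z^{\lim}(t)} \leq \epsilon e^{LT}$ for all $t \in [0,T)$, establishing uniform convergence.

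The only slightly delicate point is ensuring that the pivot argument is valid: we need $z^{\lim}(s)$ to take values in a single compact set (independent of $k$) so that the uniform convergence of $h_c \to h_{\infty}$ from Assumption~\ref{assumption: lim h uniformly convergent} can be invoked, and this is exactly what Lemma~\ref{lemma: z lim bound} (with Lemma~\ref{lemma: h c z lim uniform}) supplies. Beyond that, the proof is a routine Gronwall argument and I do not anticipate any further obstacles.
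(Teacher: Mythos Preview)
Your proposal is correct and follows essentially the same approach as the paper: write the two integral equations, insert the pivot $h_{r_{n_k}}(z^{\lim}(s))$, bound the initial-condition error via~\eqref{eq: hat x x r lim}, the pivot error via Lemma~\ref{lemma: h c z lim uniform}, the remaining integrand via the Lipschitz bound of Lemma~\ref{lemma: h Lipschitz}, and close with Gronwall. The paper's proof is line-for-line the same argument, arriving at the bound $(\delta + T\delta)e^{LT}$.
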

\begin{proof}
By \eqref{eq: hat x x r lim}, $\forall \delta > 0$, there exists a $k_1$ such that $\forall k \geq k_1$, $\forall t \in [0,T)$,
\begin{align}
\norm{\hat{x}(T_{n_k} + t)  -     \hat{x}^{\lim}(t)} \leq \delta .\label{eq: z lim epsilion 1}
\end{align}
By Lemma \ref{lemma: h c z lim uniform}, there exists a $k_2$ such that $\forall k \geq k_2$, $\forall t \in [0,T)$,
\begin{align}
\norm{h_{r_{n_k}}(z^{\lim}(t))  -   h_{\infty}(z^{\lim}(t))}  \leq \delta. \label{eq: z lim epsilion 2}  
\end{align}
$\forall k \geq \max\qty{ k_1, k_2}$, $\forall t \in [0,T)$
\begin{align}
& \norm{z_{n_k}(t)  - z^{\lim}(t) } \\
=& \norm{\hat{x}(T_{n_k}) + \int_0^t h_{r_{n_k}}(z_{n_k}(s))ds  -     \hat{x}^{\lim}(0)  -  \int_0^t h_{\infty}(z^{\lim}(s))ds} \\
\leq&  \norm{\hat{x}(T_{n_k})  -     \hat{x}^{\lim}(0)}  + \norm{\int_0^t h_{r_{n_k}}(z_{n_k}(s))ds -  \int_0^t h_{\infty}(z^{\lim}(s))ds} \\
\leq& \delta + \norm{\int_0^t h_{r_{n_k}}(z_{n_k}(s)) -   h_{\infty}(z^{\lim}(s))ds} \explain{by \eqref{eq: z lim epsilion 1}} \\
\leq& \delta + \int_0^t  \norm{h_{r_{n_k}}(z_{n_k}(s)) - h_{r_{n_k}}(z^{\lim}(s))  }ds + \int_0^t \norm{  h_{r_{n_k}}(z^{\lim}(s))  -   h_{\infty}(z^{\lim}(s))} ds \\
\leq& \delta + L \int_0^t   \norm{z_{n_k}(s) -z^{\lim}(s)  }ds + \int_0^t \norm{  h_{r_{n_k}}(z^{\lim}(s))  -   h_{\infty}(z^{\lim}(s))} ds \explain{by Lemma \ref{lemma: h Lipschitz}} \\
\leq& \delta + t\delta +  L \int_0^t   \norm{z_{n_k}(s) -z^{\lim}(s)  }ds \explain{by \eqref{eq: z lim epsilion 2}} \\
\leq& (\delta + t\delta)e^{Lt}  \explain{by Gronwall inequality in Theorem \ref{theorem: Gronwall}}\\
\leq& (\delta + T\delta)e^{LT}, \label{eq: z z lim difference}
\end{align}
which completes the proof.
\end{proof}



\begin{lemma}\label{lemma: split limit to double limit}
For any function $f: \R \times \R \to \R$, 
if $\lim \limits_{\begin{subarray}{l}
a \to \infty\\
b \to \infty
\end{subarray}} f(a,b) = L
$    then $\lim \limits_{c \to \infty} f(c,c) = L$ where $L$ is a constant.
\end{lemma}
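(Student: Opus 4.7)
The plan is to prove this directly from the $\epsilon$--$N$ definition of the double limit, since the statement is essentially a triviality once the right definition is unpacked. No subsequence argument or advanced machinery is needed.

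First I would recall the standard definition: $\lim_{a \to \infty, b \to \infty} f(a,b) = L$ means that for every $\epsilon > 0$ there exists some $N = N(\epsilon) \in \R$ such that for all pairs $(a,b)$ with $a \geq N$ and $b \geq N$, one has $|f(a,b) - L| < \epsilon$. This is the joint (not iterated) limit, which is exactly the object appearing in the double-limit expressions encountered earlier in the proof (e.g., \eqref{eq: double limit}).

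Next, to establish $\lim_{c \to \infty} f(c,c) = L$, fix an arbitrary $\epsilon > 0$ and choose the same $N = N(\epsilon)$ supplied by the double-limit hypothesis. Then for any $c \geq N$, the pair $(a,b) = (c,c)$ satisfies both $a \geq N$ and $b \geq N$, so directly by the hypothesis $|f(c,c) - L| < \epsilon$. Since $\epsilon$ was arbitrary, this proves $\lim_{c \to \infty} f(c,c) = L$.

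There is essentially no obstacle here; the lemma is a quoted fact used to convert a double limit (established via the Moore–Osgood theorem in Lemmas~\ref{lemma: double limit 1} and~\ref{lemma: double limit 2}) into a single limit along the diagonal, which is what is actually needed in~\eqref{eq: single limit}. The only thing to be careful about is to make explicit that we are working with the joint definition of the double limit rather than an iterated one, so that the substitution $a = b = c$ is legitimate without any additional uniformity or continuity hypothesis on $f$.
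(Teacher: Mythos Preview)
Your proof is correct and matches the paper's approach essentially verbatim: unpack the $\epsilon$--$N$ definition of the joint double limit and substitute along the diagonal. The only cosmetic difference is that the paper takes two thresholds $a_0, b_0$ and sets $c_0 = \max\{a_0, b_0\}$, whereas you use a single $N$ from the outset; both are equivalent formulations of the same trivial argument.
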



\begin{proof}
By definition, $\forall \epsilon > 0, \exists a_0, b_0$ such that $\forall a > a_0, b > b_0$, $\norm{f(a,b) - L } < \epsilon$. Thus, $\forall \epsilon > 0, \exists c_0 = \max \qty{a_0, b_0}$ such that $\forall c > c_0$, $\norm{f(c,c) - L } < \epsilon$.
\end{proof}

\begin{lemma}\label{lemma: h k x lim h inf x lim uniform}
$\forall t \in [0,T)$,
\begin{align}
\lim_{k \to \infty}  \int_0^t h_{r_{n_k}}(\hat{x}^{\lim}(s))ds =
 \int_0^t  h_{\infty}(\hat{x}^{\lim}(s))ds. \label{eq: f limit interchange limit int 1}    
\end{align}
\end{lemma}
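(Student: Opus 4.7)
The plan is to invoke the uniform convergence of $h_c$ to $h_\infty$ on compact sets from Assumption~\ref{assumption: lim h uniformly convergent}, applied to a compact set containing the range of $\hat x^{\lim}$.

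First, I would establish that $\hat x^{\lim}$ takes values in a compact set of $\R^d$. By Lemma~\ref{lemma: bound x hat}, $\sup_{n, t \in [0, T)} \norm{\hat{x}(T_n + t)} \leq C_{\hat x}$, and since $\hat x^{\lim}(t) = \lim_{k \to \infty} \hat x(T_{n_k} + t)$ pointwise (indeed uniformly on $[0, T)$ by Lemma~\ref{lemma: three convergence}), passing to the limit in the bound gives $\sup_{t \in [0, T)} \norm{\hat x^{\lim}(t)} \leq C_{\hat x}$. Thus $\hat x^{\lim}([0, T))$ lies inside the compact ball $\fB \doteq \{v \in \R^d : \norm{v} \leq C_{\hat x}\}$.

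Second, by Assumption~\ref{assumption: lim h uniformly convergent}, $h_c(v) \to h_\infty(v)$ as $c \to \infty$ uniformly for $v$ in the compact set $\fB$. Since $r_{n_k} \to \infty$ by Lemma~\ref{lemma: three convergence}, this yields
\begin{align}
\lim_{k \to \infty} \sup_{s \in [0, T)} \norm{h_{r_{n_k}}(\hat x^{\lim}(s)) - h_\infty(\hat x^{\lim}(s))} = 0.
\end{align}

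Finally, for any fixed $t \in [0, T)$, I would bound the difference of the integrals by the uniform error:
\begin{align}
\norm{\int_0^t h_{r_{n_k}}(\hat x^{\lim}(s))\, ds - \int_0^t h_\infty(\hat x^{\lim}(s))\, ds} \leq t \sup_{s \in [0, T)} \norm{h_{r_{n_k}}(\hat x^{\lim}(s)) - h_\infty(\hat x^{\lim}(s))},
\end{align}
and the right-hand side tends to $0$ as $k \to \infty$, yielding the claim. There is no substantive obstacle here; the proof is a direct application of uniform convergence on compacts. The only subtlety is confirming the range of $\hat x^{\lim}$ is contained in a compact set, which is handled by passing the uniform bound from Lemma~\ref{lemma: bound x hat} through the uniform limit from Lemma~\ref{lemma: three convergence}.
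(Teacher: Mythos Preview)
Your proof is correct and follows essentially the same approach as the paper: bound the range of $\hat x^{\lim}$ via Lemma~\ref{lemma: bound x hat}, then use the uniform convergence of $h_c$ to $h_\infty$ on compacts from Assumption~\ref{assumption: lim h uniformly convergent} to pass the limit inside the integral. The only cosmetic difference is that the paper phrases the final step via the dominated convergence theorem (after noting a uniform bound on $h_{r_{n_k}}(\hat x^{\lim}(\cdot))$), whereas you bound the integral difference directly by $t$ times the sup-norm error; your route is if anything slightly more elementary.
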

\begin{proof}
From Lemma~\ref{lemma: bound x hat},
it is easy to see that
\begin{align}
    \sup_{t\in[0, T)} \norm{\hat x^{\lim}(t)} < \infty,
\end{align}
which, similar to Lemma~\ref{lemma: bound h z}, implies that
\begin{align}
    \sup_{k, t\in[0, T)} \norm{h_{r_{n_k}}\left(\hat x^{\lim}(t)\right)} < \infty.
\end{align}
By the dominated convergence theorem,
$\forall t\in [0,T)$,
\begin{align}
&  \lim_{k \to \infty}   \int_0^t h_{r_{n_k}} (\hat{x}^{\lim}(s))ds = \int_0^t \lim_{k \to \infty}    h_{r_{n_k}} (\hat{x}^{\lim}(s))ds = \int_0^t  h_{\infty} (\hat{x}^{\lim}(s))ds,
\end{align}
which completes the proof.
\end{proof}

\begin{lemma}\label{lemma: h k z k h inf z lim uniform}
$\forall t \in [0,T)$,
\begin{align}
\lim_{k \to \infty}  \int_0^t h_{r_{n_k}}(z_{n_k}(s))ds =
 \int_0^t  h_{\infty}(z^{\lim}(s))ds. \label{eq: f limit interchange limit int 2}    
\end{align}
\end{lemma}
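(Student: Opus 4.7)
The proof will mirror the structure of Lemma~\ref{lemma: h k x lim h inf x lim uniform} but replace the dominated-convergence step with a triangle-inequality split that exploits the Lipschitz continuity of $h_{r_{n_k}}$. The key insight is that we have two different kinds of convergence to combine: $z_{n_k} \to z^{\lim}$ uniformly on $[0,T)$ (from Lemma~\ref{lemma: three convergence}), and $h_{r_{n_k}}(v) \to h_\infty(v)$ uniformly over the compact range of $z^{\lim}$ (from Lemma~\ref{lemma: h c z lim uniform}).

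First, I would add and subtract $h_{r_{n_k}}(z^{\lim}(s))$ inside the integrand and invoke the triangle inequality, obtaining the pointwise bound
\begin{align}
\norm{h_{r_{n_k}}(z_{n_k}(s)) - h_\infty(z^{\lim}(s))}
\leq \norm{h_{r_{n_k}}(z_{n_k}(s)) - h_{r_{n_k}}(z^{\lim}(s))} + \norm{h_{r_{n_k}}(z^{\lim}(s)) - h_\infty(z^{\lim}(s))}.
\end{align}
By Lemma~\ref{lemma: h Lipschitz}, the first term is at most $L \norm{z_{n_k}(s) - z^{\lim}(s)}$, which tends to $0$ uniformly in $s \in [0,T)$ by Lemma~\ref{lemma: three convergence}. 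The second term tends to $0$ uniformly in $s \in [0,T)$ by Lemma~\ref{lemma: h c z lim uniform}.

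Then, given any $\varepsilon > 0$, I can pick $k_0$ large enough that both terms above are bounded by $\varepsilon/(2T)$ uniformly in $s$ for all $k \geq k_0$. Integrating over $[0,t] \subseteq [0,T)$ yields
\begin{align}
\norm{\int_0^t h_{r_{n_k}}(z_{n_k}(s))\,ds - \int_0^t h_\infty(z^{\lim}(s))\,ds} \leq \int_0^t \frac{\varepsilon}{T}\,ds \leq \varepsilon,
\end{align}
which establishes the claim.

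\textbf{Anticipated obstacle.} There is essentially no hard step; the result is a routine consequence of the uniform convergences already established. The only mild care needed is to make sure the compact set on which Lemma~\ref{lemma: h c z lim uniform} is invoked actually contains $\qty{z^{\lim}(s) : s \in [0,T)}$, which is immediate from Lemma~\ref{lemma: z lim bound}, and that the Lipschitz constant $L$ is uniform in $c$ so the bound $L \norm{z_{n_k}(s) - z^{\lim}(s)}$ is legitimate for every $k$; this is precisely the content of Lemma~\ref{lemma: h Lipschitz}.
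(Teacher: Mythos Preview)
Your proposal is correct and follows essentially the same approach as the paper's proof: the paper also splits via $h_{r_{n_k}}(z^{\lim}(s))$, bounds the first piece by $L\norm{z_{n_k}(s)-z^{\lim}(s)}$ via Lemma~\ref{lemma: h Lipschitz} and the uniform convergence of $z_{n_k}$ to $z^{\lim}$ (Lemma~\ref{lemma: z n limit}, which is the content you cite from Lemma~\ref{lemma: three convergence}), and bounds the second piece via Lemma~\ref{lemma: h c z lim uniform}, arriving at $LT\epsilon + T\epsilon$. Your anticipated-obstacle remarks about Lemma~\ref{lemma: z lim bound} and the $c$-uniform Lipschitz constant are exactly the justifications the paper relies on.
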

\begin{proof}
$\forall \epsilon>0$,  by Lemma \ref{lemma: h c z lim uniform}, $\exists k_0$ such that $\forall k \geq k_0$, $\forall t \in [0,T)$, 
\begin{align}
\norm{  h_{r_{n_k}}(z^{\lim}(s)) -   h_{\infty}(z^{\lim}(s)) } \leq \epsilon. \label{eq: f k 0 k_0}
\end{align}
By Lemma \ref{lemma: z n limit}, $\exists k_1$ such that $\forall k \geq k_1$, $\forall t \in [0,T)$, 
\begin{align}
\norm{ z_{n_k}(t)-  z^{\lim}(t) } \leq \epsilon. \label{eq: f k 0 k_1}
\end{align}
Thus, $\forall k \geq \max \qty{k_0,k_1}$, $\forall t \in [0,T)$, 
\begin{align}
&  \norm{ \int_0^t h_{r_{n_k}}(z_{n_k}(s))ds -
 \int_0^t  h_{\infty}(z^{\lim}(s))ds } \\
\leq&    \norm{ \int_0^t h_{r_{n_k}}(z_{n_k}(s))ds - \int_0^t h_{r_{n_k}}(z^{\lim}(s))ds}  +  \norm{ \int_0^t h_{r_{n_k}}(z^{\lim}(s))ds -  \int_0^t  h_{\infty}(z^{\lim}(s))ds } \\
\leq&   \int_0^t  \norm{h_{r_{n_k}}(z_{n_k}(s)) - h_{r_{n_k}}(z^{\lim}(s))}ds   +  \int_0^t  \norm{h_{r_{n_k}}(z^{\lim}(s)) -    h_{\infty}(z^{\lim}(s)) }ds  \\
\leq& \int_0^t  \norm{h_{r_{n_k}}(z_{n_k}(s)) -  h_{r_{n_k}}(z^{\lim}(s))} ds   + T\epsilon \explain{by \eqref{eq: f k 0 k_0}} \\
\leq& \int_0^t  L\norm{z_{n_k}(s) -  z^{\lim}(s)} ds   + T\epsilon \explain{by Lemma \ref{lemma: h Lipschitz}}\\
\leq& L T\epsilon    + T\epsilon. \explain{by \eqref{eq: f k 0 k_1}}
\end{align}
Thus, $\forall t \in [0,T)$,
\begin{align}
\lim_{k \to \infty}  \int_0^t h_{r_{n_k}}(z_{n_k}(s))ds =
 \int_0^t  h_{\infty}(z^{\lim}(s))ds.  
\end{align}
\end{proof}


\begin{lemma}\label{lemma: lim n t to T - 0}
\begin{align}
\lim_{n} \lim_{t \to T^-}\norm{\sum_{i = m(T_n+ t)}^{m(T_{n + 1}) - 1} \alpha(i) L(Y_{i+1})} = 0, \label{eq: t T^- L 0} \\
\lim_{n} \lim_{t \to T^-} \norm{\sum_{i = m(T_n+ t)}^{m(T_{n + 1}) - 1} \alpha(i) H(0, Y_{i+1})  } = 0. \label{eq: t T^- H 0}
\end{align}
\end{lemma}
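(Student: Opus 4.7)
The plan is to reduce both claims to the already established ``small increment'' bounds~\eqref{eq: L property close t 0} (for the $L$ sum) and~\eqref{eq: h property close t 0} with $c = 1$ (for the $H(0, \cdot)$ sum). The crucial structural identity is $m(T_{n+1}) - 1 = m(T_n + T)$, which follows from~\eqref{def:T} together with the elementary fact that $m(t(k)) = k$; combined with Lemma~\ref{lemma: T-minus}, which gives $T_{n+1} - T_n \to T$ from above, this ensures that as $t \to T^-$ the interval $[t,\, T_{n+1} - T_n]$ collapses uniformly for large $n$.

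First I would observe that for each fixed $n$ the map $t \mapsto m(T_n + t)$ is non-decreasing and integer-valued on $[0, T)$, and is bounded above by $m(T_n + T) = m(T_{n+1}) - 1$. Hence the integer-valued limit $k_n^* \doteq \lim_{t \to T^-} m(T_n + t)$ exists and satisfies $k_n^* \leq m(T_{n+1}) - 1$. In particular, both inner limits in the lemma exist pointwise in $n$: the first equals $\sum_{i = k_n^*}^{m(T_{n+1}) - 1} \alpha(i) L(Y_{i+1})$, and the second equals $\norm{\sum_{i = k_n^*}^{m(T_{n+1}) - 1} \alpha(i) H(0, Y_{i+1})}$, with the usual convention that an empty sum vanishes.

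Next, fix an arbitrary $\delta > 0$. By Lemma~\ref{lemma: T-minus}, there exists $N_\delta$ such that $T_{n+1} - T_n \leq T + \delta/2$ for all $n \geq N_\delta$. For each such $n$, pick $t_1 \in (T - \delta/2,\, T)$ close enough to $T$ that $m(T_n + t_1) = k_n^*$ (possible since $m$ stabilizes at $k_n^*$ on some left neighborhood of $T$), and set $t_2 \doteq T_{n+1} - T_n$, so that $m(T_n + t_2) - 1 = m(T_{n+1}) - 1$ and $0 \leq t_2 - t_1 \leq \delta$. For this particular pair $(t_1, t_2)$ the inner-limit expressions above are exactly of the form bounded by the supremum appearing in~\eqref{eq: L property close t 0} and~\eqref{eq: h property close t 0}, respectively. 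Taking $\limsup_n$ and then $\delta \to 0^+$ drives each supremum to $0$; non-negativity of the quantities then upgrades $\limsup_n = 0$ to the desired $\lim_n = 0$.

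The only mildly delicate point is simultaneously arranging $m(T_n + t_1) = k_n^*$ together with $t_2 - t_1 \leq \delta$ uniformly for large $n$, but this follows directly from the uniform estimate $T_{n+1} - T_n \leq T + \delta/2$ provided by Lemma~\ref{lemma: T-minus}. No further obstacle is anticipated.
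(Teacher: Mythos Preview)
Your argument is correct but takes a different route from the paper's. The paper proceeds by \emph{centering}: it splits $\sum \alpha(i) L(Y_{i+1})$ into $\sum \alpha(i)[L(Y_{i+1}) - L]$ plus $\sum \alpha(i) L$. The second piece collapses to at most one or two terms after the inner limit $t \to T^-$ and vanishes because $\alpha(m(T_{n+1})-1)\to 0$; the first piece is bounded by the asymptotic-rate-of-change result~\eqref{eq: L property minus 0} with a \emph{fixed} window $[0, T+\sup_j \alpha(j)]$, so no smallness of $t_2-t_1$ is needed. The $H(0,\cdot)$ claim is handled analogously.

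You instead avoid centering altogether and reduce directly to the pre-packaged ``small increment'' estimates~\eqref{eq: L property close t 0} and~\eqref{eq: h property close t 0}. This is a legitimate shortcut, since those lemmas already perform the centering internally. The price you pay is the extra bookkeeping of choosing a pair $(t_1,t_2)$ with $t_2 - t_1 \leq \delta$ and $m(T_n+t_1) = k_n^*$, $m(T_n + t_2) = m(T_{n+1})$, and the two-stage limit $\limsup_n$ then $\delta \to 0^+$. All of this is fine: your identity $m(T_{n+1})-1 = m(T_n + T)$ is correct, the stabilization of $m(T_n+t)$ near $T^-$ is immediate from $m$ being integer-valued and non-decreasing, and the uniform bound $T_{n+1}-T_n \leq T + \delta/2$ for large $n$ from Lemma~\ref{lemma: T-minus} takes care of the simultaneous requirement you flagged. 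The paper's approach is slightly more direct; yours reuses existing machinery more cleanly.
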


\begin{proof}
\begin{align}
&\limsup_{n} \lim_{t \to T^-}\norm{\sum_{i = m(T_n+ t)}^{m(T_{n + 1}) - 1} \alpha(i) L(Y_{i+1})}  \\
=& \limsup_{n} \lim_{t \to T^-}\norm{\sum_{i = m(T_n+ t)}^{m(T_{n + 1}) - 1} \alpha(i) [L(Y_{i+1})-L] + \sum_{i = m(T_n+ t)}^{m(T_{n + 1}) - 1} \alpha(i) L}  \\
\leq& \limsup_{n} \lim_{t \to T^-}\norm{\sum_{i = m(T_n+ t)}^{m(T_{n + 1}) - 1} \alpha(i) [L(Y_{i+1})-L]} +  \limsup_{n} \lim_{t \to T^-}\norm{\sum_{i = m(T_n+ t)}^{m(T_{n + 1}) - 1} \alpha(i) L}  \\
\leq& \limsup_{n} \lim_{t \to T^-}\norm{\sum_{i = m(T_n+ t)}^{m(T_{n + 1}) - 1} \alpha(i) [L(Y_{i+1})-L]} +  L \limsup_{n} \alpha(m(T_{n + 1}) - 1)  \\
\leq& \limsup_{n} \lim_{t \to T^-}\norm{\sum_{i = m(T_n+ t)}^{m(T_{n + 1}) - 1} \alpha(i) [L(Y_{i+1})-L]} +  0 \explain{by \eqref{eq 1/n lr}}  \\
\leq& \limsup_{n} \sup_{0\leq t_1 \leq t_2 \leq T + \sup_j \alpha(j)} \norm{\sum_{i = m(T_n+ t_1)}^{m(T_n + t_2) - 1} \alpha(i) [L(Y_{i+1})-L]} \\
=& 0. \explain{by \eqref{eq: L property minus 0}}
\end{align}
This implies 
\begin{align}
\lim_{n} \lim_{t \to T^-}\norm{\sum_{i = m(T_n+ t)}^{m(T_{n + 1}) - 1} \alpha(i) L(Y_{i+1})} = 0. 
\end{align}
Following a similar proof, we have 
\begin{align}
\lim_{n} \lim_{t \to T^-} \norm{\sum_{i = m(T_n+ t)}^{m(T_{n + 1}) - 1} \alpha(i) H(0, Y_{i+1})  } = 0. 
\end{align}

\end{proof}


\begin{lemma}\label{lemma: connect T and k+1}
$\lim_{k \to \infty}
\frac{\norm{\bar{x}(T_{n_k + 1}) } -  \lim_{t \to T^-}  \norm{\bar{x}(T_{n_k} + t)}}{ \norm{\bar{x}(T_{n_k})}}= 0$.
\end{lemma}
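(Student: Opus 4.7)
The plan is to express the numerator as the norm of a short telescoping sum of stochastic approximation increments and then control these increments using the Lipschitz structure of $H$ together with Lemma~\ref{lemma: lim n t to T - 0}. First, exploiting the strict monotonicity of $t(\cdot)$ and the identity $m(t(j)) = j$, I observe that $m(T_{n_k+1}) = m(T_{n_k}+T) + 1 =: M_k + 1$, so $\bar{x}(T_{n_k+1}) = x_{M_k + 1}$. Setting $J_k \doteq \lim_{t\to T^-} m(T_{n_k}+t)$, a two-case analysis (depending on whether $t(M_k) < T_{n_k}+T$ or $t(M_k) = T_{n_k}+T$) gives $J_k \in \qty{M_k - 1, M_k}$, so that $\lim_{t\to T^-}\bar{x}(T_{n_k}+t) = x_{J_k}$ and
\begin{align}
\bar{x}(T_{n_k+1}) - \lim_{t\to T^-}\bar{x}(T_{n_k}+t) = \sum_{i=J_k}^{M_k}\alpha(i)H(x_i, Y_{i+1}),
\end{align}
a sum of at most two increments.

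By the reverse triangle inequality, it suffices to show that the norm of this sum divided by $\norm{\bar{x}(T_{n_k})}$ tends to $0$. I would decompose
\begin{align}
H(x_i, Y_{i+1}) = \bigl[H(x_i, Y_{i+1}) - H(\bar{x}(T_{n_k}), Y_{i+1})\bigr] + \bigl[H(\bar{x}(T_{n_k}), Y_{i+1}) - H(0, Y_{i+1})\bigr] + H(0, Y_{i+1})
\end{align}
and use Assumption~\ref{assumption: H Lipschitz}. The first bracket is dominated by $L(Y_{i+1})\norm{x_i - \bar{x}(T_{n_k})}$, which the a priori bound~\eqref{eq: bar x bound} (already used in Lemma~\ref{lemma: bar x bounded}) controls by $(r_{n_k}C_H + C_H)e^{C_H}$, since every $i \in \qty{J_k,\dots,M_k}$ satisfies $t(i) \in [T_{n_k}, T_{n_k+1})$. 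The second bracket is bounded by $L(Y_{i+1})\norm{\bar{x}(T_{n_k})} \leq L(Y_{i+1}) r_{n_k}$ by definition of $r_{n_k}$.

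Dividing by $r_{n_k}$ and using $r_{n_k} \geq 1$, I obtain an estimate of the shape
\begin{align}
\frac{\norm{\bar{x}(T_{n_k+1}) - \lim_{t\to T^-}\bar{x}(T_{n_k}+t)}}{r_{n_k}} \leq \Bigl[(C_H + C_H)e^{C_H} + 1\Bigr]\sum_{i=J_k}^{M_k}\alpha(i)L(Y_{i+1}) + \frac{1}{r_{n_k}}\norm{\sum_{i=J_k}^{M_k}\alpha(i)H(0, Y_{i+1})}.
\end{align}
By construction, the two summations here coincide with $\sum_{i = m(T_{n_k}+t)}^{m(T_{n_k+1})-1}\alpha(i)L(Y_{i+1})$ and $\sum_{i = m(T_{n_k}+t)}^{m(T_{n_k+1})-1}\alpha(i)H(0,Y_{i+1})$ in the limit $t \to T^-$, which are exactly the quantities whose iterated limit $\lim_n \lim_{t\to T^-}$ vanishes by Lemma~\ref{lemma: lim n t to T - 0}. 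Hence both summands tend to $0$ as $k \to \infty$ (the second one even without the $1/r_{n_k}$ prefactor), giving the claim.

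The main obstacle is the index bookkeeping: one must verify that the two possible values of $J_k$ are compatible with the summation ranges appearing in Lemma~\ref{lemma: lim n t to T - 0}, and that \eqref{eq: bar x bound} really applies uniformly to every $i \in \qty{J_k,\dots,M_k}$ (which follows from $T_{n_k} = t(m(T_{n_k}))$ and $m(T_{n_k}) \leq J_k$). Once these alignments are in place, all three error contributions collapse under bounds already produced by Lemma~\ref{lemma: bar x bounded} and Lemma~\ref{lemma: lim n t to T - 0}, and no further argument beyond the Lipschitz decomposition is needed.
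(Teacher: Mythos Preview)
Your proposal is correct and follows essentially the same approach as the paper: both express the numerator as a short telescoping sum $\sum_{i = m(T_{n_k}+t)}^{m(T_{n_k+1})-1}\alpha(i)H(\bar x(t(i)),Y_{i+1})$, apply the Lipschitz structure of $H$ together with the growth bound~\eqref{eq: bar x bound}, and conclude via Lemma~\ref{lemma: lim n t to T - 0}. The only cosmetic difference is that the paper telescopes $H(\bar x(t(i)),\cdot)$ directly to $H(0,\cdot)$ and then bounds $\norm{\bar x(t(i))}$ by~\eqref{eq: bar x bound}, whereas you insert the intermediate point $\bar x(T_{n_k})$; both routes produce the same two vanishing summations and the same prefactor structure.
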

\begin{proof}
We first analyze the numerator.
$\forall k$,
\begin{align}
& \abs{ \norm{\bar{x}(T_{n_k + 1})}     - \lim_{t \to T^-}  \norm{\bar{x}(T_{n_k} + t)} } \\
=&\lim_{t \to T^-} \abs{ \norm{\bar{x}(T_{n_k + 1})}     -  \norm{\bar{x}(T_{n_k} + t)} } \\
\leq &\lim_{t \to T^-} \norm{\bar{x}(T_{n_k + 1})     -  \bar{x}(T_{n_k} + t)}\\
=&\lim_{t \to T^-} \norm{\bar{x}(T_{n_k})  + \sum_{i = m(T_{n_k})}^{m(T_{n_k + 1}) - 1} \alpha(i) H(\bar{x}(t(i)), Y_{i+1})  -\bar{x}(T_{n_k})  - \sum_{i = m(T_{n_k})}^{m(T_{n_k}+ t) - 1} \alpha(i) H(\bar{x}(t(i)), Y_{i+1}) } \\
=&\lim_{t \to T^-} \norm{\sum_{i = m(T_{n_k}+ t)}^{m(T_{n_k + 1}) - 1} \alpha(i) H(\bar{x}(t(i)), Y_{i+1})  } \\
%
\leq& \lim_{t \to T^-} \norm{\sum_{i = m(T_{n_k}+ t)}^{m(T_{n_k + 1}) - 1} \alpha(i) \left[ H(\bar{x}(t(i)), Y_{i+1})  -  H(0, Y_{i+1})\right]} +  \norm{\sum_{i = m(T_{n_k}+ t)}^{m(T_{n_k + 1}) - 1} \alpha(i) H(0, Y_{i+1})  } \\
\leq& \lim_{t \to T^-} \sum_{i = m(T_{n_k}+ t)}^{m(T_{n_k + 1}) - 1} \alpha(i) L(Y_{i+1}) \norm{\bar{x}(t(i) } +  \norm{\sum_{i = m(T_{n_k}+ t)}^{m(T_{n_k + 1}) - 1} \alpha(i) H(0, Y_{i+1})  } \\
=& \norm{\bar{x}(t(m(T_{n_k+1}) - 1) }  \left[\lim_{t \to T^-} \sum_{i = m(T_{n_k}+ t)}^{m(T_{n_k + 1}) - 1} \alpha(i) L(Y_{i+1})\right]   +  \lim_{t \to T^-}\norm{\sum_{i = m(T_{n_k}+ t)}^{m(T_{n_k + 1}) - 1} \alpha(i) H(0, Y_{i+1})  } 
\explain{$\forall k, \lim_{t \to T^-} m(T_{n_k}+ t) = m(T_{n_k + 1}) - 1$}\\
\leq& \left(\left[\norm{ \bar{x}(T_{n_k})}  C_H  + C_H \right] e^{C_H} + \norm{ \bar{x}(T_{n_k})} \right) \left[\lim_{t \to T^-} \sum_{i = m(T_{n_k}+ t)}^{m(T_{n_k + 1}) - 1} \alpha(i) L(Y_{i+1}) \right]  \\
&+ \lim_{t \to T^-} \norm{\sum_{i = m(T_{n_k}+ t)}^{m(T_{n_k + 1}) - 1} \alpha(i) H(0, Y_{i+1})  }. \explain{by \eqref{eq: bar x bound}}
\end{align}
By \eqref{eq: limit r *}, we have 
\begin{align}
 \lim_{k\to\infty} \norm{\bar x(T_{n_k})} = \lim_{k \to \infty} r_{n_k}  = \infty. \label{eq: f k 0 sup geq 1}
\end{align}
Thus,
\begin{align}
&\lim_{k \to \infty}\abs{ \frac{\norm{\bar{x}(T_{n_k + 1}) } -  \lim_{t \to T^-}  \norm{\bar{x}(T_{n_k} + t)}}{ \norm{\bar{x}(T_{n_k})}}}  \\
=&\lim_{k \to \infty}\frac{ \abs{ \norm{\bar{x}(T_{n_k + 1}) } -  \lim_{t \to T^-}  \norm{\bar{x}(T_{n_k} + t)}}} { \norm{\bar{x}(T_{n_k})}}  \\
=&  \lim_{k \to \infty}  \frac{ \left(\left[\norm{ \bar{x}(T_{n_k})}  C_H  + C_H \right] e^{C_H} + \norm{ \bar{x}(T_{n_k})} \right)  \left[\lim_{t \to T^-} \sum_{i = m(T_{n_k}+ t)}^{m(T_{n_k + 1}) - 1} \alpha(i) L(Y_{i+1}) \right]}{\norm{\bar{x}(T_{n_k})}}  \\
&+ \lim_{k \to \infty}  \frac{ \lim_{t \to T^-} \norm{\sum_{i = m(T_{n_k}+ t)}^{m(T_{n_k + 1}) - 1} \alpha(i) H(0, Y_{i+1})  } }{\norm{\bar{x}(T_{n_k})}}      \\
\leq&   \left(C_H  e^{C_H} + 1 \right)   \left[ \lim_{k \to \infty} \lim_{t \to T^-} \sum_{i = m(T_{n_k}+ t)}^{m(T_{n_k + 1}) - 1} \alpha(i) L(Y_{i+1})\right]  +  \lim_{k \to \infty}  \frac{ \lim_{t \to T^-} \norm{\sum_{i = m(T_{n_k}+ t)}^{m(T_{n_k + 1}) - 1} \alpha(i) H(0, Y_{i+1})  } }{\norm{\bar{x}(T_{n_k})}}
\explain{by \eqref{eq: f k 0 sup geq 1}} \\
\leq&  \left(C_H  e^{C_H} + 1 \right)  \cdot  0 + 0 \explain{by \eqref{eq: t T^- L 0} and \eqref{eq: t T^- H 0}}  \\
=& 0. 
\end{align}
\end{proof}

\section{Proofs for Completeness}

Proofs in this section have used ideas and sketches from \citet{kushner2003stochastic} but are self-contained and complete.

\subsection{Proof of Lemma~\ref{lemma: H h 0}}\label{appendix: H h 0}
\begin{proof}
\\
\textbf{Case 1:}
Let Assumptions~\ref{assumption: stationary distribution},~\ref{assumption: alpha rate},~\ref{assumption: H Lipschitz}, and~\ref{assumption: lln} hold.\\
Fixed an arbitrary $\tau > 0$. For an arbitrary $x,t \in (-\infty, \infty)$, define
\begin{align}
\psi(i) &\doteq H(x,Y_{i+1}) - h(x), \\
S(n) &\doteq \sum_{i=0}^{n-1}\psi(i), \\
\Psi(t) &\doteq \sum_{i=0}^{m(t) - 1} \alpha(i) \psi(i).
\end{align}
Here, we use \eqref{eq: m 0} so that $\forall t < 0, m(t) = 0$ and the convention that $\sum_{k=i}^j \alpha(k) = 0$ when $j < i$.
Fix a sample path $\qty{x_0,\qty{Y_i}_{i=1}^\infty}$ where Assumptions~\ref{assumption: stationary distribution},~\ref{assumption: alpha rate},~\ref{assumption: H Lipschitz}, \&~\ref{assumption: lln} hold.
Assumption~\ref{assumption: lln} implies that
\begin{align}
    \lim_{n\to\infty} \alpha(n)S(n+1) = 0.
\end{align}
Use subscript $j$ to denote the $j$th dimension of a vector,
we then have
\begin{align}
\label{eq: alpha s}
    \limsup_{n\to\infty} \sup_{-\tau \leq t \leq \tau} \abs{\alpha(m(t(n) + t)) S(m(t(n) + t) + 1)_j} = 0.
\end{align}
Moreover, for
$\forall t \in [-\tau, \tau]$, we have
\begin{align}
\Psi(t) 
=& \sum_{i=0}^{m(t) - 1} \alpha(i) \psi(i) \\
=& \sum_{i=0}^{m(t) - 1} \alpha(i) \left[ \sum_{j = 0}^i \psi(j) -  \sum_{j = 0}^{i-1} \psi(j) \right] \\
=& \sum_{i=0}^{m(t) - 1} \alpha(i)  \sum_{j = 0}^i \psi(j) -  \sum_{i=0}^{m(t) - 1} \alpha(i) \sum_{j = 0}^{i-1} \psi(j)\\
= &  \sum_{i=0}^{m(t) - 1} \alpha(i)  \sum_{j = 0}^i \psi(j) -    \sum_{i=0}^{m(t) - 2  } \alpha(i+1) \sum_{j = 0}^{i} \psi(j) \\
=&\alpha(m(t) - 1) \sum_{i=0}^{m(t)-1}\psi(i) + \sum_{i=0}^{m(t) -  2} [\alpha(i) - \alpha(i+1)] \sum_{j=0}^{i}\psi(j) \\
=&\alpha(m(t) - 1) \sum_{i=0}^{m(t)-1}\psi(i) + \sum_{i=0}^{m(t) -  2}S(i+1)[\alpha(i) - \alpha(i+1)]\\
=& \alpha(m(t) - 1) S(m(t)) + \sum_{i=0}^{m(t) -  2}S(i+1)\frac{\alpha(i) - \alpha(i+1)}{\alpha(i)} \alpha(i) \label{eq: psi}. 
\end{align}
Thus, for any dimension $j$,
\begin{align}
&\limsup_{n \to \infty} \sup_{-\tau \leq t_1 \leq t_2 \leq \tau} \abs{ \sum_{i=m(t(n) + t_1)}^{m(t(n) + t_2) - 1} \alpha(i)(H(x,Y_{i+1})_j-h(x)_j) }  \\
=&\limsup_{n \to \infty} \sup_{-\tau \leq t_1 \leq t_2 \leq \tau}  \abs{\Psi(t(n) + t_2)_j -  \Psi(t(n) + t_1)_j } \\
\leq& \limsup_{n \to \infty} \sup_{-\tau \leq t_1 \leq t_2 \leq \tau} \abs{ \alpha(m(t(n) + t_2) -1) S(m(t(n) + t_2))_j} + \abs{\alpha(m(t(n) + t_1) -1) S(m(t(n) + t_1))_j} 
\\
&+  \abs{ \sum_{i=m(t(n) + t_1) - 1}^{m(t(n) + t_2) - 2}S(i+1)_j\frac{\alpha(i) - \alpha(i+1)}{\alpha(i)} \alpha(i)} \explain{by \eqref{eq: psi}} \\
=& \limsup_{n \to \infty} \sup_{-\tau \leq t_1 \leq t_2 \leq \tau}  \abs{  \sum_{i=m(t(n) + t_1) - 1}^{m(t(n) + t_2) -  2}S(i+1)_j\frac{\alpha(i) - \alpha(i+1)}{\alpha(i)} \alpha(i)} \explain{by \eqref{eq: alpha s}} \\
\leq& \limsup_{n \to \infty} \sup_{-\tau \leq t_1 \leq t_2 \leq \tau}   \sum_{i=m(t(n) + t_1) - 1}^{m(t(n) + t_2) -  2} \abs{ S(i+1)_j\frac{\alpha(i) - \alpha(i+1)}{\alpha(i)} \alpha(i)} \\
\leq& \limsup_{n \to \infty} \sup_{-\tau \leq t_1 \leq t_2 \leq \tau}  \sum_{i=m(t(n) + t_1) - 1}^{m(t(n) + t_2) -  2} \abs{ \alpha(i) S(i+1)_j  } \abs{\frac{\alpha(i) - \alpha(i+1)}{\alpha(i)} }  \\
\leq&\limsup_{n \to \infty} \sup_{-\tau \leq t_1 \leq t_2 \leq \tau}  \left(\sup_{m(t(n) + t_1) -1 \leq i \leq m(t(n) + t_2)-2 } \abs{\alpha(i)S(i+1)_j} \right) \sum_{i=m(t(n) + t_1) - 1}^{m(t(n) + t_2) -  2}\abs{\frac{\alpha(i) - \alpha(i+1)}{\alpha(i)}} \\
\leq& \limsup_{n \to \infty} \sup_{-\tau \leq t_1 \leq t_2 \leq \tau} \left(\sup_{m(t(n) + t_1) -1 \leq i \leq m(t(n) + t_2)-2 } \abs{\alpha(i)S(i+1)_j}\right) C_{\alpha}\sum_{i=m(t(n) + t_1) - 1}^{m(t(n) + t_2) -  2} \alpha(i) \explain{by Assumption \ref{assumption: alpha rate}, $C_{\alpha}$ is a constant from the big $\fO$ notation--$\frac{\alpha(n)- \alpha(n+1)}{\alpha(n)} = \fO\left(\alpha(n)\right)$
}\\
=& \limsup_{n \to \infty} \left[ \sup_{-\tau \leq t_1 \leq t_2 \leq \tau} \left(\sup_{m(t(n) + t_1) -1 \leq i \leq m(t(n) + t_2)-2 } \abs{\alpha(i)S(i+1)_j}\right) \right. \\
&\left. \cdot C_{\alpha}\left(\sum_{i=m(t(n) + t_1)}^{m(t(n) + t_2) -  1} \alpha(i)  + \alpha(m(t(n) + t_1) -1) \right) \right] \\
=& \limsup_{n \to \infty} \sup_{-\tau \leq t_1 \leq t_2 \leq \tau} \left(\sup_{m(t(n) + t_1) -1 \leq i \leq m(t(n) + t_2)-2 } \abs{\alpha(i)S(i+1)_j}\right) C_{\alpha}\left(t_2 - t_1 + \alpha(m(t(n) + t_1) -1) \right) \explain{by \eqref{eq: lim alpha t_1 t_2}}\\
\leq& \limsup_{n \to \infty}\left(\sup_{m(t(n) - \tau ) -1 \leq i } \abs{\alpha(i)S(i+1)_j}\right) C_{\alpha}\left(t_2 - t_1 + \alpha(m(t(n) + t_1) -1) \right) \\
\leq& 2C_{\alpha}\tau \limsup_{n \to \infty} \left(\sup_{m(t(n) - \tau) -1 \leq i } \abs{\alpha(i)S(i+1)_j}\right) \\
\leq& 2C_{\alpha}\tau \limsup_{n \to \infty} \left(\sup_{n \leq i } \abs{\alpha(i)S(i+1)_j}\right) \\
=& 0. \explain{by \eqref{eq: alpha s}}
\end{align}
Thus, $\forall \tau > 0$, $\forall x$,
\begin{align}
\limsup_n \sup_{-\tau \leq t_1 \leq t_2 \leq \tau} \norm{ \sum_{i = m(t(n) + t_1)}^{m(t(n) + t_2) - 1} \alpha(i) \left[H(x, Y_{i+1}) - h(x) \right] }    & = 0 \quad a.s. 
\end{align}
The proofs for \eqref{eq: L b property minus 0} and \eqref{eq: L property minus 0}
follow the same logic and thus are omitted.\\
\textbf{Case 2:}
Let Assumptions~\ref{assumption: stationary distribution},~\ref{assumption: alpha rate},~\ref{assumption: H Lipschitz}, and~\ref{assumption possion} hold.\\
By Assumption~\ref{assumption: H Lipschitz} and the equivalence between norms, we have
\begin{align}
\norm{H(x, y)}_2 \leq C\left(\norm{H(0, y)}_2 + L(y) \norm{x}_2\right)
\end{align}
for some constant $C$ independent of $x, y$.
So for any $x$,
\begin{align}
\sup_y \frac{\norm{H(x, y)}_2^2}{v(y)} \leq \sup_y \frac{2C^2\norm{H(0, y)}_2^2 + 2C^2 L(y)^2 \norm{x}_2^2}{v(y)} < \infty.
\end{align}
In other words, for any $x$,
\begin{align}
y \mapsto H(x, y) \in \fL^2_{v, \infty}.  
\end{align}
Similarly, we have for any $x$,
\begin{align}
y \mapsto L_b(y) \in \fL^2_{v, \infty}.  
\end{align}
Let $g$ denote any of the following functions:
\begin{align}
y \mapsto& H(x, y) \quad (\forall x), \\
y \mapsto& L_b(y) \quad (\forall x), \\
y \mapsto& L(y).
\end{align}
We now always have $g \in \fL^2_{v, \infty}$.
Proposition~6 of \citet{borkar2021ode} then confirms that 
\begin{align}
\label{eq possion average}
\sum_{i=0}^\infty \alpha(i)(g(Y_{i+1}) - \E_{y\sim{d_\fY}}\left[g(y)\right])
\end{align}
converges almost surely to a square-integrable random variable.
Lemma~\ref{lemma: H h 0} then follows immediately from the Cauchy convergence test.

\end{proof}

\subsection{Proof of Lemma~\ref{lemma: double limit 2}}\label{appendix: double limit 2}

To prove Lemma \ref{lemma: double limit 2}, we first decompose it into three terms. Then, we prove the convergence of each term in Lemmas \ref{lemma: three term 1}, \ref{lemma: three term 2}, \& \ref{lemma: three term 3}. Finally, we restate Lemma \ref{lemma: double limit 2} and connect everything.

For each $t$, let $\qty{\Delta_l}_{l = 1}^\infty $ be a strictly decreasing sequence of real numbers such that $\lim_{l\to\infty} \Delta_l = 0$ and $\forall l, \frac{t}{\Delta_l}  - 1 \in \N$,
e.g.,
$\Delta_l \doteq \frac{t}{l+1}$.
Because $\forall l$,
\begin{align}
&\sum_{i = m(T_{n_k})}^{m(T_{n_k} +t) -1 } \alpha(i)H_{r_{n_j}}(\hat{x}(t(i)),Y_{i+1}) 
= \sum_{a=0}^{\frac{t}{\Delta_l}  - 1}  \sum_{i = m(T_{n_k}+a\Delta_l)}^{m(T_{n_k} + a\Delta_l + \Delta_l)  - 1 } \alpha(i)H_{r_{n_j}}(\hat{x}(t(i)),Y_{i+1}), 
\end{align}
we have
\begin{align}
&\lim_{k \to \infty} \norm{\sum_{i = m(T_{n_k})}^{m(T_{n_k} +t) -1 } \alpha(i)H_{r_{n_j}}(\hat{x}(t(i)),Y_{i+1}) - \int_0^t h_{r_{n_j}} (\hat{x}^{\lim}(s))ds}  \label{eq: 3 limit 0}\\
=&\lim_{l \to \infty}  \lim_{k \to \infty} \norm{ \sum_{a=0}^{\frac{t}{\Delta_l}  - 1}  \sum_{i = m(T_{n_k}+a\Delta_l)}^{m(T_{n_k} + a\Delta_l + \Delta_l)  - 1 } \alpha(i)H_{r_{n_j}}(\hat{x}(t(i)),Y_{i+1})  - \int_0^t h_{r_{n_j}} (\hat{x}^{\lim}(s))ds} \\
\leq& \lim_{l \to \infty}  \lim_{k \to \infty}  \norm{\sum_{a=0}^{\frac{t}{\Delta_l}  - 1}  \sum_{i = m(T_{n_k}+a\Delta_l)}^{m(T_{n_k} + a\Delta_l + \Delta_l)  - 1 } \alpha(i)h_{r_{n_j}}(\hat{x}^{\lim}(a\Delta_l)) - \int_0^t h_{r_{n_j}} (\hat{x}^{\lim}(s))ds} \label{eq: 3 limit 1}  \\
&+  \lim_{l \to \infty}  \lim_{k \to \infty}  \norm{\sum_{a=0}^{\frac{t}{\Delta_l}  - 1}  \sum_{i = m(T_{n_k}+a\Delta_l)}^{m(T_{n_k} + a\Delta_l + \Delta_l)  - 1 } \alpha(i) \left(H_{r_{n_j}}(\hat{x}(t(i)),Y_{i+1})  - H_{r_{n_j}}(\hat{x}^{\lim}(a\Delta_l),Y_{i+1}) \right) }  \\ \label{eq: 3 limit 2}\\
&+  \lim_{l \to \infty}  \lim_{k \to \infty}  \norm{\sum_{a=0}^{\frac{t}{\Delta_l}  - 1}  \sum_{i = m(T_{n_k}+a\Delta_l)}^{m(T_{n_k} + a\Delta_l + \Delta_l)  - 1 } \alpha(i) \left(H_{r_{n_j}}(\hat{x}^{\lim}(a\Delta_l),Y_{i+1}) - h_{r_{n_j}}(\hat{x}^{\lim}(a\Delta_l)) \right)}. \\ 
\label{eq: 3 limit 3}
\end{align}
Now, we show the limit of \eqref{eq: 3 limit 1}, \eqref{eq: 3 limit 2}, and \eqref{eq: 3 limit 3} are $0$ in Lemmas \ref{lemma: three term 1}, \ref{lemma: three term 2}, and \ref{lemma: three term 3} with proofs in Appendix \ref{appendix: three term 1}, \ref{appendix: three term 2}, and \ref{appendix: three term 3}.
\begin{lemma}\label{lemma: three term 1}
$\forall j, \forall t\in [0,T),$
\begin{align}
\lim_{l \to \infty}  \lim_{k \to \infty} \norm{ \sum_{a=0}^{\frac{t}{\Delta_l}  - 1}  \sum_{i = m(T_{n_k}+a\Delta_l)}^{m(T_{n_k} + a\Delta_l + \Delta_l)  - 1 } \alpha(i)h_{r_{n_j}}(\hat{x}^{\lim}(a\Delta_l)) - \int_0^t  h_{r_{n_j}}(\hat{x}^{\lim}(s)) } = 0.
\end{align}
\end{lemma}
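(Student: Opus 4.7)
The plan is to observe that the quantity inside the norm is almost a Riemann sum, and then to take the two iterated limits in order. The key simplification is that the summand $h_{r_{n_j}}(\hat{x}^{\lim}(a\Delta_l))$ does not depend on $k$ or on $i$, so it is a scalar factor that can be pulled out of the inner sum.

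First I would take the inner limit $k \to \infty$. Since $\frac{t}{\Delta_l}$ is a fixed finite integer by construction of $\Delta_l$, the outer sum over $a$ is a finite sum and the limit commutes with it. Factoring $h_{r_{n_j}}(\hat{x}^{\lim}(a\Delta_l))$ out of $\sum_{i = m(T_{n_k}+a\Delta_l)}^{m(T_{n_k} + a\Delta_l + \Delta_l)  - 1 } \alpha(i)$, I would apply \eqref{eq: lim alpha t_1 t_2} from Lemma~\ref{lemma: T-minus}, which yields
\begin{align}
\lim_{k \to \infty}\sum_{i = m(T_{n_k}+a\Delta_l)}^{m(T_{n_k} + a\Delta_l + \Delta_l)  - 1} \alpha(i) = \Delta_l.
\end{align}
Thus the inner limit reduces the expression to
\begin{align}
\norm{\sum_{a=0}^{\frac{t}{\Delta_l} - 1} \Delta_l \, h_{r_{n_j}}(\hat{x}^{\lim}(a\Delta_l)) - \int_0^t h_{r_{n_j}}(\hat{x}^{\lim}(s))\, ds}.
\end{align}

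Next I would take the outer limit $l \to \infty$. The first term is precisely the left-endpoint Riemann sum of $s \mapsto h_{r_{n_j}}(\hat{x}^{\lim}(s))$ on $[0, t]$ with the uniform partition of mesh $\Delta_l$. The integrand is continuous: $\hat{x}^{\lim}$ is continuous on $[0, T)$ by Corollary~\ref{corollary: lim continuous}, and $h_{r_{n_j}}$ is Lipschitz by Lemma~\ref{lemma: h Lipschitz}. Since $[0, t]$ is compact and the integrand is continuous, it is Riemann integrable, and the Riemann sums converge to $\int_0^t h_{r_{n_j}}(\hat{x}^{\lim}(s))\, ds$ as $\Delta_l \to 0$. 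This drives the expression to $0$, completing the proof.

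I do not expect any real obstacle here: the entire argument is a routine Riemann-sum/continuity calculation, made clean by the fact that $h_{r_{n_j}}(\hat{x}^{\lim}(a\Delta_l))$ is independent of $k$. The only care needed is to justify passing $\lim_{k\to\infty}$ inside the finite sum over $a$, which is trivial, and to verify continuity of the integrand so that Riemann integrability applies.
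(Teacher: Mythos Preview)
Your proposal is correct and matches the paper's proof essentially step for step: factor out $h_{r_{n_j}}(\hat{x}^{\lim}(a\Delta_l))$, apply \eqref{eq: lim alpha t_1 t_2} to collapse the inner sum to $\Delta_l$, and then recognize the result as a Riemann sum converging to the integral. If anything, your justification of Riemann integrability (via continuity of $\hat{x}^{\lim}$ and Lipschitzness of $h_{r_{n_j}}$) is more explicit than the paper's, which simply writes ``by definition of integral.''
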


\begin{lemma}\label{lemma: three term 2}
$\forall j, \forall t\in [0,T),$
\begin{align}
\lim_{l \to \infty } \lim_{k \to \infty}   \norm{
 \sum_{a=0}^{\frac{t}{\Delta_l}  - 1}  \sum_{i = m(T_{n_k}+a\Delta_l)}^{m(T_{n_k} + a\Delta_l + \Delta_l)  - 1 } \alpha(i) \left(H_{r_{n_j}}(\hat{x}(t(i)),Y_{i+1})  - H_{r_{n_j}}(\hat{x}^{\lim}(a\Delta_l),Y_{i+1}) \right)}    = 0.
\end{align}  
\end{lemma}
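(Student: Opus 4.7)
\textbf{Proof proposal for Lemma~\ref{lemma: three term 2}.}

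The plan is to use the Lipschitz property of $H_c$ from Lemma~\ref{lemma: h Lipschitz}, then split the resulting error into (i) a piece that vanishes for each fixed $l$ as $k\to\infty$ by uniform convergence of $\hat{x}(T_{n_k}+\cdot)\to\hat{x}^{\lim}$, and (ii) a piece that vanishes as $l\to\infty$ by uniform continuity of $\hat{x}^{\lim}$, with the $\alpha(i)L(Y_{i+1})$ weights absorbed into the bound $C_H$ from Lemma~\ref{lemma: def C H}.

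First, I would apply~\eqref{eq: H c L} to observe that for every $i$ in the range $[m(T_{n_k}+a\Delta_l),\, m(T_{n_k}+a\Delta_l+\Delta_l)-1]$,
\begin{align}
\norm{H_{r_{n_j}}(\hat{x}(t(i)),Y_{i+1})-H_{r_{n_j}}(\hat{x}^{\lim}(a\Delta_l),Y_{i+1})} \leq L(Y_{i+1})\,\norm{\hat{x}(t(i))-\hat{x}^{\lim}(a\Delta_l)},
\end{align}
where crucially the right-hand side no longer involves $j$. Writing $s_i \doteq t(i)-T_{n_k}$, the triangle inequality gives
\begin{align}
\norm{\hat{x}(t(i))-\hat{x}^{\lim}(a\Delta_l)} \leq \norm{\hat{x}(T_{n_k}+s_i)-\hat{x}^{\lim}(s_i)} + \norm{\hat{x}^{\lim}(s_i)-\hat{x}^{\lim}(a\Delta_l)}.
\end{align}
For the indices considered, \eqref{eq: t-m-inequality-1} yields $s_i\in[a\Delta_l,(a+1)\Delta_l]\subseteq[0,t]\subset[0,T)$, so both $\hat{x}(T_{n_k}+s_i)$ and $\hat{x}^{\lim}(s_i)$ are well-defined inside the common domain $[0,T)$.

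Next, I would bound the full double sum by
\begin{align}
\Bigl(\sup_{\tau\in[0,t]}\norm{\hat{x}(T_{n_k}+\tau)-\hat{x}^{\lim}(\tau)}\Bigr)\sum_{i=m(T_{n_k})}^{m(T_{n_k}+t)-1}\alpha(i)L(Y_{i+1}) + \omega(\Delta_l)\sum_{i=m(T_{n_k})}^{m(T_{n_k}+t)-1}\alpha(i)L(Y_{i+1}),
\end{align}
where $\omega(\Delta_l) \doteq \sup\{\norm{\hat{x}^{\lim}(\tau)-\hat{x}^{\lim}(s)} : \tau,s\in[0,t],\,|\tau-s|\leq\Delta_l\}$ is the modulus of continuity of the continuous function $\hat{x}^{\lim}$ on the compact interval $[0,t]$. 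By~\eqref{eq: alpha L bound C_H} of Lemma~\ref{lemma: def C H}, the sum $\sum\alpha(i)L(Y_{i+1})$ is uniformly bounded by $C_H$. By the uniform convergence $\hat{x}(T_{n_k}+\cdot)\to\hat{x}^{\lim}$ of Lemma~\ref{lemma: three convergence}, the first factor tends to $0$ as $k\to\infty$ for each fixed $l$. Thus for each fixed $l$, the limsup as $k\to\infty$ of the full bound is at most $C_H\,\omega(\Delta_l)$. Finally, by uniform continuity of $\hat{x}^{\lim}$ on $[0,t]$, $\omega(\Delta_l)\to 0$ as $l\to\infty$ (since $\Delta_l\to 0$), concluding the proof.

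The only mild subtlety is making sure $s_i$ never leaves $[0,T)$, which relies on the specific choice $\Delta_l=t/(l+1)$ together with $t<T$, so that the outermost index $a=t/\Delta_l-1$ yields $(a+1)\Delta_l=t<T$. No step is genuinely hard; the structure simply combines the already-established uniform convergence of $\hat{x}(T_{n_k}+\cdot)$, continuity of $\hat{x}^{\lim}$, and the $\alpha L$ bound.
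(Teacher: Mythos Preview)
Your proposal is correct and follows essentially the same route as the paper: apply the Lipschitz bound~\eqref{eq: H c L}, control $\norm{\hat x(t(i))-\hat x^{\lim}(a\Delta_l)}$ via the uniform convergence of $\hat x(T_{n_k}+\cdot)\to\hat x^{\lim}$ together with the uniform continuity of $\hat x^{\lim}$ on $[0,t]$, and absorb the weights $\sum\alpha(i)L(Y_{i+1})$ into $C_H$ from~\eqref{eq: alpha L bound C_H}. One small imprecision to fix: your claim $s_i\in[a\Delta_l,(a+1)\Delta_l]$ fails at the lower endpoint since~\eqref{eq: t-m-inequality-1} gives $t(m(T_{n_k}+a\Delta_l))\leq T_{n_k}+a\Delta_l$, so in fact $|s_i-a\Delta_l|\leq \Delta_l+\alpha(m(T_{n_k}))$; this extra $\alpha$ term vanishes as $k\to\infty$ and changes nothing (the paper sidesteps the issue by invoking the piecewise-constant nature of $\hat x$ to shift the left endpoint to $T_{n_k}+a\Delta_l$ directly).
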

\begin{lemma}\label{lemma: three term 3}
$\forall j, \forall t\in [0,T),$
\begin{align}
\lim_{l \to \infty}  \lim_{k \to \infty}  \norm{ \sum_{a=0}^{\frac{t}{\Delta_l}  - 1}  \sum_{i = m(T_{n_k}+a\Delta_l)}^{m(T_{n_k} + a\Delta_l + \Delta_l)  - 1 } \alpha(i) \left(H_{r_{n_j}}(\hat{x}^{\lim}(a\Delta_l),Y_{i+1}) - h_{r_{n_j}}(\hat{x}^{\lim}(a\Delta_l)) \right) } = 0.    
\end{align}
\end{lemma}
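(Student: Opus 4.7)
\textbf{Proof plan for Lemma~\ref{lemma: three term 3}.} The key structural feature, in contrast to Lemma~\ref{lemma: three term 2}, is that the summand here depends on $k$ only through the range of summation: for fixed $a$, $l$, and $j$, the vector $\hat{x}^{\lim}(a\Delta_l)$ is a sample-path-dependent but $k$-independent point in $\mathbb{R}^d$, and the scalar $r_{n_j}$ is also $k$-independent. Using $H_{r_{n_j}}(x, y) = H(r_{n_j}x, y)/r_{n_j}$ and $h_{r_{n_j}}(x) = h(r_{n_j}x)/r_{n_j}$ (immediate from~\eqref{def: H c} and~\eqref{def: h c}), the summand equals $\frac{1}{r_{n_j}}\bigl[H(x_{a,l,j}, Y_{i+1}) - h(x_{a,l,j})\bigr]$, where $x_{a,l,j} \doteq r_{n_j}\hat{x}^{\lim}(a\Delta_l)$ is a fixed deterministic vector.

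With this reduction, I would apply Lemma~\ref{lemma: H h 0} (equation~\eqref{eq: H h minus 0}) directly with $x = x_{a,l,j}$. By the construction in~\eqref{def:T}, each $T_{n_k}$ is of the form $t(n_k')$ for a strictly increasing integer sequence $\qty{n_k'}$, so the $\limsup$ over the full sequence in Lemma~\ref{lemma: H h 0} restricts automatically to this subsequence. Taking $\tau = T$ and noting that $0 \leq a\Delta_l \leq a\Delta_l + \Delta_l \leq t < T$, the lemma yields, for each fixed $a$, $l$, $j$,
\begin{equation*}
\lim_{k \to \infty} \norm{\sum_{i = m(T_{n_k}+a\Delta_l)}^{m(T_{n_k}+a\Delta_l+\Delta_l)-1} \alpha(i)\bigl[H_{r_{n_j}}(\hat{x}^{\lim}(a\Delta_l), Y_{i+1}) - h_{r_{n_j}}(\hat{x}^{\lim}(a\Delta_l))\bigr]} = 0.
\end{equation*}

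To finish, I would observe that for fixed $l$ the outer summation over $a$ has exactly $t/\Delta_l$ terms, a finite number independent of $k$. A triangle inequality and interchange of this finite sum with the $k \to \infty$ limit then gives that the whole expression inside $\lim_{k\to\infty}$ converges to $0$ for each fixed $l$, after which $\lim_{l \to \infty}$ of the constant $0$ is trivially $0$. There is no substantive obstacle here: the whole difficulty is absorbed into Lemma~\ref{lemma: H h 0}, and the only mild bookkeeping is to verify (as above) that Lemma~\ref{lemma: H h 0} applies along the subsequence $\qty{T_{n_k}}$—which is immediate since $\qty{T_n} \subseteq \qty{t(n)}$ and the $\limsup$ statement holds for the full sequence.
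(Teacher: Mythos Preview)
Your proposal is correct and takes essentially the same approach as the paper: the paper cites the already-scaled statement~\eqref{eq: h property H h minus 0} (whose proof in Lemma~\ref{lemma: bounded-h0 inf y} is exactly the $H_{r_{n_j}}(x,y)=H(r_{n_j}x,y)/r_{n_j}$ reduction you carry out inline) to get the $k\to\infty$ limit for each fixed $a,l,j$, and then applies the triangle inequality over the finite sum in $a$ just as you do. Your observation that $\qty{T_n}\subseteq\qty{t(n)}$ so that the $\limsup$ in Lemma~\ref{lemma: H h 0} restricts to the subsequence is the only piece of bookkeeping not made explicit in the paper's one-line citation, and it is correct.
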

Plugging Lemmas \ref{lemma: three term 1}, \ref{lemma: three term 2}, and \ref{lemma: three term 3} back to \eqref{eq: 3 limit 0} completes the proof of Lemma~\ref{lemma: double limit 2}.

\subsection{Proof of Lemma~\ref{lemma: three term 1}}\label{appendix: three term 1}
\begin{proof}
$\forall j, \forall t\in [0,T),$
\begin{align}
&\lim_{l \to \infty}  \lim_{k \to \infty}  \sum_{a=0}^{\frac{t}{\Delta_l}  - 1}  \sum_{i = m(T_{n_k}+a\Delta_l)}^{m(T_{n_k} + a\Delta_l + \Delta_l)  - 1 } \alpha(i)h_{r_{n_j}}(\hat{x}^{\lim}(a\Delta_l)) \\
=& \lim_{l \to \infty} \sum_{a=0}^{\frac{t}{\Delta_l}  - 1}  h_{r_{n_j}}(\hat{x}^{\lim}(a\Delta_l)) \lim_{k \to \infty} \sum_{i = m(T_{n_k}+a\Delta_l)}^{m(T_{n_k} + a\Delta_l + \Delta_l)  - 1 } \alpha(i) \\
=& \lim_{l \to \infty} \sum_{a=0}^{\frac{t}{\Delta_l}  - 1}  h_{r_{n_j}}(\hat{x}^{\lim}(a\Delta_l)) \Delta_l \explain{by \eqref{eq: lim alpha t_1 t_2}} \\
=& \int_0^t  h_{r_{n_j}}(\hat{x}^{\lim}(s))  ds. \explain{by definition of integral}
\end{align}

Thus, $\forall j, \forall t\in [0,T),$
\begin{align}
&\lim_{l \to \infty}  \lim_{k \to \infty} \norm{ \sum_{a=0}^{\frac{t}{\Delta_l}  - 1}  \sum_{i = m(T_{n_k}+a\Delta_l)}^{m(T_{n_k} + a\Delta_l + \Delta_l)  - 1 } \alpha(i)h_{r_{n_j}}(\hat{x}^{\lim}(a\Delta_l)) - \int_0^t  h_{r_{n_j}}(\hat{x}^{\lim}(s)) }\\
=&\norm{\lim_{l \to \infty}  \lim_{k \to \infty}  \sum_{a=0}^{\frac{t}{\Delta_l}  - 1}  \sum_{i = m(T_{n_k}+a\Delta_l)}^{m(T_{n_k} + a\Delta_l + \Delta_l)  - 1 } \alpha(i)h_{r_{n_j}}(\hat{x}^{\lim}(a\Delta_l)) - \int_0^t  h_{r_{n_j}}(\hat{x}^{\lim}(s)) }\\
=&\norm{\int_0^t  h_{r_{n_j}}(\hat{x}^{\lim}(s)) - \int_0^t  h_{r_{n_j}}(\hat{x}^{\lim}(s)) } \\
=& 0. 
\end{align}

\end{proof}

\subsection{Proof of Lemma~\ref{lemma: three term 2}}\label{appendix: three term 2}
\begin{proof}
$\forall j, \forall t\in [0,T), \forall l$
\begin{align} 
&\lim_{k \to \infty } \norm{
\sum_{a=0}^{\frac{t}{\Delta_l}  - 1}  \sum_{i = m(T_{n_k}+a\Delta_l)}^{m(T_{n_k} + a\Delta_l + \Delta_l)  - 1 } \alpha(i) \left(H_{r_{n_j}}(\hat{x}(t(i)),Y_{i+1})  - H_{r_{n_j}}(\hat{x}^{\lim}(a\Delta_l),Y_{i+1}) \right)}    \\
\leq& \lim_{k \to \infty }
\sum_{a=0}^{\frac{t}{\Delta_l}  - 1}  \sum_{i = m(T_{n_k}+a\Delta_l)}^{m(T_{n_k} + a\Delta_l + \Delta_l)  - 1 } \alpha(i)  \norm{H_{r_{n_j}}(\hat{x}(t(i)),Y_{i+1})  - H_{r_{n_j}}(\hat{x}^{\lim}(a\Delta_l),Y_{i+1}) }  \\
\leq& \lim_{k \to \infty }\sum_{a=0}^{\frac{t}{\Delta_l}  - 1}  \sum_{i = m(T_{n_k}+a\Delta_l)}^{m(T_{n_k} + a\Delta_l + \Delta_l)  - 1 } \alpha(i) L(Y_{i+1}) \norm{ \hat{x}(t(i)) -  \hat{x}^{\lim}(a\Delta_l) }  \explain{by Assumption \ref{assumption: H Lipschitz}}   \\
\leq& \lim_{k \to \infty } \left[ \sup_{0 \leq a \leq \frac{t}{\Delta_l} - 1} \sup_{m(T_{n_k}+a\Delta_l) \leq i \leq m(T_{n_k} + a\Delta_l + \Delta_l)  - 1  }  \norm{ \hat{x}(t(i)) -  \hat{x}^{\lim}(a\Delta_l) }  \right]
\sum_{a=0}^{\frac{t}{\Delta_l}  - 1}  \sum_{i = m(T_{n_k}+a\Delta_l)}^{m(T_{n_k} + a\Delta_l + \Delta_l)  - 1 } \alpha(i) L(Y_{i+1}) \\
=& \lim_{k \to \infty } \left[ \sup_{0 \leq a \leq \frac{t}{\Delta_l} - 1} \sup_{m(T_{n_k}+a\Delta_l) \leq i \leq m(T_{n_k} + a\Delta_l + \Delta_l)  - 1  }  \norm{ \hat{x}(t(i)) -  \hat{x}^{\lim}(a\Delta_l) }  \right]
\sum_{i = m(T_{n_k})}^{m(T_{n_k} + t)  - 1 } \alpha(i) L(Y_{i+1}). \label{eq: term 2 1} 
\end{align}
We show the limit of the following term.
\begin{align}
&\lim_{k \to \infty } \left[ \sup_{0 \leq a \leq \frac{t}{\Delta_l} - 1} \sup_{m(T_{n_k}+a\Delta_l) \leq i \leq m(T_{n_k} + a\Delta_l + \Delta_l)  - 1  }  \norm{ \hat{x}(t(i)) -  \hat{x}^{\lim}(a\Delta_l) }  \right]   \\
=&  \lim_{k \to \infty } \left[ \sup_{0 \leq a \leq \frac{t}{\Delta_l} - 1} \sup_{ t(m(T_{n_k}+a\Delta_l)) \leq t(i) \leq  t(m(T_{n_k} + a\Delta_l + \Delta_l)  - 1)  }  \norm{ \hat{x}(t(i)) -  \hat{x}^{\lim}(a\Delta_l) }  \right]   \\
\leq&  \lim_{k \to \infty } \left[ \sup_{0 \leq a \leq \frac{t}{\Delta_l} - 1} \sup_{ t(m(T_{n_k}+a\Delta_l)) \leq \tau \leq  t(m(T_{n_k} + a\Delta_l + \Delta_l)  - 1)  }  \norm{ \hat{x}(\tau) -  \hat{x}^{\lim}(a\Delta_l) }  \right]   \\
=&  \lim_{k \to \infty } \left[ \sup_{0 \leq a \leq \frac{t}{\Delta_l} - 1} \sup_{ T_{n_k}+a\Delta_l  \leq \tau \leq  t(m(T_{n_k} + a\Delta_l + \Delta_l)  - 1)  }  \norm{ \hat{x}(\tau) -  \hat{x}^{\lim}(a\Delta_l) }  \right]   \explain{$\hat{x}$ is a constant function on interval $[t(m(T_{n_k} + a\Delta_l )), T_{n_k} + a\Delta_l ]$ by \eqref{def: bar x} and \eqref{def: hat x}}\\
\leq&  \lim_{k \to \infty } \left[ \sup_{0 \leq a \leq \frac{t}{\Delta_l} - 1} \sup_{ T_{n_k}+a\Delta_l  \leq \tau <  T_{n_k} + a\Delta_l + \Delta_l }  \norm{ \hat{x}(\tau) -  \hat{x}^{\lim}(a\Delta_l) }  \right]   \explain{by \eqref{eq: t-m-inequality-2}}\\
=&  \lim_{k \to \infty } \left[ \sup_{0 \leq a \leq \frac{t}{\Delta_l} - 1} \sup_{ 0 \leq \tau <   \Delta_l }  \norm{ \hat{x}(T_{n_k} + a\Delta_l +\tau) -  \hat{x}^{\lim}(a\Delta_l) }  \right].
\label{eq: term 2 2}
\end{align}
By \eqref{eq: hat x x r lim}, $\forall \delta > 0$, $\exists k_0$ such that $\forall k \geq k_0$, $\forall t \in [0,T)$,
\begin{align}
\norm{\hat{x}(T_{n_k} + t) - \hat{x}^{\lim}(t) } \leq \delta.   
\end{align}
$\forall t\in [0,T), \forall l,  \forall a, $ 
$\forall k \geq k_0$,
\begin{align}
& \abs{ \sup_{0 \leq a \leq \frac{t}{\Delta_l} - 1}  \sup_{ 0 \leq \tau <   \Delta_l }  \norm{ \hat{x}(T_{n_k} + a\Delta_l +\tau) -  \hat{x}^{\lim}(a\Delta_l) }  - \sup_{0 \leq a \leq \frac{t}{\Delta_l} - 1}  \sup_{0 \leq \tau   < \Delta_l } \norm{ \hat{x}^{\lim}(a\Delta_l+\tau) -  \hat{x}^{\lim}(a\Delta_l) } } \\  
\leq& \sup_{0 \leq a \leq \frac{t}{\Delta_l} - 1}  \sup_{ 0 \leq \tau <   \Delta_l } \abs{  \norm{ \hat{x}(T_{n_k} + a\Delta_l +\tau) -  \hat{x}^{\lim}(a\Delta_l) }  - \norm{ \hat{x}^{\lim}(a\Delta_l+\tau) -  \hat{x}^{\lim}(a\Delta_l) }  }  
\explain{by $\abs{\sup_x f(x) - \sup_x g(x)} \leq \sup_x \abs{f(x) - g(x)} $}
\\
\leq& \sup_{0 \leq a \leq \frac{t}{\Delta_l} - 1}  \sup_{ 0 \leq \tau <   \Delta_l }   \norm{ \hat{x}(T_{n_k} + a\Delta_l +\tau) -  \hat{x}^{\lim}(a\Delta_l)   -  \hat{x}^{\lim}(a\Delta_l+\tau) +  \hat{x}^{\lim}(a\Delta_l) }    \\  
\leq& \sup_{0 \leq a \leq \frac{t}{\Delta_l} - 1}  \sup_{ 0 \leq \tau <   \Delta_l }   \norm{ \hat{x}(T_{n_k} + a\Delta_l +\tau)    -  \hat{x}^{\lim}(a\Delta_l+\tau)  }   \\  
\leq& \sup_{0 \leq a \leq \frac{t}{\Delta_l} - 1}  \sup_{ 0 \leq \tau <   \Delta_l }    \delta    \\  
\leq&  \delta.  
\end{align}
Thus, $\forall t\in [0,T), \forall l,  \forall a,$
\begin{align}
\lim_{k \to \infty }  \sup_{0 \leq a \leq \frac{t}{\Delta_l} - 1}  \sup_{ 0 \leq \tau <   \Delta_l }  \norm{ \hat{x}(T_{n_k} + a\Delta_l +\tau) -  \hat{x}^{\lim}(a\Delta_l) } = \sup_{0 \leq a \leq \frac{t}{\Delta_l} - 1}  \sup_{0 \leq \tau   < \Delta_l } \norm{ \hat{x}^{\lim}(a\Delta_l+\tau) -  \hat{x}^{\lim}(a\Delta_l) }.
\end{align}
Therefore,
\begin{align}
&\lim_{k \to \infty } \left[ \sup_{0 \leq a \leq \frac{t}{\Delta_l} - 1} \sup_{m(T_{n_k}+a\Delta_l) \leq i \leq m(T_{n_k} + a\Delta_l + \Delta_l)  - 1  }  \norm{ \hat{x}(t(i)) -  \hat{x}^{\lim}(a\Delta_l) }  \right]  \\
=&  \lim_{k \to \infty }  \sup_{0 \leq a \leq \frac{t}{\Delta_l} - 1}  \sup_{ 0 \leq \tau <   \Delta_l }  \norm{ \hat{x}(T_{n_k} + a\Delta_l +\tau) -  \hat{x}^{\lim}(a\Delta_l) }  \explain{by \eqref{eq: term 2 2}} \\ 
= & \sup_{0 \leq a \leq \frac{t}{\Delta_l} - 1} \sup_{0 \leq \tau   < \Delta_l } \norm{ \hat{x}^{\lim}(a\Delta_l+\tau) -  \hat{x}^{\lim}(a\Delta_l) }. \label{eq: term 2 4}   
\end{align}
$\forall j, \forall t\in [0,T), \forall l$,
\begin{align} 
&\lim_{k \to \infty } \norm{
\sum_{a=0}^{\frac{t}{\Delta_l}  - 1}  \sum_{i = m(T_{n_k}+a\Delta_l)}^{m(T_{n_k} + a\Delta_l + \Delta_l)  - 1 } \alpha(i) \left(H_{r_{n_j}}(\hat{x}(t(i)),Y_{i+1})  - H_{r_{n_j}}(\hat{x}^{\lim}(a\Delta_l),Y_{i+1}) \right)}    \\
\leq& \lim_{k \to \infty } \left[ \sup_{0 \leq a \leq \frac{t}{\Delta_l} - 1} \sup_{m(T_{n_k}+a\Delta_l) \leq i \leq m(T_{n_k} + a\Delta_l + \Delta_l)  - 1  }  \norm{ \hat{x}(t(i)) -  \hat{x}^{\lim}(a\Delta_l) }  \right]
\sum_{i = m(T_{n_k})}^{m(T_{n_k} + t)  - 1 } \alpha(i) L(Y_{i+1}) \explain{by \eqref{eq: term 2 1}} \\
\leq& \lim_{k \to \infty } \left[ \sup_{0 \leq a \leq \frac{t}{\Delta_l} - 1} \sup_{m(T_{n_k}+a\Delta_l) \leq i \leq m(T_{n_k} + a\Delta_l + \Delta_l)  - 1  }  \norm{ \hat{x}(t(i)) -  \hat{x}^{\lim}(a\Delta_l) }  \right] \\
&\limsup_{k \to \infty }  \left[\sum_{i = m(T_{n_k})}^{m(T_{n_k} + t)  - 1 } \alpha(i) L(Y_{i+1})   \right] \\
\leq& \lim_{k \to \infty } \left[ \sup_{0 \leq a \leq \frac{t}{\Delta_l} - 1} \sup_{m(T_{n_k}+a\Delta_l) \leq i \leq m(T_{n_k} + a\Delta_l + \Delta_l)  - 1  }  \norm{ \hat{x}(t(i)) -  \hat{x}^{\lim}(a\Delta_l) }  \right] C_H \explain{by \eqref{eq: alpha L bound C_H}}\\
=& \explaind{C_H  \sup_{0 \leq a \leq \frac{t}{\Delta_l} - 1} \sup_{0 \leq \tau   < \Delta_l } \norm{ \hat{x}^{\lim}(a\Delta_l+\tau) -  \hat{x}^{\lim}(a\Delta_l) }. }{by \eqref{eq: term 2 4}}  \label{eq: term 2 6}
\end{align}
By Corollary \ref{corollary: lim continuous}, $\hat{x}^{\lim}$ is continuous and $[0,t]$ is a compact set, $\forall \epsilon >0, \exists \eta$ such that
\begin{align}
\sup_{0\leq |t_1-t_2| \leq \eta, t_1 \in [0,t],t_2 \in [0,t]  } \norm{\hat{x}^{\lim}(t_1) - \hat{x}^{\lim}(t_2)} \leq \epsilon.   \label{eq: term 2 5} 
\end{align}
Thus, $\forall \epsilon >0$, $\exists l_0$ such that $\forall l \geq l_0, \Delta_{l} \leq \eta$ and we will have
\begin{align}
0\leq \sup_{0 \leq a \leq \frac{t}{\Delta_l} - 1} \sup_{0 \leq \tau   < \Delta_l } \norm{ \hat{x}^{\lim}(a\Delta_l+\tau) -  \hat{x}^{\lim}(a\Delta_l) } \leq \epsilon. \explain{by \eqref{eq: term 2 5}}
\end{align}
Therefore, $\forall t,$
\begin{align}
\lim_{l \rightarrow \infty} \left[ \sup_{0 \leq a \leq \frac{t}{\Delta_l} - 1} \sup_{0 \leq \tau   < \Delta_l } \norm{ \hat{x}^{\lim}(a\Delta_l+\tau) -  \hat{x}^{\lim}(a\Delta_l) }   \right]  = 0  \label{eq: term 2 7}.
\end{align}
This concludes
$\forall j, \forall t\in [0,T),$
\begin{align}
&\lim_{l \to \infty } \lim_{k \to \infty}   \norm{
 \sum_{a=0}^{\frac{t}{\Delta_l}  - 1}  \sum_{i = m(T_{n_k}+a\Delta_l)}^{m(T_{n_k} + a\Delta_l + \Delta_l)  - 1 } \alpha(i) \left(H_{r_{n_j}}(\hat{x}(t(i)),Y_{i+1})  - H_{r_{n_j}}(\hat{x}^{\lim}(a\Delta_l),Y_{i+1}) \right)}   \\
=& \lim_{l \to \infty } C_H  \sup_{0 \leq a \leq \frac{t}{\Delta_l} - 1} \sup_{0 \leq \tau   < \Delta_l } \norm{ \hat{x}^{\lim}(a\Delta_l+\tau) -  \hat{x}^{\lim}(a\Delta_l) }  \explain{by \eqref{eq: term 2 6}} \\ 
=&C_H  \cdot 0 \explain{by \eqref{eq: term 2 7}}\\
=& 0. 
\end{align}

\end{proof}

\subsection{Proof of Lemma~\ref{lemma: three term 3}}\label{appendix: three term 3}
\begin{proof}
By \eqref{eq: h property H h minus 0}, $\forall j, \forall a,  \forall l$,
\begin{align}
\lim_{k \to \infty}  \norm{ \sum_{i = m(T_{n_k} + a\Delta_l)}^{m(T_{n_k} + a\Delta_l + \Delta_l)  - 1 } \alpha(i) \left[H_{r_{n_j}}(\hat{x}^{\lim}(a\Delta_l),Y_{i+1}) - h_{r_{n_j}}(\hat{x}^{\lim}(a\Delta_l)) \right] } = 0. \label{eq: term 3 3}
\end{align}
Thus,$\forall j, \forall t\in [0,T),$
\begin{align}
&\lim_{l \to \infty}  \lim_{k \to \infty}  \norm{ \sum_{a=0}^{\frac{t}{\Delta_l}  - 1}  \sum_{i = m(T_{n_k}+a\Delta_l)}^{m(T_{n_k} + a\Delta_l + \Delta_l)  - 1 } \alpha(i) \left[H_{r_{n_j}}(\hat{x}^{\lim}(a\Delta_l),Y_{i+1}) - h_{r_{n_j}}(\hat{x}^{\lim}(a\Delta_l)) \right]} \\
\leq& \lim_{l \to \infty}   \sum_{a=0}^{\frac{t}{\Delta_l}  - 1}  \lim_{k \to \infty} \norm{\sum_{i = m(T_{n_k}+a\Delta_l)}^{m(T_{n_k} + a\Delta_l + \Delta_l)  - 1 } \alpha(i) \left[H_{r_{n_j}}(\hat{x}^{\lim}(a\Delta_l),Y_{i+1}) - h_{r_{n_j}}(\hat{x}^{\lim}(a\Delta_l)) \right]} \\
=& \lim_{l \to \infty}   \sum_{a=0}^{\frac{t}{\Delta_l}  - 1}  0 \explain{by \eqref{eq: term 3 3}}\\
=&  0.
\end{align}

\end{proof}

\clearpage
\bibliography{bibliography}

\begin{thebibliography}{96}
\providecommand{\natexlab}[1]{#1}
\providecommand{\url}[1]{\texttt{#1}}
\expandafter\ifx\csname urlstyle\endcsname\relax
  \providecommand{\doi}[1]{doi: #1}\else
  \providecommand{\doi}{doi: \begingroup \urlstyle{rm}\Url}\fi

\bibitem[Abounadi et~al.(2001)Abounadi, Bertsekas, and Borkar]{abounadi2001learning}
Jinane Abounadi, Dimitri Bertsekas, and Vivek~S Borkar.
\newblock Learning algorithms for markov decision processes with average cost.
\newblock \emph{{SIAM} Journal on Control and Optimization}, 2001.

\bibitem[Baird(1995)]{baird1995residual}
Leemon~C. Baird.
\newblock Residual algorithms: Reinforcement learning with function approximation.
\newblock In \emph{Proceedings of the International Conference on Machine Learning}, 1995.

\bibitem[Barto and Sutton(1981{\natexlab{a}})]{barto1981goal}
Andrew~G Barto and Richard~S Sutton.
\newblock Goal seeking components for adaptive intelligence: An initial assessment. air force wright aeronautical laboratories.
\newblock Technical report, Avionics Laboratory Technical Report AFWAL-TR-81-1070, Wright-Patterson AFB, 1981{\natexlab{a}}.

\bibitem[Barto and Sutton(1981{\natexlab{b}})]{barto1981landmark}
Andrew~G Barto and Richard~S Sutton.
\newblock Landmark learning: An illustration of associative search.
\newblock \emph{Biological Cybernetics}, 1981{\natexlab{b}}.

\bibitem[Barto et~al.(1983)Barto, Sutton, and Anderson]{barto1983neuronlike}
Andrew~G Barto, Richard~S Sutton, and Charles~W Anderson.
\newblock Neuronlike adaptive elements that can solve difficult learning control problems.
\newblock \emph{IEEE Transactions on Systems, Man, and Cybernetics}, 1983.

\bibitem[Benveniste et~al.(1990)Benveniste, M{\'{e}}tivier, and Priouret]{benveniste1990MP}
Albert Benveniste, Michel M{\'{e}}tivier, and Pierre Priouret.
\newblock \emph{Adaptive Algorithms and Stochastic Approximations}.
\newblock Springer, 1990.

\bibitem[Bertsekas and Tsitsiklis(1996)]{bertsekas1996neuro}
Dimitri~P Bertsekas and John~N Tsitsiklis.
\newblock \emph{Neuro-Dynamic Programming}.
\newblock Athena Scientific Belmont, MA, 1996.

\bibitem[Bertsekas and Yu(2009)]{bertsekas2009projected}
Dimitri~P Bertsekas and Huizhen Yu.
\newblock Projected equation methods for approximate solution of large linear systems.
\newblock \emph{Journal of Computational and Applied Mathematics}, 2009.

\bibitem[Bhatnagar(2011)]{bhatnagar2011borkar}
Shalabh Bhatnagar.
\newblock The borkar-meyn theorem for asynchronous stochastic approximations.
\newblock \emph{Systems \& Control Letters}, 2011.

\bibitem[Billingsley(1999)]{bil1999convergence}
Patrick Billingsley.
\newblock \emph{Convergence of probability measures}.
\newblock Wiley Series in Probability and Statistics: Probability and Statistics, 1999.

\bibitem[Blaser and Zhang(2024)]{blaser2024almost}
Ethan Blaser and Shangtong Zhang.
\newblock Asymptotic and finite sample analysis of nonexpansive stochastic approximations with markovian noise.
\newblock \emph{arXiv preprint arXiv:2409.19546}, 2024.

\bibitem[Borkar et~al.(2021)Borkar, Chen, Devraj, Kontoyiannis, and Meyn]{borkar2021ode}
Vivek Borkar, Shuhang Chen, Adithya Devraj, Ioannis Kontoyiannis, and Sean Meyn.
\newblock The ode method for asymptotic statistics in stochastic approximation and reinforcement learning.
\newblock \emph{arXiv preprint arXiv:2110.14427}, 2021.

\bibitem[Borkar(2009)]{borkar2009stochastic}
Vivek~S Borkar.
\newblock \emph{Stochastic approximation: a dynamical systems viewpoint}.
\newblock Springer, 2009.

\bibitem[Borkar and Meyn(2000)]{borkar2000ode}
Vivek~S Borkar and Sean~P Meyn.
\newblock The ode method for convergence of stochastic approximation and reinforcement learning.
\newblock \emph{SIAM Journal on Control and Optimization}, 2000.

\bibitem[Chen et~al.(2024)Chen, Liu, and Zhang]{chen2024efficient}
Claire Chen, Shuze Liu, and Shangtong Zhang.
\newblock Efficient policy evaluation with safety constraint for reinforcement learning.
\newblock \emph{arXiv preprint arXiv:2410.05655}, 2024.

\bibitem[Dai(1995)]{dai1995positive}
Jim~G Dai.
\newblock On positive harris recurrence of multiclass queueing networks: a unified approach via fluid limit models.
\newblock \emph{The Annals of Applied Probability}, 1995.

\bibitem[Dai and Meyn(1995)]{dai1995stability}
Jim~G Dai and Sean~P Meyn.
\newblock Stability and convergence of moments for multiclass queueing networks via fluid limit models.
\newblock \emph{IEEE Transactions on Automatic Control}, 1995.

\bibitem[Dulac-Arnold et~al.(2019)Dulac-Arnold, Mankowitz, and Hester]{dulac2019challenges}
Gabriel Dulac-Arnold, Daniel Mankowitz, and Todd Hester.
\newblock Challenges of real-world reinforcement learning.
\newblock \emph{arXiv preprint arXiv:1904.12901}, 2019.

\bibitem[Dunford and Schwartz(1988)]{dunford1988linear}
Nelson Dunford and Jacob~T Schwartz.
\newblock \emph{Linear operators, part 1: general theory}.
\newblock John Wiley \& Sons, 1988.

\bibitem[Fort et~al.(2008)Fort, Meyn, Moulines, and Priouret]{fort2008stability}
Gersende Fort, Sean Meyn, Eric Moulines, and Pierre Priouret.
\newblock {The ODE method for stability of skip-free Markov chains with applications to MCMC}.
\newblock \emph{The Annals of Applied Probability}, 2008.

\bibitem[Ghiassian et~al.(2020)Ghiassian, Patterson, Garg, Gupta, White, and White]{ghiassian2020gradient}
Sina Ghiassian, Andrew Patterson, Shivam Garg, Dhawal Gupta, Adam White, and Martha White.
\newblock Gradient temporal-difference learning with regularized corrections.
\newblock In \emph{International Conference on Machine Learning}, 2020.

\bibitem[Graves et~al.(2023)Graves, Imani, Kumaraswamy, and White]{graves2023off}
Eric Graves, Ehsan Imani, Raksha Kumaraswamy, and Martha White.
\newblock Off-policy actor-critic with emphatic weightings.
\newblock \emph{Journal of Machine Learning Research}, 2023.

\bibitem[Guan et~al.(2021)Guan, Xu, and Liang]{guan2021per}
Ziwei Guan, Tengyu Xu, and Yingbin Liang.
\newblock Per-etd: A polynomially efficient emphatic temporal difference learning method.
\newblock \emph{arXiv preprint arXiv:2110.06906}, 2021.

\bibitem[Hallak et~al.(2016)Hallak, Tamar, Munos, and Mannor]{hallak2016generalized}
Assaf Hallak, Aviv Tamar, R{\'{e}}mi Munos, and Shie Mannor.
\newblock Generalized emphatic temporal difference learning: Bias-variance analysis.
\newblock In \emph{Proceedings of the {AAAI} Conference on Artificial Intelligence}, 2016.

\bibitem[Horn and Johnson(1991)]{horn_johnson_1991}
Roger~A. Horn and Charles~R. Johnson.
\newblock \emph{Topics in Matrix Analysis}.
\newblock Cambridge University Press, 1991.

\bibitem[Imani et~al.(2018)Imani, Graves, and White]{imani2018off}
Ehsan Imani, Eric Graves, and Martha White.
\newblock An off-policy policy gradient theorem using emphatic weightings.
\newblock In \emph{Advances in Neural Information Processing Systems}, 2018.

\bibitem[Jiang et~al.(2021)Jiang, Zahavy, White, Xu, Hessel, Blundell, and {van Hasselt}]{jiang2021}
Ray Jiang, Tom Zahavy, Adam White, Zhongwen Xu, Matteo Hessel, Charles Blundell, and Hado {van Hasselt}.
\newblock Emphatic algorithms for deep reinforcement learning.
\newblock In \emph{Proceedings of the International Conference on Machine Learning}, 2021.

\bibitem[Jiang et~al.(2022)Jiang, Zhang, Chelu, White, and van Hasselt]{DBLP:journals/corr/abs-2107-05405}
Ray Jiang, Shangtong Zhang, Veronica Chelu, Adam White, and Hado van Hasselt.
\newblock Learning expected emphatic traces for deep {RL}.
\newblock In \emph{Proceedings of the {AAAI} Conference on Artificial Intelligence}, 2022.

\bibitem[Kern(2023)]{Kern2023Skorokhod}
Julian Kern.
\newblock Skorokhod topologies: What they are and why we should care.
\newblock \emph{Mathematische Semesterberichte}, 2023.

\bibitem[Khalil(2002)]{khalil2002nonlinear}
Hassan~K. Khalil.
\newblock \emph{Nonlinear Systems}.
\newblock Prentice Hall, 2002.

\bibitem[Kiefer and Wolfowitz(1952)]{kiefer1952Stochastic}
J.~Kiefer and J.~Wolfowitz.
\newblock Stochastic estimation of the maximum of a regression function.
\newblock \emph{Annals of Mathematical Statistics}, 1952.

\bibitem[Klopf(1972)]{klopf1972brain}
A~Harry Klopf.
\newblock \emph{Brain function and adaptive systems: a heterostatic theory}.
\newblock Air Force Cambridge Research Laboratories, Air Force Systems Command, United States Air Force, 1972.

\bibitem[Konda and Tsitsiklis(1999)]{konda2000actor}
Vijay~R. Konda and John~N. Tsitsiklis.
\newblock Actor-critic algorithms.
\newblock In \emph{Advances in Neural Information Processing Systems}, 1999.

\bibitem[Kushner and Yin(2003)]{kushner2003stochastic}
Harold Kushner and G~George Yin.
\newblock \emph{Stochastic approximation and recursive algorithms and applications}.
\newblock Springer Science \& Business Media, 2003.

\bibitem[Lakshminarayanan and Bhatnagar(2017)]{lakshminarayanan2017stability}
Chandrashekar Lakshminarayanan and Shalabh Bhatnagar.
\newblock A stability criterion for two timescale stochastic approximation schemes.
\newblock \emph{Automatica}, 2017.

\bibitem[Lin(1992)]{lin1992self}
Long~Ji Lin.
\newblock Self-improving reactive agents based on reinforcement learning, planning and teaching.
\newblock \emph{Machine Learning}, 1992.

\bibitem[Liu et~al.(2015)Liu, Liu, Ghavamzadeh, Mahadevan, and Petrik]{liu2015finite}
Bo~Liu, Ji~Liu, Mohammad Ghavamzadeh, Sridhar Mahadevan, and Marek Petrik.
\newblock Finite-sample analysis of proximal gradient {TD} algorithms.
\newblock In \emph{Proceedings of the Conference on Uncertainty in Artificial Intelligence}, 2015.

\bibitem[Liu et~al.(2018)Liu, Li, Tang, and Zhou]{liu2018breaking}
Qiang Liu, Lihong Li, Ziyang Tang, and Dengyong Zhou.
\newblock Breaking the curse of horizon: Infinite-horizon off-policy estimation.
\newblock In \emph{Advances in Neural Information Processing Systems}, 2018.

\bibitem[Liu and Zhang(2024)]{liu2024efficient}
Shuze Liu and Shangtong Zhang.
\newblock Efficient policy evaluation with offline data informed behavior policy design.
\newblock In \emph{Proceedings of the International Conference on Machine Learning}, 2024.

\bibitem[Liu et~al.(2024{\natexlab{a}})Liu, Chen, and Zhang]{liu2024doubly}
Shuze Liu, Claire Chen, and Shangtong Zhang.
\newblock Doubly optimal policy evaluation for reinforcement learning.
\newblock \emph{arXiv preprint arXiv:2410.02226}, 2024{\natexlab{a}}.

\bibitem[Liu et~al.(2024{\natexlab{b}})Liu, Chen, and Zhang]{liu2024efficientmul}
Shuze Liu, Yuxin Chen, and Shangtong Zhang.
\newblock Efficient multi-policy evaluation for reinforcement learning.
\newblock \emph{arXiv preprint arXiv:2408.08706}, 2024{\natexlab{b}}.

\bibitem[Liu et~al.(2025)Liu, Xie, and Zhang]{liu2025linear}
Xinyu Liu, Zixuan Xie, and Shangtong Zhang.
\newblock Linear ${Q}$-learning does not diverge: Convergence rates to a bounded set.
\newblock \emph{arXiv preprint arXiv:2501.19254}, 2025.

\bibitem[Maei et~al.(2009)Maei, Szepesvari, Bhatnagar, Precup, Silver, and Sutton]{maei2009convergent}
Hamid Maei, Csaba Szepesvari, Shalabh Bhatnagar, Doina Precup, David Silver, and Richard~S Sutton.
\newblock Convergent temporal-difference learning with arbitrary smooth function approximation.
\newblock \emph{Advances in Neural Information Processing Systems}, 2009.

\bibitem[Maei(2011)]{maei2011gradient}
Hamid~Reza Maei.
\newblock \emph{Gradient temporal-difference learning algorithms}.
\newblock PhD thesis, University of Alberta, 2011.

\bibitem[Maei(2018)]{maei2018convergent}
Hamid~Reza Maei.
\newblock Convergent actor-critic algorithms under off-policy training and function approximation.
\newblock \emph{arXiv preprint arXiv:1802.07842}, 2018.

\bibitem[Maei and Sutton(2010)]{maei2010gq}
Hamid~Reza Maei and Richard~S Sutton.
\newblock {GQ} ($\lambda$): A general gradient algorithm for temporal-difference prediction learning with eligibility traces.
\newblock In \emph{Proceedings of the Conference on Artificial General Intelligence}, 2010.

\bibitem[Maei et~al.(2010)Maei, Szepesv{\'{a}}ri, Bhatnagar, and Sutton]{maei2010toward}
Hamid~Reza Maei, Csaba Szepesv{\'{a}}ri, Shalabh Bhatnagar, and Richard~S. Sutton.
\newblock Toward off-policy learning control with function approximation.
\newblock In \emph{Proceedings of the International Conference on Machine Learning}, 2010.

\bibitem[Mahadevan et~al.(2014)Mahadevan, Liu, Thomas, Dabney, Giguere, Jacek, Gemp, and Liu]{mahadevan2014proximal}
Sridhar Mahadevan, Bo~Liu, Philip~S. Thomas, William Dabney, Stephen Giguere, Nicholas Jacek, Ian Gemp, and Ji~Liu.
\newblock Proximal reinforcement learning: {A} new theory of sequential decision making in primal-dual spaces.
\newblock \emph{arXiv preprint arXiv:1405.6757}, 2014.

\bibitem[Mahmood et~al.(2017)Mahmood, Yu, and Sutton]{mahmood2017multi}
Ashique~Rupam Mahmood, Huizhen Yu, and Richard~S Sutton.
\newblock Multi-step off-policy learning without importance sampling ratios.
\newblock \emph{arXiv preprint arXiv:1702.03006}, 2017.

\bibitem[Mathieu et~al.(2023)Mathieu, Ozair, Srinivasan, Gulcehre, Zhang, Jiang, Paine, Powell, {\.Z}o{\l}na, Schrittwieser, et~al.]{mathieu2023alphastar}
Micha{\"e}l Mathieu, Sherjil Ozair, Srivatsan Srinivasan, Caglar Gulcehre, Shangtong Zhang, Ray Jiang, Tom~Le Paine, Richard Powell, Konrad {\.Z}o{\l}na, Julian Schrittwieser, et~al.
\newblock Alphastar unplugged: Large-scale offline reinforcement learning.
\newblock \emph{arXiv preprint arXiv:2308.03526}, 2023.

\bibitem[Meyn(2008)]{meyn2008control}
Sean Meyn.
\newblock \emph{Control techniques for complex networks}.
\newblock Cambridge University Press, 2008.

\bibitem[Meyn(2022)]{meyn2022control}
Sean Meyn.
\newblock \emph{Control systems and reinforcement learning}.
\newblock Cambridge University Press, 2022.

\bibitem[Meyn and Tweedie(2012)]{meyn2012markov}
Sean~P Meyn and Richard~L Tweedie.
\newblock \emph{Markov chains and stochastic stability}.
\newblock Springer Science \& Business Media, 2012.

\bibitem[Molchanov and Pyatnitskiy(1989)]{molchanov1989criteria}
Alexander~P Molchanov and Ye~S Pyatnitskiy.
\newblock Criteria of asymptotic stability of differential and difference inclusions encountered in control theory.
\newblock \emph{Systems \& Control Letters}, 1989.

\bibitem[Nachum et~al.(2019)Nachum, Chow, Dai, and Li]{nachum2019dualdice}
Ofir Nachum, Yinlam Chow, Bo~Dai, and Lihong Li.
\newblock Dualdice: Behavior-agnostic estimation of discounted stationary distribution corrections.
\newblock In \emph{Advances in Neural Information Processing Systems}, 2019.

\bibitem[Precup et~al.(2000)Precup, Sutton, and Singh]{precup2000eligibility}
Doina Precup, Richard~S. Sutton, and Satinder~P. Singh.
\newblock Eligibility traces for off-policy policy evaluation.
\newblock In \emph{Proceedings of the International Conference on Machine Learning}, 2000.

\bibitem[Precup et~al.(2001)Precup, Sutton, and Dasgupta]{precup2001off}
Doina Precup, Richard~S. Sutton, and Sanjoy Dasgupta.
\newblock Off-policy temporal difference learning with function approximation.
\newblock In \emph{Proceedings of the International Conference on Machine Learning}, 2001.

\bibitem[Qian and Zhang(2025)]{qian2025revisiting}
Xiaochi Qian and Shangtong Zhang.
\newblock Revisiting a design choice in gradient temporal difference learning.
\newblock In \emph{Proceedings of the International Conference on Learning Representations}, 2025.

\bibitem[Qian et~al.(2024)Qian, Xie, Liu, and Zhang]{qian2024almost}
Xiaochi Qian, Zixuan Xie, Xinyu Liu, and Shangtong Zhang.
\newblock Almost sure convergence rates and concentration of stochastic approximation and reinforcement learning with markovian noise.
\newblock \emph{arXiv preprint arXiv:2411.13711}, 2024.

\bibitem[Ramaswamy and Bhatnagar(2017)]{ramaswamy2017generalization}
Arunselvan Ramaswamy and Shalabh Bhatnagar.
\newblock A generalization of the borkar-meyn theorem for stochastic recursive inclusions.
\newblock \emph{Mathematics of Operations Research}, 2017.

\bibitem[Ramaswamy and Bhatnagar(2018)]{ramaswamy2018stability}
Arunselvan Ramaswamy and Shalabh Bhatnagar.
\newblock Stability of stochastic approximations with ``controlled markov'' noise and temporal difference learning.
\newblock \emph{IEEE Transactions on Automatic Control}, 2018.

\bibitem[Robbins and Monro(1951)]{robbins1951stochastic}
Herbert Robbins and Sutton Monro.
\newblock A stochastic approximation method.
\newblock \emph{The Annals of Mathematical Statistics}, 1951.

\bibitem[Royden and Fitzpatrick(1968)]{royden1968real}
Halsey~Lawrence Royden and Patrick Fitzpatrick.
\newblock \emph{Real analysis}.
\newblock Macmillan New York, 1968.

\bibitem[Rummery and Niranjan(1994)]{rummery1994line}
Gavin~A Rummery and Mahesan Niranjan.
\newblock \emph{On-line Q-learning using connectionist systems}.
\newblock University of Cambridge, Department of Engineering Cambridge, UK, 1994.

\bibitem[Seijen and Sutton(2014)]{seijen2014true}
Harm Seijen and Rich Sutton.
\newblock True online {TD}($\lambda$).
\newblock In \emph{Proceedings of the International Conference on Machine Learning}, 2014.

\bibitem[Skorokhod(1956)]{Skorokhod1956limit}
Anatolii~V. Skorokhod.
\newblock Limit theorems for stochastic processes.
\newblock \emph{Theory of Probability \& Its Applicationss}, 1956.

\bibitem[Sutton et~al.(2014)Sutton, Mahmood, Precup, and Hasselt]{sutton2014new}
Richard Sutton, Ashique~Rupam Mahmood, Doina Precup, and Hado Hasselt.
\newblock A new {Q}($\lambda$) with interim forward view and monte carlo equivalence.
\newblock In \emph{International Conference on Machine Learning}, 2014.

\bibitem[Sutton(1978)]{sutton1978single}
Richard~S Sutton.
\newblock Single channel theory: A neuronal theory of learning.
\newblock \emph{Brain Theory Newsletter}, 1978.

\bibitem[Sutton(1984)]{sutton1984temporal}
Richard~S. Sutton.
\newblock \emph{Temporal credit assignment in reinforcement learning}.
\newblock University of Massachusetts Amherst, 1984.

\bibitem[Sutton(1988)]{sutton1988learning}
Richard~S. Sutton.
\newblock Learning to predict by the methods of temporal differences.
\newblock \emph{Machine Learning}, 1988.

\bibitem[Sutton and Barto(2018)]{sutton2018reinforcement}
Richard~S Sutton and Andrew~G Barto.
\newblock \emph{Reinforcement Learning: An Introduction (2nd Edition)}.
\newblock MIT press, 2018.

\bibitem[Sutton et~al.(2008)Sutton, Szepesv{\'{a}}ri, and Maei]{sutton2009convergent}
Richard~S. Sutton, Csaba Szepesv{\'{a}}ri, and Hamid~Reza Maei.
\newblock A convergent {O}(n) temporal-difference algorithm for off-policy learning with linear function approximation.
\newblock In \emph{Advances in Neural Information Processing Systems}, 2008.

\bibitem[Sutton et~al.(2009)Sutton, Maei, Precup, Bhatnagar, Silver, Szepesv{\'{a}}ri, and Wiewiora]{sutton2009fast}
Richard~S. Sutton, Hamid~Reza Maei, Doina Precup, Shalabh Bhatnagar, David Silver, Csaba Szepesv{\'{a}}ri, and Eric Wiewiora.
\newblock Fast gradient-descent methods for temporal-difference learning with linear function approximation.
\newblock In \emph{Proceedings of the International Conference on Machine Learning}, 2009.

\bibitem[Sutton et~al.(2011)Sutton, Modayil, Delp, Degris, Pilarski, White, and Precup]{sutton2011horde}
Richard~S. Sutton, Joseph Modayil, Michael Delp, Thomas Degris, Patrick~M. Pilarski, Adam White, and Doina Precup.
\newblock Horde: a scalable real-time architecture for learning knowledge from unsupervised sensorimotor interaction.
\newblock In \emph{Proceedings of the International Conference on Autonomous Agents and Multiagent Systems}, 2011.

\bibitem[Sutton et~al.(2016)Sutton, Mahmood, and White]{sutton2016emphatic}
Richard~S. Sutton, Ashique~Rupam Mahmood, and Martha White.
\newblock An emphatic approach to the problem of off-policy temporal-difference learning.
\newblock \emph{Journal of Machine Learning Research}, 2016.

\bibitem[Tadic(2001)]{tadic2001convergence}
Vladislav Tadic.
\newblock On the convergence of temporal-difference learning with linear function approximation.
\newblock \emph{Machine Learning}, 2001.

\bibitem[Touati et~al.(2018)Touati, Bacon, Precup, and Vincent]{touati2018convergent}
Ahmed Touati, Pierre-Luc Bacon, Doina Precup, and Pascal Vincent.
\newblock Convergent tree backup and retrace with function approximation.
\newblock In \emph{Proceedings of the International Conference on Machine Learning}, 2018.

\bibitem[Tsitsiklis and Roy(1996)]{tsitsiklis1997analysis}
John~N. Tsitsiklis and Benjamin~Van Roy.
\newblock Analysis of temporal-diffference learning with function approximation.
\newblock In \emph{IEEE Transactions on Automatic Control}, 1996.

\bibitem[Wan et~al.(2021)Wan, Naik, and Sutton]{wan2020learning}
Yi~Wan, Abhishek Naik, and Richard~S. Sutton.
\newblock Learning and planning in average-reward markov decision processes.
\newblock In \emph{Proceedings of the International Conference on Machine Learning}, 2021.

\bibitem[Wang and Zhang(2024)]{wang2024almost}
Jiuqi Wang and Shangtong Zhang.
\newblock Almost sure convergence of linear temporal difference learning with arbitrary features.
\newblock \emph{arXiv preprint arXiv:2409.12135}, 2024.

\bibitem[Wang and Zou(2020)]{wang2020finite}
Yue Wang and Shaofeng Zou.
\newblock Finite-sample analysis of greedy-gq with linear function approximation under markovian noise.
\newblock In \emph{Proceedings of the Conference on Uncertainty in Artificial Intelligence}, 2020.

\bibitem[Wang et~al.(2017)Wang, Chen, Liu, Ma, and Liu]{wang2017finite}
Yue Wang, Wei Chen, Yuting Liu, Zhiming Ma, and Tie{-}Yan Liu.
\newblock Finite sample analysis of the {GTD} policy evaluation algorithms in markov setting.
\newblock In \emph{Advances in Neural Information Processing Systems}, 2017.

\bibitem[White and White(2016)]{white2016investigating}
Adam White and Martha White.
\newblock Investigating practical linear temporal difference learning.
\newblock \emph{arXiv preprint arXiv:1602.08771}, 2016.

\bibitem[Xu et~al.(2019)Xu, Zou, and Liang]{xu2019two}
Tengyu Xu, Shaofeng Zou, and Yingbin Liang.
\newblock Two time-scale off-policy td learning: Non-asymptotic analysis over markovian samples.
\newblock \emph{Advances in Neural Information Processing Systems}, 32, 2019.

\bibitem[Xu et~al.(2021)Xu, Yang, Wang, and Liang]{xu2021doubly}
Tengyu Xu, Zhuoran Yang, Zhaoran Wang, and Yingbin Liang.
\newblock Doubly robust off-policy actor-critic: Convergence and optimality.
\newblock \emph{arXiv preprint arXiv:2102.11866}, 2021.

\bibitem[Yu(2012)]{yu2012LSTD}
Huizhen Yu.
\newblock Least squares temporal difference methods: An analysis under general conditions.
\newblock \emph{{SIAM} Journal on Control and Optimization}, 2012.

\bibitem[Yu(2015)]{yu2015convergence}
Huizhen Yu.
\newblock On convergence of emphatic temporal-difference learning.
\newblock In \emph{Proceedings of the Conference on Learning Theory}, 2015.

\bibitem[Yu(2017)]{yu2017convergence}
Huizhen Yu.
\newblock On convergence of some gradient-based temporal-differences algorithms for off-policy learning.
\newblock \emph{arXiv preprint arXiv:1712.09652}, 2017.

\bibitem[Zhang(2022)]{zhang2022thesis}
Shangtong Zhang.
\newblock \emph{Breaking the deadly triad in reinforcement learning}.
\newblock PhD thesis, University of Oxford, 2022.

\bibitem[Zhang and Whiteson(2022)]{zhang2021truncated}
Shangtong Zhang and Shimon Whiteson.
\newblock Truncated emphatic temporal difference methods for prediction and control.
\newblock \emph{Journal of Machine Learning Research}, 2022.

\bibitem[Zhang et~al.(2019)Zhang, Boehmer, and Whiteson]{zhang2019generalized}
Shangtong Zhang, Wendelin Boehmer, and Shimon Whiteson.
\newblock Generalized off-policy actor-critic.
\newblock In \emph{Advances in Neural Information Processing Systems}, 2019.

\bibitem[Zhang et~al.(2020{\natexlab{a}})Zhang, Liu, and Whiteson]{zhang2020gradientdice}
Shangtong Zhang, Bo~Liu, and Shimon Whiteson.
\newblock Gradient{DICE}: Rethinking generalized offline estimation of stationary values.
\newblock In \emph{Proceedings of the International Conference on Machine Learning}, 2020{\natexlab{a}}.

\bibitem[Zhang et~al.(2020{\natexlab{b}})Zhang, Liu, Yao, and Whiteson]{zhang2019provably}
Shangtong Zhang, Bo~Liu, Hengshuai Yao, and Shimon Whiteson.
\newblock Provably convergent two-timescale off-policy actor-critic with function approximation.
\newblock In \emph{Proceedings of the International Conference on Machine Learning}, 2020{\natexlab{b}}.

\bibitem[Zhang et~al.(2020{\natexlab{c}})Zhang, Veeriah, and Whiteson]{zhang2020learning}
Shangtong Zhang, Vivek Veeriah, and Shimon Whiteson.
\newblock Learning retrospective knowledge with reverse reinforcement learning.
\newblock In \emph{Advances in Neural Information Processing Systems}, 2020{\natexlab{c}}.

\bibitem[Zhang et~al.(2021{\natexlab{a}})Zhang, Wan, Sutton, and Whiteson]{zhang2020average}
Shangtong Zhang, Yi~Wan, Richard~S. Sutton, and Shimon Whiteson.
\newblock Average-reward off-policy policy evaluation with function approximation.
\newblock In \emph{Proceedings of the International Conference on Machine Learning}, 2021{\natexlab{a}}.

\bibitem[Zhang et~al.(2021{\natexlab{b}})Zhang, Yao, and Whiteson]{zhang2021breaking}
Shangtong Zhang, Hengshuai Yao, and Shimon Whiteson.
\newblock Breaking the deadly triad with a target network.
\newblock In \emph{Proceedings of the International Conference on Machine Learning}, 2021{\natexlab{b}}.

\end{thebibliography}

\newpage

\end{document}